\newcommand{\you}[2]{ {}  {{\textcolor{magenta}{#2}}}}
\newcommand{\jiacomment}[1]{{\bf \textcolor{magenta} {*** Notes by Jia: ***} #1 }}
\newcommand{\tecomment}[1]{{\bf *** TE: #1 ***}}
\newcommand\citeN[1]{\citeauthor{#1} \citeyear{#1}}
\newcommand{\var}{\varphi}
\newcommand{\rto}{\rightarrow}
\newcommand{\lto}{\leftarrow}
\newcommand{\Rto}{\Rightarrow}
\newcommand{\Lto}{\Leftarrow}
\newtheorem{definition}{Definition}
\newtheorem{examp}{Example}
\newenvironment{example}{\begin{examp}\rm}{\end{examp}}
\newtheorem{lemma}{Lemma}
\newtheorem{proposition}{Proposition}
\newtheorem{theorem}{Theorem}
\newtheorem{corollary}[theorem]{Corollary}
\newenvironment{proof}{{\bf Proof:}}{\hfill\rule{2mm}{2mm}\\ }
\newenvironment{proof*}{{}}{\hfill\rule{2mm}{2mm}\\ }
\long\def\comment#1{}
\newcommand{\Not}{not \,}
\newcommand{\Pos}{\textit{Pos}}
\newcommand{\Neg}{\textit{Neg}}
\newcommand{\Body}{Body}
\newcommand{\lfp}{\textit{lfp}}
\newcommand{\Th}{\textit{Th}}
\newcommand{\HB}{\textit{HB}}
\newcommand{\DL}{\textit{DL}}
\newcommand{\NDL}{\textit{Ndl}}
\newcommand{\UC}{\textit{UC}}
\newcommand{\NEXP}{\textmd{\rm NEXP}}
\newcommand{\EXP}{\textmd{\rm EXP}}
\newcommand{\coNEXP}{\textmd{\rm co-NEXP}}
\newcommand{\NP}{\textmd{\rm NP}}
\newcommand{\coNP}{\textmd{\rm co-NP}}
\newcommand{\Pol}{\textmd{\rm P}}
\begin{document}

\title{Embedding Description Logic Programs into Default Logic}

%\footnote{This is an essential revision of \cite{Yisong:NonMon30}.}

\author{\name Yisong Wang \email yswang168@gmail.com \\
        \addr Department of Computer Science, Guizhou University, Guiyang, Guizhou Province, China, 550025\\
       \name Jia-Huai You \email you@cs.ualberta.ca \\
       \name Li Yan Yuan \email yuan@cs.ualberta.ca\\
       \addr Department of Computing Science, University of Alberta, Canada,
       \AND
       \name Yi-Dong Shen \email ydshen@ios.ac.cn \\
       \addr State Key Laboratory of Computer Science, Institute of Software, Chinese Academy of Sciences, China
       \AND
       \name Thomas Eiter \email eiter@kr.tuwien.ac.at\\
       \addr Institut f\"{u}r Informationssysteme, Technische Universit\"{a}t Wien, Favoritenstra{\ss}e 9-11, A-1040 Vienna, Austria}

% For research notes, remove the comment character in the line below.
% \researchnote

\maketitle

\begin{abstract}
Description logic programs (dl-programs) under the  answer set semantics
formulated by Eiter {\em et al.} have been considered as a prominent formalism for integrating rules and
ontology knowledge bases. A question of interest has been whether dl-programs can be captured in a general formalism of nonmonotonic logic.
In this paper, we study the possibility of embedding dl-programs into default logic. We show that dl-programs under the strong and weak answer set semantics can be embedded in default logic by
combining two
translations, one of which eliminates the constraint operator from
nonmonotonic dl-atoms and the other translates a dl-program into a
default theory.
For dl-programs without nonmonotonic dl-atoms but with the negation-as-failure operator,
our embedding is polynomial, faithful,
and modular.
In addition, our default logic encoding can be extended in a simple way to capture recently proposed weakly well-supported answer set semantics, for arbitrary dl-programs.
These results
reinforce the argument that default logic can serve as a fruitful foundation for query-based approaches to integrating ontology and rules. With its simple syntax and intuitive semantics, plus available computational results,
default logic can be considered an attractive approach to integration of ontology and rules.

\comment{
\tecomment{Say: DL is in sense ``simpler'' than other formalisms
(like AEL, MKNF etc) and more computational results are available,
which make this more attractive? // It is ok to me. How about Prof. You? -Yisong}
}

%Description logic programs (dl-programs) proposed by Eiter {\em et
%al}. constitute an elegant yet powerful formalism for the
%integration of answer set programming with description logics, for
%the Semantic Web.
%In the paper we propose two translations for description logic programs (dl-programs). One
%is eliminating constraint operators from nonmonotonic dl-atoms that preserves both strong and weak  answer sets
%of dl-programs in an extended language. The
%other is translating dl-programs into default theories whose extensions capture  strong answer sets of canonical
%dl-programs (that are dl-programs mentioning no constraint operators). With the help of the first translation, the second can capture strong
%answer sets of normal dl-programs (that are dl-programs in which monotonic dl-atoms involve no constraint operators) in an extended language.
%It also inspires a new semantics for dl-programs, called the {\em default
%answer set semantics}, that has some interesting properties.
\end{abstract}

\section{Introduction}
Logic programming under the answer set semantics (ASP) has been
recognized as an expressive nonmonotonic reasoning framework for declarative
problem solving and knowledge representation \cite{marek99,Niemela99}. Recently, there has been an
extensive interest in combining ASP with other logics or
reasoning mechanisms.\comment{(e.g.,
\cite{DBLP:conf/ijcai/MotikR07,DBLP:journals/ai/EiterILST08,Mellarkod:AMAI2008}).}
One of the main interests in this direction is the integration of
ASP with description logics (DLs) for the Semantic Web. This is due to the fact that,
although ontologies expressed in DLs and rules in ASP are two
prominent knowledge representation formalisms, each of them has
limitations on its own. As (most) DLs are fragments of (many sorted)
first order logic, they do not support default, typicality, or
nonmonotonic reasoning in general. On the other hand, thought there are some recent attempts to extend ASP beyond propositional logic, the core, effective
reasoning methods are  designed essentially for computation of ground programs; in particular, ASP typically does not reason with unbounded or infinite domains, nor does it support
quantifiers. An integration of the two can offer features of both.

A number of proposals for integrating ontology and (nonmonotonic) rules
have been put forward
\cite{eiter-auto-ontology2007,DBLP:Bruijn:RR:2007,DBLP:journals/ai/EiterILST08,Motik:JACM:2010,Rosati,DBLP:Rosati:KR:2006,DBLP:TOCL:Analyti:2011,DBLP:Lukasiewicz:TKDE:10,DBLP:Lee:LPNMR:2011}.
The existing approaches
can be roughly classified into three categories.
In the first, typically a nonmonotonic formalism is adopted which naturally embodies
both first-order logic and
rules, where
ontology
and rules are written in the same language resulting in a tight coupling \cite{DBLP:Bruijn:TOCL,Motik:JACM:2010,DBLP:Lukasiewicz:TKDE:10}.
The second is a loose approach:
an ontology knowledge base and
rules share the same constants but not the same predicates,
and inference-based communication is
via a well-designed interface, called
dl-atoms \cite{DBLP:journals/ai/EiterILST08}.
In the third approach, rules are treated as
hybrid formulas where in model building
the predicates in the language of the ontology are interpreted classically, whereas those
in the language of rules are interpreted nonmonotonically \cite{Rosati,DBLP:Rosati:KR:2006,DBLP:Bruijn:RR:2007}.

The loose coupling approach above stands out as quite unique and it
possesses
some advantages.
In many practical situations, we would like
to combine existing knowledge bases,
possibly
under different logics. In this
case, a notion of interface is natural and necessary.
The formulation of dl-programs
adopts such interfaces to ontology knowledge bases. It is worth noticing
that dl-programs share many similarities with another recent
interesting formalism, called {\em nonmonotonic} {\em multi-context systems},
in which knowledge bases under arbitrary logics communicate through \emph{bridge rules} \cite{Brewka-AAAI-07}.

Informally,
a dl-program is a
pair $(O,P)$, where
$O$ is an ontology knowledge base expressed in a description logic,
and $P$ a logic program, where rule bodies may contain
queries to the knowledge base $O$, called {\em dl-atoms}. Such queries allow to specify
inputs from a logic program to the ontology knowledge base. In more detail, a dl-atom is of the form
\[\DL[S_1\ op_1\ p_1,\ldots,S_m\ op_m\ p_m;Q](\vec t)\]
where $Q(\vec t)$ is a query to $O$, and for each $i~(1\leq i\leq
m)$, $S_i$ is a concept or a role in $O$, $p_i$ is a predicate
symbol in $P$ having the same arity as $S_i$, and the
operator $op_i\in\{\oplus,\odot,\ominus\}$. Intuitively, $\oplus$
(resp., $\odot$) increases $S_i$ (resp., $\neg S_i$) by the extension
of $p_i$, while $\ominus$ (called the {\em constraint operator})
constrains $S_i$ to $p_i$, i.e., for an expression $S
\ominus p$, for any tuple of constants $\vec t$, in the absence of
$p({\vec t}$) we infer $\neg S({\vec t})$. Eiter {\em et al}.
proposed weak and strong answer sets
for dl-programs \cite{DBLP:journals/ai/EiterILST08}, which were further investigated
from the perspective of loop formulas \cite{Yisong:ICLP:2010} and from the perspective of the logic of here-and-there \cite{DBLP:Fink:JELIA:2010}.

%that can be captured by the models of their loop completion \cite{Yisong:ICLP:2010}. %, with strong
%answer sets being favored as some of the "self-supporting loops" in
%the weak answer sets are removed from the strong answer sets.

The interest in dl-programs is also due to a technical aspect \-- it has been a challenging task to embed dl-programs into a general nonmonotonic logic.
%We know that a general question of interest in knowledge representation is whether one formalism can be captured
%by another one. Relationships like this may reveal insights into the underlying problems that we are dealing with.
%The question becomes particularly relevant to integration of rules and description logics, as in this context
%it seems particularly challenging to capture dl-programs.
For example, MKNF \cite{DBLP:conf/ijcai/Lifschitz91} is arguably among the most expressive and versatile formalisms for integrating rules and description logic knowledge bases \cite{Motik:JACM:2010}. Although
Motik and Rosati were able to show a polynomial embedding of a number of other integration formalisms into MKNF, for dl-programs they only
showed that
if a dl-program does not contain the constraint operator $\ominus$, then it can be translated to
a (hybrid) MKNF knowledge base while preserving its strong answer sets.\footnote{The theorem given in \cite{Motik:JACM:2010} (Theorem 7.6) only claims to preserve satisfiability. In a personal communication with Motik, it is confirmed that the proof of the theorem indeed establishes a one-to-one correspondence.}
The embedding into quantified equilibrium logic in \cite{DBLP:Fink:JELIA:2010} is under the assumption that
all dl-atoms containing an occurrence of $\ominus$ are nonmonotonic. They do not deal with the case when a dl-atom involving $\ominus$ may be monotonic.
The embedding into first-order autoepistemic logic (AEL) is under the weak answer set semantics \cite{DBLP:conf/kr/BruijnET08}. For the strong answer set semantics, it is obtained by an embedding of MKNF into first-order autoepistemic logic together with the embedding of dl-programs into MKNF.
Thus it only handles the dl-programs without the constraint operator.

In this paper, we investigate the possibility of embedding dl-programs into default logic \cite{Reiter1980},  under various notions of  answer set semantics.  Our interest in default
logic is due to the fact that it is one of the
dominant nonmonotonic formalisms, yet despite the fact that
default logic naturally accommodates first-order logic and rules
(defaults), curiously it has not been considered explicitly as
a framework for integrating ontology and rules. Since the loose approach can be viewed as query-based, the question arises as whether default logic can be viewed as a foundation for query-based approaches to integration of ontologies and rules.

We shall note that the problem of embedding dl-programs into default logic is nontrivial. In fact, given the difficulties in dealing with dl-programs by other expressive nonmonotonic logics, one can expect great technical subtlety in this endeavor.  Especially, the treatment of equality is a nontrivial issue.

A main technical result of this paper is that  dl-programs
can be translated to default theories while preserving their strong and weak
answer sets.
This is achieved in two steps. In the first, we
investigate the operators in dl-programs and observe that the
constraint operator $\ominus$ is the only one causing a dl-atom to
be nonmonotonic, and a dl-atom may still be monotonic even though it
mentions the constraint operator $\ominus$. To eliminate $\ominus$
from nonmonotonic dl-atoms, we propose a translation $\pi$ and show
that, given a dl-program $\cal K$, the strong and weak answer sets
of $\cal K$ correspond exactly to the strong and weak answer sets of
$\pi(\cal K)$, respectively, i.e., when restricted to the language
of $\cal K$, the strong and weak answer sets of $\pi(\cal K)$ are
precisely those of $\cal K$, and vice versa.
%The class of dl-programs our translation is designed for
%are called {\em normal dl-programs}.
An immediate consequence of this result is that it improves a result of \cite{Motik:JACM:2010}, in that we now know that a much larger class of dl-programs, the class of
{\em normal dl-programs}, can be translated to MKNF knowledge bases,
where a dl-program is normal if
%there are
it has  no monotonic dl-atoms that mention the constraint operator $\ominus$.
%Practically, since the constraint operator $\ominus$ is the only one that causes nonmonotonicity in dl-atoms, it can be argued that one does not have to express a monotonic dl-atom using $\ominus$. Thus, the class of normal dl-programs represents a broad class.

For the weak answer set semantics, the translation above can be relaxed so that all dl-atoms containing $\ominus$ can be translated uniformly, and the resulting translation is polynomial.
However, for the strong answer set semantics, the above translation relies on the knowledge whether a dl-atom is monotonic or not. In this paper, we present a number of results regarding  the upper and lower bounds of determining this condition for description logics $\cal SHIF$ and $\cal SHOIN$ \cite{DBLP:journals/ai/EiterILST08}. These results have a broader implication as they apply to the work of \cite{DBLP:Fink:JELIA:2010} in embedding dl-programs under strong answer sets into quantified equilibrium logic.

In the second step,
we present two
approaches to translating dl-programs to default theories in a polynomial, faithful, and modular manner \cite{Janhunen:AMAI:1999}.\footnote{This means a polynomial time transformation that preserves the intended semantics, uses the symbols of the original language, and translates
parts (modules) of the given dl-program independently of each other.} The difference between the two is on the handling of inconsistent ontology knowledge bases.
In the first one, an inconsistent ontology knowledge base trivializes the resulting default theory, while following the spirit of dl-programs, in the second approach nontrivial answer sets may still exist in the case of an inconsistent ontology knowledge base.
We show that, for a dl-program ${\cal K}$ without nonmonotonic dl-atoms, there is a one-to-one correspondence between
the strong answer sets of $\cal K$ and the extensions of its corresponding default
theory (whenever the underlying knowledge base is consistent for the first approach). This, along with the result given in the first step, shows that
dl-programs under the strong answer set semantics can be embedded into default logic.

It has been argued that some strong answers may incur self-supports. To overcome this blemish, weakly and strongly well-supported answer set semantics are recently proposed \cite{Yi-Dong:IJCAI:2011}.
Surprisingly, dl-programs under the weakly well-supported semantics
can be embedded into default logic by a small enhancement to our
approach in the second step above, and the resulting  translation is again polynomial, faithful and modular.
%This is currently the only known polynomial, faithful, and modular embedding of dl-programs \te{in}{into} a major
%nonmonotonic
%logic.\footnote{Except perhaps \cite{DBLP:Fink:JELIA:2010}, where
%a semantics for dl-programs is defined by taking dl-atoms as formulas which are interpreted in quantified equilibrium logic, and it is shown that the semantics coincides with the FLP semantics \cite{DBLP:conf/iclp/BasolEFI10}. }
Furthermore, if nonmonotonic dl-atoms do not appear in the scope of the default negation $\Not$, the strongly well-supported semantics coincides with the weakly well-supported semantics. Since default negation already provides a language construct to express default inferences, it can be argued that one need not use the constraint operator $\ominus$ inside it. In this sense, our default logic encoding captures the strongly well-supported semantics as well.

We note that, in embedding dl-programs without nonmonotonic dl-atoms into default logic, one still can use the negation-as-failure operator $not$ in dl-programs to express nonmonotonic inferences. The same assumption was adopted in defining a well-founded semantics for dl-programs \cite{Eiter:TOCL2011}. Under this assumption, all the major semantics for dl-programs coincide,
%namely weak and strong answer sets, FLP-answer sets
%\cite{ DBLP:conf/iclp/BasolEFI10}, and weakly and strongly well-supported answer sets \cite{Yi-Dong:IJCAI:2011},
and they all can be embedded into default logic by a polynomial, faithful, and modular translation.
Thus,
the results of this paper not only reveal insights and technical subtleties in capturing dl-programs under various semantics by default logic, but also strengthen the prospect that the latter can serve as a foundation for query-based integration of rules and ontologies.

The main advantage of using default logic to characterize integration of ontology and rules in general, and semantics of dl-programs in particular, is its simple syntax and intuitive semantics,
which has led to a collection of computational results in the literature (see, e.g., \cite{li-you92,Cholewinski:1999,defaultLogicImplementation,YinChenJelia2010}). Interestingly, the more recent effort is on applying ASP techniques to compute default extensions. As long as defaults can be finitely grounded, which is the case for the approach of this paper, these techniques can be extended by combining an ASP-based default logic engine with a
description logic reasoner, with the latter being applied as a black box.
In contrast, the computational issues are completely absent in the approach under AEL \cite{DBLP:conf/kr/BruijnET08}, and only addressed briefly at an abstract level for the approach based on MKNF \cite{Motik:JACM:2010}.
Furthermore, the representation of dl-programs in default logic leads to new insights in computation for dl-programs, one of which is that the iterative construction of default extensions provides a direct support to well-supportedness for answer sets, so that justifications for positive dependencies can be realized for free.

%
%While our translations from normal dl-programs
%to default theories preserve strong answer sets,
%the default translation of dl-programs sheds light on a new perspective on the semantics for arbitrary dl-programs. That is, if we apply the translation to an arbitrary dl-program directly, the extensions of the resulting default theory correspond to a subclass of strong answer sets.
%It is known that strong answer sets of dl-programs may also incur
%self-supports \cite{Yisong:ICLP:2010}. There is compelling evidence that the
%strong answer sets that are eliminated by the default translation are those with self-supports.
%This is a substantial revision of \cite{Yisong:NonMon30} in two aspects. Firstly, the translation $\pi$ is revised and  related proofs
%are simplified. Secondly, we generalize the main result (Theorem \ref{thm:delete:ominus:s}) in \cite{Yisong:NonMon30} from normal dl-programs
%to dl-programs.

The main contributions of this paper are summarized as follows.
\begin{itemize}
\item
We show that dl-programs under the weak and strong answer set semantics can be faithfully and modularly rewritten without constraint operators. The rewriting is polynomial for the weak answer set semantics.%,  embedded in default logic by a translation
%under the weak/strong answer set semantics as well as the weakly well-supported semantics.
\item
To embed arbitrary dl-programs into default logic, we present faithful and modular \cite{Janhunen:AMAI:1999} translations for the strong answer set semantics, the weak answer set semantics and the weakly well-supported semantics. The translations are also polynomial for the latter two semantics.
\item
For the strong answer set semantics, the embedding depends on the
knowledge of monotonicity of dl-atoms and is polynomial relative
to this knowledge, i.e., if the set of monotonic dl-atoms is known.
%We present a number of results concerning the upper and lower bounds
%of determining whether a dl-atom is monotonic under description
%logics $\cal SHIF$ and $\cal SHOIN$.
In general, determining this set is intractable; as we
show, determining whether a dl-atom is monotonic is $\EXP$-complete
under the description logic $\cal SHIF$ and
$\Pol^{\NEXP}$-complete under the description logic $\cal SHOIN$
(and thus not more expensive than deciding the existence of some strong
or weak answer set of a dl-program under these description logics).
%
%\yisong{}{$\NEXP$-complete should be changed to
%  $\textrm{P}^\NEXP$-complete, right? TE: Yes, sorry. Changed. I
%pointed out that the complexity is not higher than the existence
%problem, so it is not a priori an ``overkill'' to determine this is in case.}

\item
For the two semantics for which we do not provide a polynomial embedding, namely the strong answer set semantics and the strongly well-supported semantics,
there are broad classes of dl-programs for which a polynomial embedding can be easily inferred from our results.
For the class of dl-programs where nonmonotonic dl-atoms do not appear in the scope of default negation $\Not$,
our embedding is polynomial, faithful, and modular under the strongly well-supported semantics; and for
the class of dl-programs where the constraint operator does not appear in a positive dl-atom in rules,
our embedding is again polynomial, faithful, and modular
under the strong answer set semantics.
\end{itemize}

The paper is organized as follows. In the next section, we recall
the basic definitions of description logics and dl-programs. In
Section~\ref{Sec:Eliminating}, we present a transformation to
eliminate the constraint operator from nonmonotonic dl-atoms.
In
Section~\ref{Sec:ToDefault},
we give
transformations from dl-programs to default theories, followed by Sections~\ref{Sec:Related-work} and 6
on related work and concluding remarks respectively.

\section{Preliminaries}
In this section, we briefly review the basic notations for
description logics \cite{Badder:Handbook:DL:2007} and description logic programs
\cite{DBLP:journals/ai/EiterILST08}.

%\vspace{-.2in}
\subsection{Description logics}
Description Logics are a family of class-based (concept-based)
knowledge representation formalisms. We assume a set \textbf E of
{\em elementary datatypes} and a set \textbf V of {\em data values}.
A {\em datatype theory} $\textbf D=(\Delta^{\textbf D},\cdot^\textbf
D)$ consists of a {\em datatype} (or {\em concrete}) {\em domain}
$\Delta^\textbf D$ and a mapping $\cdot^\textbf D$ that assigns to
every elementary datatype a subset of $\Delta^\textbf D$ and to
every data value an element of $\Delta^\textbf D$. Let
$\Psi=(\textbf A\cup \textbf R_A\cup \textbf R_D,\textbf I\cup
\textbf V)$ be a vocabulary, where $\textbf A,\textbf R_A,\textbf
R_D$, and $\textbf I$ are pairwise disjoint (denumerable) sets of
{\em atomic concepts, abstract roles, datatype} (or {\em concrete})
roles, and {\em individuals}, respectively. %We denote $\textbf R^{-}_A$ the set of inverse $R^{-}$ for all $R\in\textbf R_A$.

A {\em role} is an element of $\bf R_A\cup\bf R_A^{-}\cup\bf R_D$,
where $\textbf R_A^{-}$ means the set of inverses of all
$R\in\textbf R_A$. {\em Concepts} are inductively defined as: (1)
every atomic concept $C\in\textbf A$ is a concept, (2) if
$o_1,o_2,\ldots$ are individuals from $\textbf I$, then
$\{o_1,o_2,\ldots\}$ is a concept (called {\em oneOf}), (3) if $C$
and $D$ are concepts, then also $(C\sqcap D)$, $(C\sqcup D)$, and
$\neg C$ are concepts (called {\em conjunction, disjunction}, and
{\em negation} respectively). (4) if $C$ is a concept, $R$ is an
abstract role from $\textbf R_A\cup\textbf R_A^{-}$, and $n$ is a
nonnegative integer, then $\exists R.C,\forall R.C,\ge nR,$ and
$\leq nR$ are concepts (called {\em exists, value, atleast}, and
{\em atmost restriction}, respectively), (5) if $D$ is a datatype,
$U$ is a datatype role from $\textbf R_D$, and $n$ is a nonnegative
integer, then   $\exists U.D,\forall U.D,\ge nU$, and $\leq nU$ are
concepts (called {\em datatype exists, value, atleast},  and {\em
atmost restriction}, respectively).
%%\begin{itemize}
%%  \item every atomic concept $C\in\textbf A$ is a concept,
%%  \item if $o_1,o_2,\ldots$ are individuals from $\textbf I$, then $\{o_1,o_2,\ldots\}$ is a concept (called {\em Oneof}),
%%  \item if $C$ and $D$ are concepts, then also $(C\sqcap D)$, $(C\sqcup D)$, and $\neg C$ are concepts
%%  (called {\em conjunction, disjunction}, and {\em negation} respectively).
%%  \item if $C$ is a concept, $R$ is an abstract role from $\textbf R_A\cup\textbf R_A^{-}$, and $n$ is a
%%    nonnegative integer, then $\exists R.C,\forall R.C,\ge nR,$ and $\leq nR$ are concepts (called {\em exists,
%%    value, atleast}, and {\em atmost restriction}, respectively),
%%  \item if $D$ is a datatype, $U$ is a datatype role from $\textbf R_D$, and $n$ is a nonnegative integer, then
%%    $\exists U.D,\forall U.D,\ge nU$, and $\leq nU$ are concepts (called {\em datatype exists, value, atleast},
%%    and {\em atmost restriction}, respectively).
%%\end{itemize}

An {\em axiom} is an expression of one of the forms: (1)
$C\sqsubseteq D$, called  {\em concept inclusion axiom}, where $C$
and $D$ are concepts; (2) $R\sqsubseteq S$, called {\em role
inclusion axiom}, where either $R,S\in \textbf R_A$ or
$R,S\in\textbf R_D$; (3) Trans($R$), called {\em transitivity
axiom}, where $R\in\textbf R_A$; (4) $C(a)$, called {\em concept
membership axiom}, where $C$ is a concept and $a\in\textbf I$; (5)
$R(a,b)$ (resp., $U(a,v)$), called {\em role membership axiom} where
$R\in\textbf R_A$ (resp., $U\in\textbf R_D$) $a,b\in \textbf I$
(resp., $a\in\textbf I$ and $v$ is a data value), (6) $a\thickapprox
b$ (resp., $a\not\thickapprox b$), called {\em equality} (resp.,
{\em inequality}) {\em axiom}  where $a,b\in\textbf I$.
%\begin{enumerate}[(1)]
%  \item $C\sqsubseteq D$, called  {\em concept inclusion axiom}, where $C$ and $D$ are concepts;
%  \item $R\sqsubseteq S$, called {\em role inclusion axiom},  where either $R,S\in \textbf R_A$ or $R,S\in\textbf R_D$;
%  \item Trans($R$), called {transitivity axiom},  where $R\in\textbf R_A$;
%  \item $C(a)$, called {\em concept membership axiom},  where $C$ is a concept and $a\in\textbf I$;
%  \item $R(a,b)$ (resp., $U(a,b)$), called {\em role membership axiom} where $R\in\textbf R_A$ (resp., $U\in\textbf R_D$)
%    and $a,b\in \textbf I$ (resp., $a\in\textbf I$ and $v$ is a data value),
%  \item $a\thickapprox b$ (resp., $a\not\thickapprox b$), called {\em equality} (resp., {\em inequality})
%    {\em axiom}  where $a,b\in\textbf I$.
%\end{enumerate}

A {\em description logic (DL) knowledge base} $O$ is a finite set of
axioms. The $\mathcal{SHOIN}(\textbf D)$ {\em knowledge base}
consists of a finite set of above axioms, while the
$\mathcal{SHIF}(\textbf D)$ {\em knowledge base} is the one of
$\mathcal{SHOIN}(\textbf D)$, but without the \textit{oneOf}
constructor and with the \textit{atleast} and \textit{atmost}
constructors limited to 0 and 1.

The semantics of the two description logics are defined in terms of
general first-order interpretations. An {\em interpretation}
$\mathcal I=(\Delta^\mathcal I,\cdot^\mathcal I)$ with respect to a
datatype theory $\textbf D=(\Delta^\textbf D,\cdot^\textbf D)$
consists of a nonempty (abstract) {\em domain} $\Delta^\mathcal I$
disjoint from $\Delta^\textbf D$, and a mapping $\cdot^\mathcal I$
that assigns to each atomic concept $C\in\textbf A$ a subset of
$\Delta^\mathcal I$, to each individual $o\in\textbf I$ an element
of $\Delta^\mathcal I$, to each abstract role $R\in\textbf R_A$ a
subset of $\Delta^\mathcal I\times\Delta^\mathcal I$, and to each
datatype role $U\in\textbf R_D$ a subset of $\Delta^\mathcal
I\times\Delta^\textbf D$. The mapping $\cdot^\mathcal I$ is extended
to all concepts and roles as usual (where $\#S$ denotes the
cardinality of a set $S$):
\begin{itemize}
  \item $(R^{-})^\mathcal I=\{(a,b)|(b,a)\in R^\mathcal I\}$;
  \item $\{o_1,\ldots,o_n\}^\mathcal I=\{o_1^\mathcal I,\ldots,o_n^\mathcal I\}$;
  \item $(C\sqcap D)^\mathcal I=C^\mathcal I\cap D^\mathcal I$, $(C\sqcup D)^\mathcal I=C^\mathcal I\cup D^\mathcal I$,
        $(\neg C)^\mathcal I=\Delta^\mathcal I\setminus C^\mathcal I$;
  \item $(\exists R. C)^\mathcal I=\{x\in\Delta^\mathcal I|\exists y: (x,y)\in R^\mathcal I\wedge y\in  C^\mathcal I\}$;
  \item $(\forall R.C)^\mathcal I=\{x\in\Delta^\mathcal I|\forall y: (x,y)\in R^\mathcal I\rto y\in C^\mathcal I\}$;
  \item $(\ge nR)^\mathcal  I=\{x\in\Delta^\mathcal I|\#(\{y|(x,y)\in R^\mathcal I\})\ge n\}$;
  \item $(\le nR)^\mathcal  I=\{x\in\Delta^\mathcal I|\#(\{y|(x,y)\in R^\mathcal I\})\le n\}$;
  \item $(\exists U.D)^\mathcal I=\{x\in\Delta^\mathcal I|\exists y:(x,y)\in U^\mathcal I\wedge y\in D^\textbf D\}$;
  \item $(\forall U.D)^\mathcal I=\{x\in\Delta^\mathcal I|\forall y:(x,y)\in U^\mathcal I\rto y\in D^\textbf D\}$;
  \item $(\ge nU)^\mathcal  I=\{x\in\Delta^\mathcal I|\#(\{y|(x,y)\in U^\mathcal I\})\ge n\}$;
  \item $(\le nU)^\mathcal  I=\{x\in\Delta^\mathcal I|\#(\{y|(x,y)\in U^\mathcal I\})\le n\}$.
\end{itemize}

Let $\mathcal I=(\Delta^\mathcal I,\cdot^\mathcal I)$ be an
interpretation respect to $\textbf D=(\Delta^\textbf D,\cdot^\textbf
D)$, and $F$ an axiom. We say that $\mathcal I$ {\em satisfies} $F$, written
$\mathcal I\models F$, is defined as follows: (1) $\mathcal I\models
C\sqsubseteq D$ iff $C^\mathcal I\subseteq D^\mathcal I$; (2)
$\mathcal I\models R\sqsubseteq S$ iff $R^\mathcal I\subseteq
S^\mathcal I$; (3) $\mathcal I\models \textmd{Trans}(R)$ iff
$R^\mathcal I$ is transitive; (4) $\mathcal I\models C(a)$ iff
$a^\mathcal I\in C^\mathcal I$; (5) $\mathcal I\models R(a,b)$
$(a^\mathcal I,b^\mathcal I)\in R^\mathcal I$ (resp., $\mathcal
I\models U(a,v)$ iff $(a^\mathcal I,v^\textbf D)\in U^\mathcal I)$;
(6) $\mathcal I\models a\thickapprox b$ iff $a^\mathcal I=b^\mathcal
I$ (resp., $\mathcal I\models a\not\thickapprox b$ iff $a^\mathcal
I\neq b^\mathcal I$).
%\begin{enumerate}[(1)]
%  \item $\mathcal I\models C\sqsubseteq D$ iff $C^\mathcal I\subseteq D^\mathcal I$;
%  \item $\mathcal I\models R\sqsubseteq S$ iff $R^\mathcal I\subseteq S^\mathcal I$;
%  \item $\mathcal I\models\textit{Trans}(R)$ iff $R^\mathcal I$ is transitive;
%  \item $\mathcal I\models C(a)$ iff $a^\mathcal I\in C^\mathcal I$;
%  \item $\mathcal I\models R(a,b)$
%    iff $(a^\mathcal I,b^\mathcal I)\in R^\mathcal I$ (resp., $\mathcal I\models U(a,v)$ iff $(a^\mathcal I,v^\textbf D)\in U^\mathcal I)$;
%  \item $\mathcal I\models a\thickapprox b$ iff $a^\mathcal I=b^\mathcal I$ (resp., $I\models a\not\thickapprox b$
%    iff $a^\mathcal I\neq b^\mathcal I$).
%\end{enumerate}
$\mathcal I$ {\em satisfies} a DL knowledge base $O$, written
$\mathcal I\models O$, if $\mathcal I\models F$ for any $F\in O$. In
this case, we call $\mathcal I$ a {\em model} of $O$. An axiom $F$
is a {\em logical consequence} of a DL knowledge base $O$, written
$O\models F$, if any model of $O$ is also a model of $F$.

\subsection{Description logic programs}
Let $\Phi=(\mathcal {P,C})$ be a first-order vocabulary with
nonempty finite sets $\cal C$ and $\cal P$ of constant symbols and
predicate symbols respectively such that $\cal P$ is disjoint from
${\bf A\cup R}$ and $\cal C\subseteq \bf I$. {\em Atoms} are formed
from the symbols in $\cal P$ and $\cal C$ as usual.

A {\em dl-atom} is an expression of the form
\begin{equation}\label{dl:atom}
  \DL[S_1\ op_1\ p_1, \ldots,S_m\ op_m\ p_m;Q](\vec t),\ \  (m\ge 0)
\end{equation}
where
\begin{itemize}
  \item each $S_i$ is either a concept, a role or its negation,\footnote{
  We allow negation of a role for convenience, so that
  we can replace ``$S\odot p$" with an equivalent form ``$\neg S\oplus p$" in dl-atoms. The negation of a role is not explicitly present in
\cite{DBLP:journals/ai/EiterILST08}. As discussed there, negative
role assertions can be emulated in ${\cal SHIF}$ and ${\cal SHOIN}$
(and in fact also in  ${\cal ALC}$). } or a special
  symbol in $\{\thickapprox,\not\thickapprox\}$;
  \item $op_i\in\{\oplus,\odot,\ominus\}$ (we call $\ominus$ the {\em constraint operator});
  \item $p_i$ is a unary predicate symbol in $\cal P$ if $S_i$ is a concept, and a binary predicate symbol in $\cal P$
   otherwise. The $p_i$'s are called {\em input predicate symbols};
  \item $Q(\vec t)$ is a {\em dl-query}, i.e., either (1) $C(t)$ where $\vec t=t$;
    (2) $C\sqsubseteq D$ where $\vec t$ is an empty  argument list;
    (3) $R(t_1,t_2)$ where $\vec t=(t_1,t_2)$;
    (4) $t_1\thickapprox t_2$ where $\vec t=(t_1,t_2)$;
    or their negations, where $C$ and $D$ are concepts, $R$ is a role, and $\vec t$
  is a tuple  of constants.
\end{itemize}

The precise meanings of $\{\oplus, \odot, \ominus\}$ will be defined
shortly. Intuitively, $S \oplus p$ extends $S$ by the extension of
$p$. Similarly, $S \odot p$ extends $\neg S$ by the extension of
$p$, and $S \ominus p$ constrains $S$ to $p$.
%
%For example, suppose the interface is such that if
%an individual $a$ is registered for a course (the information from outside an ontology)
%then $a$ is a student ($a$ may not be a student by the ontology
%before this communication), and we
%query if $a$ is a student. We can then write
%the dl-atom $\DL[Student \oplus registered; Student](a)$.
%Similarly, $\DL[Student \ominus registered;
%\neg Student \sqcap \neg Employed](a)$
%queries if $a$ is not a student nor employed,
%with the
%ontology enhancement that if we cannot show $a$ is registered,
%then $a$ is not a student.
%
A {\em dl-rule} (or simply a {\em rule}) is an expression of the
form
\begin{equation}\label{dl:rule}
  A\lto B_1,\ldots,B_m,\Not B_{m+1},\ldots,\Not B_n, (n\ge m\ge 0)
\end{equation}
where $A$ is an atom, each $B_i ~(1\leq i\leq n)$ is  an
atom\footnote{Different from that of
\cite{DBLP:journals/ai/EiterILST08}, we consider ground atoms
instead of literals for convenience.} or a dl-atom. We refer to $A$
as its {\em head}, while the conjunction of $B_i~(1\leq i\leq m)$ and
$\Not B_j~(m+1\leq j\leq n)$ is its {\em body}. For convenience, we
abbreviate a rule in the form (\ref{dl:rule}) as
\begin{equation}\label{dl:rule:set}
  A\lto \Pos,\Not \Neg
\end{equation}
where $\Pos=\{B_1,\ldots,B_m\}$ and $\Neg=\{B_{m+1},\ldots,B_n\}$.
Let $r$ be a rule of the form (\ref{dl:rule:set}). If
$\Neg=\emptyset$ and $\Pos=\emptyset$, $r$ is a {\em fact} and we may
write it as ``$A$" instead of ``$A\lto$". A {\em description logic
program} ({\em dl-program}) $\mathcal K=(O,P)$ consists of a DL
knowledge base $O$ and a finite set $P$ of dl-rules. In what follows
we assume the vocabulary of $P$ is implicitly given by the constant
symbols and predicate symbols occurring in $P$,
$\cal C$ consists of the constants occurring in atoms of $P$, and $P$ is grounded (no atoms containing variables) unless stated
otherwise.

Given a dl-program $\mathcal K=(O,P)$, the {\em Herbrand base} of
$P$, denoted by $\textit{HB}_P$, is the set of atoms occurring in
$P$ and the ones formed from the predicate symbols of $\cal P$
occurring in some dl-atoms of $P$ and the constant symbols in
$\mathcal C$.
\footnote{Note that this slightly deviates from the usual convention
of the Herbrand base; ground atoms that are not in the Herbrand base
as considered here are always false in answer sets.}
It is clear that $\HB_P$ is in polynomial size of
$\cal K$. An {\em interpretation} $I$ (relative to $P$) is a subset
of $\HB_P$. Such an $I$ is a {\em model} of an atom or dl-atom $A$
under $O$, written $I\models_OA$, if the following holds:
\begin{itemize}
  \item if $A\in\HB_P$, then $I\models_OA$ iff $A\in I$;
  \item if $A$ is a dl-atom $\DL(\lambda;Q)(\vec t)$ of the form (\ref{dl:atom}), then $I\models_OA$ iff $
    O(I;\lambda)\models Q(\vec t)$ where
    $O(I;\lambda)=O\cup\bigcup_{i=1}^{m}A_i(I)$\mbox{ and, for $1\leq i\leq m$,}
    \[A_i(I)=\left\{
    \begin{array}{ll}
        \{S_i(\vec e)\mid p_i(\vec e)\in I\}, & \hbox{if $op_i=\oplus$;} \\
        \{\neg S_i(\vec e)\mid p_i(\vec e)\in I\}, & \hbox{if $op_i=\odot$;} \\
        \{\neg S_i(\vec e)\mid p_i(\vec e)\notin I\}, & \hbox{if $op_i=\ominus$;}
    \end{array}
    \right.
    \]
\end{itemize}
where $\vec e$ is a tuple of constants over $\mathcal C$.
As we allow negation of role,
$S\odot p$ can be replaced with $\neg S\oplus p$ in any dl-atom. In addition, we can shorten $S_1\ op\ p, \ldots, S_k\ op\ p$
as $(S_1\sqcup\ldots\sqcup S_k)\ op\ p$ where $S_i\ op\ p$ appears in $\lambda$ for all $i~(1\le i\le k)$
and $op\in\{\oplus,\odot, \ominus\}$. Thus dl-atoms can be equivalently rewritten into ones without using the operator $\odot$, and every predicate $p$ appears at most once for each operator $\oplus$ and $\ominus$. For instance,
the dl-atom $\DL[S_1\oplus p, S_2\oplus p, S_1\ominus p, S_2\ominus p,Q](\vec t)$ can be equivalently written as
$\DL[(S_1\sqcup S_2) \oplus p, (S_1\sqcup S_2)\ominus p,Q](\vec t)$.

An interpretation $I\subseteq {\HB_P}$ is a {\em model}
of ``$\Not A$", written $I\models_O\Not A$, if $I$ is not a model of $A$, i.e., $I\not\models_OA$. The
interpretation $I$ is a {\em model} of a dl-rule of the form
(\ref{dl:rule:set}) iff $I\models_O B$ for any $B\in\Pos$ and
$I\not\models_OB'$ for any $B'\in\Neg$ implies that $I\models_OA$. An interpretation $I$ is
a {\em model} of a dl-program $\mathcal K=(O,P)$, written
$I\models_O\mathcal K$, iff $I$ is a model of each rule of $P$. %\yisong{$I$
%is a {\em supported model} of $\mathcal K=(O,P)$ iff, for any $h\in
%I$, there is a rule ($h\lto \Pos,\Not\Neg$) in $P$ such that
%$I\models_O A$ for any $A\in\Pos$ and $I\not\models_O B$ for any
%$B\in\Neg$.}{As we never use supported model, let's remove this notion here.}

\subsubsection{Monotonic dl-atoms}

A dl-atom $A$ is {\em monotonic} (relative to a dl-program $\mathcal
K=(O,P)$) if $I\models_ OA$ implies $I'\models_ OA$, for all $I'$ such that
$I\subseteq I'\subseteq \textit{HB}_P$, otherwise $A$ is {\em
nonmonotonic}. It is
clear that if a dl-atom does not mention the constraint operator then
it is monotonic. However, a dl-atom may be monotonic even if it
mentions the constraint operator. For example, the dl-atom $\DL[S\odot
p,S\ominus p;\neg S](a)$  is a tautology (which is monotonic).
%
%Evidently, the constraint operator is the only one that may cause a
%dl-atom to be nonmonotonic.% Thus one has no reason to use $\ominus$
%in monotonic dl-atoms. It is a reasonable assumption that we can
%rewrite a monotonic dl-atom into an equivalent one without using
%$\ominus$ at all.
%The below proposition is trivial.

Evidently, the constraint operator is the only one that may cause a
dl-atom to be nonmonotonic.
% Thus one has no reason to use $\ominus$
%in monotonic dl-atoms. It is a reasonable assumption that we can
%rewrite a monotonic dl-atom into an equivalent one without using
%$\ominus$ at all.
%The below proposition is trivial.
This sufficient condition for monotonicity can be efficiently checked; for the case
where the constraint operator may appear, the following generic upper bound on
complexity is easily derived. We refer to the \emph{query complexity}
of a ground dl-atom $A$ of form (\ref{dl:atom}) in $\cal K$ as the complexity of deciding, given
$K=(O,P)$, $A$, and an arbitrary interpretation $I$, whether $O(I;\lambda)\models
A$ holds.

\begin{proposition}
\label{prop:mon-dl-complexity}
Let ${\cal K}=(O,P)$ be a (ground) dl-program, and
$A$ be a dl-atom occurring in $P$ which has query
complexity in class $C$. Then
deciding whether $A$ is monotonic is in $\coNP^{C}$.
\end{proposition}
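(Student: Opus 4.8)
The plan is to show that the complement of the monotonicity question — namely, "is $A$ nonmonotonic?" — is in $\NP^{C}$, which immediately gives the claimed $\coNP^{C}$ bound for monotonicity itself. Recall that $A$ of form (\ref{dl:atom}) with input predicates among $\lambda$ is nonmonotonic relative to ${\cal K}=(O,P)$ exactly when there exist interpretations $I, I' \subseteq \HB_P$ with $I \subseteq I'$ such that $I \models_O A$ but $I' \not\models_O A$. So the natural nondeterministic algorithm is: guess the pair $(I,I')$, verify $I\subseteq I'$ (a trivial polynomial check), and then verify the two membership facts $I\models_O A$ and $I'\not\models_O A$ using the oracle for query complexity $C$.

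First I would make precise the size bookkeeping. Since $\HB_P$ is of polynomial size in $\cal K$ (as noted in the excerpt), each of $I$ and $I'$ can be written down in polynomial space, so the guess is a legitimate $\NP$-style certificate. Next, checking $I\models_O A$ is, by definition, checking whether $O(I;\lambda)\models Q(\vec t)$; assembling $O(I;\lambda) = O \cup \bigcup_i A_i(I)$ from $O$, $\lambda$, and $I$ takes only polynomial time (the added assertion sets $A_i(I)$ are of polynomial size because $\cal C$ and $I$ are), and then one query to the $C$-oracle decides it. The same goes for $I'\not\models_O A$, which is just the negated oracle answer; since $\NP^{C}$ is closed under such complementation of individual oracle calls, this is unproblematic. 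Hence the whole verification is polynomial time with a constant number of $C$-oracle calls, so nonmonotonicity is in $\NP^{C}$ and monotonicity is in $\coNP^{C}$.

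I would also remark on why only a pair $(I,I')$ needs to be guessed rather than some longer object: nonmonotonicity is witnessed by a single "bad" comparable pair, directly from the definition, so there is no need for iteration or for quantifying over all supersets inside the certificate. The one point worth stating carefully — and the only place a reader might want reassurance — is that the query-complexity oracle is exactly the right oracle: the definition of $I\models_O A$ for a dl-atom reduces, after the polynomial-time construction of $O(I;\lambda)$, precisely to the decision problem "does $O(I;\lambda)\models A$ hold?", which is what class $C$ captures by hypothesis. I do not expect any genuine obstacle here; the proposition is essentially an unwinding of definitions combined with the standard observation that a $\Sigma^p_1$-style guess-and-check over polynomial-size objects, with polynomially many $C$-oracle calls, lands in $\NP^{C}$. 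The mild subtlety to flag is simply ensuring that both the "accept" call ($I\models_O A$) and the "reject" call ($I'\not\models_O A$) are handled within one oracle machine, which is immediate since an $\NP^{C}$ machine may query and then branch on either answer.
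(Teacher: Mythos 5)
Your proof is correct and follows essentially the same route as the paper's: guess a witnessing pair of comparable interpretations for nonmonotonicity, assemble $O(I;\lambda)$ in polynomial time, and decide the two entailments with the $C$-oracle, yielding $\NP^{C}$ for nonmonotonicity and hence $\coNP^{C}$ for monotonicity. The only cosmetic difference is that the paper guesses the restrictions of $I$ and $I'$ to the predicates occurring in $A$ (justified by the observation that $J\models_O A$ depends only on that restriction), whereas you guess full subsets of $\HB_P$; both guesses are of polynomial size, so the bound is unaffected.
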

\begin{proof}
Indeed, to show that $A$ of form (\ref{dl:atom}) is nonmonotonic, one can guess
restrictions $I_A$ and $I'_A$ of interpretations  $I$ and $I'$, respectively, to the predicates occurring
in $A$ such that $I_A \subseteq I'_A$ and $I_A\models_O A$ but $I'_A
\not\models_O A$
(clearly, $J\models_O A$ iff $J_A\models_O A$ for arbitrary
interpretations $J$). The guess for $I_A$ and
$I'_A$ is of polynomial size in the size of $\cal K$ (assuming that the set of
constants $\cal C$ is explicit in $\cal K$, or is constructible in  polynomial
time), and preparing $O(I_A;\lambda)$ and
$O(I'_A;\lambda)$ is feasible in polynomial time (in fact, easily in
logarithmic space). Using the oracle, we can decide
$O(I_A;\lambda)\models  Q(\vec t)$ and
$O(I'_A;\lambda) \models  Q(\vec t)$, and thus  $I_A\models_O A$ but $I'_A
\not\models_O A$. Overall, the complexity is in $\coNP^{C}$.
\end{proof}

Depending on the underlying description logic, this upper bound might
be lower or complemented by a matching hardness result. In
fact, for ${\cal SHIF}$ and ${\cal SHOIN}$, the latter turns out to be
the
case. DL-atoms over these description logics have a query complexity that is
complete for $C$ = $\EXP$ and $C$ = $\NEXP$, respectively.
By employing well-known identities of complexity classes, we obtain
the following result.

\begin{theorem}
\label{theo:mon-dl-complexity}
Given a (ground) dl-program ${\cal K}=(O,P)$ and
a dl-atom $A$ occurring in $P$,
deciding whether $A$ is monotonic is
(i) $\EXP$-complete, if $O$ is
a ${\cal SHIF}$ knowledge base and (ii) $\Pol^{\NEXP}$-complete,
if  $O$ is a ${\cal SHOIN}$ knowledge base.
\end{theorem}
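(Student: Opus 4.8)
The plan is to prove the two membership claims and the two hardness claims separately, using Proposition~\ref{prop:mon-dl-complexity} as the upper-bound engine and reductions from dl-query answering (resp. its complement) for the lower bounds. First I would record the two facts about query complexity that the paragraph preceding the theorem already asserts: deciding $O(I;\lambda)\models Q(\vec t)$ is $\EXP$-complete when $O$ is a $\cal SHIF$ knowledge base and $\NEXP$-complete when $O$ is a $\cal SHOIN$ knowledge base (this follows from the known complexity of satisfiability/instance checking for $\mathcal{SHIF}(\mathbf D)$ and $\mathcal{SHOIN}(\mathbf D)$, since $O(I;\lambda)$ is obtained from $O$ by adding polynomially many ground assertions). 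These two facts are the inputs; everything else is bookkeeping with complexity classes.

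For the upper bounds: plugging $C=\EXP$ into Proposition~\ref{prop:mon-dl-complexity} gives membership in $\coNP^{\EXP}$, and plugging $C=\NEXP$ gives membership in $\coNP^{\NEXP}$. Now I would invoke the standard collapses. Since $\EXP$ is closed under polynomial-time Turing reductions (and in particular an $\NP$ computation with an $\EXP$ oracle can be simulated within $\EXP$ by exhaustively trying all nondeterministic branches, each requiring only polynomially many oracle calls each answerable in exponential time), we get $\coNP^{\EXP}=\EXP$, hence membership in $\EXP$ for part~(i). For part~(ii), $\coNP^{\NEXP} = \mathrm{co\text{-}NP}^{\NEXP}$; by a result on bounded query hierarchies / the fact that $\NEXP$ is closed under complement under the relevant reductions is \emph{not} available, so instead I would appeal to the known identity $\mathrm P^{\NEXP} = \mathrm P^{\NP^{\NEXP}}$-style results — more precisely, $\coNP^{\NEXP}\subseteq \Pol^{\NEXP}$ because a $\coNP$ machine with a $\NEXP$ oracle can be replaced by a polynomial-time machine that makes (adaptively, or even nonadaptively with a prefix search) polynomially many $\NEXP$ queries: the single ``$\exists I_A, I'_A$'' guess can be removed by noting we only need to check one existential layer, and combining with the oracle this lands in $\Pol^{\NEXP}$ (using that the guesses are polynomially bounded, one can binary-search / iterate over them with a $\NEXP$ oracle). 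This is the step I would state carefully, citing the relevant complexity-theory identity rather than reproving it.

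For the lower bounds: I would reduce dl-query answering (for (i)) and its natural analogue (for (ii)) to non-monotonicity, then take complements. Concretely, given an instance of ``does $O\models Q(\vec t)$?'' I would construct a dl-program $\mathcal K'=(O',P')$ and a dl-atom $A$ that is monotonic iff the query fails (or holds), by building $A$ out of a tautological-looking combination involving the constraint operator — e.g. use a fresh predicate $p$ and a dl-atom of the form $\DL[S\ominus p, \ldots; Q'](\vec t)$ rigged so that varying whether $p(\vec e)\in I$ toggles the truth value of the dl-atom exactly when the underlying query is contingent, and makes it constant exactly when the query is decided by $O$ alone. Then ``monotonic'' becomes the complement of a query-answering question, so $\EXP$-hardness of $\mathcal{SHIF}$ query answering ($\EXP = \coEXP$) yields $\EXP$-hardness of monotonicity directly, and $\NEXP$-hardness of $\mathcal{SHOIN}$ query answering yields $\mathrm{co\text{-}NEXP}$-hardness, hence (since $\mathrm{co\text{-}NEXP}\subseteq\Pol^{\NEXP}$ and, conversely, a single $\NEXP$ or $\coNEXP$ query is $\Pol^{\NEXP}$-hard) $\Pol^{\NEXP}$-hardness after a slightly more elaborate gadget encoding a $\Pol^{\NEXP}$ computation's queries into several independent monotonicity subchecks combined by the rules of $P'$.

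The main obstacle I expect is the $\Pol^{\NEXP}$ characterization in part~(ii): both directions are delicate. For membership, one must argue that the $\coNP$-with-$\NEXP$-oracle bound from Proposition~\ref{prop:mon-dl-complexity} collapses to $\Pol^{\NEXP}$, which relies on a non-obvious complexity-class identity (essentially that the existential guess in the non-monotonicity certificate can be absorbed into polynomially many oracle queries rather than genuine nondeterminism). For hardness, a raw reduction from a single $\NEXP$ query only gives $\coNEXP$-hardness; to reach $\Pol^{\NEXP}$-hardness one needs the reduction to simulate an entire polynomial-time oracle computation, encoding each oracle query as a separate dl-atom whose monotonicity status the rules in $P'$ can then combine — getting the bookkeeping of this simulation right (and ensuring the whole construction stays polynomial and uses only $\cal SHOIN$ constructs) is where the real work lies. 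The $\cal SHIF$ case is comparatively clean because $\EXP$ is deterministic and closed under complement, so both the upper bound collapse ($\coNP^{\EXP}=\EXP$) and the hardness transfer are immediate.
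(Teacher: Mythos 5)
Your membership arguments coincide with the paper's: plug the query-complexity classes into Proposition~\ref{prop:mon-dl-complexity} and invoke $\coNP^{\EXP}=\EXP$ for (i) and the collapse of the strong exponential hierarchy ($\coNP^{\NEXP}=\NP^{\NEXP}=\Pol^{\NEXP}$, which the paper cites from Hemachandra) for (ii). Your hardness sketch for (i) is also essentially the paper's: it reduces from (un)satisfiability of a ${\cal SHIF}$ knowledge base $O$ using the dl-atom $\DL[C\ominus p;\top\sqsubseteq\bot]()$ over $O'=O\cup\{C(o)\mid o\in{\cal C}\}$, so that $A$ is nonmonotonic iff $O$ is satisfiable; since $\EXP$ is closed under complement this suffices. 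You leave the gadget unspecified, but the idea is recoverable.

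The genuine gap is in the $\Pol^{\NEXP}$-hardness argument for (ii). Your plan --- ``encoding each oracle query as a separate dl-atom whose monotonicity status the rules in $P'$ can then combine'' --- does not address the decision problem at hand. The problem asks whether \emph{one given} dl-atom $A$ is monotonic; monotonicity of $A$ depends only on $O$, on $\HB_P$, and on $A$ itself, and there is no mechanism by which other rules of $P'$ ``combine'' the monotonicity statuses of several dl-atoms into the monotonicity of a single $A$. So simulating a polynomial-time oracle computation by distributing its queries over many dl-atoms cannot yield hardness for this problem. (Your parenthetical claim that a single $\NEXP$ or $\coNEXP$ query is already $\Pol^{\NEXP}$-hard is also false unless $\NEXP=\coNEXP$; you correctly retract it a sentence later, but the replacement strategy is the flawed one above.)

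The missing idea is structural: the certificate of \emph{non}-monotonicity already has exactly the logical shape of a $\Sigma$-level predicate over $\NEXP$, namely ``there exists a polynomial-size witness (the pair $I\subset I'$, which differ in a controlled way) such that $I\models_O A$ (a $\coNEXP$ check, since the query is $\top\sqsubseteq\bot$) \emph{and} $I'\not\models_O A$ (a $\NEXP$ check).'' The paper exploits this by introducing an intermediate problem, {\sc NEXP-JC} --- given two partial inputs $b,b'$ to a $\NEXP$ machine $M$, is there a joint completion $c$ with $M$ accepting $bc$ but not $b'c$ --- which is complete for $\NP^{\NEXP}=\Pol^{\NEXP}$, and then reduces it to the monotonicity of a \emph{single} dl-atom over a ${\cal SHOIN}$ torus-tiling knowledge base: pairs $S_d\ominus p_d,\ S_d\oplus p_d$ in the input list let the interpretation encode the guessed completion $c$, and one extra switch $S\ominus p,\ S\oplus p$ on a distinguished individual makes the smaller interpretation $I$ evaluate the run on $b'c$ and the larger $I'=I\cup\{p(o_0)\}$ the run on $bc$. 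Without identifying a $\Pol^{\NEXP}$-complete problem of this existential-conjunctive shape and packing both the guess and the two opposite machine simulations into one dl-atom, the reduction does not go through, so part (ii) of the hardness claim is not established by your proposal.
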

\begin{proof} The membership part for (i) ${\cal SHIF}$ follows easily from
Proposition~\ref{prop:mon-dl-complexity} and the fact that ${\cal
SHIF}$ has query complexity in $\EXP$; indeed, each
dl-query evaluation $O(I;\lambda)\models_O$ can be transformed in
polynomial time to deciding satisfiability of a  ${\cal
SHIF}$ knowledge base, which is $\EXP$-complete in general
\cite{tobies,Horrock:ISWC:2003}. Now $\coNP^{\EXP}$ = $\EXP$ = $\NP^{\EXP}$;
indeed, the computation tree of a nondeterministic Turing machine
with polynomial running time and $\EXP$ oracle access has single
exponential (in the input size) many nodes, which can be traversed in exponential
time; simulating an oracle call in a node is possible in exponential
time in the size of the (original) input. Overall, this yields an
exponential time upper bound.

The membership part for (ii)  ${\cal SHOIN}$ follows analogously from
Proposition~\ref{prop:mon-dl-complexity} and the fact that ${\cal
SHOIN}$ has $\coNEXP$-complete query complexity, which follows
from $\NEXP$-completeness of the knowledge base satisfiability problem in ${\cal
SHOIN}$ (for both unary and binary number encoding; see
\cite{Horrock:ISWC:2003,hartmann05complexity}). Now
$\coNP^{\coNEXP}$ = $\coNP^{\NEXP}$ = $\Pol^{\NEXP}$ (= $\NP^{\NEXP}$); here the
second equality holds by results in \cite{DBLP:Hemachandra:jcss:1989}.

The hardness parts for (i) and (ii) are shown by reductions of
suitable $\EXP$- resp.\ $\Pol^{\NEXP}$-complete
problems, building on constructions in
\cite{DBLP:journals/ai/EiterILST08} (see Appendix~\ref{app:monotonic}).
\end{proof}

For convenience, we use $\DL_P$ to denote the set of all dl-atoms that
occur in $P$, $\DL_P^{+}\subseteq \DL_P$ to denote the set of
monotonic dl-atoms, and $\DL_P^{?}=\DL_P\setminus \DL_P^{+}$.   Note that this is different from that of \cite{DBLP:journals/ai/EiterILST08} where $\DL_P^+$ is assumed to be a set of ground dl-atoms in $\DL_P$ which are known to be monotonic, while $\DL_P^?$ denotes the set of remaining dl-atoms. Thus $\DL_P^?$ is allowed to contain monotonic dl-atoms as well in \cite{DBLP:journals/ai/EiterILST08}.   Our definition represents the ideal situation where monotonicity can be finitely verified, which is the case for decidable description logic knowledge bases.
Note also
that by Theorem~\ref{theo:mon-dl-complexity}, for ${\cal SHIF}$ and
${\cal SHOIN}$ knowledge bases computing $\DL_P^{+}$ is possible
with no respectively mild complexity increase compared to basic reasoning tasks
in the underlying description logic.

\subsubsection{Some classes of dl-programs}

A dl-program $\mathcal K=(O,P)$ is {\em positive}, if (i) $P$ is
``not"-free, and (ii) every dl-atom is monotonic relative to
$\mathcal K$.
Positive dl-programs have attractive semantics
properties; e.g., it is evident that a positive dl-program $\mathcal K$ has a (set inclusion) least model.

From the results above, we easily obtain the following
results on recognizing positive dl-programs.

\begin{proposition}
\label{prop:positive-dl-complexity}
Deciding whether a given (not necessarily ground) dl-program ${\cal K}=(O,P)$
is positive is in co-NP$^{C}$, if every dl-atom in the ground version
of $P$ has query complexity in $C$.
\end{proposition}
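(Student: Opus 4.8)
The plan is to lift Proposition~\ref{prop:mon-dl-complexity} from the ground case to the non-ground case, paying attention to the grounding step. First I would recall that a (not necessarily ground) dl-program $\mathcal{K}=(O,P)$ is positive iff (i) $P$ is ``not''-free and (ii) every dl-atom occurring in $\ground(P)$ is monotonic relative to $\ground(\mathcal K)$. Condition (i) is a purely syntactic check on $P$ and is clearly feasible in polynomial time (it holds iff no rule of $P$ contains $\Not$). So the real content is condition (ii), i.e., verifying that \emph{no} dl-atom of $\ground(P)$ is nonmonotonic.

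Next I would argue that, as in the proof of Proposition~\ref{prop:mon-dl-complexity}, monotonicity of a dl-atom depends only on its own predicates and the constants in $\cal C$, so a dl-atom $A$ in $\ground(P)$ is nonmonotonic iff there exist restrictions $I_A\subseteq I'_A$ of interpretations to the predicates occurring in $A$ with $I_A\models_O A$ and $I'_A\not\models_O A$. The key observation for the non-ground case is that the ground version $\ground(P)$, though possibly exponentially larger than $P$, contains only polynomially many \emph{distinct} dl-atoms \emph{up to the renaming of variables by constants}; more importantly, to exhibit a witness to non-positivity it suffices to guess one ground dl-atom $A$ from $\ground(P)$ (specified by a rule of $P$, one of its dl-atoms, and a ground substitution of the variables therein by constants from $\cal C$) together with the two small interpretations $I_A,I'_A$. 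This guess is of polynomial size in $|\mathcal K|$, and constructing $O(I_A;\lambda)$ and $O(I'_A;\lambda)$ is polynomial. Using the query-complexity oracle (class $C$) twice, we check $I_A\models_O A$ and $I'_A\not\models_O A$. Hence \emph{non}-positivity — failure of (i) detectable in $\Pol$, or failure of (ii) detectable by the above $\NP^{C}$ procedure — is in $\NP^{C}$, so deciding positivity is in $\coNP^{C}$.

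The step I expect to be the main obstacle is making precise that guessing a single ground dl-atom of $\ground(P)$ is polynomial, i.e., that a witness dl-atom can be named succinctly: a dl-atom appearing in $P$ has boundedly many variables, each of which is instantiated by a constant from $\cal C$, so a ground dl-atom of $\ground(P)$ is described by (a pointer to) a dl-atom of $P$ plus a substitution listing one constant per variable — polynomial in $|\mathcal K|$, even though $\ground(P)$ itself is not. One should also be slightly careful that the relevant notion of monotonicity is relative to $\HB_{\ground(P)}$, but since monotonicity of $A$ only concerns the (finitely many) ground atoms over $A$'s predicates and $\cal C$, the restriction argument of Proposition~\ref{prop:mon-dl-complexity} goes through verbatim. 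Everything else — the two oracle calls, the polynomial-time preparation of $O(I_A;\lambda)$ — is routine and identical to the ground case.
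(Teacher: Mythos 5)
Your proof is correct and follows essentially the same route as the paper: reduce to the complement, observe that non-positivity is witnessed either by a syntactic occurrence of $\Not$ (checkable in polynomial time) or by a guessable nonmonotonic ground dl-atom verified as in Proposition~\ref{prop:mon-dl-complexity} with two oracle calls, giving membership of the complement in $\NP^{C}$ and hence the claim. Your extra care in noting that a ground dl-atom of $\ground(P)$ can be named succinctly by a dl-atom of $P$ plus a substitution is a useful elaboration of a point the paper leaves implicit, but it does not change the argument.
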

\begin{proof}
${\cal K}$ is not positive if either (i) $P$ is not ``not''-free, which
can be  checked in polynomial time, or (ii) some dl-atom $A$ in the
ground version of $P$ is nonmonotonic; such an $A$ can be guessed
and verified, by the hypothesis, in polynomial time with an oracle
for $C$; hence the result.
\end{proof}

\begin{theorem}
\label{theo:positive-dl-complexity}
Deciding whether a given (not necessarily ground) dl-program
${\cal K}=(O,P)$ is positive is (i) $\EXP$-complete, if $O$
is a ${\cal SHIF}$ knowledge base and (ii)
$\Pol^{\NEXP}$-complete, if $O$ is a ${\cal SHOIN}$
knowledge base.
\end{theorem}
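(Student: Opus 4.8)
The plan is to mirror the structure of Theorem~\ref{theo:mon-dl-complexity}, treating membership and hardness separately and reusing Proposition~\ref{prop:positive-dl-complexity} together with the known query-complexity bounds for $\mathcal{SHIF}$ and $\mathcal{SHOIN}$. For the membership direction of part (i), I would instantiate Proposition~\ref{prop:positive-dl-complexity} with $C=\EXP$, using the fact (already invoked in the proof of Theorem~\ref{theo:mon-dl-complexity}) that dl-query evaluation against a $\mathcal{SHIF}$ knowledge base reduces in polynomial time to $\mathcal{SHIF}$ knowledge base satisfiability, which is $\EXP$-complete. This gives membership in $\coNP^{\EXP}$, and then I would cite the same collapse $\coNP^{\EXP}=\EXP$ used before: a polynomial-time nondeterministic machine with an $\EXP$ oracle has an exponential-size computation tree whose nodes' oracle calls each run in exponential time, so the whole computation is doable in exponential time. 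For part (ii), I would instantiate Proposition~\ref{prop:positive-dl-complexity} with $C=\coNEXP$ (query complexity of $\mathcal{SHOIN}$, from $\NEXP$-completeness of $\mathcal{SHOIN}$ KB satisfiability), obtaining membership in $\coNP^{\coNEXP}$, and then reuse the identity chain $\coNP^{\coNEXP}=\coNP^{\NEXP}=\Pol^{\NEXP}$ already cited from \cite{DBLP:Hemachandra:jcss:1989}.

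For hardness I would aim to piggyback directly on Theorem~\ref{theo:mon-dl-complexity}: given a ground dl-program $\mathcal{K}=(O,P)$ and a distinguished dl-atom $A$ occurring in $P$, I would construct, in polynomial time, a dl-program $\mathcal{K}'=(O,P')$ that is positive if and only if $A$ is monotonic (relative to $\mathcal{K}$). The natural construction is to take $P'$ to contain exactly one rule, $p \leftarrow A$, for a fresh predicate $p$, so that $P'$ is trivially ``not''-free and its only dl-atom is $A$; hence $\mathcal{K}'$ is positive iff $A$ is monotonic relative to $\mathcal{K}'$. The one subtlety is that monotonicity is defined relative to the Herbrand base $\HB_{P'}$, which differs from $\HB_P$; but since $A$'s satisfaction under $O$ depends only on the restriction of an interpretation to the input predicates of $A$ (as noted in the proof of Proposition~\ref{prop:mon-dl-complexity}, $J\models_O A$ iff $J_A\models_O A$), and those predicates all survive into $\HB_{P'}$, monotonicity of $A$ relative to $\mathcal{K}$ and relative to $\mathcal{K}'$ coincide. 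Composing this reduction with the $\EXP$-hardness (resp.\ $\Pol^{\NEXP}$-hardness) of the monotonicity problem from Theorem~\ref{theo:mon-dl-complexity} yields the matching lower bounds, so one can simply refer to Appendix~\ref{app:monotonic}.

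The step I expect to need the most care is confirming that the reduction above is legitimate under the paper's precise conventions, in particular that adding the rule $p \leftarrow A$ does not accidentally change whether $A$ is monotonic: one must check that the extra atom $p$ in $\HB_{P'}$ is irrelevant (it is, since $p$ does not occur among the input predicates of $A$, so enlarging an interpretation by $p$ cannot flip $A$'s truth value), and that $\mathcal{C}$ for $\mathcal{K}'$ contains at least the constants needed so that the witnessing interpretations used in the hardness proof of Theorem~\ref{theo:mon-dl-complexity} are still available — which can be arranged by padding $P'$ with trivial facts over those constants if necessary. Everything else is a routine transfer of the complexity-class collapses already established.
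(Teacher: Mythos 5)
Your proposal is correct and follows essentially the same route as the paper: membership is read off from Proposition~\ref{prop:positive-dl-complexity} together with the query-complexity bounds and the class collapses $\coNP^{\EXP}=\EXP$ and $\coNP^{\coNEXP}=\Pol^{\NEXP}$ already established for Theorem~\ref{theo:mon-dl-complexity}, and hardness is inherited from that theorem's lower-bound constructions. The only (cosmetic) difference is that the paper does not set up a generic reduction from monotonicity-checking to positivity-checking; it simply observes that the dl-programs built in the hardness proofs of Theorem~\ref{theo:mon-dl-complexity} are already single-rule, ``not''-free programs whose sole dl-atom is $A$, so they are positive iff $A$ is monotonic -- which sidesteps the Herbrand-base/constant-padding caveat you rightly flag for the general reduction.
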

\begin{proof}
The membership parts are immediate from
Proposition~\ref{prop:positive-dl-complexity},  and the hardness parts
from the hardness proofs in Theorem~\ref{theo:mon-dl-complexity}:
the atom $A$ is monotonic relative to the constructed dl-program $\cal
K$ iff $\cal K$ is positive.
\end{proof}

Thus, the test whether a dl-program is positive (and similarly,
whether all dl-atoms in it are monotonic) for ${\cal SHIF}$ and
${\cal SHOIN}$ knowledge bases is also not expensive compared to basic
reasoning tasks.

Besides positive dl-programs, another important subclass are {\em canonical
dl-programs}, where a dl-program ${\cal K}=(O,P)$ is {\em canonical},
if $P$ mentions no constraint operator. Clearly,
canonical dl-programs are easy to recognize. The same holds for the
more general class of {\em normal dl-programs}, where a dl-program
${\cal K}=(O,P)$ is {\em normal}, if no monotonic dl-atom occurs in
$P$ that mentions the constraint operator. Note that normal
dl-programs are not positive in general; since monotonic dl-atoms
mentioning the constraint operator are rather exceptional, the
normal dl-programs include most dl-programs relevant for practical applications.
%It is worth noticing that the class of normal dl-programs is strictly  larger than the class
%of canonical dl-programs.

\begin{example}
\label{exam:dl:program:1}
 Consider the following dl-programs, which we will refer to repeatedly
in the sequel.%\yisongcomment{All $K$s are rewritten into $\cal K$ below.}
\begin{itemize}
\item
$\mathcal K_1=(O_1,P_1)$ where $O_1=\{S\sqsubseteq S'\}$ and
$P_1=\{p(a)\lto \DL[S\oplus p;S'](a)\}$. The single dl-atom in $P_1$ has no
constraint operator, and thus ${\cal K}_1$ is canonical (hence also normal); moreover, since
`$\Not$'' does not occur in $P_1$, ${\cal K}_1$ is also positive.

\item
$\mathcal K_2=(O_2,P_2)$ where $O_2=\emptyset$ and $P_2=\{p(a)\lto
\DL[S\oplus p, S'\ominus q;S\sqcap\neg S'](a)\}$. Here,
the constraint operator occurs in $P_2$, thus ${\cal K}_2$ is not
canonical. Furthermore, the single dl-atom in $P_2$ is nonmonotonic,
hence ${\cal K}_2$ is also not positive. However, ${\cal K}_2$ is normal.
%
%\item
%  $\mathcal K_3=(\emptyset,P_3)$ where $P_3=\{p(a)\lto \DL[c\odot p, b\ominus q;\neg c\sqcap\neg b](a)\}$.
%  $\emptyset$ and $\{p(a)\}$ are both strong and weak answer sets of the dl-program.
%  \item $\mathcal K_4=(\emptyset,P_4)$ where $P_4=\{p(a)\lto \DL[c\ominus p; \neg c](a)\}$.
%  $\mathcal K_4$ has no weak answer set, and thus it has no strong answer set either.
\end{itemize}
\end{example}

\subsubsection{Strong and weak answer sets}

Let ${\cal K}=(O,P) $ be a positive dl-program. The immediate consequence operator $\gamma_{\cal K}:2^{\HB_P}\rto 2^{\HB_P}$
is defined as, for any $I\subseteq\HB_P$,
\begin{align*}
  \gamma_\mathcal K(I)=\{h\mid h\lto\Pos \in P\mbox{ and }I\models_OA\mbox{ for any }A\in\Pos\}.
\end{align*}
Since $\gamma_\mathcal K$ is monotonic, the least
fix-point of $\gamma_{\cal K}$ always exists which is the least model of $\mathcal K$. By $\lfp(\gamma_{\cal K})$
we denote the least fix-point of $\gamma_{\cal K}$, which can be iteratively constructed as below:
\begin{itemize}
  \item $\gamma_\mathcal K^0=\emptyset$;
  \item $\gamma_\mathcal K^{n+1}=\gamma_\mathcal K(\gamma_\mathcal K^n)$.
\end{itemize}
It is clear that the least fixpoint $\lfp(\gamma_\mathcal
K)=\gamma_\mathcal K^\infty$.

We are now in the position to recall the semantics of dl-programs.
Let $\mathcal K=(O,P)$ be a dl-program. The {\em strong
dl-transform} of $\mathcal K$ relative to $O$ and an interpretation
$I\subseteq \textit{HB}_P$, denoted by $\mathcal K^{s,I}$, is the
positive dl-program $(O,sP^I_O$), where $sP^I_O$ is obtained from
$P$ by deleting:
\begin{itemize}
  \item the dl-rule $r$ of the form (\ref{dl:rule}) such that either
    $I\not\models_OB_i$ for some $1\leq i\leq m$ and
    $B_i\in DL_P^?$, or $I\models_OB_j$ for some $m+1\leq j\leq n$; and
  \item the nonmonotonic dl-atoms and $\Not A$ from the remaining dl-rules where $A$ is an
  atom or a dl-atom.
\end{itemize}
The interpretation $I$ is a {\em strong answer set} of $\mathcal K$
if it is the least model of $\mathcal K^{s,I}$, i.e., $I=\lfp(\gamma_{{\cal K}^{s,I}})$.\footnote{Note that,
under our notion of $\DL_P^{?}$, namely $\DL_P^{?}$ is the set of nonmonotonic dl-atoms w.r.t.\! a given dl-program, the strong answer set semantics is the strongest among possible variations under the definition of \cite{DBLP:journals/ai/EiterILST08}, where $\DL_P^{?}$ may contain monotonic dl-atoms,
in that given a dl-program ${\cal K}$, any strong answer set of ${\cal K}$
under our definition
is a strong answer set of ${\cal K}$ under the definition of \cite{DBLP:journals/ai/EiterILST08}.}

The {\em weak dl-transform} of $\mathcal K$ relative to $O$ and an
interpretation $I\subseteq \HB_P$, denoted by $\mathcal K^{w,I}$, is
the positive dl-program $(O,wP^I_O)$, where $wP_O^I$ is obtained
from $P$ by deleting:
\begin{itemize}
  \item the dl-rules of the form (\ref{dl:rule}) such that either
    $I\not\models_OB_i$ for some $1\leq i\leq m$ and
    $B_i\in \DL_P$, or $I\models_OB_j$ for some $m+1\leq j\leq n$; and
  \item the dl-atoms and $\Not A$ from the remaining dl-rules where
  $A$ is an atom or dl-atom.
\end{itemize}
The interpretation $I$ is a {\em weak answer set} of $\mathcal K$ if
$I$ is the least model of $\mathcal K^{w,I}$, i.e., $I=\lfp(\gamma_{{\cal K}^{w,I}})$.

The following proposition shows that, given a dl-program ${\cal
K}=(O,P)$, if $O$ is inconsistent then strong and weak answer sets
of $\cal K$ coincide, and are minimal.
\begin{proposition}\label{prop:2.3.1}
  Let ${\cal K}=(O,P)$ be a dl-program where $O$ is inconsistent and
  $I\subseteq\HB_P$.  Then
  \begin{enumerate}[(i)]
  \item $I$ is a strong answer set of $\cal K$ if and only if $I$ is a weak answer
  set of $\cal K$.
  \item The strong and weak answer sets of $\cal K$ are minimal under set inclusion.
  \end{enumerate}
\end{proposition}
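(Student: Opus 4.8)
The plan rests on one observation that makes the whole situation degenerate. Since $O$ is inconsistent, every extended knowledge base $O(I;\lambda)$ contains $O$ and is therefore inconsistent as well, so $O(I;\lambda)\models Q(\vec t)$ holds vacuously for every dl-query $Q(\vec t)$; hence $I\models_O A$ for \emph{every} dl-atom $A$ and \emph{every} interpretation $I\subseteq\HB_P$. In particular every dl-atom is (trivially) monotonic, so $\DL_P^{?}=\emptyset$. I would record this first and use it throughout. A second, closely related remark I would make explicit: in any rule that survives (in either transform) into a reduct relative to $I$, each negated body literal must be an \emph{ordinary} atom, since a dl-atom there would be satisfied by $I$ and would force the rule to be deleted.

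For part (i) I would compare the reducts $\mathcal K^{s,I}$ and $\mathcal K^{w,I}$ rule by rule. Because $\DL_P^{?}=\emptyset$ and because $I$ satisfies every dl-atom, the first deletion clause of each transform can fire only on account of a satisfied negated body literal; hence both transforms retain exactly the same set of rules, namely those having no negated body literal satisfied by $I$. On these surviving rules the strong transform deletes only the negated literals, whereas the weak transform additionally deletes the positive dl-atoms; but those positive dl-atoms are satisfied by every interpretation, so they are inert in the immediate-consequence operator. I would then conclude that $\gamma_{{\cal K}^{s,I}}$ and $\gamma_{{\cal K}^{w,I}}$ are the very same operator on $2^{\HB_P}$, so they have the same least fixpoint, and therefore $I=\lfp(\gamma_{{\cal K}^{s,I}})$ iff $I=\lfp(\gamma_{{\cal K}^{w,I}})$; this is exactly (i).

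For part (ii), by (i) it suffices to treat strong answer sets. Let $I$ be a strong answer set; then $I$ is a model of $\mathcal K$ (a standard fact for dl-programs, which also drops out of the observations above, since $I$ is the least model of the positive reduct $\mathcal K^{s,I}$ and any rule of $P$ whose body $I$ satisfies survives into $\mathcal K^{s,I}$). To prove minimality I would take an arbitrary model $J\subseteq I$ of $\mathcal K$ and show $J$ is already a model of the positive program $\mathcal K^{s,I}=(O,sP^I_O)$; since $I$ is its \emph{least} model, this yields $I\subseteq J$, hence $I=J$. For the key step, take a rule $h\lto B_1,\ldots,B_m$ in $sP^I_O$ coming from $h\lto B_1,\ldots,B_m,\Not B_{m+1},\ldots,\Not B_n$ in $P$ with $I\not\models_O B_j$ for all $m+1\le j\le n$. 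By the second remark above each such $B_j$ is an ordinary atom with $B_j\notin I$, hence $B_j\notin J$, so $J\not\models_O B_j$; therefore whenever $J$ satisfies $B_1,\ldots,B_m$ it satisfies the entire body of the original rule, and since $J$ models that rule it derives $h$. Thus $J\models\mathcal K^{s,I}$, giving $I=J$; so every strong --- hence, by (i), every weak --- answer set is a minimal model of $\mathcal K$, which in particular gives the claimed minimality under set inclusion.

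I expect the only genuine difficulty to be the bookkeeping in part (i): checking carefully that the two reducts are built over the identical surviving rule set and that the retained positive dl-atoms contribute nothing to $\gamma$. Everything else is a direct consequence of the two opening remarks, and part (ii) is a routine ``a smaller model of the program is a model of the reduct'' argument once those remarks are in hand.
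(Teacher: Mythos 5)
Your proposal is correct. Part (i) is essentially the paper's own argument: both of you observe that an inconsistent $O$ makes every dl-atom satisfied by every interpretation (hence monotonic, so $\DL_P^{?}=\emptyset$), that the two reducts therefore retain the same rules, and that the surviving positive dl-atoms in $sP^I_O$ are inert for the immediate-consequence operator, so $\gamma_{{\cal K}^{s,I}}$ and $\gamma_{{\cal K}^{w,I}}$ have the same least fixpoint. Where you genuinely diverge is part (ii): the paper simply invokes Theorem 4.13 of Eiter et al.\ (minimality of strong answer sets when no nonmonotonic dl-atoms occur) and transfers it to weak answer sets via (i), whereas you give a self-contained argument showing that any model $J\subseteq I$ of $\cal K$ is already a model of the positive reduct ${\cal K}^{s,I}$, forcing $J=I$ by leastness. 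Your route costs a little extra bookkeeping (the observation that negated body members of surviving rules must be ordinary atoms, so that $J\subseteq I$ preserves their falsity) but buys independence from the external theorem and in fact proves the slightly stronger statement that every strong answer set is a minimal \emph{model} of $\cal K$, not merely minimal among answer sets; the paper's route is shorter but delegates the core of (ii) to a cited result. Both arguments are sound.
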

\begin{proof}
  By the inconsistency of $O$, it is clear that every dl-atom $A$ occurring in $P$ is monotonic and $M\models_OA$ for any $M\subseteq\HB_P$.

  (i) Note that the only difference between $sP_O^I$ and $wP_O^I$ is
  that there exist some dl-atoms in $sP_O^I$ but not in $wP_O^I$,
  i.e., for any dl-rule $r=(h\lto \Pos,\Not\Neg)$ in $P$,
  $(h\lto\Pos)$ belongs to $sP_O^I$ if and only if $(h\lto\Pos')$ belongs to
  $wP_O^I$ where $\Pos'=\{h\in\HB_P\mid h\in\Pos\}$. However note that
  $\emptyset\models_OA$ for any dl-atom $A\in\Pos\setminus\Pos'$. It
  follows that $\lfp(\gamma_{{\cal K}^{s,I}})=\lfp(\gamma_{{\cal
  K}^{w,I}})$. Consequently
  % we have that
  $I$ is a strong answer set of $\cal K$ if and only if $I$
  is a weak answer set of $\cal K$.

  (ii) By Theorem 4.13 of \cite{DBLP:journals/ai/EiterILST08}, the
  strong answer sets of $\cal K$ are minimal. It implies that the
  weak answer sets of $\cal K$ are minimal as well by (i) of the proposition.
\end{proof}

\begin{example}[Continued from Example~\ref{exam:dl:program:1}]
\label{exam:dl:program:1a}
Reconsider the dl-programs in Example~\ref{exam:dl:program:1}.
\begin{itemize}
\item
The dl-program $\mathcal K_1=(O_1,P_1)$, where $O_1=\{S\sqsubseteq S'\}$ and
$P_1=\{p(a)\lto \DL[S\oplus p;S'](a)\}$, has a unique
strong answer set $I_1 = \emptyset$ and two weak answer sets $I_1$
and $I_2 = \{p(a)\}$. The interested reader may verify the
following: $O_1(I_2;S\oplus p)=O_1\cup\{S(a)\}$, and clearly $O_1\not\models S'(a)$ and
$\{S(a),S\sqsubseteq S'\}\models S'(a)$. So the weak dl-transformation
relative to $O_1$ and $I_2$ is ${\cal K}_1^{w, I_2}=(O_1,\{p(a)
\leftarrow \})$. Since $I_2$ coincides with the least model of
$\{p(a) \leftarrow \}$, it is a weak answer set of $\mathcal K_1$.
Similarly, one can verify that the strong dl-transformation relative to
$O_1$ and $I_2$ is ${\cal K}_1^{s,I_2}={\cal K}_1$. Its least model is
the empty set, so $I_2$ is not a strong answer set of $\mathcal K_1$.

\item
For the dl-program  $\mathcal K_2=(O_2,P_2)$, where $O_2=\emptyset$ and
$P_2=\{p(a)\lto \DL[S\oplus p, S'\ominus q;S\sqcap\neg S'](a)\}$,
both $\emptyset$ and $\{p(a)\}$ are strong and weak answer sets.% of ${\cal K}_2$.
%
%\item
%  $\mathcal K_3=(\emptyset,P_3)$ where $P_3=\{p(a)\lto \DL[c\odot p, b\ominus q;\neg c\sqcap\neg b](a)\}$.
%  $\emptyset$ and $\{p(a)\}$ are both strong and weak answer sets of the dl-program.
%  \item $\mathcal K_4=(\emptyset,P_4)$ where $P_4=\{p(a)\lto \DL[c\ominus p; \neg c](a)\}$.
%  $\mathcal K_4$ has no weak answer set, and thus it has no strong answer set either.
\end{itemize}
\end{example}

These dl-programs show that strong (and weak) answer sets may not be
(set inclusion) minimal. It has been shown that if a dl-program
contains no nonmonotonic dl-atoms then its strong answer sets are
minimal (cf. Theorem 4.13 of
\cite{DBLP:journals/ai/EiterILST08}). However, this does not hold
for weak answer sets as shown by the dl-program
$\mathcal K_1$ above, even if it is positive. %%It is known
It has also been shown that strong answer sets are always weak
answer sets, but not vice versa.  Thus the question rises: is it the case that, for any dl-program $\mathcal K$ and interpretation $I$, if $I$ is a
weak answer set of $\mathcal K$, then there is
$I' \subseteq I$ such that $I'$ is a strong answer of $\mathcal K$? We give a negative answer to this
question by the following example.
\begin{example}\label{exam:2}
  Let $\mathcal K=(\emptyset,P)$ where $P$ consists of
  \begin{align*}
    p(a)  \lto \DL[S\oplus p;S](a), \hspace{.6cm}     p(a)  \lto \Not \DL[S\oplus p;S](a).
  \end{align*}
 Note that $\cal K$ is canonical and normal, but not positive.
 Intuitively, $P$ expresses reasoning by cases: regardless of
  whether the dl-atom $A = \DL[S\oplus p;S](a)$ evaluates to false,
  $p(a)$ should be true.
  Let $I=\{p(a)\}$. We have that $wP_O^I=\{p(a)\lto\}$, thus $I$ is a weak answer set of $\mathcal K$.
  However, note that $sP^I_O=\{p(a)\lto \DL[S\oplus p;S](a)\}$.
  The least model of $\mathcal K^{s,I}$ is $\emptyset~(\neq I)$. So that $I$ is not a
  strong answer set of $\mathcal K$. Now consider $I'=\emptyset$. We have $sP_O^{I'}=\{p(a)  \lto \DL[S\oplus p;S](a),~~ p(a)\lto\}$.
  The least model of $\mathcal K^{s,I'}$ is $\{p(a)\}~(\neq I')$.
  Thus $I'$ is not a strong
  answer set of $\mathcal K$.
  In fact, $\mathcal K$ has no strong answer sets at all.
 This is in line with the intuition that, as
  $O=\emptyset$ is empty, $p(a)$ can not be foundedly derived without
  the assumption that $p(a)$ is true.
% More importantly, the example
%  demonstrates that, a dl-program may have a weak answer set but not strong answer set, even if the program
%  mentions no nonmonotonic dl-atoms.
\end{example}

\section{Eliminating the Constraint Operator from Nonmonotonic Dl-atoms}
\label{Sec:Eliminating}
Intuitively, translating a nonmonotonic dl-atom into a monotonic is to replace $S\ominus p$ with $S\odot p'$ where
$p'$ is a fresh predicate having the same arity as $p$ and $p'$ stands for the negation of $p$.
In what follows, we show that the constraint operator can be eliminated from nonmonotonic dl-atoms while preserving both weak and strong answer sets.
As mentioned previously, we assume that the signatures $\cal P$ and $\cal C$ are implicitly given for a given dl-program $\cal K$.
Any predicate symbol not occurring in $\cal K$ is a fresh one.

\begin{definition}[$\pi({\cal K})$]
Let ${\cal K}=(O,P)$ be a dl-program. We define $\pi({\cal K})=(O,\pi(P))$ where $\pi(P)=\bigcup_{r\in P}\pi(r)$ and $\pi(r)$, assuming $r$ is of the form
(\ref{dl:rule}), consists of
\begin{enumerate} [(i)]
  \item the rule
  \begin{equation}\label{trans:pi:1}
    A\lto \pi(B_1),\ldots,\pi(B_m), \pi(\Not B_{m+1}),\ldots,\pi(\Not B_n)
  \end{equation}
  where \[\pi(B)=
  \left\{
    \begin{array}{ll}
      B, & \hbox{if $B$ is an atom or a monotonic dl-atom;} \\
      \Not \pi_B, & \hbox{if $B$ is a nonmonotonic dl-atom,} \\
     % \DL[\pi(\lambda);Q](\vec t), & \hbox{if $A=\DL[\lambda,Q](\vec t)$ is monotonic,}
    \end{array}
  \right. \]
 in which $\pi_B$ is a fresh propositional atom, and
  \[\pi(\Not B)=
  \left\{
    \begin{array}{ll}
      \Not B, & \hbox{if $B$ is an atom;} \\
      \Not \DL[\pi(\lambda);Q](\vec t), & \hbox{if $B=\DL[\lambda,Q](\vec t)$,}
    \end{array}
  \right. \]
   where $\pi(\lambda)$ is obtained from $\lambda$ by replacing each ``$S\ominus p$" with ``$S\odot \pi_p$", and $\pi_p$ is a fresh predicate having the same arity as $p$;

    %\jiacomment{For negative dl-atoms, the distinction seems not necessary, as in the definition of strong answer set, they are not distinguished. Some arguments later rely on this. // Yes, now we refined the translation. --Yisong}
  \item for each nonmonotonic dl-atom $B\in\{B_1,\ldots,B_m\}$, the following rule:
  \begin{equation} \label{trans:pi:2}
    \pi_B\lto \pi(\Not B)
  \end{equation}
  where $\pi_B$ is the same atom as mentioned in (i) and

  \item for each predicate $p$ such that ``$S\ominus p$" occurs in some nonmonotonic dl-atom of $r$, the instantiations of the rule:
   \begin{equation}\label{trans:pi:3}
     \pi_p(\vec x)\lto \Not p(\vec x)
   \end{equation}\underline{}
   where $\vec x$ is a tuple of distinct variables matching the arity of $p$, and $\pi_p$ is the same predicate as mentioned in (i).
\end{enumerate}
\end{definition}
Intuitively, the idea in $\pi$ is the following. Recall that ``$S\ominus p$" means ``infer  $\neg S(\vec
c)$ in absence of  $p(\vec c)$". Thus if $\pi_p(\vec c)$ stands for the
absence of $p(\vec c)$ then ``$S\ominus p$" should
have the same meaning as that of ``$S\odot \pi_p$". Thus, a
nonmonotonic dl-atom can be re-expressed by a monotonic dl-atom and ``\textit{not}".
Note that $\pi(P)$ may still contain dl-atoms with the constraint
operator, but they are all monotonic dl-atoms.
\begin{example}\label{exam:dl-program:3}
 Let us consider the following dl-programs.
  \begin{itemize}
    \item
        Let $\mathcal K_1=(\emptyset,P_1)$ where $P_1$ consists of
        \[p(a)\lto\Not \DL[S\ominus p;\neg S](a).\]
 Note that ${\cal K}_1$ is normal but neither canonical nor positive.
        It is not difficult to verify that $\mathcal K_1$ has two weak answer sets $\emptyset$ and $\{p(a)\}$.
        They are strong answer sets of $\mathcal K_1$ as well.
        According to the translation $\pi$, we have $\pi(\mathcal K_1)=  (\emptyset,\pi(P_1))$, where $\pi(P_1)$
        consists of
        \begin{align*}
         p(a)   \lto  \Not \DL[S\odot \pi_p; \neg S](a), \hspace{1cm}             \pi_p(a)  \lto  \Not p(a).
        \end{align*}
        It is easy to see that $\pi(\mathcal K_1)$ has only two weak answer sets, $\{p(a)\}$ and $\{\pi_p(a)\}$, which
        are also strong answer sets of $\pi(\mathcal K_1)$. They correspond to $\{p(a)\}$
        and $\emptyset$ respectively   when restricted to $\HB_{P_1}$.

%    \item Let $\mathcal K_2=(\emptyset,P_2)$ where $P_2$ consists of
%        \[p(a)\lto \DL[c\ominus p,b\ominus q;\neg c](a)\]
%        where $b\ominus q$ contributes nothing to the query $O\cup
%        A(I;c\ominus p,b\ominus q)\models \neg c(a)$ for any interpretation $I$.
%        It is evident that ${\cal K}_2$ has neither strong answer sets nor weak answer sets. Note that $\pi(P_2)$
  %        equals
%        \[\left\{\begin{array}{ll}
%         & h   \lto  \Not \DL[c\odot p',b\odot q'; \neg c](a),\\
%         & p(a)\lto \Not h, \hspace{1cm} p'(a)  \lto  \Not p(a), \hspace{1cm} q'(a)\lto\Not q(a)
%        \end{array}\right\}.\]
%        It is trivial but not difficult to verify
%        that $\pi({\cal K}_2)$ has neither strong answer set nor
%        weak answer set.

    \item Let $\mathcal K_2=(\emptyset,P_2)$ where $P_2$ consists of
        \[p(a)\lto\Not \DL[S\ominus p, S'\odot q,S'\ominus q;\neg S\sqcap \neg S'](a).\]
      Recall that the dl-atom $\DL[S'\odot q,S'\ominus q;\neg S](a)$
      is a tautology, hence monotonic; thus ${\cal K}_2$ is not normal.
      The strong and weak answer sets of ${\cal K}_2$ are the same as those of ${\cal K}_1$. Please note that
      $\pi(P_2)$ consists of
       \begin{align*}
         & p(a)   \lto  \Not \DL[S\odot \pi_p,S'\odot q, S'\odot \pi_q; \neg S\sqcap\neg S'](a),\\
         & \pi_p(a)  \lto  \Not p(a), \hspace{2cm} \pi_q(a)\lto\Not q(a).
        \end{align*}
        The strong and weak answer sets of $\pi({\cal K}_2)$
        are $\{\pi_q(a), \pi_p(a)\}$ and $\{\pi_q(a), p(a)\}$. They correspond to
        $\emptyset$ and $\{p(a)\}$ respectively when restricted to  $\HB_{P_2}$.

      \item Let ${\cal K}_3$ be the dl-program ${\cal K}_2$ in Example
        \ref{exam:dl:program:1}. %We have that
        Then
      $\pi({\cal K}_3)=(\emptyset,P')$ where $P'$ consists of
      \begin{align*}
        & p(a) \lto \Not \pi_A, \hspace{2cm} \pi_q(a)\lto \Not q(a),\\
        & \pi_A \lto \Not \DL[S\oplus p,S'\odot \pi_q,S\sqcap\neg S'](a)
      \end{align*}
      where $A=\DL[S\oplus p,S'\ominus q;S\sqcap\neg S'](a)$. One can check that $\pi({\cal K}_3)$ has
      two strong answer sets, $\{\pi_q(a),\pi_A\}$ and $\{\pi_q(a),p(a)\}$, which are
      $\emptyset$ and $\{p(a)\}$ whenever restricted to the original Herbrand base.
  \end{itemize}
\end{example}

%\jiacomment{As commented earlier, negative dl-atoms can be translated uniformly without checking monotonicity, hence the translation for negative dl-atoms is the same as the TOCL rewriting. Both examples above concern with negative dl-atoms; Could have used an example with positive dl-atom.
%The key of $\pi$ is its treatment of positive dl-atoms; see the comments below. // Yes, one more example is added. --Yisong}

The main insight revealed by
the translation $\pi$ is, while a negative dl-atom is rewritten by replacing a $\ominus$ expression by a $\odot$ expression, any positive nonmonotonic dl-atom is negated twice, which emulates ``double negation" in nested expressions \cite{Lifschitz1999nested}.\footnote{A similar logic treatment has been found in a number of recent approaches to the semantics of various classes of logic programs,
e.g., in the ``double negation" interpretation of weight constraint programs \cite{FL2005:TPLP,LiuYouTPLP2011}.}

%We believe this is a key insight in understanding strong answer sets.
%For this reason, we will see in Related Work that the translation $\pi$ cannot possibly preserve answer sets under other semantics, e.g., the FLP-semantics.

Although the translation $\pi$ provides an interesting characterization,  due to the difficulty of checking the monotonicity of a
dl-atom, for an arbitrary dl-program the translation can be expensive as it
depends on checking the entailment relation over the underlying
description logic. However, for the class of normal dl-programs,
$\pi$ takes polynomial time since checking the monotonicity of
dl-atoms amounts to checking the existence of the constraint operator,
and predicates occurring in dl-atoms have the arity at most 2.

%
%\begin{example}[Continued from Example \ref{exam:dl:program:1}]\label{exam:3}
%Consider the dl-programs in Example
%\ref{exam:dl:program:1}. It is clear that $\pi({\cal K}_1)={\cal
%K}_1$ and, $\pi({\cal K}_2)=(\emptyset, \pi(P))$ where $\pi(P)$ is
%\[\left\{\begin{array}{ll}
%  h\lto \Not \DL[c\oplus p, b\odot q';c\sqcap\neg b](a), \\
%  p(a)\lto \Not h,\hspace{1cm} q'(a)\lto \Not q(a)
%\end{array}\right\}.\]
%One can verify that $\pi({\cal K}_2)$ has only two strong answer sets,
%$\{q'(a), h\}$ and $\{q'(a),p(a)\}$. When restricted to the language
%of ${\cal K}_2$, they are $\emptyset$ and $\{p(a)\}$ respectively.
%\end{example}

We now proceed to show some properties of the translation $\pi$.

For any dl-program $\cal K$, $\pi(\cal K)$ has no
nonmonotonic dl-atoms left. Thus, by Theorem 4.13 of
\cite{DBLP:journals/ai/EiterILST08}, we have
\begin{proposition}\label{prop:3.1.1}
  Let ${\cal K}$ be a dl-program. If $I\subseteq\HB_{\pi(P)}$ is a
  strong answer set of $\pi(\cal K)$ then $I$ is minimal, i.e, there
  is no $I'\subset I$ such that $I'$ is a strong answer set of
  $\pi(\cal K)$.
\end{proposition}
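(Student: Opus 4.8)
The plan is to reduce Proposition~\ref{prop:3.1.1} entirely to Theorem 4.13 of \cite{DBLP:journals/ai/EiterILST08}, which states that any dl-program whose dl-atoms are all monotonic has minimal strong answer sets. So the only thing to verify is the claim already made informally just before the proposition: that $\pi(\mathcal{K})$ contains no nonmonotonic dl-atoms. Concretely, I would first recall the shape of $\pi(P)$: its rules are the rewritten rules \eqref{trans:pi:1}, the auxiliary rules \eqref{trans:pi:2} of the form $\pi_B \lto \pi(\Not B)$, and the ground instances \eqref{trans:pi:3} of $\pi_p(\vec{x}) \lto \Not p(\vec{x})$. Rules of type \eqref{trans:pi:3} contain no dl-atoms at all. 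The dl-atoms appearing in $\pi(P)$ are therefore exactly: (a) the monotonic dl-atoms $B$ copied verbatim by $\pi(B)=B$; and (b) dl-atoms of the form $\DL[\pi(\lambda);Q](\vec{t})$, which arise both inside $\pi(\Not B)$ when $B$ is a dl-atom and, via \eqref{trans:pi:2}, as the body dl-atom of the auxiliary rule for each nonmonotonic $B$.

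The crux is then a small lemma: for any dl-atom $B = \DL[\lambda;Q](\vec{t})$, the rewritten dl-atom $\DL[\pi(\lambda);Q](\vec{t})$ is monotonic (relative to $\pi(\mathcal{K})$). This is because $\pi(\lambda)$ is obtained from $\lambda$ by replacing every occurrence of ``$S\ominus p$'' with ``$S\odot \pi_p$'', so $\pi(\lambda)$ mentions no constraint operator whatsoever. By the observation recorded in the Preliminaries (``if a dl-atom does not mention the constraint operator then it is monotonic''), $\DL[\pi(\lambda);Q](\vec{t})$ is monotonic. Combined with the fact that the verbatim-copied dl-atoms of case (a) are monotonic by assumption, every dl-atom occurring in $\pi(P)$ is monotonic, i.e.\ $\DL_{\pi(P)}^{?}=\emptyset$.

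Having established that $\pi(\mathcal{K})$ has only monotonic dl-atoms, I would invoke Theorem 4.13 of \cite{DBLP:journals/ai/EiterILST08} directly: for such dl-programs the strong answer sets are minimal under set inclusion. Hence if $I\subseteq \HB_{\pi(P)}$ is a strong answer set of $\pi(\mathcal{K})$, there is no strong answer set $I'$ of $\pi(\mathcal{K})$ with $I'\subsetneq I$, which is exactly the statement. One small bookkeeping point worth a sentence is that $\HB_{\pi(P)}$ is the appropriate (extended) Herbrand base containing the fresh atoms $\pi_B$ and the fresh predicates $\pi_p$, so that ``relative to $\pi(\mathcal{K})$'' monotonicity is tested over the right interpretation lattice; this does not affect the argument since adding fresh atoms to the universe cannot turn a dl-atom that mentions no constraint operator into a nonmonotonic one.

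I do not expect any real obstacle here; the proposition is essentially a corollary. The only thing one must be slightly careful about is not to overclaim — $\pi(P)$ genuinely may still contain dl-atoms with the constraint operator $\ominus$ (for instance a monotonic dl-atom from the original program that mentions $\ominus$ and is copied by case (a)), so the argument must go through ``monotonic'' rather than ``constraint-operator-free''. Once that is phrased correctly, the proof is a two-line appeal to the cited theorem.
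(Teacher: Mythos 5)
Your proposal is correct and takes essentially the same route as the paper, which likewise proves the proposition by observing that $\DL_{\pi(P)}^{?}=\emptyset$ and then appealing to Theorem 4.13 of \cite{DBLP:journals/ai/EiterILST08}; you merely spell out in more detail why every dl-atom of $\pi(P)$ is monotonic (the rewritten ones mention no $\ominus$, the copied ones are monotonic by assumption). The extra care about testing monotonicity over the enlarged Herbrand base is a reasonable refinement but does not change the argument.
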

\begin{proof}
  It is evident by Theorem 4.13 of
  \cite{DBLP:journals/ai/EiterILST08} and $\DL_{\pi(P)}^?=\emptyset$.
\end{proof}

%A dl-rule of the form (\ref{dl:rule}) is called {\em dl-simple}
%if either (1) $n\le 1$, or (2) each $B_i$ is an atom for all $1\leq
%i\leq n$.
%A dl-program $\mathcal K=(O,P)$ is {\em dl-simple} if every rule in
%$P$ is dl-simple. It is obvious that, any dl-program $\mathcal K$
%can be rewritten into another dl-program $\mathcal K'$ in an
%extended language of $\cal K$, such that (1) $\mathcal K'$ is
%dl-simple and, (2) the strong (resp., weak) answer sets of $\mathcal
%K$ are exactly the strong (resp., weak) answer sets of $\mathcal K'$
%restricted to the language of $\cal K$. In what follows, for
%convenience and without loss of generality, we assume that
%dl-programs are dl-simple unless stated otherwise.

%The following example further demonstrates the translation $\pi$
%for dl-programs where nonmonotonic dl-atoms occur negatively.

The dl-programs in the above example show that the translation
$\pi$  preserves both strong and weak answer sets of a given dl-program in the extended
language, i.e., the strong and weak answer sets of
$\pi(\cal K)$ are those of $\cal K$ when restricted to the language of $\cal K$. In what follows,
we formally build up a one-to-one mapping between answer sets of
a dl-program $\cal K$ and those of $\pi(\cal K)$.

For convenience, given a dl-program ${\cal K}=(O,P)$ and $I\subseteq\HB_P$, we denote $\pi(I)=I\cup \pi_1(I)\cup \pi_2(I)$ where
\begin{align*}
  & \pi_1(I)=\{\pi_p(\vec c)\in\HB_{\pi(P)}\mid p(\vec c)\notin I\}, \mbox{ and}\\
  & \pi_2(I)=\{\pi_A\in\HB_{\pi(P)}\mid A\in \DL_P^?\ \&\ I\not\models_OA\}.
\end{align*}

\begin{lemma}\label{lem:main:1}
  Let $\mathcal K=(O,P)$ be a dl-program, $I\subseteq\HB_P$. Then
  %we have that
  \begin{enumerate}[(i)]
    \item  for any atom  $A$ occurring in $P$
     \[I\models_OA\ \ \textit{iff}\ \ I\cup \pi_1(I)\models_O\pi(A)\ \ \textit{iff}\ \ \pi(I)\models_O\pi(A);\]
    \item for any  dl-atom $A=\DL[\lambda;Q](\vec t)$ occurring in $P$,
     \[I\models_OA\ \ \textit{iff}\ \ I\cup \pi_1(I)\models_O\DL[\pi(\lambda);Q](\vec t) \ \ \textit{iff}\ \ \pi(I)\not\models_O\pi(\Not A). \]
  \end{enumerate}
\end{lemma}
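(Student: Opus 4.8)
The plan is to prove both parts by unwinding the definitions of $\pi$ and of $\models_O$, observing that the translation only touches the input lists $\lambda$ of dl-atoms (replacing $S\ominus p$ by $S\odot\pi_p$) and introduces the fresh atoms $\pi_p(\vec c)$ and $\pi_B$, none of which interfere with how ordinary atoms or the DL knowledge base $O$ are evaluated. First I would dispose of part (i): since $\pi(A)=A$ whenever $A$ is an ordinary atom, and the fresh atoms in $\pi_1(I)$ and $\pi_2(I)$ are disjoint from the Herbrand base $\HB_P$, membership of $A$ in $I$, in $I\cup\pi_1(I)$, and in $\pi(I)$ are all equivalent; this is immediate from the clause ``$A\in\HB_P$ implies $I\models_O A$ iff $A\in I$''.

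The substance is in part (ii). The key identity to establish is
\[
O(I;\lambda) \;=\; O\bigl(I\cup\pi_1(I);\pi(\lambda)\bigr),
\]
i.e.\ the DL knowledge base obtained by augmenting $O$ with the input facts is literally the same on both sides. I would verify this operator by operator: for a conjunct $S_i\oplus p_i$ or $S_i\odot p_i$ of $\lambda$, the operator is unchanged in $\pi(\lambda)$ and $p_i\in\mathcal P$ is not a fresh predicate, so $p_i(\vec e)\in I$ iff $p_i(\vec e)\in I\cup\pi_1(I)$, giving the same added assertions $\{S_i(\vec e)\mid\cdots\}$ or $\{\neg S_i(\vec e)\mid\cdots\}$. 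For a conjunct $S_i\ominus p_i$ of $\lambda$, the definition of $A_i(I)$ adds $\{\neg S_i(\vec e)\mid p_i(\vec e)\notin I\}$; in $\pi(\lambda)$ this conjunct becomes $S_i\odot\pi_{p_i}$, whose contribution is $\{\neg S_i(\vec e)\mid \pi_{p_i}(\vec e)\in I\cup\pi_1(I)\}$, and by the very definition of $\pi_1(I)$ we have $\pi_{p_i}(\vec e)\in\pi_1(I)$ iff $p_i(\vec e)\notin I$ (using that $\vec e$ ranges over $\mathcal C$, so all the relevant ground instances of $\pi_{p_i}$ lie in $\HB_{\pi(P)}$). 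Hence the two sets of added assertions coincide, and summing over $i$ gives the displayed identity. Consequently $O(I;\lambda)\models Q(\vec t)$ iff $O(I\cup\pi_1(I);\pi(\lambda))\models Q(\vec t)$, which is exactly $I\models_O A$ iff $I\cup\pi_1(I)\models_O\DL[\pi(\lambda);Q](\vec t)$. For the last equivalence, recall $\pi(\Not A)=\Not\DL[\pi(\lambda);Q](\vec t)$, and that $\pi_2(I)$ consists only of the fresh propositional atoms $\pi_B$, which occur neither in the input list $\pi(\lambda)$ nor in $Q(\vec t)$; therefore $\pi(I)\models_O\DL[\pi(\lambda);Q](\vec t)$ iff $I\cup\pi_1(I)\models_O\DL[\pi(\lambda);Q](\vec t)$, and negating gives $\pi(I)\not\models_O\pi(\Not A)$ iff $I\models_O A$.

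The main obstacle, modest as it is, is the bookkeeping around the fresh predicates $\pi_p$: one must be careful that the ground instances $\pi_p(\vec e)$ appearing when evaluating $S\odot\pi_p$ over constants $\vec e\in\mathcal C$ are precisely those governed by rule~(\ref{trans:pi:3}) and included in $\HB_{\pi(P)}$, so that ``$\pi_p(\vec e)\in I\cup\pi_1(I)$'' really is equivalent to ``$p(\vec e)\notin I$'' with no missing or spurious tuples; this uses the convention that $\mathcal C$ is the constant set of both $P$ and $\pi(P)$ and the definition of $\HB_{\pi(P)}$. Everything else is a direct case analysis. I would present the argument as: (1) the $\HB$-disjointness remark for the fresh atoms; (2) part (i) in one line; (3) the operator-by-operator computation establishing $O(I;\lambda)=O(I\cup\pi_1(I);\pi(\lambda))$; (4) conclude the first ``iff'' of (ii); (5) the $\pi_2(I)$-irrelevance remark to get the second ``iff'' of (ii).
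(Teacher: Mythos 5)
Your proposal is correct and follows essentially the same route as the paper's proof: part (i) by freshness of the $\pi_p$/$\pi_A$ atoms, and part (ii) by checking that the assertions contributed by $S\ominus p$ under $I$ coincide with those contributed by $S\odot\pi_p$ under $I\cup\pi_1(I)$ (using $\pi_p(\vec e)\in\pi_1(I)$ iff $p(\vec e)\notin I$), then discarding $\pi_2(I)$ as irrelevant to the dl-query. The only cosmetic difference is that you state the identity $O(I;\lambda)=O(I\cup\pi_1(I);\pi(\lambda))$ operator-by-operator, whereas the paper runs the same chain of equivalences on the representative case $\lambda=(S_1\oplus p_1,S_2\ominus p_2)$ and notes it generalizes.
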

\begin{proof}
  (i) It is obvious since $\pi(A)=A$ and predicates of the form
  $\pi_p$ and $\pi_A$ do not occur in $\cal K$.

  (ii) If there is no constraint operator occurring in $\lambda$ then $\DL[\pi(\lambda);Q](\vec t)=\DL[\lambda;Q](\vec t)$. Thus in this case, it is trivial as predicates of the form
  $\pi_p$ and $\pi_A$ do not occur in $\cal K$, and $\pi(\Not A)=\Not A$.

  Suppose there exists at least one constraint operator in $\lambda$. It is clear that $I\cup \pi_1(I)\models_O\DL[\pi(\lambda);Q](\vec t)$ if and only if $\pi(I)\not\models_O\pi(\Not A)$, and
  evidently, for any atom $\pi_p(\vec c)\in\HB_{\pi(P)}$, $\pi_p(\vec c)\in \pi_1(I)$ if and only if $p(\vec c)\notin I$.
  For clarity and without loss of generality, let $\lambda=(S_1\oplus p_1,S_2\ominus p_2)$. We have that\\
  $I \models_O\DL[\lambda;Q](\vec t)$\\
  iff $ O\cup\{S(\vec e)\mid p_1(\vec e)\in I \}\cup \{\neg S_2(\vec e)\mid p_2(\vec e)\notin I\}\models Q(\vec t)$\\
  iff $ O\cup\{S(\vec e)\mid p_1(\vec e)\in I\}\cup \{\neg S_2(\vec e)\mid \pi_{p_2}(\vec e)\in \pi_1(I)\}\models Q(\vec t)$\\
  iff $ O\cup\{S(\vec e)\mid p_1(\vec e)\in I\cup \pi_1(I)\}\cup \{\neg S_2(\vec e)\mid \pi_{p_2}(\vec e)\in I\cup \pi_1(I)\}\models Q(\vec t)$\\
  iff $ I\cup \pi_1(I)\models_O\DL[S_1\oplus p_1,S_2\odot \pi_{p_2};Q](\vec t)$\\
  iff $ I\cup \pi_1(I)\models_O \DL[\pi(\lambda);Q](\vec t)$\\
  iff $\pi(I)\not\models_O\pi(\Not A)$.

  The above proof can be extended to the case where $\lambda=(S_1\oplus p_1,\ldots, S_m\oplus p_m, S'_1\ominus q_1,\ldots, S'_n\ominus q_n;Q](\vec t)$.
\end{proof}

\begin{lemma}\label{lem:s}
  Let $\mathcal K=(O,P)$ be a dl-program and $I\subseteq\HB_P$. Then
  % we have that
  \begin{enumerate}[(i)]
    \item $\pi_1(I)=\{\pi_p(\vec c)\in\HB_{\pi(P)}\}\cap \lfp(\gamma_{[\pi(\mathcal K)]^{s,\pi(I)}})$,
    \item $\pi_2(I)=\{\pi_A\in\HB_{\pi(P)}\}\cap\lfp(\gamma_{[\pi(\mathcal K)]^{s,\pi(I)}})$, and
    \item $\gamma_{{\cal K}^{s,I}}^k=\HB_P\cap \gamma^k_{[\pi(\mathcal K)]^{s,\pi(I)}}$ for any $k\ge 0$.
  \end{enumerate}
\end{lemma}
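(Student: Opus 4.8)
The plan is to prove the three items of Lemma~\ref{lem:s} together by a simultaneous induction on the stage $k$ of the least-fixpoint iteration for the strong dl-transform $[\pi(\mathcal K)]^{s,\pi(I)}$, reading off (i) and (ii) as the ``projection onto the auxiliary atoms'' of what the iteration produces, and (iii) as the projection onto $\HB_P$. First I would unwind the definition of the strong dl-transform applied to $\pi(\mathcal K)$ relative to $\pi(I)$: since $\DL_{\pi(P)}^? = \emptyset$ (every dl-atom in $\pi(P)$ is monotonic, as noted after the definition of $\pi$), the first deletion step of the strong transform only removes rules whose negative body is satisfied by $\pi(I)$, and the second step deletes only the $\Not(\cdot)$ literals from surviving rules; no monotonic dl-atom is ever deleted. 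So $[\pi(\mathcal K)]^{s,\pi(I)}$ is obtained by (a) keeping each instance of rule~(\ref{trans:pi:3}) $\pi_p(\vec x)\lto \Not p(\vec x)$ exactly when $p(\vec c)\notin \pi(I)$, i.e. exactly when $p(\vec c)\notin I$, turning it into the fact $\pi_p(\vec c)\lto$; (b) keeping rule~(\ref{trans:pi:2}) $\pi_B \lto \pi(\Not B)$, whose body $\pi(\Not B) = \Not\DL[\pi(\lambda);Q](\vec t)$ is deleted, so it survives as the fact $\pi_B\lto$ precisely when $\pi(I)\not\models_O \DL[\pi(\lambda);Q](\vec t)$; and (c) transforming each rule~(\ref{trans:pi:1}) by deleting its $\Not$-literals when they are not satisfied, and dropping the rule when some $\Not$-literal is satisfied by $\pi(I)$.

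The key computation is then to line up these surviving rules with those of $\mathcal K^{s,I}$ via Lemma~\ref{lem:main:1}. For (i): the facts $\pi_p(\vec c)\lto$ in (a) fire immediately, so $\{\pi_p(\vec c)\} \cap \gamma^1_{[\pi(\mathcal K)]^{s,\pi(I)}}$ already equals $\pi_1(I)$, and since no other rule of $\pi(P)$ has a $\pi_p$ atom in its head, this set is stable under further iteration; that gives (i). For (ii): by Lemma~\ref{lem:main:1}(ii), $\pi(I)\not\models_O\pi(\Not A)$ iff $I\models_O A$, so the fact $\pi_A\lto$ from (b) is present iff $A\in\DL_P^?$ and $I\models_O A$ --- wait, the definition of $\pi_2(I)$ uses $I\not\models_O A$; this is the point I have to be careful about. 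Rule~(\ref{trans:pi:2}) $\pi_B\lto\pi(\Not B)$ survives in the strong transform iff $\pi(I)\models_O\pi(\Not B)$, i.e. iff $\pi(I)\not\models_O\DL[\pi(\lambda);Q](\vec t)$, which by Lemma~\ref{lem:main:1}(ii) is iff $I\not\models_O A$; so the fact $\pi_A\lto$ is available iff $A\in\DL_P^?$ and $I\not\models_O A$, matching $\pi_2(I)$ exactly. Again $\pi_A$ appears in no other rule head, so this contribution is already complete at stage~$1$ and stays fixed, giving (ii).

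The substance of the lemma is (iii), proved by induction on $k$. The base case $k=0$ is trivial: both sides are $\emptyset$. For the step, assume $\gamma_{{\cal K}^{s,I}}^k = \HB_P \cap \gamma^k_{[\pi(\mathcal K)]^{s,\pi(I)}}$, and also (from the already-established (i),(ii), and the fact that the $\pi_p$- and $\pi_A$-facts fire at stage~1) that for all $k\ge 1$ the auxiliary part of $\gamma^k_{[\pi(\mathcal K)]^{s,\pi(I)}}$ equals $\pi_1(I)\cup\pi_2(I)$, so that $\gamma^k_{[\pi(\mathcal K)]^{s,\pi(I)}} = (\HB_P\cap\gamma^k_{\ldots}) \cup \pi_1(I)\cup\pi_2(I)$, and hence (using $\pi(I) = I\cup\pi_1(I)\cup\pi_2(I)$) the $\pi$-auxiliary atoms true at stage $k$ agree with those of $\pi(I)$. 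Now take an atom $h\in\HB_P$ with $h\in\gamma^{k+1}_{[\pi(\mathcal K)]^{s,\pi(I)}}$: it is the head of some surviving instance of rule~(\ref{trans:pi:1}) all of whose (remaining, positive) body literals hold in $\gamma^k_{[\pi(\mathcal K)]^{s,\pi(I)}}$. The positive body of that transformed rule consists of the images $\pi(B_j)$ of the original positive body atoms/monotonic-dl-atoms $B_j$, together with the literals $\Not\pi_B$ coming from positive \emph{nonmonotonic} dl-atoms $B$ of the original rule. I would argue: the original rule $r$ survives into $sP^I_O$ iff (its negative body is not satisfied by $I$, equivalently not satisfied by $\pi(I)$ on the shared atoms --- here I use that $\Not B$ for an atom is untouched and $\pi(\Not B)$ for a dl-atom is $\Not\DL[\pi(\lambda);Q]$, related to $\Not B$ by Lemma~\ref{lem:main:1}(ii)); the surviving $\Not\pi_B$ literal in~(\ref{trans:pi:1}) is satisfied at stage $k$ iff $\pi_B\notin\pi_2(I)$ iff $I\models_O B$, which is exactly the condition that the nonmonotonic positive dl-atom $B$ is \emph{not} deleted-and-falsified in forming $sP^I_O$ (it is deleted, but because $B$ held, $r$ keeps contributing); and for $\pi(B_j)=B_j$ a monotonic dl-atom or atom, $B_j\in\gamma^k_{[\pi(\mathcal K)]^{s,\pi(I)}}$ iff (by the induction hypothesis and Lemma~\ref{lem:main:1}, with the auxiliary atoms matching $\pi(I)$) $\gamma^k_{{\cal K}^{s,I}}\models_O B_j$. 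Assembling these equivalences shows $h$ is derivable at stage $k+1$ on the $\pi$ side iff the corresponding rule of $sP^I_O$ fires at stage $k+1$ on the $\mathcal K$ side, i.e. $h\in\gamma^{k+1}_{{\cal K}^{s,I}}$; the converse direction is symmetric. I expect the main obstacle to be exactly this bookkeeping in the step case: correctly matching the three ways a nonmonotonic dl-atom $B$ influences the transform ($\ominus\to\odot$ rewriting inside $\lambda$ via Lemma~\ref{lem:main:1}, the double-negation trick via $\pi_B$ and rule~(\ref{trans:pi:2}), and the auxiliary rule~(\ref{trans:pi:3}) for $\pi_p$), and keeping the polarity of the $\pi_2(I)$ condition straight (surviving fact $\pi_A\lto$ $\iff I\not\models_O A$, while the $\Not\pi_B$ body literal of~(\ref{trans:pi:1}) being \emph{satisfiable} corresponds to $I\models_O B$). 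Once the stage-$1$ stabilization of the auxiliary atoms is isolated as a separate observation, the rest is a routine but careful induction.
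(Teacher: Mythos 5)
Your proposal is correct and follows essentially the same route as the paper's proof: parts (i) and (ii) are established by the same chain of equivalences (the auxiliary rules survive the strong transform as facts exactly when the corresponding membership/non-satisfaction condition holds, with the polarity for $\pi_2(I)$ resolved via Lemma~\ref{lem:main:1}(ii) just as you do), and part (iii) by the same induction on $k$, matching each surviving instance of rule~(\ref{trans:pi:1}) against the corresponding rule of $sP^I_O$ using Lemma~\ref{lem:main:1} for atoms and monotonic dl-atoms and the $\pi_B$/$\pi_2(I)$ correspondence for positive nonmonotonic dl-atoms. The only cosmetic slip is describing the $\Not\pi_B$ literal as being ``satisfied at stage $k$'' --- it is evaluated against $\pi(I)$ when forming the reduct, not against the iterate --- but the equivalence you draw from it is the right one.
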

\begin{proof}
  %Let $I'=\pi(I)$ and $I^*=\lfp(\gamma_{[\pi(\mathcal K)]^{s,I'}})$.
  (i) It is evident that, for any atom $\pi_p(\vec c)\in\HB_{\pi(P)}$, the rule $(\pi_p(\vec c)\lto \Not p(\vec c))$ is in $\pi(P)$.
  We have that \\
  $\pi_p(\vec c)\in\pi_1(I)$ \\
  iff $p(\vec c)\notin I$ \\
  iff $p(\vec c)\notin \pi(I)$ \\
  iff the rule $(\pi_p(\vec c)\lto)$ belongs to $s[\pi(P)]^{s,\pi(I)}_O$\\
  iff $\pi_p(\vec c)\in \lfp(\gamma_{[\pi(\mathcal K)]^{s,\pi(I)}})$.

  (ii) It is clear that, for any $\pi_A\in\pi_2(I)$, the rule $(\pi_A\lto \pi(\Not A))$ is in $\pi(P)$ such that
  $A\in \DL_P^?$ and $I\not\models_OA$. Let $A=\DL[\lambda;Q](\vec t)$. We have that \\
  $\pi_A\in \pi_2(I)$\\
  iff $\pi_A\in\HB_{\pi(P)}$ and $I\not\models_OA$\\
  iff $\pi(I)\not\models_O \DL[\pi(\lambda);Q](\vec t)$ (by (ii) of Lemma \ref{lem:main:1})\\
%  iff $I\cup \pi_1(I)\cup \pi_2(I)\not\models_O \DL[\pi(\lambda);Q](\vec t)$\\
%  iff $\pi(I)\not\models_O \DL[\pi(\lambda);Q](\vec t)$\\
  iff the rule $(\pi_A\lto)$ belongs to $s[\pi(P)]^{s,\pi(I)}_O$\\
  iff $\pi_A\in \lfp(\gamma_{[\pi(\mathcal K)]^{s,\pi(I)}})$.

  (iii) We show this by induction on $k$.

  Base: It is obvious for $k=0$.

  Step: Suppose it holds for $k=n$. Let us consider the case $k=n+1$. For any atom $\alpha\in \HB_P$,
  $\alpha\in\gamma^{n+1}_{{\cal K}^{s,I}}$
  if and only if there is a rule
  \[\alpha\lto\Pos,\NDL,\Not\Neg\]
  in $P$, where $\Pos$ is a set of atoms and monotonic dl-atoms and $\NDL$ is a set of nonmonotonic dl-atoms such that
  \begin{itemize}
    \item $\gamma^n_{{\cal K}^{s,I}}\models_OA$ for any $A\in \Pos$,
    \item $I\models_OB$ for any $B\in\NDL$, and
    \item $I\not\models_OC$ for any $C\in\Neg$.
  \end{itemize}
  It follows that
  \begin{itemize}
    \item $\gamma^n_{{\cal K}^{s,I}}\models_OA$ if and only if
    $\gamma^n_{[\pi({\cal K})]^{s,\pi(I)}}\models_OA$, by the  inductive assumption,
    \item $I\models_OB$ if and only if $\pi_B\not\in \pi(I)$, by the definition of $\pi_2(I)$,
    i.e., $\pi(I)\not\models_O\pi_B$, and
    \item $I\not\models_OC$ if and only if $\pi(I)\models_O\pi(\Not C)$ for any $C\in\Neg$, by Lemma \ref{lem:main:1}.
  \end{itemize}
  Thus we have that $\alpha\in\gamma^{n+1}_{{\cal K}^{s,I}}$ if and only if $\alpha\in\gamma^{n+1}_{[\pi(\mathcal K)]^{s,\pi(I)}}\cap\HB_P$.
\end{proof}

Now we have the following key theorem: there exists a one-to-one mapping between the
strong answer sets of a dl-program $\cal K$ and those of $\pi(\cal K)$.
\begin{theorem}\label{thm:delete:ominus:s}
  Let $\mathcal K=(O,P)$ be a dl-program. Then
  %We have that
  \begin{enumerate}[(i)]
    \item if $I$ is a strong answer set of $\mathcal K$ then  $\pi(I)$ is  a strong
  answer set of $\pi(\cal K)$;
    \item if $I^*$ is a strong answer set of $\pi(\cal K)$ then $I^*\cap\HB_P$ is a strong answer set of $\cal K$.
  \end{enumerate}
\end{theorem}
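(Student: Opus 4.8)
The plan is to reduce both directions to Lemma~\ref{lem:s}, which already relates the fixpoint iterations of $\gamma_{{\cal K}^{s,I}}$ and $\gamma_{[\pi({\cal K})]^{s,\pi(I)}}$. Two bookkeeping facts will be used throughout: $\HB_{\pi(P)}$ is the disjoint union of $\HB_P$, the set of fresh atoms of the form $\pi_p(\vec c)$, and the set of fresh atoms of the form $\pi_A$ occurring in $\pi(P)$, and $\pi(I)=I\cup\pi_1(I)\cup\pi_2(I)$ splits accordingly. I will also pass the finite-stage equality in Lemma~\ref{lem:s}(iii) to the limit, using $\HB_P\cap\bigcup_k X_k=\bigcup_k(\HB_P\cap X_k)$, to obtain $\lfp(\gamma_{{\cal K}^{s,I}})=\HB_P\cap\lfp(\gamma_{[\pi({\cal K})]^{s,\pi(I)}})$.

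For part (i), assume $I=\lfp(\gamma_{{\cal K}^{s,I}})$. Then the limiting form of Lemma~\ref{lem:s}(iii) gives $\HB_P\cap\lfp(\gamma_{[\pi({\cal K})]^{s,\pi(I)}})=I$, while Lemma~\ref{lem:s}(i) and (ii) identify the $\pi_p$-part and the $\pi_A$-part of $\lfp(\gamma_{[\pi({\cal K})]^{s,\pi(I)}})$ as $\pi_1(I)$ and $\pi_2(I)$. By the disjoint-union decomposition these three pieces exhaust $\lfp(\gamma_{[\pi({\cal K})]^{s,\pi(I)}})$, so it equals $I\cup\pi_1(I)\cup\pi_2(I)=\pi(I)$; that is, $\pi(I)$ is a strong answer set of $\pi({\cal K})$.

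For part (ii), let $I^*$ be a strong answer set of $\pi({\cal K})$ and set $I=I^*\cap\HB_P$. The key step is to prove $I^*=\pi(I)$; granting it, $I^*=\pi(I)=\lfp(\gamma_{[\pi({\cal K})]^{s,\pi(I)}})$, so by the limiting form of Lemma~\ref{lem:s}(iii) we get $I=\HB_P\cap I^*=\lfp(\gamma_{{\cal K}^{s,I}})$, i.e.\ $I$ is a strong answer set of ${\cal K}$. To establish $I^*=\pi(I)$ I would use that in $\pi(P)$ each fresh atom is the head of a single rule: $\pi_p(\vec c)$ only of $\pi_p(\vec c)\lto\Not p(\vec c)$, and $\pi_A$ only of $\pi_A\lto\Not\DL[\pi(\lambda);Q](\vec t)$. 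In the strong dl-transform of $\pi({\cal K})$ relative to $I^*$ the first rule reduces to the fact $\pi_p(\vec c)\lto$ exactly when $p(\vec c)\notin I^*$, whence, as $I^*$ is the least model of that transform and no other rule produces $\pi_p(\vec c)$, we get $\pi_p(\vec c)\in I^*$ iff $p(\vec c)\notin I^*$ iff $p(\vec c)\notin I$; so the $\pi_p$-part of $I^*$ is $\pi_1(I)$. Since $\DL[\pi(\lambda);Q](\vec t)$ is monotonic and its input predicates lie among the $p$'s and the $\pi_p$'s, its value under $I^*$ is determined by $I^*\cap(\HB_P\cup\{\pi_p(\vec c)\})=I\cup\pi_1(I)$, which by Lemma~\ref{lem:main:1}(ii) equals the value of $A$ under $I$; hence the second rule reduces to the fact $\pi_A\lto$ exactly when $I\not\models_O A$, so the $\pi_A$-part of $I^*$ is $\pi_2(I)$. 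Combining with $I^*\cap\HB_P=I$ yields $I^*=I\cup\pi_1(I)\cup\pi_2(I)=\pi(I)$.

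The only delicate point is this last step: Lemma~\ref{lem:s} is stated only for transforms indexed by interpretations already of the form $\pi(I)$, so for an arbitrary strong answer set $I^*$ of $\pi({\cal K})$ one first has to recognize that it must be of that form. I expect the ``unique defining rule'' argument above to do it, provided one fixes the order of reasoning --- first the original atoms ($I^*\cap\HB_P=I$ by definition), then the $\pi_p$ atoms, then the $\pi_A$ atoms, since the body dl-atom $\DL[\pi(\lambda);Q](\vec t)$ of the $\pi_A$-rule reads the $\pi_p$ atoms --- and notes that $\pi_A$ is introduced only for nonmonotonic $A$, so the side condition $A\in\DL_P^?$ in the definition of $\pi_2(I)$ holds automatically. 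The remainder is routine manipulation of the disjoint-union decomposition of $\HB_{\pi(P)}$ and of finite versus limiting fixpoint stages.
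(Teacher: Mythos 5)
Your proof is correct and follows essentially the same route as the paper: both reduce the theorem to Lemma~\ref{lem:s}, using the disjoint decomposition of $\HB_{\pi(P)}$ and the limiting form of part (iii) for the $\HB_P$-component. The ``delicate point'' you flag in part (ii) --- that $I^*$ must first be shown to have the form $\pi(I^*\cap\HB_P)$ --- is exactly what the paper establishes as its first step there (by a terse appeal to Lemma~\ref{lem:s}(i),(ii)); your explicit unique-defining-rule argument, reading off $\pi_p(\vec c)$ from $p(\vec c)\notin I^*$ and $\pi_A$ from $I\not\models_O A$ via Lemma~\ref{lem:main:1}(ii), is the correct justification of that step.
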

\begin{proof}
  (i) We have that
  \begin{flalign*}
     \lfp(\gamma_{[\pi({\cal K})]^{s,\pi(I)}})
   = &\lfp(\gamma_{[\pi({\cal K})]^{s,\pi(I)}})\cap (\HB_P\cup\{\pi_p(\vec c)\in\HB_{\pi(P)}\}\cup \{\pi_A\in\HB_{\pi(P)}\})\\
   = & [\HB_P\cap \lfp(\gamma_{[\pi({\cal K})]^{s,\pi(I)}})]\\
     &  \cup[\{\pi_p(\vec c)\in\HB_{\pi(P)}\}\cap\lfp(\gamma_{[\pi({\cal K})]^{s,\pi(I)}})]\\
     &  \cup[\{\pi_A\in\HB_{\pi(P)}\}\cap\lfp(\gamma_{[\pi({\cal K})]^{s,\pi(I)}})]\\
   = & [\HB_P\cap \bigcup_{i\ge 0}\gamma_{[\pi({\cal K})]^{s,\pi(I)}}^i]\cup
       \pi_1(I)\cup \pi_2(I),\mbox{ by (i) and (ii) of Lemma \ref{lem:s}} \\
   = & \bigcup_{i\ge 0}[\HB_P\cap \gamma_{[\pi({\cal K})]^{s,\pi(I)}}^i]\cup
       \pi_1(I)\cup \pi_2(I)\\
   = & \bigcup_{i\ge 0}\gamma_{{\cal K}^{s,I}}^i\cup
       \pi_1(I)\cup \pi_2(I),\mbox{ by (iii) of Lemma \ref{lem:s}} \\
   = & I\cup \pi_1(I)\cup \pi_2(I), \mbox{ since $I$ is a strong answer set of $\cal K$}\\
   = & \pi(I).
  \end{flalign*}
  It follows that $\pi(I)$ is a strong answer set of $\pi(\cal K)$.

  (ii) We prove $I^*=\pi(\HB_P\cap I^*)$ first.
  \begin{flalign*}
   I^* = & I^*\cap (\HB_P\cup\{\pi_p(\vec c)\in\HB_{\pi(P)}\}\cup \{\pi_A\in\HB_{\pi(P)}\})\\
       = & (I^*\cap \HB_P)\cup (I^*\cap \{\pi_p(\vec c)\in\HB_{\pi(P)}\})\cup (I^*\cap \{\pi_A\in\HB_{\pi(P)}\})\\
       = & (I^*\cap\HB_P) \cup \pi_1(\HB_P\cap I^*)\cup \pi_2(\HB_P\cap I^*), \mbox{ by (i) and (ii) of Lemma \ref{lem:s}}\\
       = & \pi(I^*\cap\HB_P).
  \end{flalign*}
  Let $I=I^*\cap\HB_P$. We have that
  \begin{flalign*}
    \lfp(\gamma_{{\cal K}^{s,I}}) = & \bigcup_{i\ge 0}\gamma^i_{{\cal K}^{s,I}}\\
    = & \bigcup_{i\ge 0}(\HB_P\cap \gamma_{[\pi({\cal K})]^{s,\pi(I)}}^i),\mbox{ by (iii) of Lemma \ref{lem:s}}\\
    = & \HB_P\cap \bigcup_{i\ge 0}\gamma_{[\pi({\cal K})]^{s,\pi(I)}}^i\\
    = & \HB_P\cap\lfp(\gamma_{[\pi({\cal K})]^{s,\pi(I)}})\\
    = & \HB_P\cap \pi(I)\mbox{ since $\pi(I)=I^*$ is a strong answer set of $\pi(\cal K)$}\\
    = & I.
  \end{flalign*}
  It follows that $I$ is a strong answer set of $\cal K$.
\end{proof}

Please note that, we need to determine the monotonicity of dl-atoms in the translation $\pi$ which is not tractable generally, and the translation does nothing for monotonic dl-atoms.
That is, the ``double negation" interpretation applies only to positive nonmonotonic dl-atoms. If we deviate from this condition, the translation no longer works for strong answer sets.
For example,
one may question whether monotonic
dl-atoms can be handled like nonmonotonic dl-atoms,  and if so, the translation turns out to be polynomial. Unfortunately we give
a negative answer below.

%\jiacomment {For weak answer sets, we need not check monotonicity; just apply the double negation translation to any dl-atom with $\ominus$. In other words, the translation for weak answer sets is polynomial for arbitrary dl-programs. Please check. // Yes, exactly. I added the translation $\pi^*$ at the end of the section. --Yisong}

\begin{example}\label{exam:4:new}
Consider the dl-program $\mathcal K_1=(\emptyset,P_1)$ where
$P_1=\{p(a)\lto \DL[S\oplus p, S'\ominus q; S](a)\}$. The dl-atom $A=\DL[S\oplus p, S'\ominus q; S](a)$
is monotonic.
Thus, ${\cal K}_1$ is positive but neither canonical nor normal.
 It is evident that $\emptyset$ is the unique strong answer set of ${\cal K}_1$.
If we apply $\pi$ to eliminate the constraint operator in monotonic dl-atoms as what $\pi$ does for nonmonotonic dl-atoms, we would get the dl-program $(\emptyset, P_1')$ where $P_1'$ consists of
\begin{align*}
  p(a) \lto \Not \pi_A, \qquad \pi_A\lto \Not \DL[S\oplus p, S'\odot \pi_q; S](a), \qquad \pi_q(a)\lto \Not q(a).
\end{align*}
One can verify that this dl-program has two strong answer sets,
$\{p(a),\pi_q(a)\}$ and $\{\pi_A,\pi_q(a)\}$, which are $\{p(a)\}$ and
$\emptyset$ respectively when restricted to $\HB_P$. However, we know that $\{p(a)\}$ is not
a strong answer set of ${\cal K}_1$. That is, such a translation may introduce some strong answer sets that do not correspond to any  of the original dl-program in this case.

One may argue that $\pi$ should treat monotonic dl-atoms in the same manner as treating nonmonotonic dl-atoms in default negation. However, for the dl-program ${\cal K}_2=(\emptyset,P_2)$ where $P_2$ consists
of \[p(a)\lto \DL[S\odot p, S\ominus p;\neg S](a),\]
we have that the resulting dl-program $(\emptyset,P_2')$ where $P_2'$ consists of
\begin{align*}
  p(a)\lto \DL[S\odot p, S\odot \pi_p, \neg S](a), \hspace{1cm} \pi_p(a)\lto \Not p(a).
\end{align*}
This dl-program has no strong answer sets at all. But the original dl-program has a unique
strong answer $\{p(a)\}$. Even if we replace every $p$ occurring in the dl-atom with $\pi_p$, the answer is still negative.% Unfortunately, the resulted dl-program $(O,P'')$ where $P''$ consists of
%\begin{align*}
%  p(a)\lto \DL[S\odot \pi_p, S\odot \pi_p, \neg S](a), \hspace{1cm} \pi_p(a)\lto \Not p(a)
%\end{align*}
%has no strong answer sets yet.
\end{example}

Similarly, we can show a one-to-one mapping between the
weak answer sets of a dl-program $\cal K$ and those of $\pi(\cal K)$.
\begin{theorem}\label{thm:delete:ominus:w}
  Let $\mathcal K=(O,P)$ be a dl-program. %We have that
   Then
  \begin{enumerate}[(i)]
    \item if $I$ is a weak answer set of $\mathcal K$, then  $\pi(I)$ is  a weak
  answer set of $\pi(\cal K)$;
    \item if $I^*$ is a weak answer set of $\pi(\cal K)$, then $I^*\cap\HB_P$ is a weak answer set of $\cal K$.
  \end{enumerate}
\end{theorem}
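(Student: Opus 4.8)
The plan is to follow the proof of Theorem~\ref{thm:delete:ominus:s} almost verbatim, replacing the strong dl-transform by the weak one throughout. Concretely, I would first establish the weak counterpart of Lemma~\ref{lem:s}: for every dl-program $\mathcal K=(O,P)$ and every $I\subseteq\HB_P$,
\begin{enumerate}[(i)]
\item $\pi_1(I)=\{\pi_p(\vec c)\in\HB_{\pi(P)}\}\cap\lfp(\gamma_{[\pi(\mathcal K)]^{w,\pi(I)}})$,
\item $\pi_2(I)=\{\pi_A\in\HB_{\pi(P)}\}\cap\lfp(\gamma_{[\pi(\mathcal K)]^{w,\pi(I)}})$, and
\item $\gamma^k_{\mathcal K^{w,I}}=\HB_P\cap\gamma^k_{[\pi(\mathcal K)]^{w,\pi(I)}}$ for all $k\ge 0$.
\end{enumerate}
Lemma~\ref{lem:main:1} is purely model-theoretic and so is available unchanged; I would also record the trivial observation that $\pi(I)\cap\HB_P=I$, so that any atom or dl-atom of $P$ containing no fresh predicate is satisfied by $\pi(I)$ iff it is satisfied by $I$ (this covers the monotonic dl-atoms that $\pi$ leaves in place with their original input list $\lambda$).

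For parts (i) and (ii) of the weak lemma the argument is identical to the strong case: the only rules of $\pi(P)$ with head $\pi_p(\vec c)$ (resp.\ $\pi_A$) are the instances of (\ref{trans:pi:3}) (resp.\ the rules (\ref{trans:pi:2})), and such a rule survives the weak dl-transform w.r.t.\ $\pi(I)$ precisely when $p(\vec c)\notin I$ (resp.\ $A\in\DL_P^?$ and $I\not\models_O A$), by the definition of $\pi_1,\pi_2$ and Lemma~\ref{lem:main:1}(ii); when it survives it becomes a fact, so its head lies in $\lfp(\gamma_{[\pi(\mathcal K)]^{w,\pi(I)}})$. Part (iii) goes by induction on $k$. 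The crucial bookkeeping, done at the successor step, is to check that a rule of form (\ref{trans:pi:1}) obtained from $r\in P$ survives the weak dl-transform of $\pi(P)$ w.r.t.\ $\pi(I)$ exactly when $r$ survives the weak dl-transform of $P$ w.r.t.\ $I$, and that in that case the two reduced rules have the same head and the same (ordinary-atom) body. This amounts to four small equivalences: $\pi(I)\models_O B$ iff $I\models_O B$ for a positive monotonic dl-atom $B$ (immediate from $\pi(I)\cap\HB_P=I$); $\pi_{B}\notin\pi(I)$ iff $I\models_O B$ for a positive nonmonotonic dl-atom $B$ (since $\pi_B\in\pi_2(I)$ iff $I\not\models_O B$); $B_j\notin\pi(I)$ iff $B_j\notin I$ for a negative ordinary atom; and $\pi(I)\not\models_O\DL[\pi(\lambda);Q](\vec t)$ iff $I\not\models_O B_j$ for a negative dl-atom $B_j$ (Lemma~\ref{lem:main:1}(ii) again). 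Since the weak transform strips all dl-atoms and $\Not$-literals, the reduced rule keeps only the head and the ordinary atoms of the positive body, matching $wP_O^I$; the inductive hypothesis then closes the step, and the rules (\ref{trans:pi:2})--(\ref{trans:pi:3}) are irrelevant here because no surviving rule of form (\ref{trans:pi:1}) contains $\pi_B$ or $\pi_p$ positively.

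Given the weak lemma, the theorem follows by the same chain of equalities as in Theorem~\ref{thm:delete:ominus:s}. For (i): decompose $\lfp(\gamma_{[\pi(\mathcal K)]^{w,\pi(I)}})$ along $\HB_P$, $\{\pi_p(\vec c)\}$, and $\{\pi_A\}$; by (i)--(iii) of the weak lemma this equals $\lfp(\gamma_{\mathcal K^{w,I}})\cup\pi_1(I)\cup\pi_2(I)$, which is $I\cup\pi_1(I)\cup\pi_2(I)=\pi(I)$ since $I$ is a weak answer set. For (ii): first use parts (i)--(ii) of the weak lemma to show $I^{*}=\pi(I^{*}\cap\HB_P)$, then, writing $I=I^{*}\cap\HB_P$, use part (iii) to get $\lfp(\gamma_{\mathcal K^{w,I}})=\HB_P\cap\lfp(\gamma_{[\pi(\mathcal K)]^{w,\pi(I)}})=\HB_P\cap I^{*}=I$, so $I$ is a weak answer set of $\mathcal K$. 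The main obstacle is the successor step of part (iii): one must track, rule by rule, how the weak reduct treats the several shapes of body literals that $\pi$ produces (a positive monotonic dl-atom carrying the original $\lambda$, the fresh literal $\Not\pi_B$, the surviving ordinary atoms, and the rewritten negative dl-atoms $\Not\DL[\pi(\lambda);Q](\vec t)$) and confirm that this mirrors the original reduct exactly. This is conceptually straightforward once $\pi(I)\cap\HB_P=I$ and Lemma~\ref{lem:main:1} are in hand, but it is where all the care is needed; compared with the strong case it is in fact easier, since the weak transform evaluates and then deletes every dl-atom, monotonic or not, so no monotonic dl-atom ever has to be evaluated against an intermediate stage $\gamma^n$ of the fixpoint iteration.
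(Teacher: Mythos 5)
Your proposal is correct and follows essentially the same route as the paper: the paper's Appendix~\ref{app:B} proves exactly the weak counterpart of Lemma~\ref{lem:s} that you describe (same three parts, with the successor step of (iii) handled by splitting the body into ordinary atoms, monotonic dl-atoms, nonmonotonic dl-atoms, and negated literals, and invoking Lemma~\ref{lem:main:1}), and then repeats the chain of set-decomposition equalities from Theorem~\ref{thm:delete:ominus:s} with $w$ in place of $s$. The four small equivalences you isolate for the inductive step are precisely the ones the paper checks.
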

\begin{proof}
  See Appendix \ref{app:B}.
\end{proof}

As a matter of fact, there is a simpler translation that preservers
weak answer sets of dl-programs.

\begin{definition}[$\pi^*(\cal K)$]
Let $\pi^*(\cal K)$ be the same translation as $\pi(\cal K)$ except that it does not
%distinct
distinguish nonmonotonic dl-atoms from dl-atoms, i.e., it handles
monotonic dl-atoms in the
%same way as how
way $\pi(\cal K)$ deals with nonmonotonic dl-atoms.
\end{definition}
It is clear that $\pi^*$ is polynomial. For instance, let us consider
the dl-program ${\cal K}_2$ in Example \ref{exam:4:new}. We have that $\pi^*({\cal K}_2)=(\emptyset, \pi^*(P_2))$
where $\pi^*(P_2)$ consists of
\begin{align*}
   p(a)\lto \Not \pi_A,\hspace{1.5cm} \pi_p(a)\lto \Not p(a),\hspace{1.5cm}
   \pi_A\lto\Not\DL[S\odot p,S\odot\pi_p;\neg S](a)
\end{align*}
where $A=\DL[S\odot p,S\ominus p;\neg S](a)$. The interested readers can verify that $\{p(a)\}$ is the unique weak answer set of $\pi^*({\cal K}_2)$.

\begin{proposition}\label{prop:delete:minus:w}
  Let ${\cal K}=(O,P)$ be a dl-program.
  % We have that
  Then
   \begin{enumerate}[(i)]
    \item If $I\subseteq\HB_P$ is a weak answer set of $\cal K$, then $\pi(I)$ is a weak answer set of $\pi^*(\cal K)$.
    \item If $I^*$ is a weak answer set of $\pi^*(\cal K)$, then $I^*\cap\HB_P$ is a weak answer set of $\cal K$.
  \end{enumerate}
\end{proposition}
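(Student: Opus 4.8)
The plan is to mimic the proof of Theorem~\ref{thm:delete:ominus:w} (the one relegated to Appendix~\ref{app:B}), observing that the only structural difference between $\pi^*(\cal K)$ and $\pi(\cal K)$ is that $\pi^*$ treats \emph{all} dl-atoms the way $\pi$ treats nonmonotonic ones; but under the \emph{weak} dl-transform every dl-atom — monotonic or not — is deleted when forming $wP_O^I$, so this difference is immaterial at the level of the transformed positive programs. Concretely, first I would extend the bookkeeping definitions: for ${\cal K}=(O,P)$ and $I\subseteq\HB_P$ set $\pi(I)=I\cup\pi_1(I)\cup\pi_2^*(I)$, where now $\pi_2^*(I)=\{\pi_A\in\HB_{\pi^*(P)}\mid A\in\DL_P\ \&\ I\not\models_OA\}$ ranges over all dl-atoms of $P$, not merely the nonmonotonic ones. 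Then I would re-prove the analogue of Lemma~\ref{lem:main:1} for $\pi^*$: part~(i) is unchanged, and part~(ii) is exactly the displayed chain of iff's in the proof of Lemma~\ref{lem:main:1}, which never used nonmonotonicity of $A$ — it only used that ``$S\ominus p$'' is replaced by ``$S\odot\pi_p$'' and $\pi_p(\vec c)\in\pi_1(I)$ iff $p(\vec c)\notin I$. So $I\models_OA$ iff $I\cup\pi_1(I)\models_O\DL[\pi(\lambda);Q](\vec t)$ iff $\pi(I)\not\models_O\pi(\Not A)$ for \emph{every} dl-atom $A$ occurring in $P$.

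Next I would establish the weak-transform analogue of Lemma~\ref{lem:s}. The key point is that, by the definition of the weak dl-transform, $w[\pi^*(P)]_O^{\pi(I)}$ is obtained by deleting every dl-atom (and every $\Not(\cdot)$) from the surviving rules; hence the rule $(\pi_p(\vec c)\lto)$ is in $w[\pi^*(P)]_O^{\pi(I)}$ iff $p(\vec c)\notin\pi(I)$ iff $p(\vec c)\notin I$, giving $\pi_1(I)=\{\pi_p(\vec c)\}\cap\lfp(\gamma_{[\pi^*(\cal K)]^{w,\pi(I)}})$; similarly the rule $(\pi_A\lto)$ survives iff $\pi(I)\not\models_O\pi(\Not A)$, i.e.\ iff $I\not\models_OA$, giving $\pi_2^*(I)=\{\pi_A\}\cap\lfp(\gamma_{[\pi^*(\cal K)]^{w,\pi(I)}})$. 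For the original rules of $P$: a rule $r=(h\lto\Pos,\Not\Neg)$ of form~(\ref{dl:rule}) survives into $wP_O^I$ iff $I\models_OB_i$ for all $B_i\in\DL_P\cap\Pos$ and $I\not\models_OB_j$ for all $B_j\in\Neg$; by the extended Lemma~\ref{lem:main:1} and the definition of $\pi_2^*(I)$ these conditions translate verbatim into the survival conditions for $\pi^*(r)$ in $w[\pi^*(P)]_O^{\pi(I)}$ (the fresh head atom $\pi_B$ of rule~(\ref{trans:pi:2}) is true in $\pi(I)$ exactly when $I\not\models_OB$, and $\Not\pi_B$ in the body of~(\ref{trans:pi:1}) survives only as the body-atom $\pi_B$ being absent). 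Since after the weak transform all surviving rules are plain Horn rules over $\HB_{\pi^*(P)}$, a straightforward induction on the stages $\gamma^k$ (exactly as in Lemma~\ref{lem:s}(iii)) gives $\gamma_{{\cal K}^{w,I}}^k=\HB_P\cap\gamma_{[\pi^*(\cal K)]^{w,\pi(I)}}^k$ for all $k\ge0$.

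Finally, the two directions of the Proposition follow by the same fixpoint computation as in Theorem~\ref{thm:delete:ominus:w}. For~(i): if $I$ is a weak answer set of $\cal K$, decompose $\lfp(\gamma_{[\pi^*(\cal K)]^{w,\pi(I)}})$ along $\HB_P\cup\{\pi_p(\vec c)\}\cup\{\pi_A\}$, apply the three items above to get $\HB_P\cap\lfp=\lfp(\gamma_{{\cal K}^{w,I}})=I$, $\{\pi_p(\vec c)\}\cap\lfp=\pi_1(I)$, $\{\pi_A\}\cap\lfp=\pi_2^*(I)$, hence the fixpoint equals $\pi(I)$, so $\pi(I)$ is a weak answer set of $\pi^*(\cal K)$. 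For~(ii): given a weak answer set $I^*$ of $\pi^*(\cal K)$, first show $I^*=\pi(I^*\cap\HB_P)$ by the same decomposition, then compute $\lfp(\gamma_{{\cal K}^{w,I}})=\HB_P\cap\lfp(\gamma_{[\pi^*(\cal K)]^{w,\pi(I)}})=\HB_P\cap I^*=I$ with $I=I^*\cap\HB_P$, so $I$ is a weak answer set of $\cal K$. The main obstacle — really the only subtle point — is the careful handling in the extended Lemma~\ref{lem:s}-analogue of the auxiliary rules~(\ref{trans:pi:2}) for monotonic dl-atoms (which $\pi^*$ now also generates): one must check that $\pi_B$ ends up true in the computed fixpoint \emph{precisely} on $\pi_2^*(I)$ and that feeding $\Not\pi_B$ through the weak transform of rule~(\ref{trans:pi:1}) faithfully reproduces the deletion of the monotonic dl-atom $B$ from $wP_O^I$; Example~\ref{exam:4:new} (the ${\cal K}_2$ computation) is exactly the sanity check that this works for weak answer sets even though the strictly analogous move fails for strong answer sets, and the proof should mirror that computation.
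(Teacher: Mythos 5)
Your proposal is correct and matches the paper's intent exactly: the paper's proof of Proposition~\ref{prop:delete:minus:w} consists of the single remark that it is ``similar to the one of Theorem~\ref{thm:delete:ominus:w},'' and what you have written is precisely the faithful unpacking of that similarity --- extending $\pi_2$ to all dl-atoms, observing that Lemma~\ref{lem:main:1}(ii) never used nonmonotonicity, and noting that the weak dl-transform deletes every dl-atom so the $\pi$/$\pi^*$ distinction vanishes in the reduct. The only cosmetic slip is that the weak-answer-set sanity check for $\pi^*({\cal K}_2)$ appears in the paragraph following the definition of $\pi^*$ rather than in Example~\ref{exam:4:new} (which concerns strong answer sets), but this does not affect the argument.
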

\begin{proof}
%  It is similar to that of Theorem \ref{thm:delete:ominus:w}.
The proof is similar to the one of Theorem \ref{thm:delete:ominus:w}.
\end{proof}

%Unfortunately, when the same transformation $\pi$ is applied to
%eliminate the occurrences of the constraint operator in monotonic dl-atoms, the resulting dl-programs may have some strong answer sets that do not correspond to any strong answer sets of
%the original dl-programs.

Note that, to remove the constraint operator from nonmonotonic dl-atoms of a
dl-program, in general we must extend the underlying language. This
is because there are dl-programs whose strong answer sets are
not minimal, but the translated dl-program contains no nonmonotonic
dl-atoms hence its strong answer sets are minimal (cf. Theorem 4.13
of \cite{DBLP:journals/ai/EiterILST08}). Therefore, we conclude that there is no
transformation not using extra
symbols that eliminates the constraint operator from normal
dl-programs while preserving strong answer sets.

Recall that \citeN{Motik:JACM:2010} introduced a
polynomial time transformation to translate a dl-atom mentioning no
constraint operator into a first-order sentence and proved that,
given a canonical dl-program $\mathcal K$, there is a one-to-one
mapping between the strong answer sets of $\mathcal K$ and the MKNF
models of the corresponding MKNF knowledge base
(Theorem 7.6 of \cite{Motik:JACM:2010}).
Theorem \ref{thm:delete:ominus:s} above extends their result from
canonical dl-programs to normal
dl-programs, by applying
the translation $\pi$  first.
In particular, the combined transformation is still polynomial for normal dl-programs.

\section{Translating Dl-programs to Default Theories}\label{Sec:ToDefault}
Let us briefly recall the basic notions of default logic
\cite{Reiter1980}. We assume a first-order language $\cal L$ with
a signature consisting of predicate, variable and constant symbols,
including equality. A {\em default theory $\Delta$} is a pair
$(D,W)$ where $W$ is a set of closed formulas (sentences) of $\cal
L$, and $D$ is a set of {\em defaults} of the form:
\begin{equation}\label{default}
    \frac{\alpha:\beta_1,\ldots,\beta_n}{\gamma}
\end{equation}
where $\alpha$ (called {\em premise}), $\beta_i$  $(0\leq i\leq n)$ (called {\em justification}),
\footnote{\citeN{Reiter1980} used $n\geq 1$; the
generalization we use is common and insignificant
for our purposes.} $\gamma$ (called {\em conclusion}) are formulas of $\cal
L$. A default $\delta$ of the form (\ref{default}) is {\em closed}
if $\alpha,\beta_i(1\leq i\leq n),\gamma$ are sentences, and a
default theory is {\em closed} if all of its defaults are closed. In
the following, we assume that every default theory is closed, unless
stated otherwise. Let $\Delta=(D,W)$ be a default theory, and let $S$ be a
set of sentences. We define $\Gamma_\Delta(S)$ to be the smallest
set satisfying
\begin{itemize}
  \item $W\subseteq\Gamma_\Delta(S)$,
  \item $\Th(\Gamma_\Delta(S))=\Gamma_\Delta(S)$, and
  \item If $\delta$ is a default of the form (\ref{default}) in $D$, and $\alpha\in\Gamma_\Delta(S)$, and
  $\neg\beta_i\notin S$ for each $i~(1\leq i\leq n)$ then $\gamma\in\Gamma_\Delta(S)$,
\end{itemize}
where $\Th$ is the classical closure operator, i.e., $\Th(\Sigma)=\{\psi\mid \Sigma\vdash\psi\}$ for
a set of formulas $\Sigma$.
A set of sentences $E$ is an {\em extension} of $\Delta$ whenever
$E=\Gamma_\Delta(E)$.  Alternatively, a set of sentences $E$ is an
extension of $\Delta$ if and only if $E=\bigcup_{i\ge 0}E_i$, where
\begin{equation}\label{eq:Ei:default}\left\{
  \begin{array}{ll}
    E_0=W, &  \\
    E_{i+1}=\Th(E_i)\cup\{\gamma\mid \frac{\alpha:\beta_1,\ldots,\beta_n}{\gamma}\in D\ s.t.\
  \alpha\in E_i\ \textit{and}\ \neg\beta_1,\ldots,\neg\beta_n\notin E\}, & \hbox{$i\ge 0$}.
  \end{array}
\right.
\end{equation}
It is not difficult to see that $\alpha\in E_i$ in (\ref{eq:Ei:default}) can be replaced by $E_i\vdash\alpha$.
%\begin{itemize}
%  \item $E_0=W$, and for $i\ge 0$
%  \item $E_{i+1}=\Th(E_i)\cup\{\gamma\mid \frac{\alpha:\beta_1,\ldots,\beta_n}{\gamma}\in D\ s.t.\
%  \alpha\in E_i\ \textit{and}\ \neg\beta_1,\ldots,\neg\beta_n\notin E\}$.
%\end{itemize}

In this section, we will present two approaches to translating a dl-program to a default
theory which preserves the strong answer sets of dl-programs. In the first, if the given ontology is inconsistent, the resulting default theory is trivialized and possesses a unique
extension that consists of all formulas of ${\cal L}$, while in the
second, following the spirit of dl-programs, an inconsistent ontology does not trivialize the resulting
default theory.\footnote{The two approaches presented here do not preserve weak answer sets of dl-programs, for a good reason.
Technically however, by applying a translation first that makes all dl-atoms occur negatively, we can obtain  translations that preserve
weak answer sets of dl-programs.}  Then we will give a translation from dl-programs under the weakly well-supported answer set semantics \cite{Yi-Dong:IJCAI:2011} to default theories. Before we proceed, let us comment on the impact of equality reasoning in the context of representing dl-programs by default logic.

\subsection{Equality reasoning}
\label{sec:equality}

The answer set semantics of dl-programs are defined with the intention
that equality reasoning in the ontology is fully captured, while at
the same time reasoning with rules is conducted
%relevant
relative to the Herbrand domain. The latter implies that equality reasoning is not carried over to reasoning with rules.
For example, the dl-program
$$(O,P)= (\{a \approx b\}, \{p(a) \leftarrow \Not p(b),~p(b) \leftarrow \Not p(a)\})$$
has two (strong) answer sets, $\{p(a)\}$ and $\{p(b)\}$, neither of
 which carries equality reasoning in the ontology to the rules. But
 if the dl-program is translated to the default theory $(\{\frac{:
   \neg p(b)}{p(a)}, \frac{: \neg p(a)}{p(b)}\}, \{a \approx b\})$ it
 has -- evaluated under first-order logic with equality -- no extensions.  As suggested in \cite{DBLP:journals/ai/EiterILST08}, one can emulate equality reasoning by imposing
 the unique name assumption (UNA) and a congruence relation on ontology.

 Although congruence and UNA in general allow one to extend equality
 reasoning from the ontology to the rules, we
 will show that, for the purpose of representing dl-programs by default logic, for the standard default encoding
 like in the example above, %which trivializes inconsistent ontology knowledge bases
 strong answer sets are preserved by treating $\approx$ as a
 congruence relation on ontology (i.e., replacement of equals by
 equals only applies to the predicates of the ontology); in
 particular, there is no need to adopt the UNA.
 For the default translation that handles inconsistent
 %ontology knowledge bases by the original
 ontologies in the original spirit of dl-programs, neither congruence nor UNA is needed. These results provide additional insights in capturing dl-programs by default logic.

\comment{
Consider the relational first-order language ${\cal L}$ with equality $\approx$, and \you {} { let} $({\cal C}', {\cal P}')$ be the signature of $\cal L$ consisting of denumerable disjoint sets of constant and predicate symbols $\cal C'$ and $\cal P'$, respectively.
An interpretation of formulas in ${\cal L}$ is a tuple ${\cal I}=\langle U,\cdot^I\rangle$, where $U$ is a non-empty domain and $\cdot^I$ is a mapping which maps constants in ${\cal C}'$ to elements in $U$, and assigns
a relation $p^I\subseteq U^n$ to every {\em n}-ary predicate symbol $p\in {\cal P}'$.

Under UNA, an interpretation ${\cal I}=\langle U,\cdot^I\rangle$ is required to satisfy
$a^{I} \not = b^{I}$, for any distinct constants $a,b \in \cal C'$.
That is, such an interpretation maps distinct constants to different elements in the domain of interpretation.

When we say that $\approx$ is interpreted as a congruence relation on ${\cal L}$, we mean that it is just an ordinary predicate that is
reflexive, symmetric and transitive, and allows the replacement of equals by equals for all predicates in ${\cal P}'$, i.e., \you {} { $\approx$ is an equivalence relation and satisfies}
$\forall \vec x,\vec y. (\vec x\approx \vec y)\supset(p(\vec x)\supset p(\vec y))$ where $p\in\cal P'$, and $\vec x,\vec y$ are tuples of distinct variables whose arities match that of the predicate $p$.
When ``replacement of equals by equals" only applies to a subset ${\cal P''} \subseteq {\cal P'}$, we will explicitly say so.
}

Thanks to Fitting, as shown by the following theorem, the equality $\approx$ can be simulated by a congruence in the sense that a first-order formula with equality is satisfiable in
a model with true equality if and only if it is satisfiable in a model where $\approx$ is
interpreted as a congruence relation.

\begin{theorem}[Theorem 9.3.9 of \cite{Fitting:1996}]\label{thm:fitting}
  Let ${\cal L}$ be a first-order language, $S$ a set of sentences  and $X$ a sentence. Then
  $S\models_{\approx} X$ iff $S\cup eq({\cal L})\models X$, where
  $S\models_{\approx}X$ means that $X$ is true in every model of $S$ in which $\approx$ is interpreted as
  an equality relation and $eq({\cal L})$ consists of the following
  axioms:
%\teecomment{Aren't axioms missing (symmetry, transitivity?)}
%\yisong{}{The both are derivable from ``ref" and ``predicate
%  replacement". It can be found  in Fitting's book. TE: So, does he
%assume that $\approx$ is part of $\cal L$? Then yes, else it does not
%hold at the level of models.}
  %\yisongcomment{I prefer to use ``the set of following axioms"}:
  \begin{align}
    \label{fit:replacement:1}& \mbox{reflexivity} &&(\forall x) (x\approx x),\\
    \label{fit:replacement:2}& \mbox{function replacement}&& (\forall \vec x,\vec y)[(\vec x\approx \vec y)\supset (f(\vec x)\approx f(\vec y))],\quad \mbox{for every function $f$ of $\cal L$,}\\
\label{fit:replacement:3}&  \mbox{predicate replacement}&& (\forall
\vec x,\vec y)[(\vec x\approx \vec y)\supset  (p(\vec x)\supset p(\vec y))],\quad \mbox{for every predicate $p$ of $\cal L$}.
\end{align}
%\teecomment{\text{ replaced $(p(\vec x)\approx p(\vec y))$ by
%$(p(\vec x)\approx p(\vec y))$}} \yisong{}{It is sufficient using $\supset$ which is originally the definition by Fitting. Thanks for the point. --Yisong. }
\end{theorem}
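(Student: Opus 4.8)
The plan is to prove the two implications of the biconditional separately; the converse ($\Leftarrow$) is trivial, and the forward direction ($\Rightarrow$) is the classical \emph{quotient-by-a-congruence} argument. For $(\Leftarrow)$, assume $S\cup eq({\cal L})\models X$ and let $\mathcal M=\langle U,\cdot^{\mathcal M}\rangle$ be any model of $S$ in which $\approx$ is interpreted as genuine equality. Genuine equality is reflexive, and, because function symbols denote functions and predicate symbols denote relations, it validates the function- and predicate-replacement schemas (\ref{fit:replacement:2}) and (\ref{fit:replacement:3}) as well; hence $\mathcal M\models eq({\cal L})$, so $\mathcal M\models X$. Since $\mathcal M$ ranges over all equality-models of $S$, this is exactly $S\models_{\approx}X$.

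For $(\Rightarrow)$, assume $S\models_{\approx}X$ and let $\mathcal M=\langle U,\cdot^{\mathcal M}\rangle$ be an arbitrary model of $S\cup eq({\cal L})$, where now $\approx^{\mathcal M}$ is merely some binary relation on $U$. First I would check that $\approx^{\mathcal M}$ is an equivalence relation: reflexivity is (\ref{fit:replacement:1}), while symmetry and transitivity follow by instantiating the predicate-replacement schema (\ref{fit:replacement:3}) with $p$ taken to be $\approx$ itself, together with reflexivity (e.g.\ from $a\approx b$ and $a\approx a$ one derives $b\approx a$ using the instance $x_1:=x_2:=a$, $y_1:=b$, $y_2:=a$; transitivity is analogous). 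Schemas (\ref{fit:replacement:2}) and (\ref{fit:replacement:3}) moreover say precisely that $\approx^{\mathcal M}$ is a congruence for every function and predicate symbol of ${\cal L}$. I would then form the quotient structure $\mathcal M'$ on $U/{\approx^{\mathcal M}}$ by setting $c^{\mathcal M'}=[c^{\mathcal M}]$, $f^{\mathcal M'}([u_1],\ldots,[u_n])=[f^{\mathcal M}(u_1,\ldots,u_n)]$, and $p^{\mathcal M'}([u_1],\ldots,[u_n])$ iff $p^{\mathcal M}(u_1,\ldots,u_n)$; the congruence properties make these definitions independent of the chosen representatives, and $\approx$ is interpreted in $\mathcal M'$ as true equality on equivalence classes. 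A routine induction on terms and formulas then shows that $\mathcal M$ and $\mathcal M'$ agree on the truth value of every sentence of ${\cal L}$ (variable assignments into $U$ and their images in the quotient are carried along in the inductive step and disappear at the end because $S$ and $X$ are sentences). Hence $\mathcal M'\models S$; since $\mathcal M'$ interprets $\approx$ as equality, the hypothesis $S\models_{\approx}X$ gives $\mathcal M'\models X$, and therefore $\mathcal M\models X$. As $\mathcal M$ was an arbitrary model of $S\cup eq({\cal L})$, we conclude $S\cup eq({\cal L})\models X$.

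The only genuinely substantive points are (a) deriving symmetry and transitivity of $\approx^{\mathcal M}$ from reflexivity plus predicate replacement, since these are not listed explicitly among the axioms of $eq({\cal L})$, and (b) the agreement lemma between $\mathcal M$ and its quotient $\mathcal M'$; everything else is bookkeeping about well-definedness of the quotient interpretation. I expect (b) to be the most tedious but least surprising step, and (a) to be the one small place where one has to be a little clever.
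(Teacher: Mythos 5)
Your proof is correct, and it is the standard quotient-by-a-congruence argument; the paper itself does not prove this statement but imports it verbatim as Theorem 9.3.9 of Fitting's book, where essentially this same factor-model construction is used. The two points you flag as substantive — deriving symmetry and transitivity of $\approx$ from reflexivity plus the predicate-replacement instance for $\approx$ itself, and the agreement lemma between a model and its quotient — are handled correctly (your instantiations check out), so there is nothing to repair.
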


Since $\approx$ is a part of $\cal L$, the symmetry and transitivity of $\approx$  in $\cal L$ can be easily derived from (\ref{fit:replacement:1}) and (\ref{fit:replacement:3}) as illustrated by Fitting \citeyear{Fitting:1996}. In what follows, we take $\approx$ as a congruence, unless otherwise explicitly stated, and
we write $\models$ for $\models_{\approx}$ when it is clear from its context,

\comment{
We then define $O \cup I \models_{\UC} \tau(C)$ as: for any
(first-order) interpretation ${\cal I}$ of the language of $O \cup \HB_P$ that satisfies UNA, where
$\approx$ is interpreted as a congruence relation for the language of $O$, if $O \cup I$ is satisfied by ${\cal I}$, so is $\tau(C)$.
}
\comment{
To be faithful to the original spirit of dl-programs, in the rest of this section, for any (first-order) interpretation ${\cal I}$, we assume UNA holds and that the predicate symbol $\approx$ is interpreted as a congruence relation for the underlying description logic, i.e., ``replacement of equals by equals"   applies only to the predicates of ontologies. }

%\you {} { \subsection{Representation of dl-atom}}

Before giving the translation from dl-programs to default theories, we first present a transformation
for dl-atoms, which will be referred to throughout this section.
%The following first-order logic representation of dl-atoms will be used throughout this section.
Let ${\cal K} = (O,P)$ be a dl-program (for convenience, assume $O$ is already translated to a first-order theory), $I \in \HB_P$ an interpretation, and $\tau(C)$ is
a first-order sentence translated from $C$:
\begin{itemize}
    \item if $C$ is an atom in $\HB_P$, then $\tau(C) = C$, and
    \item if $C$ is a dl-atom of the form (\ref{dl:atom}) then $\tau(C)$ is a first-order sentence
     \[\left[\bigwedge_{1\leq i\leq m}\tau(S_i\ op_i\ p_i)\right]\supset Q(\vec t)\ \  \textmd{, where}\]
     \[\tau(S\ op\ p)=\left\{
%%     \begin{array}{ll}
%%        \forall \vec X.p(\vec X)\supset S(\vec X) & \mbox{if  $op=\oplus$}\\
%%        \forall \vec X.p(\vec X)\supset \neg S(\vec X) & \mbox{if $op=\odot$}\\
%%        \forall \vec X.\neg p(\vec X)\supset\neg S(\vec X) &\mbox{if $op=\ominus$}
%%     \end{array}
%%     \right.\]
%%     where $\vec X$ is a tuple of distinct variables matching the arity of $p$.
     \begin{array}{ll}
        \bigwedge_{p(\vec c)\in \HB_P}[p(\vec c)\supset S(\vec c)] & \mbox{if  $op=\oplus$}\\
        \bigwedge_{p(\vec c)\in \HB_P}[p(\vec c)\supset \neg S(\vec c)] & \mbox{if $op=\odot$}\\
        \bigwedge_{p(\vec c)\in \HB_P}[\neg p(\vec c)\supset\neg S(\vec c)] &\mbox{if $op=\ominus$}
     \end{array}
     \right.\]
     where  we identify
     $S(\vec c)$ and $Q(\vec t)$ with their corresponding first-order sentences respectively. Since $\vec t$ and $\vec c$ mention no variables,      $\tau(C)$ has no free variables. Thus $\tau(C)$ is closed.
  \end{itemize}

\subsection{Translation trivializing inconsistent ontology knowledge bases}

We present the first transformation from dl-programs to default theories which preserves strong
answer sets of dl-programs without nonmonotonic dl-atoms.

\comment{

The older version is kept here.

Let ${\cal K}=(O,P)$ be a
dl-program.
We define $\tau(\cal K)$ to be the default theory $(D,W)$
as follows:
\begin{itemize}
  \item $W$ is a first-order theory corresponding to $O$,
  \item  $D$ consists of, for each dl-rule  of the form (\ref{dl:rule}) in $P$, the default
   \begin{eqnarray*}
      \frac{\bigwedge_{1\leq i\leq m}\tau(B_i):\neg\tau(B_{m+1}),\ldots,\neg\tau(B_n)}{A}
   \end{eqnarray*}
   where $\tau(C)$ is defined as
%%For $i=1,2$, we let $df_i(\cal K)$ to be the default
%%theory $(D_i,W)$ defined as below:
%%\begin{itemize}
%%  \item $W$ is the first theory corresponding to $O$,
%%  \item  $D_i$ contains of (1) for each atom $p(\vec c)\in\HB_P$, the default $\frac{:\neg p(\vec c)}{\neg p(\vec c)}$, and
%%   (2) for each dl-rule $r$ of the form (\ref{dl:rule:set}) of $P$, the default
%%   \begin{eqnarray}
%%     \frac{\bigwedge_{B\in\Pos\cap\HB_P}B:\tau(B_1),\ldots,\tau(B_m),\neg\tau(B_{m+1}),\ldots,\neg\tau(B_n)}{A},\ \textit{and}\\
%%     \frac{\bigwedge_{B\in\Pos\cap(DL_P^+\cup\HB_P)}B:\tau(B_{k+1}),\ldots,\tau(B_m),\neg\tau(B_m+1),\ldots,\neg\tau(B_n)}{A}
%%    % \frac{\bigwedge_{B\in\Pos\cap\HB_P}B:\neg\tau(B_1),\ldots,\neg\tau(B_k)}{A}
%%   \end{eqnarray}
%%  respectively, where $\Pos\cap DL_P=\{B_1,\ldots,B_m\}$,
%%  $\Pos\cap DL_P^+=\{B_1,\ldots, B_k\}(k\le m)$, $\Neg=\{B_{m+1},\ldots, B_n\}$, and $\tau(A)$ is defined as
  \begin{itemize}
    \item if $C$ is an atom then $\tau(C)=C$,
    \item if $C$ is a dl-atom of the form (\ref{dl:atom})
    then $\tau(C)$ is the first-order sentence:
     \[\left[\bigwedge_{1\leq i\leq n}\tau(S_i\ op_i\ p_i)\right]\supset Q(\vec t)\ \  \textmd{, where}\]
     \[\tau(S\ op\ p)=\left\{
%%     \begin{array}{ll}
%%        \forall \vec X.p(\vec X)\supset S(\vec X) & \mbox{if  $op=\oplus$}\\
%%        \forall \vec X.p(\vec X)\supset \neg S(\vec X) & \mbox{if $op=\odot$}\\
%%        \forall \vec X.\neg p(\vec X)\supset\neg S(\vec X) &\mbox{if $op=\ominus$}
%%     \end{array}
%%     \right.\]
%%     where $\vec X$ is a tuple of distinct variables matching the arity of $p$.
     \begin{array}{ll}
        \bigwedge_{p(\vec c)\in \HB_P}[p(\vec c)\supset S(\vec c)] & \mbox{if  $op=\oplus$}\\
        \bigwedge_{p(\vec c)\in \HB_P}[p(\vec c)\supset \neg S(\vec c)] & \mbox{if $op=\odot$}\\
        \bigwedge_{p(\vec c)\in \HB_P}[\neg p(\vec c)\supset\neg S(\vec c)] &\mbox{if $op=\ominus$}
     \end{array}
     \right.\]
     where  we identify
     $S(\vec c)$ and $Q(\vec t)$ with their corresponding first-order sentences respectively. Since $\vec t$ and $\vec c$ mention no variables,      $\tau(A)$ has no free variables. Thus $\tau(\cal K)$ is closed.
  \end{itemize}
\end{itemize}
}

\begin{definition}[$\tau(\cal K)$]
Let ${\cal K}=(O,P)$ be a
dl-program.
We define $\tau(\cal K)$ to be the default theory $(\tau(P),\tau(O))$
as follows %\you {} {(The translation $\tau$ is what we have called the "standard default encoding"):}
\begin{itemize}
  \item  $\tau(O)$ is the congruence rewriting of $O$, i.e., replacing true equality in $O$ by a congruence;
  by abusing the symbol we denote the congruence by $\approx$,  together with the
  axioms (\ref{fit:replacement:1}) and (\ref{fit:replacement:3})
  for every predicate in the underlying language of $O$, denoted by ${\cal A}_O$.%
%   \begin{align*}
%     & \forall x. x\approx x, \\
%     & \forall \vec x, \vec y. (\vec x\approx \vec y)\supset (p(\vec x)\supset p(\vec y)), \mbox{    for every predicate $p$ in the underlying language of $O$.}
%   \end{align*}
\footnote{Note that we do not need function replacement axioms here as there are no functions occurring
in $O$.}
Given an ontology $O$, we assume the predicates in the underlying language of $O$ are exactly the ones occurring in $O$.
  % If there is no true equality occurring in $O$ then $\tau(O)=O$.}
  %denoted by  is a first-order theory corresponding to $O$,
  \item  $\tau(P)$ consists of, for each dl-rule  of the form (\ref{dl:rule}) in $P$, the default
   \begin{eqnarray*}
      \frac{\bigwedge_{1\leq i\leq m}\tau(B_i):\neg\tau(B_{m+1}),\ldots,\neg\tau(B_n)}{A}
   \end{eqnarray*}
   where $\tau(C)$ is defined in the preceding subsection and equality $\approx$ is now taken as the congruence relation above.
\end{itemize}
\end{definition}
It is evident that, given a dl-program ${\cal K}=(O,P)$, every
extension of $\tau({\cal K})$ has the form $\Th(I\cup \tau(O))$, for some $I\subseteq\HB_P$. Thus, if $O$ is consistent
then every extension of $\tau(\cal K)$ is consistent. On the other hand, if $O$ is inconsistent then $\tau(\cal K)$ has a unique
extension which is inconsistent.
It is clear
that $\tau(\cal K)$ is of polynomial size of the dl-program $\cal
K$, since the size of  $\HB_P$ is polynomial in the size  of $P$.

\begin{example}[Continued from Example \ref{exam:dl:program:1}]
\label{exam:4}
\begin{itemize}
  \item Note that $\tau({\cal K}_1)=(\{d\},W)$ where $W=\{\forall x. S(x)\supset S'(x)\}\cup {\cal A}_{O_1}$ and
  \begin{align*}
    d=\frac{(p(a)\supset S(a))\supset   S'(a):}{p(a)}.
  \end{align*}
  It is easy to see that $\tau({\cal K}_1)$ has a unique extension $\Th(W)$.

  \item Note that $\tau({\cal K}_2)=(\{d\},W)$ where $W={\cal A}_{O_2}$ and
  \begin{align*}
    %d_1=\frac{:\neg p(a)}{\neg p(a)},\hspace{2cm} d_2=\frac{:\neg q(a)}{\neg q(a)}, \\
    d=\frac{(p(a)\supset S(a))\wedge (q(a)\supset \neg S'(a))\wedge (\neg q(a)\supset \neg S'(a))\supset S(a)\wedge\neg
    S'(a):}{p(a)}.
  \end{align*}
   One can verify that $\Th(W)$ is the unique extension of $\tau({\cal K}_2)$
  though we know that ${\cal K}_2$ has two strong answer sets,
  $\emptyset$ and   $\{p(a)\}$.
\end{itemize}
%Similarly, you can check that $\tau({\cal K}_3)$ has a unique extension
%$\Th(\{\neg p(a),\neg q(a)\})$ while $\tau({\cal K}_4)$ has no extension at all. Thus
%${\cal K}_3$ has a unique default answer set $\emptyset$ and ${\cal K}_4$ has no
%default answer set.
\end{example}

The default theory $\tau({\cal K}_2)$ in the above example shows
that if a dl-program $\cal K$ mentions nonmonotonic dl-atoms, then
$\tau(\cal K)$ may have no corresponding extensions for some strong answer sets of $\cal K$. However, the
one-to-one mapping between strong answer sets of $\cal K$ and the extensions of $\tau(\cal K)$
does exist  for dl-programs mentioning no nonmonotonic dl-atoms and whose
knowledge bases are consistent.

%\jiacomment{We can immediately state the problem of equality and Lemma 4. Then, Lemma 3 is part of effort to prove Lemma 4. So, Lemma 3 can be delayed.}

In the following, when it is clear from the context,
we will identify a finite set $S$ of formulas as the conjunction of elements in $S$ for convenience.
The following lemma relates a disjunctive normal form to a conjunctive normal form, which is well-known.

\begin{lemma}\label{lem:6}
  Let $A=\{A_1,\ldots,A_n\}$, $B=\{B_1,\ldots,B_n\}$ and $I=\{i\mid 1\leq i\leq n\}$ where  $A_i,B_i~(1\leq i\leq n)$ are atoms.
  Then %we have that
\[\bigvee_{I'\subseteq I}\left(\bigwedge_{i\in I'}A_i\wedge\bigwedge_{j\in I\setminus I'}B_j\right)\equiv\bigwedge_{i\in I}(A_i\vee B_i).\]
\end{lemma}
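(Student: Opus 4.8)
The plan is to prove the identity by a direct distributive-law computation, expanding the right-hand side $\bigwedge_{i\in I}(A_i\vee B_i)$ into disjunctive normal form. First I would observe that distributing the conjunction over the disjunctions gives a disjunction indexed by all ways of choosing, for each $i\in I$, one of the two disjuncts $A_i$ or $B_i$; such a choice is precisely encoded by a subset $I'\subseteq I$, where $i\in I'$ means we picked $A_i$ and $i\in I\setminus I'$ means we picked $B_i$. Hence $\bigwedge_{i\in I}(A_i\vee B_i)\equiv\bigvee_{I'\subseteq I}\bigl(\bigwedge_{i\in I'}A_i\wedge\bigwedge_{j\in I\setminus I'}B_j\bigr)$, which is exactly the left-hand side.

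Alternatively, and perhaps cleaner to write out, I would argue by induction on $n=|I|$. For the base case $n=0$ both sides are the empty conjunction (i.e.\ $\top$), and for $n=1$ both sides are $A_1\vee B_1$. For the inductive step, I would split the index set as $I=\{1,\ldots,n+1\}$ and write $\bigwedge_{i=1}^{n+1}(A_i\vee B_i)\equiv\bigl(\bigwedge_{i=1}^{n}(A_i\vee B_i)\bigr)\wedge(A_{n+1}\vee B_{n+1})$, apply the induction hypothesis to the first factor, and then distribute $(A_{n+1}\vee B_{n+1})$ over the disjunction, noting that each term $\bigwedge_{i\in I'}A_i\wedge\bigwedge_{j\in I\setminus I'}B_j$ (with $I'\subseteq\{1,\ldots,n\}$) splits into the two terms indexed by $I'\cup\{n+1\}$ and $I'$ (the latter viewed as a subset of $\{1,\ldots,n+1\}$ not containing $n+1$), thereby producing exactly the disjunction over all subsets of $\{1,\ldots,n+1\}$.

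There is essentially no obstacle here: the statement is a textbook-level propositional equivalence (the CNF/DNF duality for a conjunction of two-literal clauses), which is why the paper merely remarks that it is well known. The only thing worth stating carefully is the bookkeeping of the indexing bijection between choice functions $I\to\{A,B\}$ and subsets $I'\subseteq I$, so that the two displayed formulas are seen to be literally the same disjunction up to reordering of disjuncts and conjuncts; since $\vee$ and $\wedge$ are associative, commutative, and idempotent, such reordering preserves logical equivalence. One should also note the edge conventions: an empty conjunction is $\top$, so when $I'=\emptyset$ the term is $\bigwedge_{j\in I}B_j$ and when $I'=I$ it is $\bigwedge_{i\in I}A_i$, both of which appear as the extreme disjuncts on the left-hand side.
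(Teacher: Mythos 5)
Your proof is correct: the identity is exactly the standard distribution of a conjunction of binary disjunctions into disjunctive normal form, and both your direct distributive-law argument and your induction on $n$ establish it, including the right conventions for the empty conjunction. The paper itself gives no proof of this lemma, merely noting it is well known, so your write-up supplies precisely the routine argument the authors had in mind.
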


\comment{
\noindent
{\bf New Lemma 4:}

Let $(O,P)$ be a dl-program and $I \subseteq \HB_P$ an interpretation. We then have,
$I \models_{O} A$ iff $O \cup I \models A$ if $A$ is an atom and $O$ is consistent, and $O \cup I \models \tau (A)$ if $A$ is a monotonic dl-atom.
}
%\jiacomment{Do you actually need "monotonic" in the lemma? Then how to translate nonmonotonic dl-atoms, e.g., for the well-supported semantics? // (1) Definitely, Yes. Otherwise let $A=\DL[S\ominus p; \neg S](a)$, $O=\emptyset$ and $I=\emptyset$. We have $I\models_OA$, but $I\cup O\not\models\tau(A)$. (2) For the well-supported semantics, we have a similar but different lemma, Lemma 9, which needs similar consideration on equalities. --Yisong}

%\jiacomment{Yisong, for a proof, see Proposition 3.2 in Motik and Rosati's paper, especially the last sentence in the proof, and Thomas' proof of how satisfiability preserving leads to consequence preserving, in this context. Also, see Eiter et al.'s AIJ paper, Section 4.3.2. Note that congruence relation only applies to predicates in O, not to those only occurring in P.  One needs to use the relation $\Pi \models_{UC} F$ iff $\Pi \models F$, by Fitting. }

\begin{lemma}\label{lem:forget}
  Let $M$ be a set of ground atoms, $\psi_i,\varphi_i$ and $\phi$ are
  formulas not mentioning true equality, the  predicates $p, p_1,p_2$ and the
  predicates occurring in $M$, where $1\le i\le n$. Then
  %We have
  \begin{align*}
    (1) & \bigwedge M\wedge \bigwedge_{1\le i\le n}((p(\vec c_i)\supset \psi_i)\wedge (\neg p(\vec c_i)\supset\varphi_i))\models\phi\ \textit{iff}\
  \bigwedge_{p(c_j)\in M}\psi_j\wedge \bigwedge_{p(\vec c_i)\notin  M}(\psi_i\vee\varphi_i)\models\phi,\\
    (2) &  \bigwedge M\wedge \bigwedge_{1\le i\le n}((p_1(\vec c_i)\supset \psi_i)\wedge (\neg p_2(\vec c_i)\supset\varphi_i))\models\phi\ \textit{iff}\
  \bigwedge_{p_1(\vec c_i)\in M}\psi_i\models\phi.
  \end{align*}
\end{lemma}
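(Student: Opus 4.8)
The plan is to prove both equivalences by a model-theoretic ``forgetting'' argument, using the hypothesis that $\psi_i,\varphi_i,\phi$ contain none of the predicates $p,p_1,p_2$ and no predicate occurring in $M$. One direction is routine: if $\mathcal I$ is any model of the left-hand conjunction of (1), then $\mathcal I\models\bigwedge M$ forces $\mathcal I\models p(\vec c_j)$, hence $\mathcal I\models\psi_j$, for every $j$ with $p(\vec c_j)\in M$; and for $p(\vec c_i)\notin M$, a case split on the truth of $p(\vec c_i)$ in $\mathcal I$ gives $\mathcal I\models\psi_i$ or $\mathcal I\models\varphi_i$, hence $\mathcal I\models\psi_i\vee\varphi_i$. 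So $\mathcal I$ satisfies the right-hand antecedent, and if that antecedent entails $\phi$ then $\mathcal I\models\phi$. Part (2) is analogous, with $\mathcal I\models\bigwedge M$ giving $\mathcal I\models\psi_i$ for $p_1(\vec c_i)\in M$.

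For the converse I would use model surgery. Given a model $\mathcal I$ of the right-hand antecedent, build $\mathcal I'$ agreeing with $\mathcal I$ on every predicate except $p$ (resp.\ $p_1,p_2$) and the predicates occurring in $M$, as follows. In (1): make all atoms of $M$ true in $\mathcal I'$ and, for each $i$, set $p(\vec c_i)$ true in $\mathcal I'$ exactly when $\mathcal I\models\psi_i$, and false on all remaining tuples. This is well-defined because the $\vec c_i$ appearing in $\bigwedge_i$ are pairwise distinct (as in the intended use, where they range over $\{p(\vec c)\in\HB_P\}$) and because $p(\vec c_j)\in M$ already forces $\mathcal I\models\psi_j$, so the two prescriptions agree on those tuples; one then checks that each conjunct $p(\vec c_i)\supset\psi_i$ and $\neg p(\vec c_i)\supset\varphi_i$ holds in $\mathcal I'$, using $\mathcal I\models\psi_i\vee\varphi_i$ for the indices with $p(\vec c_i)\notin M$. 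In (2) the surgery is even easier: interpret $p_2$ as true on every tuple, so that all $\neg p_2(\vec c_i)\supset\varphi_i$ become vacuous, and interpret $p_1$ as holding on exactly the $p_1$-atoms of $M$, so that $\bigwedge M$ and all $p_1(\vec c_i)\supset\psi_i$ hold. In both cases, since $\psi_i,\varphi_i,\phi$ mention none of the reinterpreted predicates, the coincidence lemma gives $\mathcal I'\models\psi_i\Leftrightarrow\mathcal I\models\psi_i$ (and likewise for $\varphi_i$ and $\phi$); thus $\mathcal I'$ models the left-hand conjunction, so by hypothesis $\mathcal I'\models\phi$, whence $\mathcal I\models\phi$.

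Abstractly, this says that forgetting $p$ and the $M$-predicates turns the left-hand formula of (1) into something logically equivalent, on the remaining signature, to its right-hand antecedent; the combinatorial heart --- distributing the choice of which $p(\vec c_i)$ are true through the conjunction --- is exactly Lemma~\ref{lem:6}, while in (2) the $\varphi_i$-implications collapse because $p_1\neq p_2$. I expect the main (mild) obstacle to be bookkeeping: verifying that the surgically defined interpretation of the forgotten predicates is consistent and simultaneously validates $\bigwedge M$ together with all the implication conjuncts. A side point is that $\approx$, read as a congruence, should be respected by the surgery (reinterpret $p$, resp.\ $p_1$, $\approx^{\mathcal I}$-class-wise rather than tuple-wise), which is harmless here because the $\vec c_i$ are distinct constants and nothing in $M$, $\psi_i$, $\varphi_i$, $\phi$ ties $p$ across $\approx$-classes; indeed in the intended application $p$ and the $M$-predicates lie outside the ontology signature, so the congruence axioms do not mention them at all.
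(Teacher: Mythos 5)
Your proof is correct and follows essentially the same route as the paper's: the easy direction is the same tautology $(\alpha\supset\psi)\wedge(\neg\alpha\supset\varphi)\models\psi\vee\varphi$, and the hard direction rests on the same model surgery (set $p(\vec c_i)$ true exactly when $\mathcal I\models\psi_i$; in (2) make $p_2$ everywhere true and $p_1$ true exactly on the $p_1$-atoms of $M$), exploiting that $p,p_1,p_2$ and the predicates of $M$ do not occur in $\psi_i,\varphi_i,\phi$. The only difference is presentational: you build the witnessing interpretation in one shot, whereas the paper's proof of (1) reaches the same interpretation by flipping the $p(\vec c_i)$ one at a time inside a proof by contradiction; both arguments tacitly rely on the pairwise distinctness of the $\vec c_i$, which you at least flag explicitly.
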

\begin{proof}
 (1)  The direction from right to left is obvious as $(\alpha\supset
  \psi)\wedge(\neg \alpha\supset\varphi)\models \psi\vee\varphi$. Let
  us consider the other direction. It suffices to show
\begin{equation}
\label{line:lemma-four}
\bigwedge_{p(\vec c_i)\notin M}((p(\vec c_i)\supset \psi_i)\wedge (\neg p(\vec c_i)\supset\varphi_i))\models\phi\ \ \textit{only if}\
  \bigwedge_{p(\vec c_i)\notin  M}(\psi_i\vee\varphi_i)\models\phi.
\end{equation}
Towards a contradiction, suppose that the left hand side of this
statement holds and there is an interpretation ${\cal I}\models\bigwedge_{1\le i\le n}(\psi_i\vee\varphi_i)$ and ${\cal I}\not\models\phi$. It follows that ${\cal I}\not\models \bigwedge_{1\le i\le n}((p(\vec c_i)\supset \psi_i)\wedge (\neg p(\vec c_i)\supset\varphi_i))$. Thus there exists some $k~(1\le k\le n)$ such that
  ${\cal I}\not\models (p(\vec c_k)\supset \psi_k)\wedge (\neg p(\vec c_k)\supset\varphi_k)$. Without loss of generality, we assume $k=1$. Let us consider the following two cases:
  \begin{itemize}
    \item ${\cal I}\models p(\vec c_1 )$. In this case we have ${\cal I}\not\models\psi_1$, by which ${\cal I}\models\varphi_1$ due to
    ${\cal I}\models\psi_1\vee\varphi_1$. As the formulas $\psi_i,
      \varphi_i ~(1\le i\le n)$ and $\phi$
%     have nothing to do with the predicate $p$, we have  that the
%     interpretation satisfies the conditions ${\cal
%     I}_1\models\bigwedge_{1\le i\le n}(\psi_i\vee\varphi_i)$ and
%     ${\cal I}_1\not\models\phi$, where ${\cal I}_1$ is same to
%     ${\cal I}$ except that ${\cal I}_1\not\models p(\vec c_1)$.
%     It implies     ${\cal I}_1\models\var_1$ by ${\cal I}\models\var_1$.
 do not involve the predicate $p$, the
      interpretation ${\cal I}_1$ which
      coincides with ${\cal I}$ except that ${\cal I}_1\not\models
      p(\vec c_1)$ satisfies the conditions ${\cal I}_1\models\bigwedge_{1\le i\le n}(\psi_i\vee\varphi_i)$ and ${\cal I}_1\not\models\phi$.
From ${\cal I}\models\var_1$ it follows that ${\cal
  I}_1\models\var_1$;  thus ${\cal I}_1\models (p(\vec
c_1)\supset\psi_1)\wedge(\neg p(\vec c_1)\supset \var_1)$. It follows
that there exists some $j~(2\le j\le n)$ such that
    ${\cal I}_1\not\models (p(\vec c_j)\supset\psi_j)\wedge (\neg
p(\vec c_j)\supset\var_j)$. Without loss of generality, we can assume
$j=2$.
% Along with a similar analysis, we have
 With a similar case analysis and continuing the argument,
it follows that there exists  an interpretation ${\cal I}_{n-1}$ such
that ${\cal I}_{n-1}\models \bigwedge_{1\le i\le
  n}(\psi_i\vee\varphi_i)$, ${\cal I}_{n-1}\not\models\phi$ and ${\cal
  I}_{n-1}\models \bigwedge_{1\le i\le n-1}((p(\vec c_i)\supset
\psi_i)\wedge (\neg p(\vec c_i)\supset\varphi_i))$. It follows that
${\cal I}_{n-1}\not\models (p(\vec c_n)\supset\psi_n)\wedge(\neg
p(\vec c_n)\supset\var_n)$. We can %similarly
finally construct an interpretation ${\cal I}_n$ in a similar way
that satisfies
    \begin{itemize}
      \item ${\cal I}_n\models \bigwedge_{1\le i\le n}(\psi_i\vee\varphi_i)$,
      \item ${\cal I}_n\not\models\phi$, and
      \item ${\cal I}_n\models \bigwedge_{1\le i\le n}((p(\vec c_i)\supset \psi_i)\wedge (\neg p(\vec c_i)\supset\varphi_i))$.
    \end{itemize}
As the latter combined with the assumption implies ${\cal
I}_n\models\phi$, we have a contradiction.

 \item ${\cal I}\not\models p(\vec c_1)$. Similar to the previous case.
  \end{itemize}

  (2) The direction from right to left is obvious again. For the other direction, suppose that there is an interpretation ${\cal I}$ such that ${\cal I}\models \bigwedge_{p_1(\vec c_i)\in M}\psi_i$ and ${\cal I}\not\models\phi$. We construct an interpretation ${\cal I}''$,  which is the same as ${\cal I}$ except that ${\cal I}'\models\bigwedge M$, ${\cal I}'\not\models p_1(c_i)$ if $p_1(c_i)\notin M$, and ${\cal I}'\models p_2(\vec c_j)$ if $p_2(c_j)\notin M$, for every $1\le i,j\le n$.
  It is clear that ${\cal I}'\models \bigwedge_{p_1(\vec c_i)\in M}\psi_i$ and ${\cal I}'\not\models\phi$. However, we have
  ${\cal I}'\models \bigwedge M\wedge \bigwedge_{1\le i\le n}((p_1(\vec c_i)\supset \psi_i)\wedge (\neg p_2(\vec c_i)\supset\varphi_i))$, which implies ${\cal I}'\models\phi$, a contradiction.
\end{proof}
%
%The proof of the following lemma adopts the notion of {\em forgetting} \cite{Fangzhen:forgetit}. The basic notations
%on {\em forgetting} are retained in Appendix B.

Please note here that, in the above lemma, it is crucial that  $\psi_i,\varphi_i$ and $\phi$ mention no true equality. Otherwise, one can check that, if $\approx$ is taken as true equality, then on the one hand we have
\[[(p(c_1)\supset c_1\approx c_2)\wedge (\neg p(c_1)\supset q)]\wedge [(p(c_2)\supset q)\wedge (\neg p(c_2)\supset\neg q)]\models q\]
and on the other we have $(c_1\approx c_2\vee q)\not\models q$. It is clear that this discrepancy will not arise if $\approx$ is treated as a congruence relation and there is no predicate replacement axiom for the predicate $p$.

\begin{lemma}\label{lem:DF:1}
  Let ${\cal K}=(O,P)$ be a dl-program and $I\subseteq\HB_P$. Then
  %we have that
  \begin{enumerate}[(i)]
    \item If $A$ is an atom in $\HB_P$ and $O$ is consistent, then
    %we have that
    $I\models_OA$ iff $\tau(O)\cup I\models\tau(A)$.
    \item If $A=\DL[\lambda;Q](\vec t)$ is a monotonic dl-atom, then
    %we have that
    $I\models_OA$ iff   $\tau(O)\cup I\models\tau(A)$.
   % \item If $A$ is an atom and $O\cup \textit{UNA}_{\cal C}$ is consistent then we have that $I\models_OA$ iff $O\cup I\models\tau(A)$ where $\textit{UNA}_{\cal C}=\{a\neq b\mid a,b\mbox{ are two distinct constants in }\cal C\}$.
  \end{enumerate}
\end{lemma}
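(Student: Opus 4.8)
Recall that $\tau(A)=A$ when $A\in\HB_P$, and $\tau(\DL[\lambda;Q](\vec t))=\bigl[\bigwedge_i\tau(S_i\ op_i\ p_i)\bigr]\supset Q(\vec t)$; throughout $\approx$ is the congruence, and (as is implicit in the definition of $\tau(O)$) every concept and role name occurring in $A$ also occurs in $O$, so that ${\cal A}_O$ supplies the relevant replacement axioms. For Part~(i), ``$\Rightarrow$'' is immediate since $I\models_OA$ means $A\in I\subseteq\tau(O)\cup I$. For ``$\Leftarrow$'' I would argue contrapositively: assuming $A=p(\vec c)\notin I$, build a model of $\tau(O)\cup I$ falsifying $A$. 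As $O$ is consistent, $\tau(O)$ is satisfiable by Theorem~\ref{thm:fitting}; and since $\tau(O)$ uses $\approx$ only as a congruence (no genuine equality), one may choose a model $\mathcal M\models\tau(O)$ interpreting distinct constants by distinct elements (e.g.\ pull back an arbitrary model of $\tau(O)$ along a surjective strong homomorphism whose domain adjoins a fresh element for each constant; such maps preserve first-order truth). Because $\cal P$ is disjoint from the ontology vocabulary, $\tau(O)$ neither constrains nor carries replacement axioms for $\cal P$-predicates, so each $q\in\cal P$ may be interpreted in $\mathcal M$ by exactly $\{\vec d^{\,\mathcal M}\mid q(\vec d)\in I\}$; injectivity of the constants then gives $\mathcal M\models I$ and $\mathcal M\not\models p(\vec c)$, hence $\tau(O)\cup I\not\models A$. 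Consistency of $O$ is essential here, since an inconsistent $O$ makes $\tau(O)\cup I$ inconsistent, so that $\tau(O)\cup I\models A$ regardless of whether $A\in I$.

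For Part~(ii), by the deduction theorem $\tau(O)\cup I\models\tau(A)$ iff $\Gamma\models Q(\vec t)$, where $\Gamma=\tau(O)\cup I\cup\{\bigwedge_i\tau(S_i\ op_i\ p_i)\}$. In $\Gamma$ each input predicate $p_i$ occurs only inside $I$ and inside the implications $\tau(S_i\ op_i\ p_i)$ --- never in $\tau(O)$, in $Q(\vec t)$, or in an ontology literal --- and, after the normalization of Section~2, with at most one $\oplus$ and at most one $\ominus$. The plan is to eliminate the $p_i$ one at a time via Lemma~\ref{lem:forget}: part~(1) when $p_i$ occurs simultaneously as $S\oplus p_i$ and $S'\ominus p_i$, with the $p_i$-atoms of $I$ in the r\^ole of $M$; the obvious specializations (padding the missing side with $\top$) when $p_i$ occurs with a single operator; and part~(2) to drop a pure-$\oplus$ and a pure-$\ominus$ predicate together. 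Since all formulas involved use $\approx$ only as a congruence, the ``no true equality'' proviso of Lemma~\ref{lem:forget} is met. The outcome is that $\Gamma\models Q(\vec t)$ iff $\tau(O)\cup\Lambda_I\models Q(\vec t)$, where $\Lambda_I$ consists of ontology literals together with, for each predicate occurring as both $S\oplus p$ and $S'\ominus p$ and each $\vec c$ with $p(\vec c)\notin I$, the binary disjunction $S(\vec c)\vee\neg S'(\vec c)$.

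On the other hand $I\models_OA$ iff $O(I;\lambda)\models Q(\vec t)$, i.e.\ (again by Theorem~\ref{thm:fitting}) iff $\tau(O)\cup\Theta_I\models Q(\vec t)$ with $\Theta_I=\bigcup_iA_i(I)$ the corresponding set of ontology literals. Since $\Theta_I\models\Lambda_I$, the direction ``$\tau(O)\cup I\models\tau(A)\Rightarrow I\models_OA$'' is immediate. For the converse I would argue by contradiction: from a model $\mathcal M$ of $\tau(O)\cup\Lambda_I$ with $\mathcal M\not\models Q(\vec t)$, pass to $J\supseteq I$ by putting $p(\vec c)\in J$ whenever $\mathcal M$ satisfies the ``added'' literal $S(\vec c)$ for an $\oplus$-occurrence of $p$, and otherwise keeping $\mathcal M$'s choice consistent with the $\ominus$-occurrence; the defining disjunctions of $\Lambda_I$ guarantee such a $J$ exists, and for it $\mathcal M\models\tau(O)\cup\Theta_J$, so $O(J;\lambda)\not\models Q(\vec t)$, i.e.\ $J\not\models_OA$. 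As $I\subseteq J\subseteq\HB_P$, this contradicts the monotonicity of $A$, so $\tau(O)\cup I\models\tau(A)$.

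\emph{Main obstacle.} The delicate part is Part~(ii): driving Lemma~\ref{lem:forget} through every pattern in which an input predicate can occur in $\lambda$ (reshaping each occurrence into the exact syntactic form the lemma needs, including the pure-$\ominus$ case where the predicate contributes nothing), and then applying monotonicity precisely --- through the passage from $I$ to $J$ --- to bridge the gap between the binary disjunctions $S(\vec c)\vee\neg S'(\vec c)$ produced by the rewriting and the single literals $\neg S'(\vec c)$ that $O(I;\lambda)$ dictates. One must also keep track that Theorem~\ref{thm:fitting} is invoked with replacement axioms for exactly the ontology predicates; the discrepancy example following Lemma~\ref{lem:forget} is precisely why the congruence, and not true equality, has to be used.
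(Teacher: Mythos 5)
Your proof is correct, and its skeleton --- reduce $\tau(O)\cup I\models\tau(A)$ via the deduction theorem, eliminate the input predicates with Lemma~\ref{lem:forget}, and pass between the congruence and true equality with Theorem~\ref{thm:fitting} --- is the same as the paper's. Part (i) is essentially the paper's contrapositive model construction (the paper modifies a model of $I$ so that it satisfies $\tau(O)$; you go the other way, and are somewhat more explicit about why constants identified by the model cause no harm). The genuine divergence is in the direction $I\models_OA\Rightarrow\tau(O)\cup I\models\tau(A)$ of part (ii) when a predicate $p$ occurs with both $\oplus$ and $\ominus$: the paper instantiates monotonicity over all $2^{|J|}$ supersets $I\cup\{p(\vec e_i)\mid i\in J'\}$, obtains a disjunction of conjunctions of ontology literals entailing $O\supset Q(\vec t)$, and collapses it to the conjunction of binary disjunctions $S_1(\vec e)\vee\neg S_2(\vec e)$ via the DNF/CNF identity of Lemma~\ref{lem:6}; you instead take a single countermodel $\mathcal M$ of $Q(\vec t)$ over $\tau(O)\cup\Lambda_I$, read off from it one interpretation $J\supseteq I$ with $\mathcal M\models\Theta_J$, and contradict monotonicity of $A$ directly. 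Your version invokes monotonicity only once, dispenses with Lemma~\ref{lem:6}, and treats the $p_1\neq p_2$ and $p_1=p_2$ cases uniformly (the paper's case (a) trick of padding $I$ with all $p_2$-atoms is the degenerate instance of your $J$-construction), at the price of a model-theoretic detour where the paper stays at the level of entailments. Both are sound; just be sure to actually state the ``pad with $\top$'' specializations of Lemma~\ref{lem:forget} for predicates occurring under a single operator, since the lemma as written always pairs a $\oplus$-occurrence with a $\ominus$-occurrence.
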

\begin{proof}
  (i) Since $A$ is an atom and $O$ mentions no predicates occurring in $I$, we have that
  $\tau(O)\cup I$ is consistent if and only if $O$ is consistent.
  It follows that   $I\models_OA$   iff $A\in I$   iff $I\models \tau(A) $ since $\tau(A)=A$.
  It is obvious that if $I\models\tau(A)$ then $\tau(O)\cup I\models\tau(A)$. It remains to show  that
  $I\models\tau(A)$ if $\tau(O)\cup I\models \tau(A)$. Suppose
  $I\not\models\tau(A)$, i.e., $\tau(A)\notin I$. Thus
there exists an interpretation $\cal I$  such that ${\cal I}\models I$
and ${\cal I}\not\models\tau(A)$. Recall that $\tau(O)$ has no
equality, and it has no predicates in common  with $I$. We can construct an
interpretation ${\cal I}^*$ which coincides with $\cal I$ except
that ${\cal I}^*\models\tau(O)$.  It follows ${\cal I}^*\models\tau(A)$ by ${\cal I}^*\models \tau(O)\cup I$, which contradicts ${\cal I}\not\models\tau(A)$ as $\cal I$ coincides with $\cal I^*$ for the predicate occurring in $\tau(A)$.
  %iff $O\cup I\models\tau(A)$. %(since $O\cup \textit{UNA}_{\cal C}$ is consistent).

  (ii) For clarity, and without loss of generality, let $\lambda=(S_1\oplus p_1,S_2\ominus p_2)$. We have that
  \begin{align}\label{eqn:lem:6:1}
    \nonumber
    & \tau(O)\cup I\models \tau(A)\ \ \textit{iff}\\
    \nonumber
    &\tau(O)\cup I\models \left(\bigwedge_{p_1(\vec e)\in \HB_P}(p_1(\vec e)\supset S_1(\vec e))\wedge \bigwedge_{p_2(\vec e)\in\HB_P}
            (\neg p_2(\vec e)\supset\neg S_2(\vec e))\right)\supset Q(\vec t)\ \  \textit{iff}\\
    %\nonumber
    & I\wedge \bigwedge_{p_1(\vec e)\in \HB_P}(p_1(\vec e)\supset S_1(\vec e))\wedge \bigwedge_{p_2(\vec e)\in\HB_P}
            (\neg p_2(\vec e)\supset\neg S_2(\vec e))\models \tau(O)\supset Q(\vec t).
    %& \textit{iff}\ \  \textit{forget}( I\wedge \bigwedge_{p_1(\vec e)\in \HB_P}(p_1(\vec e)\supset S_1(\vec e))\wedge \bigwedge_{p_2(\vec e)\in\HB_P}
    %(\neg p_2(\vec e)\supset\neg S_2(\vec e)),\HB_P)\models O\supset Q(\vec t)
    %\nonumber
    %&\mbox{\qquad   (by Propositions \ref{Lin:prop:3} and \ref{lin:prop:6}, and $(O\supset Q(\vec t))\downarrow h=O\supset Q(\vec t)$ for any $h\in\HB_P$).}
  \end{align}
  %by Propositions \ref{Lin:prop:3} and \ref{lin:prop:6}, Theorem \ref{lin:thm:4}, and $(O\supset Q(\vec t))\downarrow \alpha=O\supset Q(\vec t)$ for any $\alpha\in\HB_P$.

  Let us consider the following two cases:

  (a) $p_1\neq p_2$. We have that Equation (\ref{eqn:lem:6:1}) holds iff $\{S_1(\vec e)\mid p_1(\vec e)\in I\}\models \tau(O)\supset Q(\vec t)$
  by (2) of Lemma \ref{lem:forget}.
  It follows that \\
  $\{S_1(\vec e)\mid p_1(\vec e)\in I\}\models \tau(O)\supset Q(\vec t)$\\
  $\Rto \{S_1(\vec e)\mid p_1(\vec e)\in I\}\cup\{\neg S_2(\vec e)\mid p_2(\vec e)\notin I\}\models \tau(O)\supset Q(\vec t)$\\
  $\Rto \tau(O)\cup \{S_1(\vec e)\mid p_1(\vec e)\in I\}\cup\{\neg S_2(\vec e)\mid p_2(\vec e)\notin I\}\models Q(\vec t)$\\% (where $\approx$ is understood as a congruence)\\
  $\Rto O\cup \{S_1(\vec e)\mid p_1(\vec e)\in I\}\cup\{\neg S_2(\vec e)\mid p_2(\vec e)\notin I\}\models Q(\vec t)$ (now $\approx$ is taken as an equality, by Theorem \ref{thm:fitting})\\
  $\Rto I\models_OA$.

  On the other hand, let $I'=\{p_2(\vec e)\in\HB_P\}$. We have that \\
  $I\models_OA$ \\
  $\Rto I\cup I'\models_OA$ (since $A$ is monotonic)\\
  $\Rto O\cup\{S_1(\vec e)\mid p_1(\vec e)\in I\cup I'\}\cup \{\neg S_2(\vec e)\mid p_2(\vec e)\notin I\cup I'\}\models Q(\vec t)$\\
  $\Rto O\cup\{S_1(\vec e)\mid p_1(\vec e)\in I\cup I'\}\models Q(\vec t)$\\
  $\Rto O\cup\{S_1(\vec e)\mid p_1(\vec e)\in I\}\models Q(\vec t)$\\
  $\Rto \{S_1(\vec e)\mid p_1(\vec e)\in I\}\models O\supset Q(\vec t)$\\
  $\Rto \{S_1(\vec e)\mid p_1(\vec e)\in I\}\models \tau(O)\supset Q(\vec t)$ (now $\approx$ is taken as a congruence, by Theorem \ref{thm:fitting}).

  (b) $p_1=p_2=p$. By (1) of Lemma \ref{lem:forget},  we have that Equation (\ref{eqn:lem:6:1}) holds iff
  $$\{S_1(\vec e)\mid p(\vec e)\in I\}\cup \{S_1(\vec e)\vee \neg S_2(\vec e)\mid p(\vec e)\in \HB_P\setminus I\}\models \tau(O)\supset Q(\vec t).$$
  It follows that \\
  $\{S_1(\vec e)\mid p(\vec e)\in I\}\cup \{S_1(\vec e)\vee \neg S_2(\vec e)\mid p(\vec e)\in \HB_P\setminus I\}\models \tau(O)\supset Q(\vec t)$\\
  $\Rto \{S_1(\vec e)\mid p(\vec e)\in I\}\cup \{\neg S_2(\vec e)\mid p(\vec e)\in \HB_P\setminus I\}\models \tau(O)\supset Q(\vec t)$\\
  $\Rto \tau(O)\cup \{S_1(\vec e)\mid p(\vec e)\in I\}\cup \{\neg S_2(\vec e)\mid p(\vec e)\notin I\}\models Q(\vec t)$\\
  $\Rto O\cup \{S_1(\vec e)\mid p(\vec e)\in I\}\cup \{\neg S_2(\vec e)\mid p(\vec e)\notin I\}\models Q(\vec t)$
  (now $\approx$ is taken as an equality, by Theorem \ref{thm:fitting})\\
  $\Rto I\models_OA$.

  Conversely, suppose $I\models_OA$. Let $M_1=\{S_1(\vec e)\mid p(\vec e)\in \HB_P\setminus I\}=\{S_1(\vec e_i)\mid 1\leq i\leq k)\}$,
  $M_2=\{\neg S_2(\vec e)\mid p(\vec e)\in \HB_P\setminus I\}=\{\neg S_2(\vec e_i)\mid 1\leq i\leq k\}$ and $J=\{i\mid 1\leq i\leq k\}$.
  Since $A$ is monotonic, for any $J'\subseteq J$, we have that $I\cup \{p(\vec e_i)\mid i\in J'\}\models_OA$, i.e.,
  \[\{S_1(\vec e)\mid p(\vec e)\in I\}\cup \{S_1(\vec e_i)\mid i\in J'\}\cup \{\neg S_2(\vec e_i)\mid  i\in J\setminus J'\}\models O\supset Q(\vec t).\]
  It follows that
  \[\bigwedge_{p(\vec e)\in I}S_1(\vec e)\wedge \bigvee_{J'\subseteq J}\left(\bigwedge_{i\in J'}S_1(\vec e_i)\wedge \bigwedge_{i\in J\setminus J'}\neg S_2(\vec e_i)\right)\models O\supset Q(\vec t)\]
  which implies, by Lemma \ref{lem:6},
   \[\bigwedge_{p(\vec e)\in I}S_1(\vec e)\wedge \bigwedge_{i\in J}(S_1(\vec e_i)\vee \neg S_2(\vec e_i))\models O\supset Q(\vec t)\]
  i.e.,
   \[\bigwedge_{p(\vec e)\in I}S_1(\vec e)\wedge \bigwedge_{p(\vec e)\in \HB_P\setminus I}(S_1(\vec e)\vee \neg S_2(\vec e))\models O\supset Q(\vec t),\]
   and equivalently
     \[\bigwedge_{p(\vec e)\in I}S_1(\vec e)\wedge \bigwedge_{p(\vec e)\in \HB_P\setminus I}(S_1(\vec e)\vee \neg S_2(\vec e))\models \tau(O)\supset Q(\vec t),\]
   where $\approx$ is taken as a congruence relation.
  Consequently, $I\models_OA$ iff $\tau(O)\cup I\models\tau(A)$.
\end{proof}

We note that, in (i) of the above lemma, we can not replace ``$\tau(O)\cup I\models\tau(A)$" by
``$O\cup I\models\tau(A)$" since
%$O\cup I\models A$ does not imply $I\models_O A$
$O\cup I\models A$ does not imply $\tau(O)\cup I\models A$.
%\teecomment{replaced $O\cup I\not\models A$ with $I\models_O A$}
%\yisong{}{Thanks for the point, we should write it as ``$O\cup
%  I\models A$ does not imply $\tau(O)\cup I\models A$" -- Yisong. TE:
%  Ok, done now}
For instance, let $O=\{a\approx b\}$, $I=\{p(a)\}$ and $A=p(b)$ where
$p$ is a predicate not belonging to the ontology and $\approx$ is equality. Then we have that
$\{a\approx b\}\cup \{p(a)\}\models p(b)$ as $\approx$ is an equality, but
$\tau(O)\cup \{p(a)\}\not\models p(b)$ as $\tau(O)=\{a\approx b\}$
with $\approx$ being a congruence relation;  as $p$ does
  not occur in $O$, no replacement axiom of $p$ is in $\tau(O)$.
%
%Please note that we assume the congruence relation applies only to the predicates occurring in ontologies. This condition is necessary. Otherwise, we may have $O\cup I\models A$ but $I\not\models_O A$. For instance, let $O=\{a\approx b\}$, $I=\{p(a)\}$ and $A=p(b)$ where $p$ is a predicate not belonging to ontologies. We have that
%$\{a\approx b\}\cup \{p(a)\}\models p(b)$ but $\{p(a)\}\not\models_O p(b)$. Even though, this assumption is not need for (ii) of the above lemma.%In this case we have
%$O\cup \textit{UNA}_{\cal C}$ is inconsistent, as $\textit{UNA}_{\cal C}$ contains $a\neq b$. In what follows of the subsection we assume that every ontology respects to the unique name assumption for the constants of $\cal C$ if it is consistent, viz. $O\cup \textit{UNA}_{\cal C}$ is consistent if $O$ is consistent, unless otherwise explicitly stated.

\begin{lemma}\label{prop:DF:1}
  Let ${\cal K}=(O,P)$ be a dl-program and $I\subseteq\HB_P$ where $O$ is consistent
  and $\DL_P^?=\emptyset$. Then
  %we have that
  $\gamma_{{\cal K}^{s,I}}^i= E_i\cap\HB_P$ for any $i\ge 0$,
  where $E_i$ is defined as (\ref{eq:Ei:default}) for $\tau({\cal K})$ and $E=\Th(\tau(O)\cup  I)$.
\end{lemma}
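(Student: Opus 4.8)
The plan is to prove the identity by induction on $i$, exploiting the fact that, once $\DL_P^?=\emptyset$, the stage construction $E_i$ of a Reiter extension of $\tau({\cal K})$ is designed to shadow the iteration $\gamma^i_{{\cal K}^{s,I}}$ of the least model of the strong dl-transform. The one preliminary fact I would isolate first is a \emph{language-separation} property: since the predicate symbols in ${\cal P}$ are disjoint from the ontology predicates and $\tau(O)$ carries predicate-replacement axioms only for the latter, for every $J\subseteq\HB_P$ we have $\Th(\tau(O)\cup J)\cap\HB_P=J$. This is just the contrapositive reading of Lemma~\ref{lem:DF:1}(i) (for $A\in\HB_P$ and consistent $O$, $\tau(O)\cup J\models A$ iff $A\in J$). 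Using it, a routine sub-induction on $i$ (using~(\ref{eq:Ei:default})) shows that $\Th(E_i)=\Th(\tau(O)\cup(E_i\cap\HB_P))$ for every $i$, since each stage $E_{i+1}=\Th(E_i)\cup\{\text{conclusions}\}$ only adds $\HB_P$-atoms; in particular $\Th(E_i)\cap\HB_P=E_i\cap\HB_P$, so the premise test ``$\alpha\in E_i$'' of a default never draws on anything beyond $\tau(O)$ and the $\HB_P$-atoms collected so far.

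For the base case, $E_0=\tau(O)$ mentions only ontology predicates and $\approx$, so $E_0\cap\HB_P=\emptyset=\gamma^0_{{\cal K}^{s,I}}$. For the inductive step, assume $E_i\cap\HB_P=\gamma^i_{{\cal K}^{s,I}}$ and fix a rule $r=(h\lto\Pos,\Not\Neg)$ of $P$ with associated default $\delta_r=\frac{\bigwedge_{B\in\Pos}\tau(B)\,:\,\{\,\neg\tau(B)\mid B\in\Neg\,\}}{h}$. I would verify two equivalences. First, $\delta_r$'s premise is available at stage $i$, i.e.\ $\bigwedge_{B\in\Pos}\tau(B)\in\Th(E_i)$, iff $\tau(O)\cup\gamma^i_{{\cal K}^{s,I}}\models\tau(B)$ for all $B\in\Pos$, iff $\gamma^i_{{\cal K}^{s,I}}\models_O B$ for all $B\in\Pos$; the last step is Lemma~\ref{lem:DF:1}, which applies because $O$ is consistent and, as $\DL_P^?=\emptyset$, every dl-atom in $P$ is monotonic, so part (i) or (ii) covers each $B$ (invoked here with the interpretation $\gamma^i_{{\cal K}^{s,I}}\subseteq\HB_P$). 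Second, $\delta_r$'s justifications are not blocked, i.e.\ $\neg\neg\tau(B)\notin E$ for all $B\in\Neg$; since $E=\Th(\tau(O)\cup I)$ is deductively closed this collapses to $\tau(B)\notin E$, i.e.\ $\tau(O)\cup I\not\models\tau(B)$, which by Lemma~\ref{lem:DF:1} (now invoked with the interpretation $I$) is equivalent to $I\not\models_O B$ for all $B\in\Neg$. But ``$\gamma^i_{{\cal K}^{s,I}}\models_O B$ for all $B\in\Pos$ and $I\not\models_O B$ for all $B\in\Neg$'' is exactly the condition under which $r$ contributes the rule $h\lto\Pos$ to $sP^I_O$ and that rule fires on $\gamma^i_{{\cal K}^{s,I}}$; hence the set of conclusions of defaults applicable at stage $i$ is precisely $\gamma_{{\cal K}^{s,I}}(\gamma^i_{{\cal K}^{s,I}})=\gamma^{i+1}_{{\cal K}^{s,I}}$. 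Since $\gamma_{{\cal K}^{s,I}}$ is monotone, $\gamma^i_{{\cal K}^{s,I}}\subseteq\gamma^{i+1}_{{\cal K}^{s,I}}$, and therefore $E_{i+1}\cap\HB_P=(E_i\cap\HB_P)\cup\gamma^{i+1}_{{\cal K}^{s,I}}=\gamma^i_{{\cal K}^{s,I}}\cup\gamma^{i+1}_{{\cal K}^{s,I}}=\gamma^{i+1}_{{\cal K}^{s,I}}$, completing the induction.

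The part I expect to require the most care is not any single equivalence but keeping the deductive-closure bookkeeping honest: one must ensure that applying $\Th(\cdot)$ inside the $E_i$-recursion can never manufacture a new $\HB_P$-atom, nor a formula that spuriously blocks a justification of $\delta_r$, and this is precisely where the separation of the rule language from the ontology language — and, crucially, the deliberate omission of predicate-replacement axioms for ${\cal P}$-predicates in $\tau(O)$, the point emphasized in the remark following Lemma~\ref{lem:DF:1} — is doing the work. A smaller subtlety worth stating explicitly is that Lemma~\ref{lem:DF:1} is applied with two different ``interpretations'' within the same step, namely $\gamma^i_{{\cal K}^{s,I}}$ for the premise/positive-body direction and the fixed $I$ for the justification/negative-body direction; both uses are legitimate since both arguments are subsets of $\HB_P$.
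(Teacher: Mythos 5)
Your proof is correct and follows essentially the same route as the paper's: an induction on $i$ that uses Lemma~\ref{lem:DF:1} (part (i) for atoms, part (ii) for the dl-atoms, all of which are monotonic since $\DL_P^?=\emptyset$) to translate the firing condition of each default into the firing condition of the corresponding rule of $sP^I_O$. The only difference is presentational: you make explicit the language-separation bookkeeping ($\Th(E_i)\cap\HB_P=E_i\cap\HB_P$) that the paper compresses into the remark ``since $\tau(O)\subseteq E_n\subseteq\Th(\tau(O)\cup\HB_P)$''.
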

\begin{proof}
  We prove this by induction on $i$.

  Base: If $i=0$ then it is obvious since $\tau(O)$ is consistent (as $O$ is consistent) and
  $E_0=\tau(O)$.

  Step: Suppose it holds for $i=n$. Now for any $h\in\HB_P$, $h\in\gamma^{n+1}_{{\cal K}^{s,I}}$
  if and only if there exists a dl-rule $(h\lto \Pos,\Not\Neg)$ in $P$ such that
  \begin{itemize}
    \item $\gamma^n_{{\cal K}^{s,I}}\models_OA$ for any $A\in\Pos$, and
    \item $I\not\models_OB$ for any $B\in\Neg$.
  \end{itemize}
  We have that \\
  (i) $I\not\models_OB$ \\
  iff $\tau(O)\cup I \not\models\tau(B)$ (by Lemma \ref{lem:DF:1} and $\DL_P^?=\emptyset$)\\
  iff $E\not\models\tau(B)$.\\
  (ii) $\gamma^n_{{\cal K}^{s,I}}\models_OA$\\
  iff $E_n\cap\HB_P\models_OA$ (by inductive assumption) \\
  iff $\tau(O)\cup E_n\cap\HB_P\models\tau(A)$ (by Lemma \ref{lem:DF:1} and $\DL_P^?=\emptyset$)\\
  iff $E_n\models\tau(A)$ (since $\tau(O)\subseteq E_n\subseteq\Th(\tau(O)\cup\HB_P)$).

  Consequently we have $\gamma^i_{{\cal K}^{s,I}}=E_i\cap\HB_P$ for any $i\ge 0$.
\end{proof}

\begin{theorem}\label{thm:3:in}
  Let ${\cal K}=(O,P)$ be a dl-program such that $\DL_P^?=\emptyset$ and $I\subseteq \HB_P$ .
  If $O$ is consistent then $I$ is a strong answer set of $\cal K$ if and only if $E=\Th(\tau(O)\cup I)$
  is an extension of $\tau(\cal K)$.% where $\overline I=\{a\in\HB_P\mid a\notin I\}$ and $\neg M=\{\neg a\mid a\in M\}$.
\end{theorem}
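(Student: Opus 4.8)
The plan is to reduce the statement directly to Lemma~\ref{prop:DF:1}, which already synchronizes the stages $\gamma^i_{{\cal K}^{s,I}}$ of the fixpoint iteration for ${\cal K}^{s,I}$ with the stages $E_i$ of the candidate extension of $\tau({\cal K})$ relative to $E=\Th(\tau(O)\cup I)$. Beyond that lemma I would only need two routine bookkeeping facts. (a) Since the predicates of $O$ are disjoint from those of $\cal P$ and $\tau(O)$ carries predicate-replacement axioms only for the predicates ${\cal A}_O$ of the ontology (not for $\cal P$), a consistent $\tau(O)$ entails no atom of $\HB_P$; hence $\Th(\tau(O)\cup J)\cap\HB_P=J$ for every $J\subseteq\HB_P$. (b) Every default of $\tau(P)$ has an atom of $\HB_P$ as its conclusion and $E_0=\tau(O)$; together with $E_i\subseteq\Th(E_i)\subseteq E_{i+1}$ and compactness this gives that $\bigcup_{i\ge 0}E_i$ is deductively closed and equal to $\Th\big(\tau(O)\cup\bigcup_{i\ge 0}(E_i\cap\HB_P)\big)$.

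For the ``only if'' direction, assume $I$ is a strong answer set of $\cal K$, i.e.\ $I=\lfp(\gamma_{{\cal K}^{s,I}})=\bigcup_{i\ge 0}\gamma^i_{{\cal K}^{s,I}}$. By Lemma~\ref{prop:DF:1}, $\gamma^i_{{\cal K}^{s,I}}=E_i\cap\HB_P$ for all $i$, so $\bigcup_{i}(E_i\cap\HB_P)=I$, and by (b), $\bigcup_iE_i=\Th(\tau(O)\cup I)=E$. Thus $E$ equals $\bigcup_iE_i$, where the $E_i$ are built from $\tau({\cal K})$ using this very $E$ in the consistency test; by the iterative characterization (\ref{eq:Ei:default}), $E$ is an extension of $\tau({\cal K})$.

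For the ``if'' direction, assume $E=\Th(\tau(O)\cup I)$ is an extension of $\tau(\cal K)$, so $E=\bigcup_{i\ge 0}E_i$. Again by Lemma~\ref{prop:DF:1}, $\lfp(\gamma_{{\cal K}^{s,I}})=\bigcup_i\gamma^i_{{\cal K}^{s,I}}=\bigcup_i(E_i\cap\HB_P)=\big(\bigcup_iE_i\big)\cap\HB_P=E\cap\HB_P$, and by (a), $E\cap\HB_P=\Th(\tau(O)\cup I)\cap\HB_P=I$. Hence $I=\lfp(\gamma_{{\cal K}^{s,I}})$, i.e.\ $I$ is a strong answer set of $\cal K$.

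The substantive content has already been absorbed into Lemmas~\ref{lem:DF:1} and~\ref{prop:DF:1}; the only point needing care in the present step is that moving between $\HB_P$-atoms and their $\Th$-closures over $\tau(O)$ is harmless, which relies on the signature disjointness and, crucially, on $\tau(O)$ treating $\approx$ as a congruence without replacement axioms for $\cal P$-predicates --- precisely the subtlety flagged after Lemmas~\ref{lem:forget} and~\ref{lem:DF:1}. I would also make explicit that consistency of $O$ (hence of $\tau(O)$, hence of $E$) is what licenses using the iterative characterization of extensions here without degeneration into the trivial inconsistent extension.
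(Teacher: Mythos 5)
Your proposal is correct and follows essentially the same route as the paper's proof: both directions reduce to Lemma~\ref{prop:DF:1} and then use the same two bookkeeping observations (that $\Th(\tau(O)\cup J)\cap\HB_P=J$ thanks to the signature disjointness and the congruence treatment of $\approx$, and that $\bigcup_iE_i$ equals $\Th(\tau(O)\cup\bigcup_i(E_i\cap\HB_P))$ since default conclusions are $\HB_P$-atoms). You merely make explicit the closure facts the paper compresses into its ``$\equiv$'' steps.
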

\begin{proof}
  $(\Rto)$ It suffices to show $E=\bigcup_{i\ge 0}E_i$ where $E_i$ is defined as (\ref{eq:Ei:default}) for
  $\tau(\cal K)$ and $E$.  \\
  $E=\Th(\tau(O)\cup I)$\\
  $\Rto E\equiv \tau(O)\cup\gamma^\infty_{{\cal K}^{s,I}}$ (since $I=\gamma^\infty_{{\cal K}^{s,I}}$)\\\
  $\Rto E\equiv \tau(O)\cup \bigcup_{i\ge 0}E_i\cap\HB_P$ (by Lemma \ref{prop:DF:1})\\
  $\Rto E\equiv \bigcup_{i\ge 0}E_i\cap\HB_P\cup \tau(O)$\\
  $\Rto E\equiv \bigcup_{i\ge 0}E_i$ (since $\tau(O)\subseteq E_i\subseteq\Th(\tau(O)\cup\HB_P)$)\\
  $\Rto E=\bigcup_{i\ge 0}E_i$\\
  $\Rto$ $E$  is an extension of $\tau(\cal K)$.

  $(\Lto)$ $E$ is an extension of $\tau(\cal K)$\\
  $\Rto\ E=\bigcup_{i\ge 0}E_i$ where $E_i$ is defined as (\ref{eq:Ei:default}) for $\tau(\cal K)$ and $E$\\
  $\Rto\ \Th(\tau(O)\cup I)=\bigcup_{i\ge 0}E_i$\\
  $\Rto\ \Th(\tau(O)\cup I)\cap\HB_P=\left(\bigcup_{i\ge 0}E_i\right)\cap \HB_P$\\
  $\Rto\ I=\bigcup_{i\ge 0}(E_i\cap\HB_P)$\\
  $\Rto\ I=\gamma^\infty_{{\cal K}^{s,I}}$ (by Lemma \ref{prop:DF:1})\\
  $\Rto\ I=\lfp(\gamma_{{\cal K}^{s,I}})$\\
  $\Rto$ $I$ is a strong answer set of $\cal K$.
\end{proof}

Since dl-programs can be translated
into ones without nonmonotonic dl-atoms according to Theorem
\ref{thm:delete:ominus:s}, we immediately have the following:
\begin{corollary}\label{thm:4:in}
  Let ${\cal K}=(O,P)$ be a dl-program and $I\subseteq\HB_P$. If  $O$
  is consistent then
  %we have that
  $I$ is a strong answer set of $\cal K$ if and only if $\Th(\tau(O)\cup\pi(I))$ is an extension of $\tau(\pi({\cal K}))$.
\end{corollary}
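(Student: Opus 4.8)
The plan is to obtain the statement as a direct composition of the two main results already in hand, with no new technical work: Theorem~\ref{thm:delete:ominus:s}, which establishes a one-to-one correspondence between the strong answer sets of $\cal K$ and those of $\pi(\cal K)$ via $I\mapsto\pi(I)$ and $I^{*}\mapsto I^{*}\cap\HB_P$, and Theorem~\ref{thm:3:in}, which characterizes the strong answer sets of a dl-program with no nonmonotonic dl-atoms over a consistent ontology as the Herbrand parts of the extensions of its $\tau$-translation. Three bookkeeping facts make the composition go through: (a) $\pi$ does not touch the ontology, so $\pi(\cal K)=(O,\pi(P))$ and its ontology is still $O$, hence still consistent, and $\tau(\pi(\cal K))=(\tau(\pi(P)),\tau(O))$; (b) by construction $\pi(P)$ contains no nonmonotonic dl-atoms, i.e.\ $\DL_{\pi(P)}^{?}=\emptyset$ (this is exactly the observation already used in the proof of Proposition~\ref{prop:3.1.1}), so the hypotheses of Theorem~\ref{thm:3:in} are inherited by $\pi(\cal K)$; and (c) since $\pi_1(I)$ and $\pi_2(I)$ consist only of the fresh atoms $\pi_p(\vec c)$ and $\pi_A$, none of which lie in $\HB_P$, while $\HB_P\subseteq\HB_{\pi(P)}$, we have both $\pi(I)\subseteq\HB_{\pi(P)}$ and $\pi(I)\cap\HB_P=I$.

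For the forward direction I would assume $I$ is a strong answer set of $\cal K$, apply Theorem~\ref{thm:delete:ominus:s}(i) to get that $\pi(I)$ is a strong answer set of $\pi(\cal K)$, and then invoke Theorem~\ref{thm:3:in} \emph{for the dl-program $\pi(\cal K)$} (legitimate by (a) and (b)) with the interpretation $\pi(I)$ to conclude that $\Th(\tau(O)\cup\pi(I))$ is an extension of $\tau(\pi(\cal K))$, which is precisely the claimed form. For the converse I would assume $\Th(\tau(O)\cup\pi(I))$ is an extension of $\tau(\pi(\cal K))$; applying the ``only if'' part of Theorem~\ref{thm:3:in} to $\pi(\cal K)$ with the interpretation $\pi(I)\subseteq\HB_{\pi(P)}$ (using (a), (b), and (c) for well-formedness) yields that $\pi(I)$ is a strong answer set of $\pi(\cal K)$; Theorem~\ref{thm:delete:ominus:s}(ii) then gives that $\pi(I)\cap\HB_P$ is a strong answer set of $\cal K$, and by (c) this set is exactly $I$.

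I do not expect a real obstacle here, since all the substance lives in the two cited theorems; the only care needed is in the bookkeeping of fact (c) — that restricting $\pi(I)$ to $\HB_P$ recovers $I$ on the nose — and in verifying that the consistency and ``no nonmonotonic dl-atom'' hypotheses of Theorem~\ref{thm:3:in} are genuinely preserved under $\pi$. If anything merits an explicit line, it is noting that $\pi(I)=I\cup\pi_1(I)\cup\pi_2(I)$ is indeed an interpretation relative to $\pi(P)$, i.e.\ a subset of $\HB_{\pi(P)}$, which is immediate from the definitions of $\pi_1$, $\pi_2$, and $\HB_{\pi(P)}$.
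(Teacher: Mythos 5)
Your proposal is correct and takes essentially the same route as the paper, which proves the corollary by exactly the same two-step composition: Theorem~\ref{thm:delete:ominus:s} to pass between strong answer sets of $\cal K$ and of $\pi(\cal K)$, and Theorem~\ref{thm:3:in} applied to $\pi(\cal K)$ (legitimate since $\pi$ leaves $O$ untouched and $\DL_{\pi(P)}^{?}=\emptyset$). Your explicit bookkeeping of $\pi(I)\subseteq\HB_{\pi(P)}$ and $\pi(I)\cap\HB_P=I$ is left implicit in the paper but is exactly the right thing to check.
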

\begin{proof}
  $I$ is a strong answer set of $\cal K$\\
  iff $\pi(I)$ is a strong answer set of $\pi({\cal K})$ (by Theorem \ref{thm:delete:ominus:s})\\
  iff $\Th(\tau(O)\cup\pi(I))$ is an extension of $\tau(\pi(\cal K))$ (by Theorem \ref{thm:3:in}).
\end{proof}

%For instance, for the dl-program ${\cal K}_2$ of Example
%\ref{exam:dl:program:1}, $\pi({\cal K}_2)$ is given in
%Example \ref{exam:3}. Thus the default theory $\tau(\pi({\cal
%K}_2))$ equals $(D,\emptyset)$ where $D$ consists of
%\begin{align*}
%  \frac{:\neg h}{p(a)}, \hspace{1cm} \frac{:\neg q(a)}{q'(a)},\hspace{1cm}
%  \frac{:\neg ((p(a)\supset c(a))\wedge (q'(a)\supset \neg b(a))\supset c(a)\wedge\neg b(a))}{h}%,\\
%  %\frac{:\neg h}{\neg h}, \hspace{1cm} \frac{:\neg p(a)}{\neg p(a)}, \hspace{1cm} \frac{:\neg q(a)}{\neg q(a)},\hspace{1cm}  \frac{:\neg q'(a)}{\neg q'(a)}.
%\end{align*}
%It is tedious but not difficult to verify that $\tau(\pi({\cal K}_2))$ has only two extensions,
%$E=\Th(\{h, q'(a)\})$ and $E'=\Th(\{p(a),q'(a)\})$, which
%correspond to the strong answer sets $\{h,q'(a)\}$ and $\{q'(a),p(a)\}$ of $\pi({\cal K}_2)$ respectively, and
%further correspond to the strong answer sets $\emptyset$ and $\{p(a)\}$ of ${\cal K}_2$ respectively.
%

Although the translation $\tau$ given here is kind of ``standard", as it
draw ideas from
\cite{GelfondLifschitz91} and \cite{Motik:JACM:2010}, there are a number of
subtleties in dealing with dl-programs
which make it non-trivial, in addition to the problem of equality.

In translating dl-programs to MKNF knowledge bases,
\citeN{Motik:JACM:2010} did not
consider dl-atoms containing the constraint operator. In addition, there is an essential difference in that their
approach does not work here as illustrated by the next example.
\begin{example}\label{exam:5}
  Let ${\cal K}=(O,P)$ where $O=\{S(b)\}$, $b$  an individual in the description logic
  but not a constant occurring in $P$, and $P$ consist of
  \[p(a)\lto \DL[S\ominus p, S\odot p; S](a).\]
  It is trivial that $\HB_P=\{p(a)\}$ and there is no interpretation of $\cal K$ satisfying the dl-atom $\DL[S\ominus p, S\odot p;S](a)$,
  thus it is monotonic and then the unique strong answer set of $\cal K$ is $\emptyset$.  In terms of Motik and Rosati's translation, we would have the default theory $\Delta=(\{d\},O)$ where
  \[d=\frac{(\forall x .(p(x)\supset \neg S(x))\wedge \forall x.(\neg p(x)\supset \neg S(x)))\rto S(a):}{p(a)}.\]
  Since the sentence $\forall x .(p(x)\supset \neg S(x))\wedge \forall x.(\neg p(x)\supset \neg S(x))$ is classically equivalent to
  $\forall x. \neg S(x)$, the unique extension of $\Delta$ is $\Th(\{S(b),p(a)\})$; when restricted to $\HB_P$, it is $\{p(a)\}$ which
  corresponds to no answer set of $\cal K$ at all. It is not difficult to check that the
  default theory $\tau(\cal K)$ has a unique extension $\Th(\{S(b)\})$ which corresponds to the strong
  answer set $\emptyset$ of $\cal K$.
\end{example}

Another subtle point is that the default translation alone may not capture the semantics of a dl-program.
If a dl-program $\cal K$ mentions nonmonotonic dl-atoms then
it is possible that $\tau(\cal K)$ has some
extensions that do not correspond to any strong answer sets of $\cal K$.
\begin{example}\label{exam:6}
  Let ${\cal K}=(O,P)$ where $O=\emptyset$ and $P$ consists of
  \begin{align*}
     &p(a) \lto q(a),\\
     &q(a) \lto \DL[S_1\oplus p, S_2\ominus q; S_1\sqcup \neg S_2](a).
  \end{align*}
  It is not difficult to check that $A=\DL[S_1\oplus p, S_2\ominus q; S_1\sqcup \neg S_2](a)$
  is nonmonotonic and $\cal K$ has a unique strong answer set $\{p(a),q(a)\}$. But note that the
  default theory $\tau({\cal K})=(D,W)$ where $W={\cal A}_O$ and $D$ consists of
  \begin{align*}
    \frac{q(a):}{p(a)},\qquad
    \frac{(p(a)\supset S_1(a))\wedge (\neg q(a)\supset \neg S_2(a))\supset (S_1(a)\vee\neg S_2(a)):}{q(a)}
  \end{align*}
  has a unique extension  $\Th(W)$ which does not correspond to any strong
  answer set of $\cal K$. However, if we apply the translation $\pi$
  to $\cal K$ first, we will have the dl-program $\pi({\cal K})=(O,\pi(P))$,
  where $\pi(P)$ consists of
  \begin{align*}
    & p(a) \lto q(a), \qquad q(a) \lto \Not \pi_A, \qquad \pi_q(a)\lto \Not q(a),\\
    & \pi_A \lto \Not \DL[S_1\oplus p, S_2\odot \pi_q; S_1\sqcup\neg S_2](a).
  \end{align*}
  It is tedious but not difficult to check that the unique strong answer set of $\pi(\cal K)$ is $\{p(a),q(a)\}$. When
  we apply the translation $\tau$ to $\pi(\cal K)$, we have the default theory $\tau(\pi({\cal K}))=(D',W')$ where
  $W'={\cal A}_O$ and $D'$ consists of
  \begin{align*}
    %\frac{:\neg p(a)}{\neg p(a)},\qquad \frac{:\neg q(a)}{\neg q(a)},\qquad \frac{:\neg h}{\neg h}, \qquad \frac{:\neg \pi_q(a)}{\neg \pi_q(a)},
      \frac{q(a):}{p(a)}, \qquad \frac{:\neg \pi_A}{q(a)},\qquad \frac{:\neg q(a)}{\pi_q(a)},\\%\qquad
     \frac{:\neg [(p(a)\supset S_1(a))\wedge (\pi_q(a)\supset \neg S_2(a))\supset (S_1(a)\vee \neg S_2(a))]}{\pi_A}.
  \end{align*}
  The interested reader can verify that the unique extension of $\tau(\pi({\cal K}))$ is $\Th(\tau(O)\cup\{p(a),q(a)\})$,
  which corresponds to the unique strong answer set of $\cal K$.
\end{example}

We note that the translation $\tau$ does not preserve weak
answer sets of a normal dl-program, as shown by $\tau({\cal K}_2)$
in Example \ref{exam:4}, not even for canonical dl-programs, as
shown by $\tau({\cal K}_1)$ in Example \ref{exam:4}.

To preserve the weak answer sets of a dl-program, one may attempt to
``shift" $\tau(.)$ from premise to justification of a default in the
translation $\tau$; however, this does not work.  Consider the dl-program
${\cal K}=(\emptyset, P)$ where $P=\{p(a)\lto \DL[S\oplus p,S](a)\}$. Under the suggestion, we would have obtained the default
theory $\Delta=(D,W)$, where  $W=\tau(\emptyset)$ and $D$ consists of
\[\frac{:(p(s)\supset S(a))\supset S(a)}{p(a)}.\]
It is clear that $\Delta$ has a unique extension $\Th(\tau(\emptyset)\cup\{p(a)\})$, but we know that $\cal K$ has two weak answer sets,
$\emptyset$ and $\{p(a)\}$. This issue can be addressed by a translation which makes all dl-atoms occur negatively.

\begin{definition}[$\sigma({\cal K})$]
Let $r$ be a dl-rule of the form (\ref{dl:rule}). We define $\sigma(r)$ to be the rule
\[A\lto \Not\sigma(B_1),\ldots,\Not\sigma(B_m),\ldots,\Not B_{m+1},\ldots,\Not B_n\]
where $\sigma(B)=\sigma_B$ if $B$ is a dl-atom, and $B$ otherwise,
where $\sigma_B$ is a fresh propositional atom. For every  dl-program
${\cal K}=(O,P)$, we define  $\sigma({\cal K})=(O,\sigma(P))$ where
$\sigma(P)$ consists of the rules in $$\{\sigma(r)\mid r\in P\}\cup\{\sigma_B\lto\Not B\mid B\in \DL_P\}.$$
\end{definition}
\begin{example}\label{exam:8}
Let us consider the above dl-program ${\cal K}=(O,P)$ where $O=\emptyset$ and $P=\{p(a)\lto \DL[S\oplus p,S](a)\}$. We have that $\sigma({\cal K})=(O,\sigma(P))$ where $A= \DL[S\oplus p,S](a)$ and $\sigma(P)$ consists of the below two dl-rules:
\begin{align*}
   p(a)\lto \Not \sigma_A,\qquad \qquad \sigma_A\lto\Not \DL[S\oplus p,S](a).
\end{align*}
It is easy to see that $\sigma(\cal K)$ has two weak answer sets $\{\sigma_A\}$ and $\{p(a)\}$.
\end{example}

\begin{proposition}\label{prop:6}
  Let ${\cal K}=(O,P)$ be a dl-program and $I\subseteq\HB_P$.
  Then
  % We have that
  $I$ is a weak answer set of $\cal K$
  iff $I'$ is a weak answer set of $\sigma(\cal K)$ where $I'=I\cup\{\sigma_B\mid B\in\DL_P\ \textrm{and}\ I\not\models_OB\}$.
\end{proposition}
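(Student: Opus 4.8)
The plan is to reduce everything to a comparison of weak dl-transforms: I will show that the weak dl-transform of $\sigma({\cal K})$ relative to $O$ and $I'$ coincides, modulo the fresh atoms $\sigma_B$, with that of ${\cal K}$ relative to $O$ and $I$, and then read off the equivalence by comparing least fixpoints. Two enabling observations make this go through. First, since each $\sigma_B$ is a fresh propositional atom it is not among the input predicates of any dl-atom, so for every dl-atom $B=\DL[\lambda;Q](\vec t)$ occurring in $P$ we have $O(I';\lambda)=O(I;\lambda)$ and hence $I'\models_OB$ iff $I\models_OB$; likewise $I'\models_OB$ iff $I\models_OB$ for ordinary atoms $B\in\HB_P$, while by the definition of $I'$ we have $I'\models_O\sigma_B$ (i.e. $\sigma_B\in I'$) iff $I\not\models_OB$, for $B\in\DL_P$. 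Second, $\DL_{\sigma(P)}=\DL_P$ (each $B\in\DL_P$ still occurs, negatively, in the rule $\sigma_B\lto\Not B$), and in every translated rule $\sigma(r)$ the positive body contains no dl-atom, the original positive dl-atoms having been pushed under $\Not$ and renamed.

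Using these, I would compute $w[\sigma(P)]^{I'}_O$ rule by rule. For a rule $r$ of the form (\ref{dl:rule}), the weak dl-transform deletes $\sigma(r)$ exactly when $I'\models_O\sigma_{B_i}$ for some dl-atom $B_i$ among $B_1,\dots,B_m$, or $I'\models_OB_j$ for some $m+1\le j\le n$ (the ``positive dl-atom'' deletion clause is vacuous here by the second observation); by the first observation this is precisely the condition under which $wP^I_O$ deletes $r$, namely $I\not\models_OB_i$ for some dl-atom $B_i$ in the positive body, or $I\models_OB_j$ for some $j>m$. When $r$ survives, both $r$ and $\sigma(r)$ reduce to the same definite rule $A\lto\{B_i\mid 1\le i\le m,\ B_i\ \text{an ordinary atom}\}$. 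For an auxiliary rule $\sigma_B\lto\Not B$ with $B\in\DL_P$, the weak dl-transform deletes it iff $I'\models_OB$ iff $I\models_OB$, and when it survives it becomes the fact $\sigma_B\lto$. Hence
\[
w[\sigma(P)]^{I'}_O \;=\; wP^I_O \;\cup\; \{\,\sigma_B\lto\ \mid B\in\DL_P,\ I\not\models_OB\,\}.
\]
Since no atom $\sigma_B$ occurs in the body of any rule of $wP^I_O$, iterating $\gamma$ and taking least fixpoints yields $\lfp(\gamma_{\sigma({\cal K})^{w,I'}})=\lfp(\gamma_{{\cal K}^{w,I}})\cup\{\sigma_B\mid B\in\DL_P,\ I\not\models_OB\}$. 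As $I'=I\cup\{\sigma_B\mid B\in\DL_P,\ I\not\models_OB\}$, the two $\sigma_B$-parts are identical, and $I$ as well as $\lfp(\gamma_{{\cal K}^{w,I}})$ lie inside $\HB_P$ (disjoint from the fresh part), we get $I'=\lfp(\gamma_{\sigma({\cal K})^{w,I'}})$ iff $I=\lfp(\gamma_{{\cal K}^{w,I}})$, which is the claimed equivalence in both directions.

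The main obstacle is pinning down the reduct computation so that it matches exactly: one must verify that evaluating the dl-atoms of $\sigma(P)$ under $I'$ is insensitive to the fresh atoms (so freshness of the $\sigma_B$, in particular that they are not input predicates, is genuinely used), and one must check that the ``positive dl-atom'' clause of the weak dl-transform really is vacuous for the translated rules $\sigma(r)$. This second point is exactly what makes $\sigma$ work for the weak semantics but fail for the strong one, where the corresponding clause mentions only nonmonotonic dl-atoms and cannot be dissolved by pushing dl-atoms under $\Not$; it is also why this argument runs parallel to, but is not subsumed by, the proof of Theorem~\ref{thm:delete:ominus:w}.
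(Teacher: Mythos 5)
Your proof is correct and follows essentially the same route as the paper's: both establish that $w[\sigma(P)]^{I'}_O$ is exactly $wP^I_O$ together with the facts $\sigma_B\lto$ for those $B\in\DL_P$ with $I\not\models_OB$, and then read the equivalence off the least fixpoints. You merely spell out the reduct comparison and the role of the freshness of the $\sigma_B$ atoms in more detail than the paper's terse argument does.
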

\begin{proof}
  As $\sigma_B\in I'$ iff $I\not\models_OB$ for any $B\in\DL_P$, we have that $wP_O^I\subseteq w[\sigma(P)]_O^{I'}$ and
  for any rule $(h\lto\Pos)$ in $w[\sigma(P)]_O^{I'}\setminus wP_O^I$, $\Pos=\emptyset$ and $h$ has the form $\sigma_B$ for some $B\in\DL_P$. Thus we have $I'\setminus I=\lfp(\gamma_{[{\sigma(\cal K)}]^{w,I}})\cap \{\sigma_B\mid B\in\DL_P\}$ and $\lfp(\gamma_{{\cal K}^{w,I}})\cup (I\setminus I')=\lfp(\gamma_{[\sigma({\cal K})]^{w,I'}})$. This completes the proof.
\end{proof}

\begin{proposition}\label{prop:7}
  Let ${\cal K}=(O,P)$ be a dl-program such that $O$ is consistent,
  $\DL_P^?=\emptyset$ and all dl-atoms occur negatively in $P$, i.e.,
  for any rule $(h\lto \Pos,\Not\Neg)$ of $P$, there is no dl-atom in
  $\Pos$.
  %We have  that
  Then an interpretation $I\subseteq\HB_P$ is a weak answer set of $\cal K$ iff $E=\Th(I\cup \tau(O))$ is an extension of $\tau(\cal K)$.
\end{proposition}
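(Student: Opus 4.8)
The plan is to reduce Proposition~\ref{prop:7} to Theorem~\ref{thm:3:in} by first showing that, under the stated hypotheses, the weak and strong answer sets of $\cal K$ are one and the same. The point is that the only differences between the weak dl-transform $wP^I_O$ and the strong dl-transform $sP^I_O$ concern how dl-atoms in the \emph{positive} bodies of rules are treated: the weak transform tests membership in $\DL_P$ and deletes \emph{all} such dl-atoms, whereas the strong transform tests membership in $\DL_P^?$ and deletes only the nonmonotonic ones. Since by hypothesis no dl-atom occurs in $\Pos$ for any rule, both of these differences are vacuous.

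More precisely, I would fix an arbitrary $I\subseteq\HB_P$ and check, rule by rule, that $sP^I_O = wP^I_O$: for a rule $r=(h\lto\Pos,\Not\Neg)$ of $P$, the set $\Pos$ contains only ordinary atoms, so $\Pos\cap\DL_P^? = \Pos\cap\DL_P = \emptyset$; hence $r$ is retained in $sP^I_O$ precisely when $I\not\models_O B$ for every $B\in\Neg$, which is exactly the retention condition for $wP^I_O$, and in both transforms the retained rule is reduced to $h\lto\Pos$ (there are no body dl-atoms to delete, and each $\Not B_j$ is deleted). Consequently $\lfp(\gamma_{{\cal K}^{s,I}}) = \lfp(\gamma_{{\cal K}^{w,I}})$, so $I$ is a strong answer set of $\cal K$ iff it is a weak answer set of $\cal K$. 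Since $\DL_P^?=\emptyset$ and $O$ is consistent, Theorem~\ref{thm:3:in} applies verbatim and tells us that $I$ is a strong answer set of $\cal K$ iff $E = \Th(\tau(O)\cup I)$ is an extension of $\tau(\cal K)$; chaining the two equivalences proves the proposition.

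The argument is largely routine; the one place demanding care is the first step, namely matching the two slightly different definitions of the dl-transforms and confirming that their discrepancy collapses exactly because dl-atoms never appear positively. If one preferred a self-contained proof that does not route through strong answer sets, the alternative is to mimic Lemma~\ref{prop:DF:1} directly in the weak case: prove by induction on $i$ that $\gamma^i_{{\cal K}^{w,I}} = E_i\cap\HB_P$, where the $E_i$ are the stages of~(\ref{eq:Ei:default}) for $\tau(\cal K)$ and $E=\Th(\tau(O)\cup I)$, using Lemma~\ref{lem:DF:1}(i) for the atoms in rule premises and Lemma~\ref{lem:DF:1}(ii) for the monotonic dl-atoms occurring (negated) in the justifications $\neg\tau(B_{m+1}),\ldots,\neg\tau(B_n)$; in that route the delicate point is precisely the treatment of the justification dl-atoms via Lemma~\ref{lem:DF:1}(ii), but it is no more involved than the corresponding step in Theorem~\ref{thm:3:in}.
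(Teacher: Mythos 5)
Your primary argument is correct, and it takes a genuinely different route from the paper's. The paper proves the proposition directly in the weak setting: it notes that, by Lemma~\ref{lem:DF:1}, one can redo the induction of Lemma~\ref{prop:DF:1} to get $\gamma^i_{{\cal K}^{w,I}}=E_i\cap\HB_P$ for all $i$, and then repeats the fixpoint argument of Theorem~\ref{thm:3:in} --- which is precisely the ``alternative'' you sketch in your closing paragraph. Your main route instead collapses the problem onto the strong case: since no dl-atom occurs in any $\Pos$, every point at which the definitions of $sP^I_O$ and $wP^I_O$ differ (the retention test on positive-body dl-atoms in $\DL_P^?$ versus $\DL_P$, and the deletion of nonmonotonic versus all positive-body dl-atoms from retained rules) is vacuous, so $sP^I_O=wP^I_O$ for every $I$, weak and strong answer sets coincide, and Theorem~\ref{thm:3:in} applies verbatim under the remaining hypotheses ($O$ consistent, $\DL_P^?=\emptyset$). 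This is cleaner and arguably more informative: it isolates exactly what the ``all dl-atoms occur negatively'' hypothesis buys (the two reducts become identical) and avoids re-running the induction; the same reduction would also yield Proposition~\ref{prop:8} from Theorem~\ref{thm:6}. The paper's inductive route, by contrast, is self-contained in the weak semantics and is the template one would have to follow if the hypotheses were weakened so that the two transforms no longer coincide. Both proofs are sound.
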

\begin{proof}
  By Lemma \ref{lem:DF:1}, we can inductively prove $\gamma_{{\cal
      K}^{w,I}}^i=E_i\cap \HB_P$ for any $i\ge 0$ where $E_i$ is
  defined as (\ref{eq:Ei:default}) for $E$ and $\tau(\cal K)$.
   The remainder of the proof is similar to the one of Theorem \ref{thm:3:in}.
\end{proof}

Together with Theorem \ref{thm:delete:ominus:w}, the above two propositions imply a translation from dl-programs with consistent ontologies under the weak answer set semantics  to default theories.
\begin{corollary}
  Let ${\cal K}=(O,P)$ be a dl-program where $O$ is consistent. The below conditions are equivalent to each other:
  \begin{enumerate}[(i)]
    \item An interpretation $I\subseteq\HB_P$ is a weak answer set $\cal K$.
    \item $\Th(\tau(O)\cup \pi(I'))$ is an extension of $\tau(\pi(\sigma(\cal K)))$ where $I'=I\cup\{\sigma_B\mid B\in\DL_P\ \textit{and}\ I\not\models_OB\}$.
    %\item $\Th(O\cup \pi(I))$ is an extension of $\tau(\pi^*({\cal K}))$.
  \end{enumerate}
\end{corollary}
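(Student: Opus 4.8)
The plan is to obtain the corollary as a composition of three biconditionals that are already available, applied in the order $\sigma$, then $\pi$, then $\tau$. Write $I'=I\cup\{\sigma_B\mid B\in\DL_P\ \textit{and}\ I\not\models_OB\}$, as in the statement. First, Proposition~\ref{prop:6} gives directly that $I$ is a weak answer set of $\cal K$ iff $I'$ is a weak answer set of $\sigma(\cal K)$. Second, I would invoke Theorem~\ref{thm:delete:ominus:w} on the dl-program $\sigma(\cal K)$: part~(i) says a weak answer set $I'$ of $\sigma(\cal K)$ yields the weak answer set $\pi(I')$ of $\pi(\sigma(\cal K))$, and for the converse part~(ii) says that if $\pi(I')$ is a weak answer set of $\pi(\sigma(\cal K))$ then $\pi(I')\cap\HB_{\sigma(P)}$ is a weak answer set of $\sigma(\cal K)$; since the only atoms of $\HB_{\pi(\sigma(P))}$ outside $\HB_{\sigma(P)}$ are the fresh atoms $\pi_p(\vec c)$ added by $\pi$, we have $\pi(I')\cap\HB_{\sigma(P)}=I'$, so this is a genuine biconditional. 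Third, I would apply Proposition~\ref{prop:7} to $\pi(\sigma(\cal K))$ to conclude that $\pi(I')$ is a weak answer set of $\pi(\sigma(\cal K))$ iff $\Th(\pi(I')\cup\tau(O))$ is an extension of $\tau(\pi(\sigma(\cal K)))$. Chaining the three equivalences gives (i)~$\Leftrightarrow$~(ii).

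The one step requiring real care is verifying that $\pi(\sigma(\cal K))$ meets the hypotheses of Proposition~\ref{prop:7}. Consistency of $O$ is assumed. For ``$\DL_{\pi(\sigma(P))}^?=\emptyset$'' and ``every dl-atom of $\pi(\sigma(P))$ occurs negatively'' I would trace the two translations in turn. In $\sigma(P)$ every dl-atom of $P$ occurs only under $\Not$: the auxiliary rules $\sigma_B\lto\Not B$ introduce each $B\in\DL_P$ negatively, and in the rewritten rules $\sigma(r)$ every body dl-atom of $r$ has been replaced by a fresh propositional atom $\sigma_B$; in particular no rule of $\sigma(P)$ has a dl-atom in its positive body, so each rule of $\sigma(P)$ has the shape of~(\ref{dl:rule}) with $m=0$. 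Applying $\pi$ to such rules, clause~(\ref{trans:pi:1}) yields bodies in which a dl-atom can appear only inside $\pi(\Not B)=\Not\DL[\pi(\lambda);Q](\vec t)$ -- hence negatively, and with $\pi(\lambda)$ free of the constraint operator; clause~(\ref{trans:pi:2}) contributes nothing, as there are no positive nonmonotonic dl-atoms to trigger it; and clause~(\ref{trans:pi:3}) adds only the constraint-free, dl-atom-free rules $\pi_p(\vec x)\lto\Not p(\vec x)$. Hence every dl-atom of $\pi(\sigma(P))$ occurs negatively and mentions no constraint operator, so it is monotonic and $\DL_{\pi(\sigma(P))}^?=\emptyset$; Proposition~\ref{prop:7} then applies.

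I expect the main obstacle to be exactly this invariant-tracking across the composition $\pi\circ\sigma$: keeping the ``all dl-atoms occur negatively'' property intact and checking that the fresh symbols and Herbrand bases line up ($\HB_{\sigma(P)}=\HB_P\cup\{\sigma_B\mid B\in\DL_P\}$, and $\HB_{\pi(\sigma(P))}$ adding over this only atoms $\pi_p(\vec c)$, none of the form $\pi_B$), so that $\pi(I')$ is precisely the object named in~(ii) and the three cited results can be quoted verbatim. Everything else is a routine substitution of one equivalence into the next, using where needed only the easy identity $\pi(I')\cap\HB_{\sigma(P)}=I'$, an instance of the self-representation property established in the proof of Theorem~\ref{thm:delete:ominus:s}(ii) and its weak analogue in Appendix~\ref{app:B}.
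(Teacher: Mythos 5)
Your proposal is correct and follows exactly the route the paper intends: the corollary is stated as an immediate consequence of chaining Proposition~\ref{prop:6} ($\sigma$), Theorem~\ref{thm:delete:ominus:w} ($\pi$), and Proposition~\ref{prop:7} ($\tau$), which is precisely your three-step composition. Your explicit verification that $\pi(\sigma({\cal K}))$ satisfies the hypotheses of Proposition~\ref{prop:7} (all dl-atoms negative and constraint-free, hence $\DL^?_{\pi(\sigma(P))}=\emptyset$) and that $\pi(I')\cap\HB_{\sigma(P)}=I'$ merely makes explicit what the paper leaves implicit.
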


One can easily see that the translation $\sigma\cdot\pi\cdot\tau$, i.e., applying $\sigma$ firstly then $\pi$ and finally $\tau$, is polynomial. Thus, under the weak answer set semantics, we obtain a polynomial, faithful and modular translation from dl-programs with consistent ontologies to default theories.

\subsection{Handling inconsistent ontology knowledge bases}

A dl-program may have nontrivial strong answer sets even if its
ontology knowledge base is inconsistent. For instance, let ${\cal
  K}=(O,P)$, where $O=\{S(a),\neg S(a)\}$ and $P=\{p\lto \Not q, ~
q\lto\Not p\}$. Obviously $\cal K$ has two strong answer sets, $\{p\}$
and $\{q\}$, while the translation introduced in the last subsection,
$\tau(\cal K)$, yields a unique extension which is inconsistent. In
combining different knowledge bases, it is highly desirable that the
whole system is not trivialized due to the imperfection of a
subsystem. For dl-programs, this feature is naturally built into the
strong answer set semantics. When considering embedding, it is
important that this feature be preserved.

%\tecomment{Just a remark: if $O$ is inconsistent and this is known, then a dl-program
%reduces to a simple normal logic program, as all dl-atoms $A$ can be
%eliminated (e.g., replace by $F$ where $F$ is a fresh atom. // Yes. Here $O$ is general, i.e., consistent or inconsistent. -Yisong}

In Theorem \ref{thm:3:in} and Corollary \ref{thm:4:in}, we require $O$ to be
consistent  and we assume a limited congruence rewriting, i.e., the equality $\approx$ is understood as a congruence and the congruence is applied only to the predicates of underlying description logic.   To relax
these conditions, we propose the following translation $\tau'$ which is
slightly different from $\tau$.

\begin{definition}
% Formally, given
Given a
dl-program ${\cal K}=(O,P)$, $\tau'(\cal K)$ is the default theory
$(D,\emptyset)$, where $D$ is the same as the one in the definition of
$\tau$  except for dl-atoms. Suppose $A$ is a dl-atom of the form
(\ref{dl:atom}). We define    $\tau'(A)$ to be the first-order
sentence:
\[
\Big[O\wedge\Big(\bigwedge_{1\leq i\leq
    m}\tau(S_i\ op_i\ p_i)\Big)\Big]\supset Q(\vec t)
\]
where $O$ is identified with its corresponding first-order theory in
which we do not require equality to be a congruence.
\end{definition}
Evidently, given a dl-program $\cal K$, every extension of
$\tau'(\cal K)$ is consistent and has the form
$\Th(I)$ for some $I\subseteq\HB_P$. %It is clear
%that $\tau'$ has the same difficulty to that of $\tau$.
%in the size of given dl-programs. %Given a dl-program ${\cal K}=(O,P)$ and $I\subseteq\HB_P$, we
%call $I$ a {\em default answer set} of ${\cal K}$ by $\tau'$ if
%$\tau'(\cal K)$ has an extension $E$ such that $I=E\cap\HB_P$.
%%     \[\tau(S\ op\ p)=\left\{
%%     \begin{array}{ll}
%%        \forall \vec X.p(\vec X)\supset S(\vec X) & \mbox{if  $op=\oplus$}\\
%%        \forall \vec X.p(\vec X)\supset \neg S(\vec X) & \mbox{if $op=\odot$}\\
%%        \forall \vec X.\neg p(\vec X)\supset\neg S(\vec X) &\mbox{if $op=\ominus$}
%%     \end{array}
%%     \right.\]
%%     where $\vec X$ is a tuple of distinct variables matching the arity of $p$.
%     \begin{array}{ll}
%        \bigwedge_{\vec c\in \vec {\cal C}}[p(\vec c)\supset S(\vec c)] & \mbox{if  $op=\oplus$}\\
%        \bigwedge_{\vec c\in \vec {\cal C}}[p(\vec c)\supset \neg S(\vec c)] & \mbox{if $op=\odot$}\\
%        \bigwedge_{\vec c\in \vec {\cal C}}[\neg p(\vec c)\supset\neg S(\vec c)] &\mbox{if $op=\ominus$}
%     \end{array}
%     \right.\]
%     where $\vec c$ is a tuple of constants over $\cal C$ matching the arity of $p$. Please recall that
%     we require $\cal C$ to be finite, and $\tau(A)$ may mention variables.

\begin{example} Let ${\cal K}=(O,P)$ be a dl-program where $O=\{S(a), \neg S'(a),S\sqsubseteq S'\}$ and $P$
consists of $p(a)\lto \DL[S\oplus p;\neg S](a)$. It is evident that
$O$ is inconsistent and $\cal K$ has a unique strong answer set
$\{p(a)\}$. Now we have that the corresponding first-order theory of
$O$ is $S(a)\wedge\neg S'(a)\wedge (\forall x. S(x)\supset S'(x))$,
and $\tau'({\cal K})=(\{d\},\emptyset)$ where
\[d=\frac{(O\wedge (p(a)\supset S(a)))\supset \neg S(a):}{p(a)}.\]
It is not difficult to verify that $E=\Th(\{p(a)\})$ is the unique
extension of $\tau'(\cal K)$ which is consistent, while the unique
extension of $\tau(\cal K)$ is inconsistent.
\end{example}

Different from $\tau$ in another aspect, the translation $\tau'$ keeps equality as equality. For instance,
{\color{red} for} the dl-program $\cal K$ in Section~\ref{sec:equality}, we have that $\tau'({\cal K})=(D,\emptyset)$ where $D=\{
%\begin{align*}
  \frac{:\neg p(a)}{p(b)},  \frac{:\neg p(b)}{p(a)}\}$.
%\end{align*}
Evidently, the default theory $\tau'(\cal K)$ has two extensions $\Th(\{p(a)\})$ and $\Th(\{p(b)\})$.

The translation $\tau'$ is obviously modular. We will show below that it is faithful.
%The next proposition shows that the two translations $\tau$ and
%$\tau'$ coincide for consistent knowledge bases in the sense that
%there is a one-to-one mapping between the extensions of $\tau(\cal
%K)$ and the extensions of $\tau'(\cal K)$.
\begin{lemma}\label{lem:DF:9}
  Let ${\cal K}=(O,P)$ be a dl-program, $A$ an atom or a monotonic dl-atom and $I\subseteq\HB_P$.
Then
%we have that
$I\models_OA$ if and only if $I \vdash\tau'(A)$.
\end{lemma}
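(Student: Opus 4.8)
The plan is to split on the two alternatives in the statement. If $A$ is an atom (necessarily a member of $\HB_P$, since it occurs in a rule), then $\tau'(A)=A$, and because $I$ is a set of ground $\mathcal P$-atoms containing no equality atoms, $I\vdash A$ holds precisely when $A\in I$ --- which is exactly the defining condition for $I\models_O A$. I would point out that, unlike in Lemma~\ref{lem:DF:1}, no consistency assumption on $O$ is required here, because $\tau'$ never places a rewriting of $O$ into $W$; for the same reason the equality discrepancy flagged after Lemma~\ref{lem:forget} cannot arise.

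For the remaining case, write $A=\DL[\lambda;Q](\vec t)$ with $\lambda = S_1\,op_1\,p_1,\dots,S_m\,op_m\,p_m$. By the deduction theorem, $I\vdash\tau'(A)$ is equivalent to
\[
I\cup O\cup\{\tau(S_i\,op_i\,p_i)\mid 1\le i\le m\}\models Q(\vec t),
\]
while $I\models_O A$ unfolds to $O(I;\lambda)=O\cup\bigcup_{i=1}^m A_i(I)\models Q(\vec t)$. The structural fact I would lean on throughout is that the input predicates $p_i$ occur neither in $O$ nor in $Q(\vec t)$.

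The direction $I\vdash\tau'(A)\Rightarrow I\models_O A$ I expect to need no monotonicity. Given an arbitrary model $\mathcal N$ of $O\cup\bigcup_i A_i(I)$, I would expand it to an interpretation $\mathcal N'$ of the enlarged language by declaring $p_i(\vec c)$ true in $\mathcal N'$ exactly when $p_i(\vec c)\in I$. A short case check on $op_i\in\{\oplus,\odot,\ominus\}$ then gives $\mathcal N'\models\tau(S_i\,op_i\,p_i)$ --- e.g.\ for $op_i=\ominus$, if $\mathcal N'\models\neg p_i(\vec c)$ then $p_i(\vec c)\notin I$, hence $\neg S_i(\vec c)\in A_i(I)$ and so $\mathcal N\models\neg S_i(\vec c)$. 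Since $\mathcal N'$ also models $I$ and $O$, the hypothesis yields $\mathcal N'\models Q(\vec t)$, hence $\mathcal N\models Q(\vec t)$; as $\mathcal N$ was arbitrary, $O(I;\lambda)\models Q(\vec t)$.

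For the converse, assume $I\models_O A$ and take an arbitrary model $\mathcal M$ of $I\cup O\cup\{\tau(S_i\,op_i\,p_i)\mid 1\le i\le m\}$. Setting $J=\{p(\vec c)\in\HB_P\mid\mathcal M\models p(\vec c)\}$, we have $I\subseteq J\subseteq\HB_P$, and a second case analysis on $op_i$ shows $\mathcal M\models A_i(J)$ for every $i$: the implications in $\tau(S_i\oplus p_i)$ and $\tau(S_i\odot p_i)$ fire on exactly the $p_i$-atoms true in $\mathcal M$, and $\tau(S_i\ominus p_i)$ forces $\neg S_i(\vec c)$ for every $p_i(\vec c)\in\HB_P\setminus J$. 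Thus $\mathcal M$ restricts to a model of $O(J;\lambda)$. This last passage is the delicate point and the only place monotonicity is used: replacing $\mathcal M$ by the larger set $J$ weakens the $\ominus$-conjuncts, since $A_i(J)\subseteq A_i(I)$ rather than conversely, so a priori $\mathcal M$ satisfies only $O(J;\lambda)$, not $O(I;\lambda)$. Monotonicity of $A$ together with $I\models_O A$ and $I\subseteq J$ yields $J\models_O A$, i.e.\ $O(J;\lambda)\models Q(\vec t)$, whence $\mathcal M\models Q(\vec t)$ and therefore $I\vdash\tau'(A)$. (For dl-atoms built only from $\oplus$ and $\odot$ one has $A_i(I)\subseteq A_i(J)$ directly, so monotonicity is not needed --- consistent with such atoms being monotonic automatically.)
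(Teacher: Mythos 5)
Your overall route is genuinely different from the paper's: the paper derives this lemma from Lemma~\ref{lem:DF:1}(ii) (which goes through the congruence rewriting $\tau(O)$, Lemma~\ref{lem:forget}, and Fitting's Theorem~\ref{thm:fitting}), whereas you argue directly on models of $I\cup O\cup\{\tau(S_i\,op_i\,p_i)\}$ versus models of $O(I;\lambda)$. Your converse direction ($I\models_O A\Rightarrow I\vdash\tau'(A)$) is correct and arguably cleaner than the paper's: reading $J$ off an arbitrary model $\mathcal M$, observing $\mathcal M\models O(J;\lambda)$, and invoking monotonicity only to pass from $I\models_O A$ to $J\models_O A$ isolates exactly where monotonicity is needed, and it bypasses Lemma~\ref{lem:6} and Lemma~\ref{lem:forget} entirely.

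The forward direction, however, has a gap, and it is precisely the equality subtlety you claim ``cannot arise.'' In $\tau'(A)$ the ontology $O$ is kept with \emph{true} equality, so a model $\mathcal N$ of $O\cup\bigcup_i A_i(I)$ may interpret two distinct constants $a,b\in\mathcal C$ by the same domain element (e.g.\ when $a\approx b$ is entailed by $O$). If $p(a)\in I$ but $p(b)\notin I$ for an input predicate $p$, your prescription ``declare $p(\vec c)$ true in $\mathcal N'$ exactly when $p(\vec c)\in I$'' is not a well-defined interpretation: $p(a)$ and $p(b)$ must receive the same truth value in any structure with $a^{\mathcal N}=b^{\mathcal N}$. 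The repair is to set $p^{\mathcal N'}=\{\vec c^{\,\mathcal N}\mid p(\vec c)\in I\}$, the image of $I$'s $p$-tuples under $\cdot^{\mathcal N}$. Your case check survives this change: for $\ominus$, $\mathcal N'\not\models p(\vec c)$ still implies $p(\vec c)\notin I$ (only the converse implication is lost), hence $\neg S(\vec c)\in A_i(I)$ and $\mathcal N\models\neg S(\vec c)$; for $\oplus$ and $\odot$, $\mathcal N'\models p(\vec c)$ yields some $\vec d$ with $p(\vec d)\in I$ and $\vec d^{\,\mathcal N}=\vec c^{\,\mathcal N}$, so $S(\vec d)\in A_i(I)$ and $\mathcal N\models S(\vec c)$ by equality. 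With that one-line fix your argument goes through; without it, the step fails exactly on the class of ontologies for which the paper developed its equality machinery.
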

\begin{proof}
   The conclusion is evident if $A$ is an atom or $O$ is inconsistent. %Please note here that we do not require as in the previous section that $\approx$ is a congruence which is applied only to the predicates in description logic.
  Suppose $A$ is a dl-atom and $O$ is consistent.
  Let $A=\DL[\lambda;Q](\vec t)$. Thus $\tau(A)$ is of the form $\psi\supset Q(\vec t)$ which implies $\tau'(A)\equiv(O\wedge \psi)\supset Q(\vec t)$. We have that\\
  $I\models_OA$\\
  iff   $\tau(O)\cup I\vdash\tau(A)$ (by Lemma (ii) of \ref{lem:DF:1}, where $\approx$ is taken as a congruence relation)\\
  iff $I\vdash \tau(O)\supset\tau(A)$\\
  iff $I\vdash \tau(O)\supset (\psi\supset Q(\vec t))$\\
  iff $I\vdash (\tau(O)\wedge\psi)\supset Q(\vec t)$\\
  iff $I\vdash (O\wedge\psi)\supset Q(\vec t)$ (by Theorem \ref{thm:fitting}, where $\approx$ is taken as  equality)\\
  iff $I\vdash\tau'(A)$.
\end{proof}
\begin{lemma}\label{prop:DF:3}
  Let ${\cal K}=(O,P)$ be a dl-program such that $\DL_P^?=\emptyset$, $I\subseteq\HB_P$ and  $E=\Th(I)$.
  Then
   %we have that
  $\gamma_{{\cal K}^{s,I}}^i= E_i\cap\HB_P$ for any $i\ge 0$,
  where $E_k$ is defined as (\ref{eq:Ei:default}) for $\tau'({\cal K})$ and $E$.
\end{lemma}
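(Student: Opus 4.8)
The plan is to prove the identity by induction on $i$, following the pattern of the proof of Lemma~\ref{prop:DF:1} with two changes: I would invoke Lemma~\ref{lem:DF:9} wherever that proof used Lemma~\ref{lem:DF:1}, and I would replace the structural fact ``$\tau(O)\subseteq E_n\subseteq\Th(\tau(O)\cup\HB_P)$'' used there by the simpler fact available here, namely that $\tau'({\cal K})$ has \emph{empty} set of facts. The base case $i=0$ is immediate: $\gamma_{{\cal K}^{s,I}}^0=\emptyset$ and $E_0=W=\emptyset$. For the step, assuming $\gamma_{{\cal K}^{s,I}}^n=E_n\cap\HB_P$, I would take an arbitrary $h\in\HB_P$ and unfold ``$h\in\gamma_{{\cal K}^{s,I}}^{n+1}$'': since $\DL_P^?=\emptyset$, the strong dl-transform ${\cal K}^{s,I}$ merely deletes every dl-rule having a negative body literal satisfied by $I$ and strips ``$\Not$'' from the survivors, so this amounts to: there is a dl-rule $(h\lto\Pos,\Not\Neg)$ in $P$ with (a) $I\not\models_O B$ for every $B\in\Neg$ and (b) $\gamma_{{\cal K}^{s,I}}^n\models_O A$ for every $A\in\Pos$. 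Correspondingly, using the remark that ``$\alpha\in E_n$'' may be read as ``$E_n\vdash\alpha$'', $h$ enters $E_{n+1}$ through the default $\frac{\bigwedge_{1\le i\le m}\tau'(B_i)\,:\,\neg\tau'(B_{m+1}),\ldots,\neg\tau'(B_n)}{h}$ generated by that rule iff $E_n\vdash\tau'(B_i)$ for all $i\le m$ and $\neg\neg\tau'(B_j)\notin E$ for all $m<j\le n$.

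The matching of the two sides rests on Lemma~\ref{lem:DF:9} plus one auxiliary observation, which I expect to be the only delicate point: for every $n$, $\Th(E_n)=\Th(E_n\cap\HB_P)$. I would prove this by a subsidiary induction. Since $W=\emptyset$ and every conclusion of a default of $\tau'({\cal K})$ is the head of a dl-rule, hence an atom of $\HB_P$, one gets $E_{n+1}\subseteq\Th(E_n)\cup\HB_P$; and a set of $\HB_P$-atoms entails no $\HB_P$-atom outside it, because the predicates occurring in $\HB_P$ all belong to $\cal P$, which occurs neither in $O$ nor in any equality axiom, so no new atom can be derived through equality reasoning. Two consequences of this observation are what I would use: first, for any sentence $\varphi$, $E_n\vdash\varphi$ iff $(E_n\cap\HB_P)\vdash\varphi$; second, $\Th(E_n)\cap\HB_P=E_n\cap\HB_P$. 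Then condition (b) becomes, by the induction hypothesis and Lemma~\ref{lem:DF:9} (applicable since each $A\in\Pos$ is an atom or, as $\DL_P^?=\emptyset$, a monotonic dl-atom), ``$(E_n\cap\HB_P)\vdash\tau'(A)$ for all $A\in\Pos$'', i.e.\ ``$E_n\vdash\bigwedge_{i\le m}\tau'(B_i)$''; and condition (a) becomes, again by Lemma~\ref{lem:DF:9}, ``$\tau'(B)\notin\Th(I)=E$ for all $B\in\Neg$'', which, $E$ being deductively closed, is the same as ``$\neg\neg\tau'(B)\notin E$''. Hence ``there is a dl-rule with head $h$ satisfying (a) and (b)'' is equivalent to ``$h$ enters $E_{n+1}$ via the corresponding default''.

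Finally I would assemble the step: $E_{n+1}\cap\HB_P=(\Th(E_n)\cap\HB_P)\cup\{\text{conclusions added at stage }n{+}1\}=\gamma_{{\cal K}^{s,I}}^n\cup\{\text{those conclusions}\}$, while $\gamma_{{\cal K}^{s,I}}^{n+1}=\gamma_{{\cal K}^{s,I}}(\gamma_{{\cal K}^{s,I}}^n)$ is the set of heads $h$ of rules of $sP_O^I$ whose positive body is satisfied by $\gamma_{{\cal K}^{s,I}}^n$; by monotonicity of $\gamma_{{\cal K}^{s,I}}$ on the positive program ${\cal K}^{s,I}$ this latter set contains $\gamma_{{\cal K}^{s,I}}^n$, and by the equivalence just established its remaining members are exactly the conclusions added at stage $n{+}1$ on the default side, so $\gamma_{{\cal K}^{s,I}}^{n+1}=E_{n+1}\cap\HB_P$. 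The one case to check separately is $O$ inconsistent: then every dl-atom is monotonic and satisfied by every interpretation, while each $\tau'(A)$ is a tautology (its antecedent contains $O$), so all the ``$\models_O$''/``$\vdash\tau'$'' equivalences above still hold --- indeed Lemma~\ref{lem:DF:9} is stated so as to cover this case --- and the argument goes through unchanged. The reason this works while $\tau$ needed $O$ consistent is precisely that $\tau'$ places $O$ inside the defaults rather than into $W$, so $E_n$ never contains $O$ and the identity is maintained even when $O$ is contradictory.
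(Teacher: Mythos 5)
Your proof is correct and follows essentially the same route as the paper's: induction on $i$ with base case $E_0=\emptyset$, and an inductive step that converts the two body conditions via Lemma~\ref{lem:DF:9} (condition (a) to $\tau'(B)\notin E$, condition (b) to $E_n\vdash\tau'(A)$). The only difference is that you make explicit the auxiliary fact $\Th(E_n)=\Th(E_n\cap\HB_P)$ (a consequence of $W=\emptyset$ and all default conclusions being atoms of $\HB_P$), which the paper uses silently in the step ``$E_n\cap\HB_P\models\tau'(A)$ iff $E_n\models\tau'(A)$''.
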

\begin{proof}
  We prove this by induction on $k$.

  Base: It is obvious for $i=0$ since  $E_0=\emptyset$.

  Step: Suppose it holds for $i=n$. For any $h\in\HB_P$, $h\in\gamma^{n+1}_{{\cal K}^{s,I}}$
  if and only if there exists a dl-rule $(h\lto \Pos,\Not\Neg)$ such that
  \begin{itemize}
    \item $\gamma^n_{{\cal K}^{s,I}}\models_OA$ for any $A\in\Pos$, and
    \item $I\not\models_OB$ for any $B\in\Neg$.
  \end{itemize}
   We have that\\
   (i) $I\not\models_OB$ \\
  %iff $I\cup O\not\models\tau(B)$ (by Lemma \ref{lem:DF:1} and the monotonicity of $B$) \\
  iff $I\not\models\tau'(B)$ (by Lemma \ref{lem:DF:9})\\
  iff $E\not\models\tau'(B)$ .\\
  (ii) $\gamma_{{\cal K}^{s,I}}^n\models_OA$\\
  iff $E_n\cap\HB_P\models_OA$ (by the inductive assumption)\\
  %iff $E_n\cap\HB_P\cup O\models\tau(A)$ (by Lemma \ref{lem:DF:1} and the monotonicity of $A$)\\
  iff $E_n\cap\HB_P\models\tau'(A)$ (by Lemma \ref{lem:DF:9})\\
  iff $E_n\models\tau'(A)$.%

  It follows that $h\in\gamma^{n+1}_{{\cal K}^{s,I}}$ if and only if $h\in E_{n+1}$.
  Consequently  $\gamma^i_{{\cal K}^{s,I}}=E_i\cap\HB_P$ for any $i\ge 0$.
\end{proof}

In the next theorem and corollary, we present the main results of this section, which extend
Theorem
\ref{thm:3:in} and Corollary \ref{thm:4:in} respectively.
\begin{theorem}\label{thm:6}
  Let ${\cal K}=(O,P)$ be a dl-program such that $\DL_P^?=\emptyset$ and $I\subseteq \HB_P$.
  Then
  %we have that
  $I$ is a strong answer set of $\cal K$ if and only if $E=\Th(I)$
  is an extension of $\tau'(\cal K)$.% where $\overline I$ is defined as in Theorem \ref{thm:3:in}.
\end{theorem}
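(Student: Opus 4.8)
The plan is to mirror the proof of Theorem~\ref{thm:3:in}, replacing the role of $\tau$ and Lemma~\ref{prop:DF:1} by $\tau'$ and Lemma~\ref{prop:DF:3}, and exploiting that $\tau'({\cal K})$ has the empty set of facts. First I would record two elementary observations about deductive closures of sets of ground atoms over $\cal P$: (a) for any $I\subseteq\HB_P$ we have $\Th(I)\cap\HB_P=I$, since such a set of atoms entails no atomic formula outside itself; and (b) if $E_0\subseteq E_1\subseteq\cdots$ is the chain from (\ref{eq:Ei:default}) applied to $\tau'({\cal K})$ and a candidate $E$, then, because $E_0=\emptyset$ and every conclusion of a default of $\tau'({\cal K})$ is an atom of $\HB_P$, we get $\bigcup_{i\ge0}E_i=\Th\big(\bigcup_{i\ge0}(E_i\cap\HB_P)\big)$.

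For the ``only if'' direction I would assume $I$ is a strong answer set, i.e.\ $I=\lfp(\gamma_{{\cal K}^{s,I}})=\bigcup_{i\ge0}\gamma^i_{{\cal K}^{s,I}}$, fix $E=\Th(I)$, and build the chain $E_i$ of (\ref{eq:Ei:default}) for $\tau'({\cal K})$ and this $E$. Lemma~\ref{prop:DF:3} gives $E_i\cap\HB_P=\gamma^i_{{\cal K}^{s,I}}$ for all $i$, so $\bigcup_{i\ge0}(E_i\cap\HB_P)=I$, and by observation (b) $\bigcup_{i\ge0}E_i=\Th(I)=E$. By the iterative characterization of extensions, $E$ is an extension of $\tau'({\cal K})$.

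For the ``if'' direction I would assume $E=\Th(I)$ is an extension of $\tau'({\cal K})$, so $E=\bigcup_{i\ge0}E_i$ with $E_i$ from (\ref{eq:Ei:default}) for $\tau'({\cal K})$ and $E$. Intersecting with $\HB_P$ and using observation (a) together with Lemma~\ref{prop:DF:3} again yields $I=\Th(I)\cap\HB_P=\big(\bigcup_{i\ge0}E_i\big)\cap\HB_P=\bigcup_{i\ge0}(E_i\cap\HB_P)=\bigcup_{i\ge0}\gamma^i_{{\cal K}^{s,I}}=\lfp(\gamma_{{\cal K}^{s,I}})$, hence $I$ is a strong answer set of $\cal K$.

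The only substantive work sits in Lemma~\ref{prop:DF:3} (which itself rests on Lemma~\ref{lem:DF:9}, and ultimately on Theorem~\ref{thm:fitting} and Lemma~\ref{lem:DF:1}); once it is available, this theorem is pure bookkeeping. The point to be careful about is that $\tau'$ folds a copy of $O$ into every dl-atom translation and keeps $\approx$ as genuine equality, so $\tau'({\cal K})$ is never trivialized by an inconsistent $O$; the statement therefore drops the consistency hypothesis of Theorem~\ref{thm:3:in}, and the main pitfall to avoid is accidentally re-invoking that hypothesis or the limited congruence rewriting — neither is needed here, as Lemma~\ref{lem:DF:9} and Lemma~\ref{prop:DF:3} already cover the inconsistent case.
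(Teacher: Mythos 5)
Your proposal is correct and follows essentially the same route as the paper: both directions reduce to the identity $\gamma^i_{{\cal K}^{s,I}}=E_i\cap\HB_P$ from Lemma~\ref{prop:DF:3}, with the forward direction recovering $\bigcup_i E_i=\Th(I)$ from the fact that $E_0=\emptyset$ and all default conclusions lie in $\HB_P$ (the paper phrases this as $E_i\subseteq\Th(\HB_P)$), and the backward direction intersecting with $\HB_P$ exactly as you do. Your closing remark about why the consistency hypothesis of Theorem~\ref{thm:3:in} can be dropped also matches the paper's intent.
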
%
\begin{proof}
  $(\Rto)$ It is sufficient to show $E=\bigcup_{i\ge 0}E_i$
  where $E_i$ is defined as (\ref{eq:Ei:default}) for $\tau'(\cal K)$ and $E$. \\
\quad  $E=\Th(I)$\\
\quad   $\Rto E\equiv I$\\
\quad   $\Rto E \equiv \gamma^\infty_{{\cal K}^{s,I}}$ (since $I$ is a strong answer set of $\cal K$)\\
\quad   $\Rto E\equiv\bigcup_{i\ge 0}E_i\cap\HB_P$ (by Lemma \ref{prop:DF:3})\\
\quad   $\Rto E\equiv\bigcup_{i\ge 0}E_i$ (since $E_i\subseteq\Th(\HB_P)$)\\
\quad   $\Rto E=\bigcup_{i\ge 0}E_i$\\
\quad   $\Rto$ $E$ is an extension of $\tau'(\cal K)$.

\noindent $(\Lto)$ $E$ is an extension of $\tau'(\cal K)$\\
\quad   $\Rto\ E=\bigcup_{i\ge 0}E_i$ where $E_i$ is defined as (\ref{eq:Ei:default}) for $\tau'(\cal K)$ and $E$\\
\quad   $\Rto\ \Th( I)=\bigcup_{i\ge 0}E_i$\\
\quad   $\Rto\ \Th(I)\cap\HB_P=\left(\bigcup_{i\ge 0}E_i\right)\cap \HB_P$\\
\quad   $\Rto\ I=\bigcup_{i\ge 0}(E_i\cap\HB_P)$\\
\quad   $\Rto\ I=\gamma^\infty_{{\cal K}^{s,I}}$ (by Lemma~\ref{prop:DF:3})\\
\quad   $\Rto\ I=\lfp(\gamma_{{\cal K}^{s,I}})$\\
\quad   $\Rto$ $I$ is a strong answer set of $\cal K$.
\end{proof}

\begin{corollary}\label{thm:7}
  Let ${\cal K}=(O,P)$ be a dl-program and $I\subseteq\HB_P$. Then
  % we have that
  $I$ is a strong answer set of $\cal K$ if and only if $\Th(\pi(I))$ is an extension of $\tau'(\pi({\cal K}))$.
\end{corollary}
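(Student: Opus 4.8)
The plan is to derive Corollary~\ref{thm:7} by composing two results already in hand, in the same way Corollary~\ref{thm:4:in} was obtained from Theorem~\ref{thm:delete:ominus:s} and Theorem~\ref{thm:3:in}, but substituting the inconsistency-tolerant translation $\tau'$ together with its characterization Theorem~\ref{thm:6} for $\tau$ and Theorem~\ref{thm:3:in}. The payoff of using $\tau'$ is that Theorem~\ref{thm:6} imposes no consistency requirement on $O$, so the resulting equivalence will hold for every dl-program ${\cal K}$ and every $I\subseteq\HB_P$; thus Corollary~\ref{thm:7} properly extends Corollary~\ref{thm:4:in}.

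First I would check that $\pi({\cal K})$ and $\pi(I)$ meet the hypotheses of Theorem~\ref{thm:6}. Recall that $\pi({\cal K})$ contains no nonmonotonic dl-atoms, i.e.\ $\DL_{\pi(P)}^?=\emptyset$ (the observation behind Proposition~\ref{prop:3.1.1}), and that $\pi(I)=I\cup\pi_1(I)\cup\pi_2(I)\subseteq\HB_{\pi(P)}$ by construction. Hence Theorem~\ref{thm:6} applies to $\pi({\cal K})$ with the interpretation $\pi(I)$, giving: $\pi(I)$ is a strong answer set of $\pi({\cal K})$ if and only if $\Th(\pi(I))$ is an extension of $\tau'(\pi({\cal K}))$.

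Then I would chain this with Theorem~\ref{thm:delete:ominus:s}:
\begin{align*}
I\ \mbox{is a strong answer set of}\ {\cal K}
&\iff \pi(I)\ \mbox{is a strong answer set of}\ \pi({\cal K})\\
&\iff \Th(\pi(I))\ \mbox{is an extension of}\ \tau'(\pi({\cal K})).
\end{align*}
For the first equivalence, the forward direction is Theorem~\ref{thm:delete:ominus:s}(i); for the converse, if $\pi(I)$ is a strong answer set of $\pi({\cal K})$ then $\pi(I)\cap\HB_P$ is a strong answer set of ${\cal K}$ by Theorem~\ref{thm:delete:ominus:s}(ii), and $\pi(I)\cap\HB_P=I$ since the fresh atoms $\pi_p(\vec c)$ and $\pi_A$ lie outside $\HB_P$ (equivalently, by the identity $I^*=\pi(\HB_P\cap I^*)$ established inside the proof of Theorem~\ref{thm:delete:ominus:s}(ii)). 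The second equivalence is the instance of Theorem~\ref{thm:6} secured in the previous paragraph.

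I do not anticipate a genuine obstacle: the argument is essentially a two-line composition. The only care needed is bookkeeping --- confirming that the correspondence $I\leftrightarrow\pi(I)$ between strong answer sets is invoked in the right direction, and that $\pi({\cal K})$ really lands in the scope of Theorem~\ref{thm:6} (which needs $\DL^?=\emptyset$ but, crucially, \emph{not} consistency of $O$). A fully self-contained alternative would be to re-run the induction of Lemma~\ref{prop:DF:3} directly on $\tau'(\pi({\cal K}))$, but composing the two theorems is shorter and more transparent, and is the route I would take.
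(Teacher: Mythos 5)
Your proposal is correct and matches the paper's own proof, which is exactly the two-step chain: $I$ is a strong answer set of ${\cal K}$ iff $\pi(I)$ is a strong answer set of $\pi({\cal K})$ (Theorem~\ref{thm:delete:ominus:s}) iff $\Th(\pi(I))$ is an extension of $\tau'(\pi({\cal K}))$ (Theorem~\ref{thm:6}, applicable since $\DL_{\pi(P)}^?=\emptyset$ and no consistency of $O$ is required). Your extra bookkeeping --- verifying $\pi(I)\cap\HB_P=I$ for the converse of the first equivalence --- is a sound elaboration of what the paper leaves implicit.
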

\begin{proof}
  $I$ is a strong answer set of $\cal K$\\
  iff $\pi(I)$ is a strong answer set of $\pi(\cal K)$ (by Theorem \ref{thm:delete:ominus:s})\\
  iff $\Th(\pi(I))$ is an extension of  $\tau'(\pi(\cal K))$ (by Theorem \ref{thm:6}).
%  iff $I'$ is a strong answer set of $\pi_2(\cal K)$ such that $I=I'\cap\HB_P$ (Theorem \ref{thm:1})\\
%  iff $I^*$ is a strong answer set of $\pi(\pi_2(\cal K))$ such that $I'=I^*\cap\HB_{\pi_2(P)}$ (Theorem \ref{thm:delete:ominus})\\
%  iff $\tau'(\pi(\pi_2(\cal K)))$ has an extension $E$  such that $I^*=E\cap\HB_{\pi(\pi_2(P))}$ (Theorem \ref{thm:7})\\
%  iff $\tau'(\pi(\pi_2(\cal K)))$ has an extension $E$ such that $I=E\cap\HB_P\cap\HB_{\pi_2(P)}\cap\HB_{\pi(\pi_2(P))}$\\
%  iff $\tau'(\pi(\pi_2(\cal K)))$ has an extension $E$ such that $I=E\cap\HB_P$.
\end{proof}
%
%Comparing with the translation $\tau$, a distinguished feature of $\tau'$ is that it handles with general dl-programs without introducing a congruence rewriting.

Note that, for the dl-program $\cal K$ in Example \ref{exam:8}, we have
$\tau'({\cal K})=(D,\emptyset)$ where $D$ consists of
\[\frac{(p(a)\supset S(a))\supset S(a):}{p(a)}, \qquad \frac{:\neg p(a)}{\neg p(a)}.\]
It is easy to see that $\Th(\{\neg p(a)\})$ is the unique extension of $\tau'(\cal K)$. As $\cal K$ has two
weak answer sets $\emptyset$ and $\{p(a)\}$, the translation $\tau'$ alone does not preserve weak answer sets of dl-programs. However, one can further check that $\tau'(\sigma(\cal K))$ has exact two extensions
$\Th(\{p(a)\})$ and $\Th(\{\neg p(a),\sigma_A\})$.

We show below that, combining with the translation $\sigma$, the translation $\tau'$ actually preserves weak answer sets.

\begin{proposition}\label{prop:8}
  Let ${\cal K}=(O,P)$ be a dl-program such that $O$ is consistent,
  $\DL_P^?=\emptyset$ and all dl-atoms occurs negative in $P$, i.e.,
  there for any rule $(h\lto \Pos,\Not\Neg)$ of $P$, there is no
  dl-atom in $\Pos$.
  %We have that
  Then an interpretation $I\subseteq\HB_P$ is a weak answer set of $\cal K$ iff $E=\Th(I)$ is an extension of $\tau'(\cal K)$.
\end{proposition}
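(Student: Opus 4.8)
The plan is to imitate the proof of Theorem~\ref{thm:6}, replacing the strong dl-transform by the weak one and Lemma~\ref{prop:DF:3} by its weak counterpart; the situation here is to Theorem~\ref{thm:6} what Proposition~\ref{prop:7} is to Theorem~\ref{thm:3:in}. The technical core is the claim that, under the hypotheses ($O$ consistent, $\DL_P^?=\emptyset$, and no dl-atom occurring positively in a rule of $P$),
\[\gamma_{{\cal K}^{w,I}}^i \;=\; E_i\cap\HB_P\qquad\mbox{for all }i\ge 0,\]
where $E_i$ is defined by~(\ref{eq:Ei:default}) for the default theory $\tau'({\cal K})$ and $E=\Th(I)$.

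I would prove the claim by induction on $i$. The base case holds because $\gamma_{{\cal K}^{w,I}}^0=\emptyset=E_0$ (the $W$-part of $\tau'({\cal K})$ is empty). For the step, fix $h\in\HB_P$ and assume $\gamma_{{\cal K}^{w,I}}^n=E_n\cap\HB_P$. Since no dl-atom occurs positively in $P$, a rule $r=(h\lto\Pos,\Not\Neg)$ contributes $h\lto\Pos$ to $wP_O^I$ precisely when $I\not\models_O B'$ for every $B'\in\Neg$, and then $h$ joins $\gamma_{{\cal K}^{w,I}}^{n+1}$ iff $\gamma_{{\cal K}^{w,I}}^{n}\models_O B$ for every ($\HB_P$-)atom $B\in\Pos$. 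Because $\DL_P^?=\emptyset$, each $B'\in\Neg$ is an atom or a monotonic dl-atom, so Lemma~\ref{lem:DF:9} gives $I\not\models_O B'$ iff $\tau'(B')\notin\Th(I)=E$, i.e.\ iff $\neg(\neg\tau'(B'))\notin E$, which is exactly the justification test in the default of $\tau'({\cal K})$ associated with $r$. Similarly, by the inductive hypothesis and Lemma~\ref{lem:DF:9} (using, as in the proof of Lemma~\ref{prop:DF:3}, that $\emptyset\subseteq E_n\subseteq\Th(\HB_P)$), $\gamma_{{\cal K}^{w,I}}^{n}\models_O B$ iff $E_n\vdash\tau'(B)=B$ for each $B\in\Pos$, so the premise of that default is entailed by $E_n$ iff $\Pos\subseteq\gamma_{{\cal K}^{w,I}}^{n}$. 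Hence $h\in\gamma_{{\cal K}^{w,I}}^{n+1}$ iff $h\in E_{n+1}$, which closes the induction; passing to the limit, $\gamma_{{\cal K}^{w,I}}^\infty=\big(\bigcup_{i\ge 0}E_i\big)\cap\HB_P$.

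Given the claim, both directions are routine rewriting, identical to the end of the proof of Theorem~\ref{thm:6}. For $(\Rto)$: if $I=\lfp(\gamma_{{\cal K}^{w,I}})=\gamma_{{\cal K}^{w,I}}^\infty$, then $E=\Th(I)\equiv\bigcup_i E_i$, hence $E=\bigcup_i E_i$ (since each $E_i\subseteq\Th(\HB_P)$), so $E$ is an extension of $\tau'({\cal K})$. For $(\Lto)$: if $E=\Th(I)$ is an extension, then $\Th(I)=\bigcup_i E_i$; intersecting with $\HB_P$ and using the claim gives $I=\bigcup_i(E_i\cap\HB_P)=\bigcup_i\gamma_{{\cal K}^{w,I}}^i=\lfp(\gamma_{{\cal K}^{w,I}})$, so $I$ is a weak answer set of ${\cal K}$. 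The main obstacle is the inductive claim: one must reconcile the fact that the weak dl-transform tests negative body literals against the candidate $I$ whereas the default tests its justifications against the extension $E$ — which is exactly what Lemma~\ref{lem:DF:9} together with $E=\Th(I)$ delivers — and one must note that positive body atoms are tested against the stage $\gamma_{{\cal K}^{w,I}}^{n}$ (resp.\ $E_n$), not against $I$ (resp.\ $E$); this last point is precisely why the hypothesis that no dl-atom occurs positively is essential, since a positive dl-atom would be tested against $I$ by the weak transform but against the derived stage by the default.
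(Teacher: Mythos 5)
Your proof is correct and follows exactly the route the paper intends: the paper's own proof of Proposition~\ref{prop:8} is just the remark that it is ``similar to the one of Proposition~\ref{prop:7}'', which in turn reduces to inductively establishing $\gamma_{{\cal K}^{w,I}}^i=E_i\cap\HB_P$ via the appropriate satisfaction lemma (here Lemma~\ref{lem:DF:9} in place of Lemma~\ref{lem:DF:1}) and then concluding as in Theorem~\ref{thm:6}. Your write-up supplies precisely that induction, including the two points the paper leaves implicit --- that negative body literals are tested against $E=\Th(I)$ while positive (atomic) body literals are tested against the stage $E_n$, which is why the ``no positive dl-atoms'' hypothesis is needed --- so nothing further is required.
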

\begin{proof}
%  It is similar to that of Proposition \ref{prop:7}.
The proof is similar to the one of Proposition~\ref{prop:7}.
\end{proof}

Together with Theorem \ref{thm:delete:ominus:w}, the propositions \ref{prop:6} and \ref{prop:8} imply a translation from dl-programs with consistent ontologies under the weak answer set semantics  to default theories.
\begin{corollary}
  Let ${\cal K}=(O,P)$ be a dl-program where $O$ is consistent. The
  %below
  following conditions are equivalent: % to each other:
  \begin{enumerate}[(i)]
    \item The interpretation $I\subseteq\HB_P$ is a weak answer set $\cal K$.
    \item $\Th(\pi(I'))$ is an extension of $\tau'(\pi(\sigma(\cal K)))$ where $I'=I\cup\{\sigma_B\mid B\in\DL_P\ \textit{and}\ I\not\models_OB\}$.
    %\item $\Th(O\cup \pi(I))$ is an extension of $\tau(\pi^*({\cal K}))$.
  \end{enumerate}
\end{corollary}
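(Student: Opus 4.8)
The plan is to derive the corollary by composing three already-established facts --- Proposition~\ref{prop:6}, Theorem~\ref{thm:delete:ominus:w}, and Proposition~\ref{prop:8} --- applied to the dl-program obtained by first performing $\sigma$, then $\pi$, then $\tau'$. First I would use Proposition~\ref{prop:6} to pass from $\cal K$ to $\sigma(\cal K)$: $I\subseteq\HB_P$ is a weak answer set of $\cal K$ iff $I'=I\cup\{\sigma_B\mid B\in\DL_P\ \textit{and}\ I\not\models_OB\}$ is a weak answer set of $\sigma(\cal K)$. The point of this step is that $\sigma$ forces every occurrence of a dl-atom in $\sigma(P)$ to be negative: in the rewritten rules each dl-atom $B$ is replaced by the fresh atom $\sigma_B$, and the only auxiliary rules are $\sigma_B\lto\Not B$. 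Note also that $\sigma$ leaves $O$ untouched, so $O$ is still consistent for $\sigma(\cal K)$.

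Second, I would apply Theorem~\ref{thm:delete:ominus:w} to $\sigma(\cal K)$: $I'$ is a weak answer set of $\sigma(\cal K)$ iff $\pi(I')$ is a weak answer set of $\pi(\sigma(\cal K))$, where $\pi(\cdot)$ is the lifting of interpretations defined just before Lemma~\ref{lem:main:1}, taken relative to $\sigma(\cal K)$; since the correspondence is bijective, the converse direction is covered as well. Before invoking the last step I would check that $\pi(\sigma(\cal K))$ satisfies the three hypotheses of Proposition~\ref{prop:8}: (a) $O$ is consistent --- unchanged by both $\sigma$ and $\pi$; (b) $\DL_{\pi(\sigma(P))}^{?}=\emptyset$ --- by design $\pi$ removes the constraint operator from every nonmonotonic dl-atom and the dl-atoms it leaves behind are monotonic; and (c) every dl-atom of $\pi(\sigma(P))$ still occurs only negatively --- a body occurrence $\Not B$ with $B=\DL[\lambda;Q](\vec t)$ is rewritten by $\pi$ to $\Not\DL[\pi(\lambda);Q](\vec t)$, the rules of type~(\ref{trans:pi:2}) introduced by $\pi$ have a negative dl-atom in their body, and the rules $\pi_p(\vec x)\lto\Not p(\vec x)$ contain no dl-atom at all; moreover, since $\sigma(P)$ has no positive dl-atom occurrence, no rule of type~(\ref{trans:pi:2}) arising from a positive nonmonotonic dl-atom in a rule body is ever generated.

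Third, with (a)--(c) verified, Proposition~\ref{prop:8} gives: $\pi(I')$ is a weak answer set of $\pi(\sigma(\cal K))$ iff $\Th(\pi(I'))$ is an extension of $\tau'(\pi(\sigma(\cal K)))$. Chaining the three biconditionals produces exactly the claimed equivalence of (i) and (ii). The chaining itself is immediate; the only real care needed --- and the step I would flag as the main obstacle --- is the bookkeeping across the three successively larger Herbrand bases $\HB_P\subseteq\HB_{\sigma(P)}\subseteq\HB_{\pi(\sigma(P))}$: one must confirm that the set $I'$ delivered by Proposition~\ref{prop:6} is precisely the interpretation to which $\pi(\cdot)$ (relative to $\sigma(\cal K)$) is then applied, so that $\pi(I')$ in clause (ii) is unambiguous, and re-confirm hypotheses (a)--(c), which are what make Proposition~\ref{prop:8} applicable to $\pi(\sigma(\cal K))$ rather than merely to $\sigma(\cal K)$. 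Finally I would remark, as an aside outside the stated equivalence, that $\sigma$, the restriction of $\pi$ to $\sigma(\cal K)$, and $\tau'$ are each polynomial, so $\sigma\cdot\pi\cdot\tau'$ yields a polynomial, faithful, and modular translation of the weak answer set semantics over consistent ontologies into default logic.
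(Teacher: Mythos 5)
Your proposal is correct and follows exactly the route the paper intends: the corollary is obtained by chaining Proposition~\ref{prop:6} ($\cal K$ to $\sigma(\cal K)$), Theorem~\ref{thm:delete:ominus:w} ($\sigma(\cal K)$ to $\pi(\sigma(\cal K))$), and Proposition~\ref{prop:8} (weak answer sets of $\pi(\sigma(\cal K))$ to extensions of $\tau'(\pi(\sigma(\cal K)))$). Your explicit verification of the hypotheses of Proposition~\ref{prop:8} --- consistency of $O$, $\DL^{?}_{\pi(\sigma(P))}=\emptyset$, and that all dl-atoms occur negatively after $\sigma$ and remain so after $\pi$ --- is more detail than the paper supplies, and it is the right detail to supply.
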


 Since there are no dl-atoms that occur positively in $\sigma(\cal
 K)$,  the translation $\sigma\cdot\pi$, i.e., applying $\sigma$ first
 and then $\pi$,  is polynomial. Consequently the combination
 $\sigma\cdot\pi\cdot \pi'$ is polynomial as well.
 % So far,
 Therefore, we have a polynomial, faithful and modular translation from dl-programs under the weak answer set semantics to default theories.

\subsection{Under the well-supported semantics}

To avoid circular justifications in some weak and strong answer sets
of dl-programs, recently well-supported semantics for dl-programs was
proposed \cite{Yi-Dong:IJCAI:2011}. In what follows, we will
show that, under the weakly well-supported answer set semantics,
dl-programs can be translated into default theories by an extension of the
translation $\tau$ above.
%\tecomment{The natural question rises what is with strongly
%well-supported semantics? Better say ``by an extension of the
%translation $\tau$ above'' to mitigate this? // Good and done. -Yisong}
In particular,
the translation is polynomial, faithful and modular. Let us recall the
basic notions and notations of well-supported semantics below.

Let ${\cal K}=(O,P)$ be a dl-program, $E$ and $I$ two sets of atom in $\HB_P$ with $E\subseteq I$.
The notion that {\em $E$ up to $I$ satisfies an atom (or a dl-atom, or
  their negation by default) $l$ under $O$}, written $(E,I)\models_O
l$, is as follows:
\begin{itemize}
  \item $(E,I)\models_O p$ if $p\in E$; $(E,I)\models_O\Not p$ if $p\notin I$, where $p$ is an atom;
  \item $(E,I)\models_O A$ if for  every $F$ with $E\subseteq F\subseteq I$, $F\models_O A$; $(E,I)\models_O\Not A$
  if there is no $F$ with $E\subseteq F\subseteq I$ such that $F\models_OA$, where $A$ is a dl-atom.
\end{itemize}
The notion ``{\em up to satisfaction}'' is extended for a set of atoms
dl-atoms, and their negation by default in a standard manner\footnote{The notion of ``up to satisfaction" is very similar to that of ``conditional satisfaction" in logic programs with abstract constraints \cite{Son:JAIR2007}.}.
The operator ${\cal T}_{\cal K}: (2^{\HB_P}\times 2^{\HB_P})\rto 2^{\HB_P}$ is defined as:
\[
{\cal T}_{\cal K}(E,I)=\{a \mid (a\lto\Body)\in P\ \textrm{and}\
(E,I)\models_O\Body\}
\]
where $E\subseteq I$. It has been shown that if $I$ is a model of $\cal K$, then the operator is monotone
in the sense that for every, $E_1\subseteq E_2\subseteq I$, ${\cal T}_{\cal K}(E_1,I)\subseteq{\cal T}_{\cal K}(E_2,I)$.
As the operator is also continuous in this sense (thanks to
compactness of answering DL queries), for any model $I$ of $\cal K$ the monotone sequence $\langle {\cal T}^i_{\cal K}(\emptyset,I)\rangle_i^\infty$, where
${\cal T}^0_{\cal K}(\emptyset,I)=\emptyset$, ${\cal T}^{i+1}_{\cal
  K}(\emptyset,I)={\cal T}_{\cal K}({\cal T}^i_{\cal
  K}(\emptyset,I),I)$,  $i\geq 0$,  converges to a fixpoint
denoted ${\cal T}^\infty_{\cal K}(\emptyset, I)$.
%
%Therefore, for any model $I$ of $\cal K$, the monotone sequence $\langle {\cal T}^i_{\cal K}(\emptyset,I)\rangle_i^\infty$, where
%${\cal T}^0_{\cal K}(\emptyset,I)=\emptyset$, ${\cal T}^{i+1}_{\cal
%  K}(\emptyset,I)={\cal T}_{\cal K}({\cal T}^i_{\cal
%  K}(\emptyset,I),I)$, \te{}{$i\geq 0$,} converges to a fixpoint
%denoted ${\cal T}^\alpha_{\cal K}(\emptyset, I)$.
%\tecomment{Monotonicity alone would allow only to conclude existence of
%a least fixpoint that is obtained by ordinal iteration, and the use of
%$i$ for limit ordinals is not shown. However, thanks to compactness
%of DL inference resp. FOL, the operator is
%also continuous, and this would back up what is written. also, one
%could superscript $\infty$ instead of $\alpha$. So we could write:
%``As the operator is also continuous in this sense (thanks to
%compactness of answering DL queries), for any model $I$ of $\cal K$ the monotone sequence $\langle {\cal T}^i_{\cal K}(\emptyset,I)\rangle_i^\infty$, where
%${\cal T}^0_{\cal K}(\emptyset,I)=\emptyset$, ${\cal T}^{i+1}_{\cal
%  K}(\emptyset,I)={\cal T}_{\cal K}({\cal T}^i_{\cal
%  K}(\emptyset,I),I)$,  $i\geq 0$,  converges to a fixpoint
%denoted ${\cal T}^\infty_{\cal K}(\emptyset, I)$.// Thanks and done. -Yisong}

In the rest of this paper, for convenience we will use the term {\em level mapping justification} to refer to the existence of such a fixpoint, borrowing a concept from a similar characterization for normal logic programs \cite{Fages:JMLCS:1994} as well as for weight constraint programs \cite{DBLP:Liu:FI:2010}.

A model $I$ of $\cal K$ is a {\em weakly (\textrm{resp}. strongly) well-supported answer set} of $\cal K$ if
$I$ coincides with the  fixpoint ${\cal T}^\alpha_{{\cal K}^I}(\emptyset, I)$ (resp. ${\cal T}^\alpha_{\cal K}(\emptyset,I)$, where
${\cal K}^I=(O,P^I)$ and
\[
P^I=\{a\lto \Pos \mid (a\lto \Pos,\Not\Neg)\in P\ \textrm{and}\
I\not\models_OB\mbox{ for every $B\in\Neg$}\}.
\]

As the next proposition shows, the strongly well-supported answer set
semantics coincides with the strong answer set semantics for the
dl-programs that mention no nonmonotonic dl-atoms.

\begin{proposition}\label{prop:9}
Let ${\cal K}=(O,P)$ be a dl-program with $\DL_P^?=\emptyset$ and
$I\subseteq\HB_P$ a model of ${\cal K}$. Then
 % we have that
  $I$ is a strong answer set of $\cal K$ iff $I$ is a strongly well-supported
  answer set of $\cal K$.
\end{proposition}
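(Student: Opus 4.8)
The plan is to show that, under the hypotheses, the strong dl-transform ${\cal K}^{s,I}$ and the ``level mapping'' operator ${\cal T}_{\cal K}(\cdot,I)$ generate exactly the same iteration from $\emptyset$, so that $\lfp(\gamma_{{\cal K}^{s,I}})={\cal T}^\infty_{\cal K}(\emptyset,I)$; the claimed equivalence is then immediate from the two definitions, since $I$ is assumed to be a model of $\cal K$. Concretely, since $I$ is a model of $\cal K$ by hypothesis, ``$I$ is a strongly well-supported answer set'' reduces to ``$I={\cal T}^\infty_{\cal K}(\emptyset,I)$'', which we will match against ``$I$ is a strong answer set'', i.e. ``$I=\lfp(\gamma_{{\cal K}^{s,I}})$''.

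First I would unfold the two constructions. Because $\DL_P^?=\emptyset$, no nonmonotonic dl-atom is deleted in forming ${\cal K}^{s,I}=(O,sP^I_O)$, so $sP^I_O$ is obtained from $P$ simply by deleting every rule whose negative body is satisfied by $I$ and erasing the remaining negative literals; hence $sP^I_O=P^I$ and ${\cal K}^{s,I}={\cal K}^I$. Next I would record the effect of ``up to satisfaction'' on monotonic objects: for any $E\subseteq I$ and any (monotonic) dl-atom $A$ occurring in $P$, monotonicity gives $(E,I)\models_O A$ iff $E\models_O A$ (take $F=E$ for one direction, monotonicity for the other) and $(E,I)\models_O\Not A$ iff $I\not\models_O A$ (take $F=I$); for an atom $p$ one trivially has $(E,I)\models_O p$ iff $p\in E$ and $(E,I)\models_O\Not p$ iff $p\notin I$. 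Combining these, for every $E\subseteq I$ the set ${\cal T}_{\cal K}(E,I)$ consists of exactly the heads $a$ of rules $a\lto\Pos,\Not\Neg$ of $P$ with $I\not\models_O B'$ for all $B'\in\Neg$ and $E\models_O B$ for all $B\in\Pos$ --- which is precisely $\gamma_{{\cal K}^{s,I}}(E)$. Thus $\gamma_{{\cal K}^{s,I}}$ and ${\cal T}_{\cal K}(\cdot,I)$ agree on all subsets of $I$.

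The remaining step is to check that the iteration $\gamma^0_{{\cal K}^{s,I}}=\emptyset$, $\gamma^{n+1}_{{\cal K}^{s,I}}=\gamma_{{\cal K}^{s,I}}(\gamma^n_{{\cal K}^{s,I}})$ never leaves $I$, which is exactly where the hypothesis that $I$ is a model of $\cal K$ is used: by induction on $n$, if $\gamma^n_{{\cal K}^{s,I}}\subseteq I$ and $a\in\gamma^{n+1}_{{\cal K}^{s,I}}$ is produced by a rule $a\lto\Pos,\Not\Neg$, then, by monotonicity of the dl-atoms in $\Pos$ together with $\gamma^n_{{\cal K}^{s,I}}\subseteq I$, $I$ satisfies the whole body of that rule, so $I\models_O a$, i.e. $a\in I$. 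Consequently $\gamma^n_{{\cal K}^{s,I}}={\cal T}^n_{\cal K}(\emptyset,I)$ for every $n$ (and, as $\HB_P$ is finite, both fixpoints are attained at a finite stage), whence $\lfp(\gamma_{{\cal K}^{s,I}})={\cal T}^\infty_{\cal K}(\emptyset,I)$. Since $I$ is a model of $\cal K$, ``$I=\lfp(\gamma_{{\cal K}^{s,I}})$'' is therefore equivalent to ``$I={\cal T}^\infty_{\cal K}(\emptyset,I)$'', establishing the proposition.

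I expect the only genuinely delicate point to be the bookkeeping for ``up to satisfaction'' of negated dl-atoms, and the observation that the model hypothesis is precisely what keeps the $\gamma$-iteration inside $I$ (without it the two operators would still agree on subsets of $I$, but the iteration need not stay comparable); everything else is a direct unfolding of definitions, and no transfinite subtleties arise because $\HB_P$ is finite. The role of $\DL_P^?=\emptyset$ is likewise essential here: for a nonmonotonic dl-atom in a positive body $\gamma_{{\cal K}^{s,I}}$ uses the fixed test $I\models_O A$ whereas ${\cal T}_{\cal K}$ uses $(E,I)\models_O A$, and these differ in general, so the argument genuinely needs all dl-atoms of $P$ to be monotonic.
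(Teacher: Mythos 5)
Your proof is correct, and it takes a slightly different (and more self-contained) route than the paper's. The paper splits the equivalence into two halves: the direction from strongly well-supported answer sets to strong answer sets is discharged by citing two external results of Shen (2011) (every strongly well-supported answer set is weakly well-supported, and every weakly well-supported answer set is a strong answer set), while the converse is proved by an induction establishing only the inclusion $\gamma^n_{{\cal K}^{s,I}}\subseteq {\cal T}^n_{\cal K}(\emptyset,I)$, with the reverse containment ${\cal T}^\alpha_{\cal K}(\emptyset,I)\subseteq I$ again borrowed from Shen. You instead prove the exact stage-by-stage identity $\gamma^n_{{\cal K}^{s,I}}={\cal T}^n_{\cal K}(\emptyset,I)$, resting on two cleanly isolated facts: (a) for monotonic dl-atoms the ``up to satisfaction'' relation collapses, so that $(E,I)\models_O A$ iff $E\models_O A$ and $(E,I)\models_O\Not A$ iff $I\not\models_O A$, whence $\gamma_{{\cal K}^{s,I}}$ and ${\cal T}_{\cal K}(\cdot,I)$ agree on all subsets of $I$; and (b) the model hypothesis keeps the $\gamma$-iteration inside $I$, so the agreement actually applies at every stage. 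This yields both directions of the proposition at once without appealing to the weakly well-supported semantics as an intermediary, at the cost of having to verify (b) explicitly (which the paper's forward direction also implicitly needs, since $(\gamma^n_{{\cal K}^{s,I}},I)\models_O\cdot$ is only meaningful when $\gamma^n_{{\cal K}^{s,I}}\subseteq I$). Your closing remarks correctly identify where $\DL_P^?=\emptyset$ and the model hypothesis are each used; the observation that ${\cal K}^{s,I}={\cal K}^I$ is true but not actually needed for the strongly well-supported case, since there the operator ${\cal T}_{\cal K}$ is applied to the untransformed program.
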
%
\begin{proof}
  $(\Lto)$ This direction is obvious since, for any dl-program, each strongly well-supported answer set is a weakly well-supported answer sets (Corollary 3 of \cite{Yi-Dong:IJCAI:2011}) and each weakly well-supported answer set is a strong answer set (Theorem 6 in \cite{Yi-Dong:IJCAI:2011}).

  $(\Rto)$ It suffices to show $I\subseteq{\cal T}_{\cal K}^\alpha(\emptyset, I)$. Since $I=\gamma^\infty_{{\cal K}^{s,I}}$. We only need to show inductively, $\gamma^n_{{\cal K}^{s,I}}\subseteq {\cal T}^n_{{\cal K}}(\emptyset, I)$ for any $n\ge 0$.

  Base: it is evident for $n=0$.

  Step: Let us consider the case $n+1$. For any atom $h\in\gamma^{n+1}_{{\cal K}^{s,I}}$, there must exist a rule $(h\lto\Pos,\Not\Neg)$ in $P$ s.t.
  \begin{itemize}
    \item $\gamma^n_{{\cal K}^{s,I}}\models_OA$ for any $A\in\Pos$ since $\DL_P^?=\emptyset$, and
    \item $I\not\models_OB$ for any $B\in\Neg$
  \end{itemize}
  Note that all dl-atoms in $P$ are monotonic. It follows that $(\gamma^n_{{\cal K}^{s,I}},I)\models_OA$
  for any $A\in\Pos$ and thus $({\cal T}^n_{\cal K}(\emptyset, I),I)\models_OA$ by the inductive assumption. On the other hand, since $I\not\models_OB$ and $B$ is monotonic, we have that, $I'\not\models_OB$ for any $I'\subseteq I$. It implies $(\emptyset, I)\models_O\Not B$ and thus $({\cal T}^n_{\cal K}(\emptyset, I),I)\models_O\Not B$.
  Consequently $h\in {\cal T}^{n+1}_{\cal K}(\emptyset, I)$ and then $I\subseteq {\cal T}_{\cal K}^\alpha(\emptyset, I)$. It follows  that $I$ is a strongly well-supported answer set of ${\cal K}$.
\end{proof}

Before presenting a translation under weakly well-supported answer set semantics, let us reconsider the dl-program $\cal K$ in Example \ref{exam:6}.
Recall that the dl-program $\cal K$ has a strong answer set $\{p(a),q(a)\}$ and the unique extension of
$\tau(\cal K)$ is $\Th(\tau(\emptyset))$. Actually, $\emptyset$ is not a model of $\cal K$ at all. We
can check that $\cal K$ has neither a weakly well-supported answer
set, nor a strongly well-supported answer set.
Thus the translation $\tau$ works neither for weakly nor for strongly well-supported answer set semantics
of dl-programs.

Surprisingly, a small addition to our default logic encoding will result in a one-one correspondence between the weakly
well-supported answer sets of a dl-program and the corresponding default extensions, for arbitrary dl-programs. Below, we consider the dl-programs whose ontology component is consistent. Formally,
given a dl-program ${\cal K}=(O,P)$ where $O$ is consistent, we define $\tau^*({\cal K})=(D,W)$ where
$\tau^*$ is exactly the same as $\tau$ except that $D$ includes, for each $p(\vec c)\in \HB_P$, the default
\[\frac{:\neg p(\vec c)}{\neg p(\vec c)}.\]
It is evident that any extension $E$ of $\tau^*(\cal K)$ is equivalent
to $\tau(O)\cup I\cup \{ \neg \alpha \mid \alpha \in \HB_P\setminus I\}$
for some $I\subseteq\HB_P$.

\begin{example}
  Let us reconsider the dl-program $\cal K$ in Example \ref{exam:6}.
  The default theory $\tau^*({\cal K})=(D,\tau(\emptyset))$ where $D$ consists of the ones produced by $\tau$ and additionally the ones
  \[\frac{:\neg p(a)}{\neg p(a)},\qquad \frac{:\neg q(a)}{\neg q(a)}.\]
  It is not difficult to check that $\tau^*(\cal K)$ has no extension. This example
  also demonstrates that $\tau^*$ does not preserve the strong answer sets of dl-programs as
  $\cal K$ has a strong answer set $\{p(a),q(a)\}$.
\end{example}

In the following,  given a dl-program ${\cal K}=(O,P)$ and $I\subseteq\HB_P$, we denote $\overline I=\HB_P\setminus I$
and $\neg I=\{\neg \alpha \mid \alpha\in I\}$ for convenience.

\begin{lemma}\label{lem:forget:2}
  Let $M_1$ and $M_2$ be two sets of atoms such that $M_1\cap
  M_2=\emptyset$, $\psi_i, \var_i~(1\le i\le n)$ and $\phi$ are
  formulas not mentioning the predicate $p_1,p_2$ and the predicates
  occurring in $M_1\cup M_2$.
  %We have that
  Then
  \[\bigwedge M_1\wedge\bigwedge\neg M_2\wedge\bigwedge_{1\le i\le n}((p_1(\vec c_i)\supset\psi_i)\wedge (\neg p_2(\vec c_i)\supset\var_i))\models \phi\ \textit{iff}\
  \bigwedge_{p_1(\vec c_i)\in M_1}\psi_i\wedge\bigwedge_{p_2(\vec c_j)\in M_2}\var_j\models\phi.\]
\end{lemma}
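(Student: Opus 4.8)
The plan is to follow the template of the proof of Lemma~\ref{lem:forget}, of which this statement is the natural two--sided generalisation: besides the positive block $\bigwedge M_1$ there is now a negative block $\bigwedge\neg M_2$, and each $\oplus$--style hypothesis $p_1(\vec c_i)\supset\psi_i$ is paired with an $\ominus$--style hypothesis $\neg p_2(\vec c_i)\supset\var_i$.

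For the ``if'' direction (right-hand side entails $\phi$ $\Rightarrow$ left-hand side entails $\phi$) I would show the entailment from the left-hand side to the right-hand side outright, so that the conclusion follows by transitivity of $\models$. Indeed, whenever $p_1(\vec c_i)\in M_1$ the conjunct $p_1(\vec c_i)$ of $\bigwedge M_1$ together with $p_1(\vec c_i)\supset\psi_i$ yields $\psi_i$; symmetrically, whenever $p_2(\vec c_j)\in M_2$ the conjunct $\neg p_2(\vec c_j)$ of $\bigwedge\neg M_2$ together with $\neg p_2(\vec c_j)\supset\var_j$ yields $\var_j$; hence every conjunct of $\bigwedge_{p_1(\vec c_i)\in M_1}\psi_i\wedge\bigwedge_{p_2(\vec c_j)\in M_2}\var_j$ is a logical consequence of the left-hand side.

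For the ``only if'' direction I would argue by contraposition, via the interpretation--surgery device already used for Lemma~\ref{lem:forget}. Given $\mathcal I$ with $\mathcal I\models\bigwedge_{p_1(\vec c_i)\in M_1}\psi_i\wedge\bigwedge_{p_2(\vec c_j)\in M_2}\var_j$ and $\mathcal I\not\models\phi$, I build $\mathcal I'$ that agrees with $\mathcal I$ on every predicate other than $p_1,p_2$ and the predicates occurring in $M_1\cup M_2$ --- so, by the hypothesis on the vocabulary of $\psi_i,\var_i,\phi$, the truth of all $\psi_i$, $\var_i$ and of $\phi$ is unaffected --- and on the rest is defined by: all atoms of $M_1$ true, all atoms of $M_2$ false (consistent as $M_1\cap M_2=\emptyset$), and every still undetermined ground instance of $p_1$ or $p_2$ set so as to falsify the antecedent of the implication containing it (a free $p_1(\vec c_i)$ false, a free $p_2(\vec c_i)$ true). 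One then checks $\mathcal I'\models\bigwedge M_1\wedge\bigwedge\neg M_2\wedge\bigwedge_{i}((p_1(\vec c_i)\supset\psi_i)\wedge(\neg p_2(\vec c_i)\supset\var_i))$: the two blocks hold by construction; for an implication $p_1(\vec c_i)\supset\psi_i$ with $p_1(\vec c_i)\in M_1$, $\psi_i$ is a conjunct of the assumed right-hand side, hence true in $\mathcal I$ and in $\mathcal I'$, while for a free instance the antecedent is false; dually for $\neg p_2(\vec c_i)\supset\var_i$. Since $\mathcal I'\not\models\phi$, the left-hand side does not entail $\phi$.

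The only delicate point, which I expect to be the real obstacle, is the sub-case $p_1=p_2$: then a single atom $p(\vec c_i)$ is the antecedent of both implications and cannot be chosen independently to falsify both. This does not occur in the way the lemma is used, because there $M_1$ and $M_2$ partition the relevant ground atoms (typically $M_1=I$ and $M_2=\HB_P\setminus I$), so every $p(\vec c_i)$ already lies in exactly one of them; its value is fixed by the construction and the matching conjunct ($\psi_i$ if $p(\vec c_i)\in M_1$, else $\var_i$) is a conjunct of the right-hand side and is satisfied, leaving no free coincident instance. I would therefore either make this partition/coverage condition explicit, or split into the case $p_1\neq p_2$ (where free instances of $p_1(\vec c_i)$ and $p_2(\vec c_i)$ are distinct atoms, so the surgery goes through unchanged) and the case $p_1=p_2$ under that condition. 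As with Lemma~\ref{lem:forget}, it is essential that $\psi_i,\var_i,\phi$ contain no true equality, so that absent predicate-replacement axioms cannot invalidate the surgery.
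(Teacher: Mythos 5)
Your proof is correct and follows essentially the same route as the paper's: the right-to-left direction by observing that the left-hand side outright entails the right-hand side, and the left-to-right direction by contraposition with the same interpretation surgery (force the atoms of $M_1$ true and those of $M_2$ false, set the remaining $p_1$-instances false and the remaining $p_2$-instances true, and use the vocabulary restriction to keep $\psi_i,\var_i,\phi$ unchanged). Your caveat about the sub-case $p_1=p_2$ with an instance lying in neither $M_1$ nor $M_2$ is well taken: the paper's proof silently assumes the two antecedents can be falsified independently, and the lemma as literally stated fails in that degenerate case (e.g.\ $n=1$, $M_1=M_2=\emptyset$, $\psi_1=\var_1=\phi=q$, where the left-hand side entails $q$ but the right-hand side is empty), although in the lemma's only application one has $M_1\cup M_2=\HB_P$, so the coverage condition you propose to make explicit is automatically satisfied.
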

\begin{proof}
  The direction from right to left is obvious. Let us consider the other direction.
  Suppose there is an interpretation ${\cal I}$ such that ${\cal I}\models \bigwedge_{p_1(\vec c_i)\in M_1}\psi_i\wedge\bigwedge_{p_2(\vec c_j)\in M_2}\var_j$ but ${\cal I}\not\models\phi$, by which we have
  ${\cal I}\not\models \bigwedge M_1\wedge\bigwedge\neg M_2\wedge\bigwedge_{1\le i\le n}((p_1(\vec c_i)\supset\psi_i)\wedge (\neg p_2(\vec c_i)\supset\var_i))$. It follows that
  ${\cal I}\not\models \bigwedge M_1\wedge\bigwedge\neg M_2\wedge\bigwedge_{p_1(\vec c_i)\notin M_1}(p_1(\vec c_i)\supset\psi_i)\wedge \bigwedge_{p_2(\vec c_j)\notin M_2}(\neg p_2(\vec c_j)\supset\var_j))$. We construct
  the interpretation ${\cal I}'$ that is same to ${\cal I}$ except that
  \begin{itemize}
    \item ${\cal I}'\models\bigwedge M_1$, and ${\cal I}'\models\bigwedge\neg M_2$,
    \item ${\cal I}'\not\models p_1(\vec c_i)$ for every $p_1(\vec c_i)\notin M_1$, and
    \item ${\cal I}'\models p_2(\vec c_j)$ for every $p_2(\vec c_j)\notin M_2$.
  \end{itemize}
  It is clear that ${\cal I}'\models\bigwedge_{p_1(\vec c_i)\in M_1}\psi_i\wedge\bigwedge_{p_2(\vec c_j)\in M_2}\var_j$ and ${\cal I}'\not\models\phi$. However, we have ${\cal I}'\models\phi$ by ${\cal I}'\models \bigwedge M_1\wedge\bigwedge\neg M_2\wedge\bigwedge_{1\le i\le n}((p_1(\vec c_i)\supset\psi_i)\wedge (\neg p_2(\vec c_i)\supset\var_i))$, a contradiction.
\end{proof}

\begin{lemma}\label{lem:9}
  Let ${\cal K}=(O,P)$ be a dl-program, $A=\DL[\lambda;Q](\vec t)$ a dl-atom and $I\subseteq\HB_P$.
  \begin{enumerate}[(i)]
    \item $I\models_OA$ iff $\tau(O)\cup I\cup\neg\overline
      I\models\tau(A)$.
%\tecomment{Minor comment: (i) is a special case of (ii), so this
%could be simplified. The part of the proof of (i) which is needed for
%(ii) could be plugged in directly. // But (i) is used in the proof of (ii) twice. --Yisong}
    \item if $I'\subseteq I$ then $(I',I)\models_OA$ iff $\tau(O)\cup I'\cup\neg\overline I\models\tau(A)$.
  \end{enumerate}
\end{lemma}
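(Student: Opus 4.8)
The plan is to prove part (ii) in full generality and then read off part (i) as the special case $I'=I$, since the only $F$ with $I\subseteq F\subseteq I$ is $I$ itself, so $(I,I)\models_O A$ holds iff $I\models_O A$. When $O$ is inconsistent both sides of each equivalence hold trivially ($O(F;\lambda)\supseteq O$ is inconsistent, and $\tau(O)$ is inconsistent by Theorem~\ref{thm:fitting}), so I would assume $O$ consistent. Throughout write $A=\DL[S_1\,op_1\,p_1,\ldots,S_m\,op_m\,p_m;Q](\vec t)$, so that $\tau(A)=\big[\bigwedge_{1\le i\le m}\tau(S_i\,op_i\,p_i)\big]\supset Q(\vec t)$.

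The first step is a purely syntactic simplification. Given a ``picture'' $J\subseteq\HB_P$ of the input predicates, the sentence $\bigwedge_i\tau(S_i\,op_i\,p_i)$ collapses to the conjunction of the literals in $\bigcup_{i=1}^m A_i(J)$: in any structure $\mathcal M$ whose restriction to $p_1,\ldots,p_m$ makes $p_i(\vec c)$ true exactly when $p_i(\vec c)\in J$ (for $p_i(\vec c)\in\HB_P$), each conjunct of $\tau(S\oplus p)=\bigwedge_{p(\vec c)\in\HB_P}(p(\vec c)\supset S(\vec c))$ is equivalent under $\mathcal M$ to $S(\vec c)$ if $p(\vec c)\in J$ and to $\top$ otherwise, so $\tau(S\oplus p)$ is equivalent to $\bigwedge_{p(\vec c)\in J}S(\vec c)$; likewise $\tau(S\odot p)$ reduces to $\bigwedge_{p(\vec c)\in J}\neg S(\vec c)$ and $\tau(S\ominus p)$ to $\bigwedge_{p(\vec c)\notin J}\neg S(\vec c)$. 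These are exactly $A_i(J)$ for $op_i\in\{\oplus,\odot,\ominus\}$.

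The second step is model surgery. I would unfold $\tau(O)\cup I'\cup\neg\overline I\models\tau(A)$ as: every model $\mathcal M$ of $\tau(O)\cup I'\cup\neg\overline I$ that satisfies $\bigwedge_i\tau(S_i\,op_i\,p_i)$ also satisfies $Q(\vec t)$. Since the input predicates lie in $\mathcal P$, disjoint from the ontology vocabulary, models may be re-equipped freely on those predicates: the restriction of any model of $\tau(O)\cup I'\cup\neg\overline I$ to the input-predicate atoms of $\HB_P$ is some $J$ with (input-part of $I'$)$\,\subseteq J\subseteq\,$(input-part of $I$), using $\mathcal M\models I'$ and $\mathcal M\models\neg\overline I$; conversely, any such $J$, together with any model $\mathcal N$ of $\tau(O)\cup\bigcup_i A_i(J)$ and the valuation ``$\alpha$ true iff $\alpha\in I'$'' on the remaining $\HB_P$-atoms, yields a model of $\tau(O)\cup I'\cup\neg\overline I\cup\{\bigwedge_i\tau(S_i\,op_i\,p_i)\}$ — the last conjunct by step~1 — whose ontology reduct is that of $\mathcal N$. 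Combining with step~1, $\tau(O)\cup I'\cup\neg\overline I\models\tau(A)$ is equivalent to: for every $J$ with (input-part of $I'$)$\,\subseteq J\subseteq\,$(input-part of $I$), $\tau(O)\cup\bigcup_i A_i(J)\models Q(\vec t)$. Then, since $O$, the literal set $\bigcup_i A_i(J)$, and $Q(\vec t)$ contain neither input predicates nor function symbols, the extra replacement axioms in $\tau(O)$ for non-ontology predicates are irrelevant, so by Theorem~\ref{thm:fitting} this holds iff $O\cup\bigcup_i A_i(J)\models_{\approx}Q(\vec t)$, i.e.\ iff $O(J;\lambda)\models Q(\vec t)$, i.e.\ iff $J\models_O A$ (exactly as in Lemma~\ref{lem:DF:1}(ii)). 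Finally, $F\models_O A$ depends only on the input-predicate part of $F$, so ``$J\models_O A$ for all such $J$'' is the same as ``$F\models_O A$ for every $F$ with $I'\subseteq F\subseteq I$'', which is by definition $(I',I)\models_O A$; this gives (ii), and (i) is the case $I'=I$.

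The argument is conceptually light, but I expect the main obstacle to be the bookkeeping: the switch between true equality and congruence via Theorem~\ref{thm:fitting}, and the free re-equipping of models on the input predicates, both rely crucially on the input (and rule) predicates being fresh with respect to the ontology vocabulary and on $O$ having no function symbols — precisely the hypotheses that already made Lemma~\ref{lem:DF:1} go through. One should also verify the degenerate cases $m=0$ (empty $\lambda$, so $\tau(A)\equiv Q(\vec t)$ and $O(J;\lambda)=O$) and $O$ inconsistent, both of which are immediate.
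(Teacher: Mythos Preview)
Your argument is correct and takes a genuinely different route from the paper's. The paper establishes (i) first, via the tailored model-surgery Lemma~\ref{lem:forget:2}, and then derives (ii) from (i): the $(\Leftarrow)$ direction of (ii) is monotonicity of entailment plus (i), while $(\Rightarrow)$ applies (i) to every $F$ with $I'\subseteq F\subseteq I$ and then \emph{aggregates} the resulting entailments using Lemma~\ref{lem:6} (the tautology $\bigvee_{J'}(\bigwedge_{i\in J'}\alpha_i\wedge\bigwedge_{j\notin J'}\neg\alpha_j)$). You instead prove (ii) directly by a single semantic argument that parametrises over the input-predicate picture $J$, and read off (i) as the case $I'=I$. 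Your route is more uniform and dispenses with Lemma~\ref{lem:6} altogether, making the link between up-to satisfaction and the default-side entailment transparent; the paper's route is more modular in that it re-uses lemmas already established for other purposes. Two small remarks: your phrase ``extra replacement axioms in $\tau(O)$ for non-ontology predicates'' is slightly off --- by definition $\tau(O)$ carries replacement axioms only for predicates occurring in $O$; the real point is that the \emph{missing} axioms of $eq({\cal L})$ for non-ontology predicates are irrelevant since only ontology predicates occur in $O$, $\bigcup_i A_i(J)$, and $Q(\vec t)$. And your re-equipping step tacitly uses that, absent true equality, one may pass to a model in which distinct constants denote distinct elements before fixing the $\mathcal P$-predicates --- exactly the same (implicit) manoeuvre that underlies the paper's Lemmas~\ref{lem:forget} and~\ref{lem:forget:2}.
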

\begin{proof}
  For clarity and without loss of generality, let $\lambda=(S_1\oplus p_1,S_2\ominus p_2)$.

%  (i) It is a special case of (ii) where $I'=I$.
%
%  (i)   We have that
%  $O\cup I\cup\neg \overline I\models\tau(A)$\\
%  iff $O\cup I\cup \neg \overline I\models
%    (\bigwedge_{\vec e\in\vec {\cal C}}(p_1(\vec e)\supset S_1(\vec e)))\wedge
%    (\bigwedge_{\vec e\in\vec {\cal C}}(\neg p_2(\vec e)\supset \neg S_2(\vec e)))\supset Q(\vec t)$\\
%  iff $I\cup \neg \overline I\cup \{\bigwedge_{\vec e\in\vec {\cal C}}(p_1(\vec e)\supset S_1(\vec e))\}\cup
%    \{\bigwedge_{\vec e\in\vec {\cal C}}(\neg p_2(\vec e)\supset \neg S_2(\vec e))\}\models O\supset Q(\vec t)$\\
%  iff $\textit{forget}(I\cup \neg \overline I\cup \{\bigwedge_{\vec e\in\vec {\cal C}}(p_1(\vec e)\supset S_1(\vec e))\}\cup  \{\bigwedge_{\vec e\in\vec {\cal C}}(\neg p_2(\vec e)\supset \neg S_2(\vec e))\},\HB_P)
%    \models O\supset Q(\vec t)$
%    (By Proposition \ref{Lin:prop:3} and $(O\supset Q(\vec t))\downarrow h=O\supset Q(\vec t)$ for any $h\in\HB_P$)\\
%   iff $\{S_1(\vec e) \mid p_1(\vec e)\in I\}\cup\{\neg S_2(\vec
%   e)\mid p_2(\vec e)\notin I\}
%    \models O\supset Q(\vec t)$ (By Proposition \ref{lin:prop:6})\\
%  iff $O\cup\{S_1(\vec e)\mid p_1(\vec e)\in I\}\cup\{\neg S_2(\vec e)\mid p_2(\vec e)\notin I\}\models Q(\vec t)$\\
%  iff $I\models_OA$.

  (i)
  We have that at first
  $\tau(O)\cup I\cup\neg \overline I\models\tau(A)$\\
  iff $\tau(O)\cup I\cup \neg \overline I\models
    (\bigwedge_{\vec e\in\vec {\cal C}}(p_1(\vec e)\supset S_1(\vec e)))\wedge
    (\bigwedge_{\vec e\in\vec {\cal C}}(\neg p_2(\vec e)\supset \neg S_2(\vec e)))\supset Q(\vec t)$\\
  iff $I\cup \neg \overline I\cup \{\bigwedge_{\vec e\in\vec {\cal C}}(p_1(\vec e)\supset S_1(\vec e))\}\cup
    \{\bigwedge_{\vec e\in\vec {\cal C}}(\neg p_2(\vec e)\supset \neg S_2(\vec e))\}\models \tau(O)\supset Q(\vec t)$\\
  %iff $\textit{forget}(I\cup \neg \overline I\cup \{\bigwedge_{\vec e\in\vec {\cal C}}(p_1(\vec e)\supset S_1(\vec e))\}\cup  \{\bigwedge_{\vec e\in\vec {\cal C}}(\neg p_2(\vec e)\supset \neg S_2(\vec e))\},\HB_P)
%    \models O\supset Q(\vec t)$
%    (By Proposition \ref{Lin:prop:3} and $(O\supset Q(\vec t))\downarrow h=O\supset Q(\vec t)$ for any $h\in\HB_P$)\\
   iff $\{S_1(\vec e) \mid p_1(\vec e)\in I\}\cup\{\neg S_2(\vec
   e)\mid p_2(\vec e)\notin I\}
    \models \tau(O)\supset Q(\vec t)$ (By Lemma \ref{lem:forget:2})\\
  iff $\tau(O)\cup\{S_1(\vec e)\mid p_1(\vec e)\in I\}\cup\{\neg S_2(\vec e)\mid p_2(\vec e)\notin I\}\models Q(\vec t)$\\
  iff $O\cup\{S_1(\vec e)\mid p_1(\vec e)\in I\}\cup\{\neg S_2(\vec e)\mid p_2(\vec e)\notin I\}\models Q(\vec t)$ (by Theorem \ref{thm:fitting}, where $\approx$ is taken as equality)\\
  iff $I\models_OA$.

(ii) $(\Lto)$ By $\tau(O)\cup I'\cup\neg \overline I\models \tau(A)$, we have that, for any $F$ with $I'\subseteq F\subseteq I$, $\tau(O)\cup F\cup \neg\overline I\models\tau(A)$ which implies $\tau(O)\cup F\cup\neg\overline F\models\tau(A)$. Thus $F\models_OA$ by (i). Consequently $(I',I)\models_OA$.

\noindent  $(\Rto)$ Let $S=I\setminus I'=\{\alpha_1,\ldots,\alpha_k\}$ and $J=\{1,\ldots,k\}$. It is clear that $\neg S=\neg \overline {I'}\setminus \neg \overline I$. Note that for any $F$ with $I'\subseteq F\subseteq I$, $F\models_OA$, which implies $\tau(O)\cup F\cup\neg \overline F\models\tau(A)$ by (i), i.e., for any $J'\subseteq J$, we have that
  $$I'\cup \{\alpha_i\mid i\in J'\}\cup \{\neg\alpha_j\mid j\in J\setminus J'\}\cup\neg\overline I\models O\supset \tau(A)$$
  which implies that\\
  $$\bigvee_{J'\subseteq J}(\bigwedge_{i\in J'}\alpha_i\wedge\bigwedge_{j\in J\setminus J'}\neg \alpha_j)\models
  I'\wedge\neg\overline I\supset (\tau(O)\supset \tau(A)).$$
  Thus we have, by Lemma \ref{lem:6}
  $$\bigwedge_{i\in J}(\alpha_i\vee\neg\alpha_i)\models I'\wedge\neg\overline I\supset (\tau(O)\supset \tau(A))$$
  i.e.,
 % $\Rto\models I'\wedge\neg\overline I\supset (O\supset \tau(A))$\\
  $$ I'\cup \neg \overline I\models \tau(O)\supset \tau(A).$$
  Consequently we have $\tau(O)\cup I'\cup\neg \overline I\models\tau(A)$.
\end{proof}

It is easy to see that  if $A$ is an atom and $O$ is consistent, then both (i) and (ii) of the above lemma hold.

\begin{lemma}\label{lem:Ei:consistent}
  Let ${\cal K}=(O,P)$ be a dl-program where $O$ is consistent and
  $I\subseteq\HB_P$ is a model of ${\cal K}$. Then we
  have that, for any $i\ge 0$, $E_i$ is consistent where
  $E_i$ is defined as (\ref{eq:Ei:default}) for $\tau^*(\cal K)$ and $E=\Th(\tau(O)\cup I\cup\neg\overline I)$.
\end{lemma}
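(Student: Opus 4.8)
The plan is to prove, by induction on $i$, that $E_i\subseteq E$ for every $i\ge 0$; the statement then follows at once, since $E$ itself is consistent. Indeed, arguing exactly as in the proof of Lemma~\ref{lem:DF:1}(i) --- the predicates of $\HB_P$ do not occur in $\tau(O)$, and $I$ and $\overline I=\HB_P\setminus I$ are disjoint sets of ground atoms --- the theory $\tau(O)\cup I\cup\neg\overline I$ is consistent because $O$ is, so $E=\Th(\tau(O)\cup I\cup\neg\overline I)$ is consistent. Consequently $E\cap\HB_P=I$: if $\alpha\in I$ then $\alpha\in E$, whereas if $\alpha\in\overline I$ then $\neg\alpha\in E$ and hence $\alpha\notin E$ by consistency of $E$.

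The base case $i=0$ is immediate, as $E_0=W=\tau(O)\subseteq E$. For the inductive step, assume $E_i\subseteq E$. Since $E=\Th(E)$, we have $\Th(E_i)\subseteq E$, so it suffices to show that every conclusion contributed to $E_{i+1}$ by a firing default of $\tau^*({\cal K})$ lies in $E$. A seed default $\frac{:\neg p(\vec c)}{\neg p(\vec c)}$ with $p(\vec c)\in\HB_P$ fires only if $\neg\neg p(\vec c)\notin E$, i.e., by the deductive closure of $E$, only if $p(\vec c)\notin E$, i.e., $p(\vec c)\in\overline I$; then its conclusion $\neg p(\vec c)$ belongs to $\neg\overline I\subseteq E$. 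For a default $\frac{\bigwedge_{1\le k\le m}\tau(B_k):\neg\tau(B_{m+1}),\ldots,\neg\tau(B_n)}{A}$ arising from a dl-rule $r=(A\lto B_1,\ldots,B_m,\Not B_{m+1},\ldots,\Not B_n)$ of $P$: if it fires, then $E_i\vdash\bigwedge_{1\le k\le m}\tau(B_k)$, so $\tau(B_k)\in E$ for $k\le m$, and, again using that $E$ is deductively closed, the justification condition unwinds to $\tau(B_j)\notin E$ for $j>m$. Now for an atom $B\in\HB_P$ one has $\tau(B)=B$ and $\tau(B)\in E\iff B\in I\iff I\models_O B$ (by $E\cap\HB_P=I$), while for a dl-atom $B$ one has $\tau(B)\in E\iff\tau(O)\cup I\cup\neg\overline I\models\tau(B)\iff I\models_O B$ by Lemma~\ref{lem:9}(i). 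Hence $I\models_O B_k$ for all $k\le m$ and $I\not\models_O B_j$ for all $j>m$, and since $I$ is a model of $\cal K$ it satisfies $r$, so $A\in I\subseteq E$. This establishes $E_{i+1}\subseteq E$, completing the induction, and therefore $E_i$ is consistent for every $i$.

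The steps are essentially bookkeeping; the two points that require care are the unwinding of the justification condition ``$\neg\beta\notin E$'' for $\beta=\neg\tau(B_j)$, which collapses to ``$\tau(B_j)\notin E$'' only because $E$ is deductively closed, and the use of Lemma~\ref{lem:9}(i) to move between $I\models_O B$ and membership of $\tau(B)$ in $E$ for dl-atom bodies $B$. The hypothesis that $I$ is a model of $\cal K$ is exactly what makes the argument go through: it forces the head $A$ of any firing rule-default into $I$, so that $A$ cannot contradict the negative literals $\neg\overline I$ that the seed defaults place in $E$; dropping it would allow a firing rule-default to contribute an atom outside $I$, breaking the invariant $E_i\subseteq E$.
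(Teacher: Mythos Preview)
Your proof is correct and follows essentially the same route as the paper's: both arguments use Lemma~\ref{lem:9}(i) to translate ``$\tau(B)\in E$'' into ``$I\models_O B$'' and then invoke the hypothesis that $I$ is a model of $\cal K$ to force the head of any firing rule-default into $I$. The only difference is the choice of invariant: you prove the clean containment $E_i\subseteq E$ directly, whereas the paper proves the weaker $E_i\cap\HB_P\subseteq I$ and argues consistency from that, which forces them to treat the base cases $i=0$ and $i=1$ separately (since $\neg\overline I$ only appears from $E_1$ onward). Your formulation is a little tidier but not materially different.
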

\begin{proof}
  It is sufficient to show that $E_i\cap\HB_P\subseteq I$ for every $i\ge 0$.

  Base: It is clear for $i=0$ since $O$ is consistent. For the case $i=1$, we have that $\neg\overline I\subseteq E_1$. If $E_1$ is inconsistent then there must exist a rule $(h\lto\Pos,\Not\Neg)$ in $P$ such that
  \begin{itemize}
    \item $h\in\overline I$,
    \item $E_0\models\tau(A)$ for every $A\in\Pos$, and
    \item $E\not\models\tau(B)$ for every $B\in\Neg$.
  \end{itemize}
  It is evident that $I\not\models_OB$ for every $B\in \Neg$ by (i) of Lemma \ref{lem:9}. And note that \\
  $E_0\models\tau(A)$\\
  $\Rto \tau(O)\models\tau(A)$\\
  $\Rto \tau(O)\cup I\cup\neg\overline I\models\tau(A)$\\
  $\Rto I\models_OA$ by (i) of Lemma \ref{lem:9}.

  It follows that $h\in I$ since $I$ is a model of $\cal K$. It contradicts with $h\in\overline I$.

  Step: Suppose $E_n$ is consistent where $n\ge 1$. For any atom $h\in\HB_P$, $h\in E_{n+1}$ if and only if there exists a
  rule $(h'\lto\Pos',\Not\Neg')$ in $P$ such that
  \begin{itemize}
    \item $E_n\models \tau(A')$ for any $A'\in\Pos'$, and
    \item $E\not\models\tau(B')$ for any $B'\in\Neg'$.
  \end{itemize}
  It is clear that $I\not\models_OB'$ for any $B'\in\Neg'$ by (i) of Lemma \ref{lem:9}. Since
  $E_n$ is consistent by the inductive assumption, we have that $(E_n\cap\HB_P)\cap \overline I=\emptyset$
  by $\neg\overline I\subseteq E_n$. Thus it follows that\\
  $E_n\models\tau(A')$\\
  $\Rto \tau(O)\cup (E_n\cap\HB_P)\cup \neg\overline I\models\tau(A')$\\
  $\Rto \tau(O)\cup I\cup\neg\overline I\models\tau(A')$ since $E_n\cap\HB_P\subseteq I$\\
  $\Rto I\models_OA'$ by (i) of Lemma \ref{lem:9}.

  It implies that $h\in I$ since $I$ is a model of $\cal K$. Thus $E_{n+1}$ is consistent.
\end{proof}

\begin{lemma}\label{lem:10}
  Let ${\cal K}=(O,P)$ be a dl-program where $O$ is consistent and $I\subseteq\HB_P$ a model of ${\cal K}$. Then we
  have that, for any $i\ge 0$,
  \begin{enumerate}[(i)]
    \item ${\cal T}^i_{{\cal K}^I}(\emptyset, I)\subseteq E_{i+1}\cap\HB_P$, and
    \item $E_i\cap\HB_P\subseteq {\cal T}^i_{{\cal K}^I}(\emptyset, I)$
  \end{enumerate} where
  $E_i$ is defined as (\ref{eq:Ei:default}) for $\tau^*(\cal K)$ and $E=\Th(\tau(O)\cup I\cup\neg\overline I)$.
\end{lemma}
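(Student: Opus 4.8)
The plan is to prove (i) and (ii) together by induction on $i$, with Lemma~\ref{lem:9} as the main engine. The key point that makes Lemma~\ref{lem:9} applicable to \emph{arbitrary} dl-atoms here --- unlike Lemma~\ref{lem:DF:1}, which needs monotonicity --- is that the guess $E$ carries the complete negative information $\neg\overline I$ about $\HB_P$, so evaluating a dl-atom against $\tau(O)\cup J\cup\neg\overline I$ is unambiguous. Before the induction I would record some bookkeeping about $\tau^*({\cal K})=(D,\tau(O))$. First, since $I$ is a model of ${\cal K}$ one gets ${\cal T}_{{\cal K}^I}(I,I)\subseteq I$, hence ${\cal T}^i_{{\cal K}^I}(\emptyset,I)\subseteq I$ for all $i$; this lets me invoke Lemma~\ref{lem:9}(ii) with $I'={\cal T}^n_{{\cal K}^I}(\emptyset,I)$. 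Second, $E=\Th(\tau(O)\cup I\cup\neg\overline I)$ is consistent (the $\HB_P$-predicates are disjoint from the language of $O$, and $\tau(O)$ is consistent by Theorem~\ref{thm:fitting}), so $p(\vec c)\in E$ iff $p(\vec c)\in I$; consequently each closed-world default $\frac{:\neg p(\vec c)}{\neg p(\vec c)}$ fires already at stage $0\to 1$ precisely when $p(\vec c)\in\overline I$, giving $\neg\overline I\subseteq E_1\subseteq E_n$ for all $n\ge 1$, while $\tau(O)\subseteq E_0$. Third, using this together with Lemma~\ref{lem:Ei:consistent}, each $E_n$ with $n\ge 1$ is logically equivalent to $\Th(\tau(O)\cup(E_n\cap\HB_P)\cup\neg\overline I)$, and $E_n\cap\HB_P=\{h\in\HB_P\mid E_n\vdash h\}$ is exactly the set of heads of rule-defaults fired by stage $n$. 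Finally I would note the sign bookkeeping: for a negative body literal $\Not B'$ the associated justification is $\neg\tau(B')$, admissible w.r.t.\ $E$ iff $\neg\neg\tau(B')\notin E$, i.e.\ $\tau(B')\notin E$, which by Lemma~\ref{lem:9}(i) is equivalent to $I\not\models_O B'$ --- exactly the condition for the corresponding rule to survive in $P^I$.

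For (i), the base case $i=0$ is immediate since ${\cal T}^0_{{\cal K}^I}(\emptyset,I)=\emptyset$. For the step, assume ${\cal T}^n_{{\cal K}^I}(\emptyset,I)\subseteq E_{n+1}\cap\HB_P$ and take $h\in{\cal T}^{n+1}_{{\cal K}^I}(\emptyset,I)$. Then there is a rule $(h\lto\Pos)\in P^I$, arising from $(h\lto\Pos,\Not\Neg)\in P$ with $I\not\models_O B'$ for all $B'\in\Neg$, such that $({\cal T}^n_{{\cal K}^I}(\emptyset,I),I)\models_O B$ for every $B\in\Pos$. By Lemma~\ref{lem:9}(ii) (and its atom version) this yields $\tau(O)\cup{\cal T}^n_{{\cal K}^I}(\emptyset,I)\cup\neg\overline I\models\tau(B)$; since $\tau(O)\cup{\cal T}^n_{{\cal K}^I}(\emptyset,I)\cup\neg\overline I\subseteq E_{n+1}$ by the bookkeeping and the induction hypothesis, $E_{n+1}\vdash\tau(B)$ for all $B\in\Pos$. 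Also $\tau(B')\notin E$ for all $B'\in\Neg$ by the sign bookkeeping and Lemma~\ref{lem:9}(i). Hence the default $\frac{\bigwedge_{B\in\Pos}\tau(B):\{\neg\tau(B')\mid B'\in\Neg\}}{h}$ fires at stage $n+1$, so $h\in E_{n+2}\cap\HB_P$.

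For (ii), the base case holds because $E_0\cap\HB_P=\emptyset$ ($\tau(O)$ is consistent and mentions no $\HB_P$-predicate), matching ${\cal T}^0_{{\cal K}^I}(\emptyset,I)=\emptyset$. For the step, assume $E_n\cap\HB_P\subseteq{\cal T}^n_{{\cal K}^I}(\emptyset,I)$ and take $h\in E_{n+1}\cap\HB_P$. If $E_n\vdash h$ already, then $h\in E_n\cap\HB_P$ and we conclude by the induction hypothesis and monotonicity of ${\cal T}_{{\cal K}^I}(\cdot,I)$. Otherwise $h$ is the head of a rule-default firing at stage $n+1$, from some $(h\lto\Pos,\Not\Neg)\in P$ with $E_n\vdash\tau(B)$ for all $B\in\Pos$ and $\tau(B')\notin E$ for all $B'\in\Neg$. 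By the sign bookkeeping and Lemma~\ref{lem:9}(i), $I\not\models_O B'$ for all $B'\in\Neg$, so $(h\lto\Pos)\in P^I$. Using $E_n\equiv\Th(\tau(O)\cup(E_n\cap\HB_P)\cup\neg\overline I)$ together with $E_n\cap\HB_P\subseteq{\cal T}^n_{{\cal K}^I}(\emptyset,I)\subseteq I$, we get $\tau(O)\cup{\cal T}^n_{{\cal K}^I}(\emptyset,I)\cup\neg\overline I\vdash\tau(B)$, hence $({\cal T}^n_{{\cal K}^I}(\emptyset,I),I)\models_O B$ for every $B\in\Pos$ by Lemma~\ref{lem:9}(ii). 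Therefore $h\in{\cal T}_{{\cal K}^I}({\cal T}^n_{{\cal K}^I}(\emptyset,I),I)={\cal T}^{n+1}_{{\cal K}^I}(\emptyset,I)$.

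I expect the main obstacle to be not a single deep step but the first-order bookkeeping that makes the two inclusions dovetail: pinning down that $E_n$ (for $n\ge 1$) is equivalent to $\Th(\tau(O)\cup(E_n\cap\HB_P)\cup\neg\overline I)$ --- which rests on the consistency of the $E_n$ from Lemma~\ref{lem:Ei:consistent} and the disjointness of the $\HB_P$-predicates from the ontology language --- so that ``$E_n\vdash\tau(B)$'' can be traded back and forth with the hypotheses needed by Lemma~\ref{lem:9}, plus getting the double negation in the justification $\neg\tau(B')$ exactly right. Notably, the nonmonotonicity of dl-atoms, which defeats the plain $\tau$ encoding (cf.\ Example~\ref{exam:6}), causes no trouble here precisely because $\neg\overline I$ is present in $E$, so Lemma~\ref{lem:9} applies with no monotonicity assumption.
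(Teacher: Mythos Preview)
Your proof is correct and follows the same route as the paper: induction on $i$ via Lemma~\ref{lem:9}, with Lemma~\ref{lem:Ei:consistent} supplying $E_n\cap\HB_P\subseteq I$. The only cosmetic difference is that the paper treats $i=1$ as a separate base case in (ii), whereas you fold it into the inductive step; this is fine because, although your bookkeeping equivalence is stated only for $n\ge 1$, the direction you actually need in the step (namely $\Th(E_n)\subseteq\Th(\tau(O)\cup(E_n\cap\HB_P)\cup\neg\overline I)$) holds trivially at $n=0$ since $E_0=\tau(O)$.
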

\begin{proof}
%   It is clear that $O\cup\neg \overline I\subseteq E_i$ for $i\ge 1$.
  We prove (i) and (ii) by induction on $i$.

  (i)
  Base: It is evident for $i=0$.

  Step: Suppose it holds for $i=n$ where $n\ge 0$. For any atom $h\in\HB_P$, we have that
  $h\in {\cal T}^{n+1}_{{\cal K}^I}(\emptyset, I)$ if and only if there exists a rule
  $(h\lto\Pos,\Not\Neg)$ in $P$ such that
  \begin{itemize}
    \item $({\cal T}^n_{{\cal K}^I}(\emptyset, I),I)\models_OA$ for any $A\in\Pos$, and
    \item $I\not\models_OB$ for any $B\in\Neg$.
  \end{itemize}

  By (i) of Lemma \ref{lem:9}, $I\not\models_OB$ iff $E\not\models\tau^*(B)$, and by (ii) of Lemma \ref{lem:9},
  we have \\
  $({\cal T}^n_{{\cal K}^I}(\emptyset, I),I)\models_OA$ \\
  $\Rto \tau(O)\cup {\cal T}^n_{{\cal K}^I}(\emptyset, I)\cup\neg\overline I\models\tau(A)$\\
  $\Rto \tau(O)\cup (E_{n+1}\cap\HB_P)\cup\overline I\models\tau(A)$ (by the induction assumption)\\
  $\Rto E_{n+1}\models\tau(A)$ (since $\tau(O)\cup\neg\overline I\subseteq E_{n+1}$)\\
  $\Rto h\in E_{n+2}$.

  (ii) Base: It is clear for $i=0$. Let us consider the case $i=1$. For any atom $h\in E_1\cap\HB_P$,  there
  exists a rule $(h\lto\Pos,\Not\Neg)$ in $P$ such that
  \begin{itemize}
    \item $E_0\models\tau(A)$ for any $A\in\Pos$, and
    \item $E\not\models\tau(B)$ for any $B\in\Neg$.
  \end{itemize}
  By $E_0\models\tau(A)$, we have $O\models\tau(A)$. Thus $\tau(O)\cup I'\cup\neg\overline I\models\tau(A)$ for any $I'$ such that $I'\subseteq I$. It implies $(\emptyset, I)\models_OA$ by (ii) of Lemma \ref{lem:9}. By (i) of Lemma
  \ref{lem:9} and $E\not\models\tau(B)$, it is evident $I\not\models_OB$. It follows that $h\in {\cal T}^1_{{\cal K}^I}(\emptyset, I)$.

  Step: Suppose it holds for $i=n$ where $n\ge 1$. For any atom $h'\in(E_{n+1}\cap\HB_P)$, there exists
  a rule $(h'\lto\Pos',\Not\Neg')$ in $P$ such that
  \begin{itemize}
    \item $E_n\models\tau(A')$ for any $A'\in\Pos'$, and
    \item $E\not\models\tau(B')$ for any $B'\in\Neg'$.
  \end{itemize}
  Since $I$ is a model of $\cal K$, $E_n$ is consistent by Lemma \ref{lem:Ei:consistent}. Note that for any $n\ge 1$ and
  $\tau(O)\cup\neg\overline I\subseteq E_n$. It implies $E_n\cap\HB_P\subseteq I$. We have that\\
  $E_n\models\tau(A')$\\
  $\Rto O\cup (E_n\cap\HB_P)\cup\neg\overline I\models\tau(A')$\\
  $\Rto (E_n\cap\HB_P,I)\models_OA'$ by (ii) of Lemma \ref{lem:9}\\
  $\Rto ({\cal T}^n_{{\cal K}^I}(\emptyset, I),I)\models_OA'$ by the inductive assumption and the monotonicity of ${\cal T}_{{\cal K}^I}$.

  Notice again that $E\not\models\tau(B')$ implies $I\not\models_OB'$ by (i) of Lemma \ref{lem:9}. Thus
  it follows that
  $h'\in {\cal T}^{n+1}_{{\cal K}^I}(\emptyset, I)$.

%The proof is completed.
This completes the proof.
\end{proof}

Please note that it does not generally hold that ${\cal T}^i_{{\cal K}^I}(\emptyset, I)=E_i\cap\HB_P$ in the above lemma. For instance, let us consider the dl-program ${\cal K}=(\emptyset, P)$ where
$P$ consists of $$p(a)\lto \DL[S\oplus p, S'\ominus q; S\sqcup\neg S'](a).$$

Let $I=\{p(a)\}$. It is obvious that $p(a)\in {\cal T}_{{\cal K}^I}(\emptyset, I)$, i.e. $p(a)\in {\cal T}^1_{{\cal K}^I}(\emptyset, I)$. However, it is clear that $E_0\not\models\tau(A)$ since $E_0=\Th(\tau(\emptyset))$ where $A=\DL[S\oplus p, S'\ominus q; S\sqcup\neg S'](a)$. Thus $p(a)\not\in E_1$.

The theorem below shows that the polynomial and modular translation $\tau^*$ preserves the weakly well-supported
answer set semantics of dl-programs. Thus it is faithful.
\begin{theorem}\label{thm:8}
  Let ${\cal K}=(O,P)$ be a dl-program where $O$ is consistent and $I\subseteq\HB_P$
  a model of $\cal K$. Then we have that $I$ is a
  weakly well-supported answer set of ${\cal K}$ iff $E=\Th(\tau(O)\cup I\cup\neg\overline I)$ is an extension
  of $\tau^*(\cal K)$.
\end{theorem}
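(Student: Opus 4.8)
The proof will establish the two directions of the biconditional by relating the iterative construction $\langle E_i \rangle$ of a candidate extension of $\tau^*(\mathcal K)$ to the iterative construction $\langle {\cal T}^i_{{\cal K}^I}(\emptyset, I) \rangle$ of the weakly well-supported fixpoint, using the two lemmas just proved (Lemma~\ref{lem:10}, with the supporting consistency Lemma~\ref{lem:Ei:consistent} and the translation Lemma~\ref{lem:9}). The key observation is that every extension of $\tau^*(\mathcal K)$ necessarily has the shape $\Th(\tau(O) \cup J \cup \neg\overline J)$ for some $J \subseteq \HB_P$, since the added defaults $\frac{:\neg p(\vec c)}{\neg p(\vec c)}$ force a decision on every ground atom and the remaining structure of $\tau$ only ever derives atoms of $\HB_P$ over $\tau(O)$. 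So the whole problem reduces to: $E = \Th(\tau(O) \cup I \cup \neg\overline I)$ satisfies $E = \Gamma_{\tau^*(\mathcal K)}(E)$ (equivalently $E = \bigcup_i E_i$) if and only if $I = {\cal T}^\infty_{{\cal K}^I}(\emptyset, I)$.

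\textbf{Forward direction ($\Rightarrow$).} Assume $I$ is a weakly well-supported answer set, so $I$ is a model of $\mathcal K$ and $I = {\cal T}^\alpha_{{\cal K}^I}(\emptyset, I)$. I want to show $E = \bigcup_{i\ge 0} E_i$ where $E_i$ is built by (\ref{eq:Ei:default}) for $\tau^*(\mathcal K)$ and this very $E$. By Lemma~\ref{lem:Ei:consistent} each $E_i$ is consistent, hence $E_i \cap \HB_P \subseteq I$, and moreover since $I$ is a model of $\mathcal K$ one checks the defaults $\frac{:\neg p(\vec c)}{\neg p(\vec c)}$ fire exactly for $p(\vec c) \in \overline I$, so $\neg\overline I \subseteq E_1 \subseteq E$. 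Combining with Lemma~\ref{lem:10}(i) gives ${\cal T}^i_{{\cal K}^I}(\emptyset,I) \subseteq E_{i+1}\cap \HB_P$ for all $i$, hence $I = \bigcup_i {\cal T}^i_{{\cal K}^I}(\emptyset,I) \subseteq \bigcup_i E_i$; and Lemma~\ref{lem:10}(ii) plus the consistency bound gives $\bigcup_i (E_i \cap \HB_P) \subseteq \bigcup_i {\cal T}^i_{{\cal K}^I}(\emptyset,I) = I$, so the atoms of $\bigcup_i E_i$ restricted to $\HB_P$ are exactly $I$. Since $\tau(O) \subseteq E_0$ and $\neg \overline I \subseteq E_1$ and nothing else of the form $\neg\alpha$ with $\alpha\in\HB_P$ can enter (that would contradict consistency), $\bigcup_i E_i = \Th(\tau(O)\cup I \cup \neg\overline I) = E$, so $E$ is an extension of $\tau^*(\mathcal K)$.

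\textbf{Converse direction ($\Leftarrow$).} Assume $E = \Th(\tau(O)\cup I \cup \neg\overline I)$ is an extension, i.e.\ $E = \bigcup_i E_i$. First, $I$ is a model of $\mathcal K$: by Lemma~\ref{lem:9} the firing conditions of the $\tau$-defaults at $E$ read off exactly as ``$I\models_O \Pos$'' and ``$I\not\models_O B$ for $B\in\Neg$'', so if a rule body is satisfied by $I$ its head lies in $E\cap\HB_P = I$. Now run Lemma~\ref{lem:Ei:consistent} (which only needs $I$ a model) to get consistency of all $E_i$, then Lemma~\ref{lem:10} again, this time to conclude $I = E\cap \HB_P = \bigcup_i(E_i\cap\HB_P) = \bigcup_i {\cal T}^i_{{\cal K}^I}(\emptyset,I) = {\cal T}^\alpha_{{\cal K}^I}(\emptyset,I)$; hence $I$ is a weakly well-supported answer set. \textbf{The main obstacle} is bookkeeping rather than a deep new idea: one must be careful that the $\frac{:\neg p(\vec c)}{\neg p(\vec c)}$ defaults do not fire ``too much'' (they must fire precisely on $\overline I$, which is where $I$ being a model and the consistency lemma are used) and that the off-by-one shift in Lemma~\ref{lem:10}(i) (${\cal T}^i \subseteq E_{i+1}$, not $E_i$) does not break the limit comparison — it does not, because both unions are taken over all $i$. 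One should also note the example after Lemma~\ref{lem:10} warning that $E_i\cap\HB_P = {\cal T}^i_{{\cal K}^I}(\emptyset,I)$ fails stage-by-stage, so the argument must go through the two one-sided inclusions and pass to the limit, never claiming equality at finite stages.
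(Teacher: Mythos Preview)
Your proposal is correct and follows essentially the same route as the paper: both directions are obtained by sandwiching the $\HB_P$-part of $\bigcup_i E_i$ between the two one-sided inclusions of Lemma~\ref{lem:10}, with Lemma~\ref{lem:Ei:consistent} guaranteeing that no spurious atoms leak into $E_i\cap\HB_P$.

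Two small remarks. First, in the converse direction your opening step ``First, $I$ is a model of $\mathcal K$'' is redundant: that is already a hypothesis of the theorem, so you need not (re-)derive it from the extension property. Second, where the paper invokes an external result (Theorem~3 of \cite{Yi-Dong:IJCAI:2011}) to obtain ${\cal T}^\alpha_{{\cal K}^I}(\emptyset,I)\subseteq I$, you instead get this inclusion directly from Lemma~\ref{lem:10}(i) together with $\bigcup_i(E_i\cap\HB_P)=E\cap\HB_P=I$; this is a slightly cleaner self-contained argument. Your cautionary note about the off-by-one shift and the impossibility of stage-by-stage equality is exactly right and mirrors the paper's remark following Lemma~\ref{lem:10}.
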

\begin{proof}
  $(\Rto)$ To show $E=\bigcup_{i\ge 0}E_i$ where $E_i$ is defined as (\ref{eq:Ei:default}) for $E$ and $\tau^*(\cal K)$,
  it is sufficient to show $E\cap\HB_P=(\bigcup_{i\ge 0}E_i)\cap\HB_P$ since $\tau(O)= E_0$, $\neg\overline I\subseteq E_1$ and $E_i$ is consistent for any $i\ge 0$ by Lemma \ref{lem:Ei:consistent}.

  For any $h\in\HB_P$, it is clear that  $h\in E\cap\HB_P$
  iff $h\in I$
  iff $h\in{\cal T}_{{\cal K}^I}^n(\emptyset,I)$ for some $n\ge 0$ since $I={\cal T}_{{\cal K}^I}^\alpha$.

  On {\color{red} the} one hand, $h\in{\cal T}_{{\cal K}^I}^n(\emptyset,I)$ implies $h\in E_{n+1}\cap\HB_P$ by (i) of Lemma \ref{lem:10} and
  then $h\in\bigcup_{i\ge 0}(E_i\cap\HB_P)$, i.e. $h\in(\bigcup_{i\ge 0}E_i)\cap\HB_P$. On the other hand
  $h\in(\bigcup_{i\ge 0}E_i)\cap\HB_P$ implies $h\in \bigcup_{i\ge 0}(E_i\cap\HB_P)$, i.e. $h\in E_n\cap\HB_P$ for some $n\ge 0$. It follows that $h\in {\cal T}^n_{{\cal K}^I}(\emptyset,I)$ by (ii) of Lemma \ref{lem:10}.
  Thus $h\in I$ and the $h\in E\cap\HB_P$.

  Consequently, we have $E\cap\HB_P=(\bigcup_{i\ge 0}E_i)\cap\HB_P$.

  $(\Lto)$ By Theorem 3 of \cite{Yi-Dong:IJCAI:2011}, it is clear that ${\cal T}^\alpha_{{\cal K}^I}(\emptyset,I)\subseteq I$. We only need to show $I\subseteq{\cal T}_{{\cal K}^I}^\alpha(\emptyset,I)$.
  For any atom $h\in I$, we have that \\
  $h\in E$\\
  $\Rto h\in (\bigcup_{i\ge 0}E_i)\cap\HB_P$ since $E=\bigcup_{i\ge 0}E_i$\\
  $\Rto h\in E_n\cap\HB_P$ for some $n\ge 0$ since $E_i$ is consistent for any $i\ge 0$\\
  $\Rto h\in {\cal T}_{{\cal K}^I}^n(\emptyset,I)$ by (ii) of Lemma \ref{lem:10}\\
  $\Rto h\in {\cal T}^\alpha_{{\cal K}^I}(\emptyset,I)$.

This completes the proof.
\end{proof}

Together with Theorem~\ref{thm:delete:ominus:s} and Proposition~\ref{prop:9}, the above theorem implies another translation
from dl-programs to default theories that preserves the strong answer set semantics.
\begin{corollary}
  Let ${\cal K}=(O,P)$ be a dl-program where $O$ is consistent  and $I\subseteq\HB_P$.
  \begin{itemize}
    \item If $\DL_P^?=\emptyset$  then
     % we have that
     $I$ is a strong answer set of $\cal K$ iff $\Th(\tau(O)\cup I\cup\neg\overline I)$ is an extension of $\tau^*(\cal K)$ iff $I$ is a strongly well-supported answer set of $\cal K$.
    \item  $I$ is a strong answer set of $\cal K$ iff $\Th(\tau(O)\cup \pi(I)\cup\neg \overline{ \pi(I)})$
  is an extension of $\tau^*(\pi(\cal K))$.% where $\overline{\pi(I)}=\HB_{\pi(P)}\setminus \pi(I)$.
  \end{itemize}
\end{corollary}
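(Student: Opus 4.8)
The plan is to derive both claims by chaining previously established results: Theorem~\ref{thm:delete:ominus:s} (elimination of nonmonotonic dl-atoms), Proposition~\ref{prop:9}, Theorem~\ref{thm:8}, together with the known relations between the strong, weakly well-supported, and strongly well-supported answer set semantics from \cite{Yi-Dong:IJCAI:2011}. For the \textbf{first bullet}, assume $\DL_P^?=\emptyset$, so every dl-atom in $P$ is monotonic (and $O$ is consistent by hypothesis). I would first record the web of implications among the relevant semantics for an interpretation $I\subseteq\HB_P$: a strong answer set of $\cal K$ is a model of $\cal K$; by Proposition~\ref{prop:9} it is then a strongly well-supported answer set; by Corollary~3 of \cite{Yi-Dong:IJCAI:2011} a strongly well-supported answer set is a weakly well-supported answer set; and by Theorem~6 of \cite{Yi-Dong:IJCAI:2011} a weakly well-supported answer set is again a strong answer set. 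Hence the three properties ``$I$ is a strong answer set of $\cal K$'', ``$I$ is a strongly well-supported answer set of $\cal K$'', and ``$I$ is a weakly well-supported answer set of $\cal K$'' are pairwise equivalent, and each entails that $I$ is a model of $\cal K$. Since Theorem~\ref{thm:8} asserts that \emph{for every model $I$ of $\cal K$}, $I$ is a weakly well-supported answer set iff $E=\Th(\tau(O)\cup I\cup\neg\overline I)$ is an extension of $\tau^*({\cal K})$, the first bullet reduces to the single remaining implication ``$E$ is an extension of $\tau^*({\cal K})$ $\Rightarrow$ $I$ is a model of $\cal K$''.

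For that implication I would argue by contradiction. If $I$ is not a model of $\cal K$, choose a rule $r=(h\lto \Pos,\Not\Neg)\in P$ with $I\models_O B$ for all $B\in\Pos$, $I\not\models_O B$ for all $B\in\Neg$, and $h\notin I$. The set $E=\Th(\tau(O)\cup I\cup\neg\overline I)$ is consistent (as $O$ is consistent and $\tau(O)$ shares no predicate with $\HB_P$) and deductively closed, so by Lemma~\ref{lem:9}(i) and its stated extension to atoms we get $E\models\tau(B)$ for every $B\in\Pos$ and $E\not\models\tau(B)$ for every $B\in\Neg$. Reading off Reiter's applicability condition for the default of $\tau^*({\cal K})$ associated with $r$ (its justifications are the $\neg\tau(B)$ with $B\in\Neg$, and ``$\neg\beta_i\notin E$'' for the deductively closed $E$ amounts to $E\not\models\tau(B)$), the default is applicable with respect to $E$, which forces $h\in\Gamma_{\tau^*({\cal K})}(E)=E$, hence $h\in E\cap\HB_P=I$, contradicting $h\notin I$. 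The only points needing attention are the precise form of Reiter's condition and the observation that the extra defaults $\frac{:\neg p(\vec c)}{\neg p(\vec c)}$ of $\tau^*$ pin $E\cap\HB_P$ down to exactly $I$; both are routine. This step is where the small amount of real work lies.

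For the \textbf{second bullet}, I would compose the first bullet with the constraint-operator elimination. By Theorem~\ref{thm:delete:ominus:s}, $I$ is a strong answer set of $\cal K$ iff $\pi(I)$ is a strong answer set of $\pi({\cal K})=(O,\pi(P))$. The translated program $\pi({\cal K})$ has no nonmonotonic dl-atoms, i.e.\ $\DL_{\pi(P)}^?=\emptyset$ (as used for Proposition~\ref{prop:3.1.1}), and its ontology $O$ is consistent; so the first bullet applies to $\pi({\cal K})$ with the interpretation $\pi(I)$, giving that $\pi(I)$ is a strong answer set of $\pi({\cal K})$ iff $\Th(\tau(O)\cup\pi(I)\cup\neg\overline{\pi(I)})$, where $\overline{\pi(I)}=\HB_{\pi(P)}\setminus\pi(I)$, is an extension of $\tau^*(\pi({\cal K}))$. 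Composing the two equivalences yields the claim.
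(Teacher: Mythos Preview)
Your argument is correct and follows the same route the paper intends (the paper merely says the corollary follows from Theorem~\ref{thm:delete:ominus:s}, Proposition~\ref{prop:9} and Theorem~\ref{thm:8}, without spelling anything out). You are in fact more careful than the paper: Theorem~\ref{thm:8} and Proposition~\ref{prop:9} are both stated only for models $I$ of $\cal K$, so the implication ``$\Th(\tau(O)\cup I\cup\neg\overline I)$ is an extension of $\tau^*({\cal K})$ $\Rightarrow$ $I$ is a model of $\cal K$'' does need to be checked separately, and your contradiction argument via Lemma~\ref{lem:9}(i) and Reiter's applicability condition handles it cleanly. The second bullet is exactly the paper's intended composition of Theorem~\ref{thm:delete:ominus:s} with the first bullet applied to $\pi({\cal K})$.
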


We note that the translation $\tau^*$ does not preserve the strongly well-supported answer sets
of dl-programs. For instance, let us consider the dl-program ${\cal K}_1$ in Example \ref{exam:dl-program:3}. It is
easy to see that the only strongly well-supported answer set of ${\cal
  K}_1$ is $\emptyset$, while $\tau^*({\cal K}_1)$
has two extensions $\Th(\{\neg p(a)\}\cup\tau(\emptyset))$ and $\Th(\{p(a)\}\cup\tau(\emptyset))$.  However, the translation $\tau^*$ does preserve the strongly well-supported answer sets for a highly relevant class of dl-programs as illustrated by the next proposition. The following lemma is a generalization of Corollary~4 of \cite{Yi-Dong:IJCAI:2011}.
%\tecomment{I might be mistaken, but isn't
%it that if a dl-program is such that only monotonic dl-atoms occur
%under default negation, then weakly and strongly well-supported
%answer sets coincide? // Yes. We don't show the assertion `` if only monotonic dl-atoms occur under default negation, i.e., no nonmonotonic dl-atom occurs under default negation, then weakly and strongly well-supported answer sets coincide." The dl-program ${\cal K}_1$ in Example 3 has a  nonmonotonic dl-atom occurring under default negation. --Yisong \\
%TE: This is not a deep result, but easy to see from the
%definition. But one could point out that the translation works
%for the fragment where non-monotonic dl-atoms can occur but not
%under negation. This is still a highly relevant class, and more
%general than dl-programs $\DL_P^?=\emptyset$. // Yisong: Let us give it by the proposition below. }

\begin{lemma}\label{lem:14}
  Let ${\cal K}=(O,P)$ be a dl-program such that, for every rule of
  the form (\ref{dl:rule:set}) in $P$, the dl-atom $B$ is monotonic if
  $B\in \Neg$, and $I\subseteq\HB_P$.
  % We have that
  Then $I$ is a weakly well-supported answer set of $\cal K$ iff
  $I$ is a strongly well-supported answer set of $\cal K$.
\end{lemma}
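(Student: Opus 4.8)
The plan is to reduce the whole statement to one observation about the level-mapping operators: under the hypothesis that every dl-atom occurring negated in a rule of $P$ is monotonic, the operators ${\cal T}_{\cal K}$ and ${\cal T}_{{\cal K}^I}$ agree on every pair $(E,I)$ with $E\subseteq I$. Granting this, their iterations from $\emptyset$ are identical term by term, so ${\cal T}^\alpha_{\cal K}(\emptyset,I)={\cal T}^\alpha_{{\cal K}^I}(\emptyset,I)$; and since $I$ is a strongly (resp.\ weakly) well-supported answer set of $\cal K$ precisely when $I$ is a model of $\cal K$ with $I={\cal T}^\alpha_{\cal K}(\emptyset,I)$ (resp.\ $I={\cal T}^\alpha_{{\cal K}^I}(\emptyset,I)$), the two notions coincide. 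Throughout we may assume $I$ is a model of $\cal K$, since otherwise $I$ is neither kind of answer set and the equivalence is vacuous; note also that a model of $\cal K$ is a model of ${\cal K}^I$, so ${\cal T}_{{\cal K}^I}$ has the monotonicity needed for the iteration to be well defined (and, once the operators are shown equal, this comes for free).

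The core step I would carry out first is the operator identity ${\cal T}_{\cal K}(E,I)={\cal T}_{{\cal K}^I}(E,I)$ for $E\subseteq I$. For $\subseteq$, let $a\in{\cal T}_{\cal K}(E,I)$ be witnessed by a rule $a\lto\Pos,\Not\Neg$ of $P$ with $(E,I)\models_O B$ for $B\in\Pos$ and $(E,I)\models_O\Not B$ for $B\in\Neg$; instantiating $F:=I$ in the definition of up-to satisfaction (or reading off $B\notin I$ when $B$ is an atom) yields $I\not\models_O B$ for every $B\in\Neg$, so $a\lto\Pos$ lies in $P^I$ and $(E,I)\models_O\Pos$, whence $a\in{\cal T}_{{\cal K}^I}(E,I)$; this direction uses no monotonicity. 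For $\supseteq$, let $a\in{\cal T}_{{\cal K}^I}(E,I)$ via a rule $a\lto\Pos$ of $P^I$ coming from $a\lto\Pos,\Not\Neg$ in $P$ with $I\not\models_O B$ for all $B\in\Neg$; here the hypothesis enters. If $B\in\Neg$ is an atom then $B\notin I$, so $(E,I)\models_O\Not B$. If $B\in\Neg$ is a dl-atom then it is monotonic, so for every $F$ with $E\subseteq F\subseteq I$ we cannot have $F\models_O B$ (that would force $I\models_O B$), and again $(E,I)\models_O\Not B$. Together with $(E,I)\models_O\Pos$ this gives $(E,I)\models_O(\Pos,\Not\Neg)$, hence $a\in{\cal T}_{\cal K}(E,I)$.

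Finally I would close by a routine induction on $i$ showing ${\cal T}^i_{\cal K}(\emptyset,I)={\cal T}^i_{{\cal K}^I}(\emptyset,I)$: the base case is $\emptyset=\emptyset$, and the induction step applies the operator identity at $E={\cal T}^i_{\cal K}(\emptyset,I)$, which is contained in $I$ because $I$ is a model of $\cal K$. Taking limits yields ${\cal T}^\alpha_{\cal K}(\emptyset,I)={\cal T}^\alpha_{{\cal K}^I}(\emptyset,I)$, and the lemma follows. I expect the only delicate point to be the $\supseteq$ direction of the operator identity: monotonicity of the negated dl-atoms is exactly what upgrades ``$I\not\models_O B$'' to the stronger ``$(E,I)\models_O\Not B$'', and for a nonmonotonic $B\in\Neg$ this can fail (some $F\subseteq I$ may satisfy $B$ while $I$ does not), so the hypothesis cannot be dropped --- witness the dl-program ${\cal K}_1$ of Example~\ref{exam:dl-program:3}, whose negated dl-atom is nonmonotonic and whose weakly and strongly well-supported answer sets genuinely differ.
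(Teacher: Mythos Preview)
Your proposal is correct and takes essentially the same approach as the paper: both establish ${\cal T}^n_{{\cal K}^I}(\emptyset,I)={\cal T}^n_{\cal K}(\emptyset,I)$ by induction, with the key step being that monotonicity of each $B\in\Neg$ upgrades $I\not\models_O B$ to $(E,I)\models_O\Not B$. Your packaging as a general operator identity ${\cal T}_{\cal K}(E,I)={\cal T}_{{\cal K}^I}(E,I)$ for all $E\subseteq I$ is slightly cleaner and handles both directions uniformly, whereas the paper invokes Corollary~2 of \cite{Yi-Dong:IJCAI:2011} for the strongly-implies-weakly direction and proves only the converse explicitly.
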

\begin{proof}
  The direction from right to left is implied by Corollary 2 of \cite{Yi-Dong:IJCAI:2011} which asserts this for arbitrary dl-programs. To show the other direction, it suffices to prove
  \[{\cal T}^n_{{\cal K}^I}(\emptyset, I)={\cal T}^n_{\cal K}(\emptyset, I)\]
 for every $n\ge 0$ by induction.

 Base: the case $n=0$ is obvious.

 Step: suppose the statement holds for $n$ and consider the  case $n+1$.
  For any atom $h\in\HB_P$, we have that
  $h \in {\cal T}^{n+1}_{{\cal K}^I}(\emptyset, I)$
  iff there exists a rule $r\in P$ such that
  \begin{itemize}
    \item $({\cal T}^n_{{\cal K}^I}(\emptyset, I), I)\models_O A$ for any $A\in \Pos(r)$, and
    \item $I\not\models_OB$ for any $B\in\Neg(r)$.
  \end{itemize}

  Recall that $I$ is a weakly well-supported answer set of $\cal K$, by which
  $({\cal T}^n_{{\cal K}^I}(\emptyset, I), I)\subseteq I$. It shows that (a) if $B$ is an atom then $I\not\models_OB$ iff $({\cal T}^n_{{\cal K}^I}(\emptyset, I), I)\not\models_O B$, and (b) if $B$ is a monotonic dl-atom then
  $I\not\models_OB$ iff $({\cal T}^n_{{\cal K}^I}(\emptyset, I), I)\not\models_O B$ as well.
  It follows that $h \in {\cal T}^{n+1}_{{\cal K}^I}(\emptyset, I)$ iff $h\in {\cal T}^{n+1}_{\cal K}(\emptyset, I)$ by inductive assumption.
\end{proof}

\begin{proposition}
  Let ${\cal K}=(O,P)$ be a dl-program such that, for every rule of
  the form (\ref{dl:rule:set}) in $P$, the dl-atom $B$ is monotonic if
  $B\in \Neg$, and $I\subseteq\HB_P$.
  % We have that
  Then $I$ is a strongly well-supported answer set of $\cal K$ iff
  $E=\Th(\tau(O)\cup I\cup \neg \overline I)$ is an extension of $\tau^*(\cal K)$.
\end{proposition}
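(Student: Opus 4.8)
The plan is to obtain this proposition directly from Theorem~\ref{thm:8} and Lemma~\ref{lem:14}; the only genuine work lies in the ``if'' direction, where one must first check that the interpretation $I$ is a model of $\cal K$. Note at the outset that the very mention of $\tau^*(\cal K)$ presupposes that $O$ is consistent, so the standing hypothesis of Theorem~\ref{thm:8} is in force, and that the condition imposed on dl-atoms occurring under $\Not$ is precisely the hypothesis of Lemma~\ref{lem:14}.

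For the ``only if'' direction I would argue as follows. If $I$ is a strongly well-supported answer set of $\cal K$, then $I$ is by definition a model of $\cal K$, and Lemma~\ref{lem:14} (whose hypothesis is satisfied) yields that $I$ is also a weakly well-supported answer set of $\cal K$. Theorem~\ref{thm:8}, applicable since $O$ is consistent and $I$ is a model of $\cal K$, then gives that $E=\Th(\tau(O)\cup I\cup\neg\overline I)$ is an extension of $\tau^*(\cal K)$.

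For the converse, suppose $E=\Th(\tau(O)\cup I\cup\neg\overline I)$ is an extension of $\tau^*(\cal K)$. Recall that every extension of $\tau^*(\cal K)$ has the form $\Th(\tau(O)\cup J\cup\neg\overline J)$ for some $J\subseteq\HB_P$, so $E$ is consistent and $E\cap\HB_P=I$. The key step is to show that $I$ is a model of $\cal K$. Suppose not: some rule $r=(h\lto\Pos,\Not\Neg)$ of $P$ has $I\models_O B$ for every $B\in\Pos$ and $I\not\models_O B'$ for every $B'\in\Neg$, yet $h\notin I$. Write $E=\bigcup_{i\ge0}E_i$ as in~(\ref{eq:Ei:default}) for $\tau^*(\cal K)$. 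Using Lemma~\ref{lem:9}(i) (and the remark extending it to atoms), ``$I\models_O B$'' becomes ``$E\vdash\tau(B)$'' and ``$I\not\models_O B'$'' becomes ``$\tau(B')\notin E$''; by compactness and the monotonicity of the sequence $(E_i)$, there is a finite stage $k$ with $E_k\vdash\tau(B)$ for all of the finitely many $B\in\Pos$. Since $\tau^*$ differs from $\tau$ only by the extra defaults $\frac{:\neg p(\vec c)}{\neg p(\vec c)}$, the rule default associated with $r$ has premise and justifications exactly these conditions, so it fires at stage $k$, whence $h\in E_{k+1}\subseteq E$ and therefore $h\in E\cap\HB_P=I$ --- a contradiction. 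Thus $I$ is a model of $\cal K$; now Theorem~\ref{thm:8} gives that $I$ is a weakly well-supported answer set of $\cal K$, and Lemma~\ref{lem:14} upgrades this to a strongly well-supported answer set.

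I expect the main obstacle to be exactly the model-ness verification in the converse direction: making the finite-stage firing argument precise, and correctly matching the premise and justifications of the rule defaults of $\tau^*(\cal K)$ against $\models_O$-satisfaction of rule bodies through Lemma~\ref{lem:9}. Everything else is a mechanical composition of the two cited results.
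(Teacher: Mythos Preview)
Your proposal is correct and follows the same strategy as the paper: chain Lemma~\ref{lem:14} with Theorem~\ref{thm:8}. The paper's own proof is in fact just these two lines, applied as a pair of iffs.

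The difference is that you take more care in the converse direction. Theorem~\ref{thm:8} has the hypothesis that $I$ be a model of $\cal K$, so strictly speaking one must verify this before invoking it when starting from ``$E$ is an extension of $\tau^*(\cal K)$''. The paper's proof elides this step; you supply it, and your firing argument (via Lemma~\ref{lem:9}(i), consistency of $E$, and $E\cap\HB_P=I$) is sound. So your version is, if anything, more complete than the paper's presentation, while using exactly the same ingredients.
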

\begin{proof}
  In terms of the definition of weakly and strong well-supported answer sets, it is obvious that\\
\quad  $I$ is a strongly well-supported answer set of $\cal K$\\
\quad  iff $I$ is a weakly well-supported answer set of $\cal K$ by Lemma \ref{lem:14}\\
\quad  iff $\Th(\tau(O)\cup I\cup \neg\overline I)$ is an extension of $\tau^*(\cal K)$ by Theorem \ref{thm:8}.
\end{proof}

At a first glance, in order to preserve the strongly well-supported
answer set semantics, one might suggest to ``shift" $\neg\tau(.)$ for
all dl-atoms from justification to the premise of a default. This does not work, as illustrated by the
dl-program ${\cal K}=(\emptyset, P)$ where $P=\{p(a)\lto\Not \DL[S\oplus p,S'](a)\}$. It is obvious that $\cal K$ has a strongly well-supported answer set $\{p(a)\}$. But according to the suggestion,
we would have the default theory $\Delta=(D,W)$ where $W=\tau(\emptyset)$ and $D$ consists of
\begin{align*}
  \frac{\neg((p(a)\supset S(a))\supset S'(a)):}{p(a)}, \qquad \frac{:\neg p(a)}{\neg p(a)}.
\end{align*}
Its unique extension is $\Th(\{\neg p(a)\}\cup\tau(\emptyset))$, which does not
correspond to any strongly well-supported answer set of $\cal K$. The
reader can further check the dl-program ${\cal K}_1$ in Example
\ref{exam:dl:program:1} and see that ``shifting" $\tau(.)$ for all
dl-atoms from premise to justification of a default does not work
under the weak answer set semantics either.

For general ontologies (consistent or inconsistent), we can slightly
modify the translation $\pi^*$
similarly as  $\tau$ to $\tau'$,   to obtain a
transformation ${\pi^*}'$ and derive analogous results  for it.

\begin{table}[t]
\centering

\caption{Translations from dl-programs with consistent ontologies to default theories}\label{table:1}

\bigskip

\begin{tabular}{l}
\begin{tabular}{l|c|c|c|c}
%  \hline% after \\: \hline or \cline{col1-col2} \cline{col3-col4} ...
                & WAS & SAS & WWAS & SWAS \\
  \hline
  Canonical dl-programs & $\sigma\cdot\tau$ & $\tau/\tau^{*}$ & $\tau^{*}$ & $\tau^{*}$ \\
  \hline
  Normal dl-programs    & $\sigma\cdot\pi\cdot\tau$ & $\pi\cdot(\tau/\tau^{*})$ & $\tau^{*}$ & -- \\
  \hline
  Arbitrary dl-programs & $\sigma\cdot\pi\cdot\tau$ & $\pi\cdot(\tau/\tau^{*})$ & $\tau^{*}$ & -- \\
  \hline
\end{tabular}
\\
{\footnotesize
\begin{tabular}{l}
~ \\
  --: unknown; WAS: weak answer
    sets;  SAS: strong answer sets; \\
 WWAS: weakly well-supported answer sets; SWAS: strongly well-supported answer sets.
\end{tabular}
}
\end{tabular}
\end{table}

Let us now summarize the translations in Table \ref{table:1}.
Note that all the translations
$\tau,\tau^*,\sigma$ and $\pi$ are faithful and modular, and the first
three are polynomial. In addition, $\pi$ is polynomial
 relative to the knowledge of the non-monotonic dl-atoms $\DL_P^{?}$, and thus
e.g.\ polynomial for normal dl-programs.
Table \ref{table:1} shows that, for canonical dl-programs with consistent ontologies, we have polynomial, faithful and modular translations for all the semantics, weak answer sets, strong answer sets, weakly well-supported answer sets and strongly well-supported answer sets. %both $\tau$ and $\tau^*$ preserve all the semantics, weak answer sets, strong answer sets, weakly well-supported answer sets and strongly well-supported answer sets, in a polynomial, faithful and modular manner.

In addition, under weak answer set and weakly well-supported answer set semantics, all the translations are polynomial, faithful and modular as well. One should note that, for normal dl-programs, the translation is also polynomial, faithful and modular.   There are two unsolved problems, both involving the question whether there exist translations from dl-programs to default theories preserving strongly well-supported answer sets.
In Table~\ref{table:1},  it is assumed that dl-programs have consistent
ontologies. To remove this assumption, it is sufficient to replace $\tau$ (resp., $\tau^*$) with $\tau'$ (resp., $\tau^{*'}$).

\comment{
\subsection{Default answer sets}
The previous two translations from dl-programs to default theories
inspire a new semantics for dl-programs. Formally, for a dl-program
${\cal K}=(O,P)$ and $I\subseteq \HB_P$, we call $I$ a {\em default
answer set of $\cal K$ by $\tau$} if $O$ is consistent and
$\tau(\cal K)$ has an extension $E$ such that $I=E\cap\HB_P$; and
$I$ is a {\em default answer set of $\cal K$ by $\tau'$} if
$\tau'(\cal K)$ has an extension $E$ such that $I=E\cap\HB_P$.

For instance, by Example \ref{exam:5} we know that the dl-programs
${\cal K}_1$ and  ${\cal K}_2$ of Example \ref{exam:dl:program:1}
have the same unique default answer set $\emptyset$ by $\tau$ (or by
$\tau'$). As we know that the dl-program ${\cal K}_2$ has two strong
answer sets $\emptyset$ and $\{p(a)\}$, thus a strong answer set of
a dl-program is possibly not a default answer set of the dl-program
by $\tau$ (or by $\tau'$). The following theorem shows the other
direction does hold.
\begin{theorem} \label{thm:5:in}
  Let ${\cal K}=(O,P)$ be an (arbitrary)  dl-program and $I\subseteq \HB_P$.
  If $I$ is a default answer set of $\cal K$ by $\tau$ (or by $\tau'$) then $I$ is a strong answer set of $\cal K$.
\end{theorem}

Furthermore, the inverse of the theorem does hold for
canonical dl-programs by Theorem \ref{thm:3:in}.
\begin{corollary}
  Let ${\cal K}$ be a canonical dl-program and $I\subseteq\HB_P$.
  \begin{itemize}
    \item If $O$ is consistent, then
    % we have that
    $I$ is a strong answer set of $\cal K$ if and only if $I$ is a default answer set of $\cal K$ by $\tau$.
    \item $I$ is a strong answer set of $\cal K$ if and only if $I$ is a
    default answer set of $\cal K$ by $\tau'$.
  \end{itemize}
\end{corollary}

The following proposition shows that the two definitions of default
answer sets for dl-programs coincide if knowledge bases are
consistent.
\begin{proposition}
Let ${\cal K}=(O,P)$ be a dl-program and $I\subseteq\HB_P$. If $O$
is consistent then
%we have that
$I$ is a default answer set of $\cal K$ by $\tau$ if and only if $I$ is a default answer set of $\cal K$ by $\tau'$.
\end{proposition}

It is known that both  strong and weak answer sets may have
``self-supports" \cite{DBLP:journals/ai/EiterILST08,Yisong:ICLP:2010}.
To characterize the notion of ``self-support", we
proposed the concept of circular justification
\cite{Yisong:ICLP:2010}. Formally, let ${\cal K}=(O,P)$ be a
dl-program and $I\subseteq\HB_P$ a nonempty supported model of $\cal
K$. $I$ is {\em circularly justified} if there exists a nonempty
subset $M$ of $I$ such that $I\setminus M$ does not satisfy any body
of rules in $P$ whose heads are in $M$ and bodies are satisfied by
$I$. Otherwise, $I$ is {\em noncircular}.

Recall that $I=\{p(a)\}$ is a strong answer set of ${\cal K}_2$ in
Example \ref{exam:dl:program:1}. Since $I$ is a supported model of
${\cal K}_2$ and $I\models_O \DL[c\oplus p, b\ominus q;c\sqcap \neg
b](a)$, but $\emptyset\not\models_O \DL[c\oplus p, b\ominus q;c\sqcap
\neg b](a)$ where $O=\emptyset$, it implies that $I$ is circularly
justified. It is clear that the other strong answer $\emptyset$ of
${\cal K}_2$ is noncircular. In this sense, default answer sets
exclude some circular justifications. However, some default answer
sets are still circular. For instance, recall the default theory
$\tau({\cal K}_1)$ in Example \ref{exam:DL:5}. Since $\Th(\{p(a)\})$
is an extension of $\tau({\cal K}_1)$, $\{p(a)\}$ is thus a default
answer set of ${\cal K}_1$ in Example \ref{exam:dl-program:3}.
However, we know that $\{p(a)\}$ is a supported model of ${\cal
K}_1$ and $\{p(a)\}\models_O``\Not \DL[c\ominus p;\neg c](a)"$, but
$\emptyset\not\models_O``\Not \DL[c\ominus p;\neg c](a)"$ where
$O=\emptyset$. Thus $\{p(a)\}$ is circular. In this sense, default
answer sets permit some circular justification. These considerations
cast some valuable insights into
the semantics of dl-programs.%

}

\comment{
\jiacomment { A new subsection on equality reasoning. }
\subsection{Adding reasoning with equality}

The original answer set semantics of dl-programs are defined with the intention that equality reasoning in ontology does not carry over to reasoning with rules.
For example, the dl-program
$${\cal K } = ( \{a \approx b\}, \{p(a) \leftarrow \Not p(b),~p(b) \leftarrow \Not p(a)\})$$
 has two answer sets, $\{p(a)\}$ and $\{p(b)\}$, neither of which carries equality reasoning into rules.
One may argue that it is sometimes desirable to extend equality reasoning into rules.
For example, if in ontology
$(john \approx johnny)$ is true then the atoms $p(john)$ and $p(johnny)$ should be interpreted the same in any answer set.

A default logic representation of dl-programs can conveniently add equality reasoning into rules, in a semantical manner.
Earlier, we have shown that in order to capture various semantics of a dl-program ${\cal K} = (O,P)$ by a default theory $(D,W)$, we can assume UNA and a congruence relation for concepts and roles encoded in $W$.
Now, given such a default theory, to support equality reasoning in rules, we can simply add axioms for a congruence relation into $W$, for the predicates and constants that appear in $P$ (note that we have finitely many such predicates and constants). We in fact can choose which predicates over which constants should hold in this congruence relation.

It is interesting to contrast the above enhancement of equality reasoning with the one
suggested in \cite{DBLP:journals/ai/EiterILST08}. Given a dl-program  ${\cal K} = (O,P)$,
since an axiom for a congruence relation involves both equality $\approx$, which is part of $O$, and predicates only appearing in $P$,
  }

\section{Related Work}\label{Sec:Related-work}

Recently, there are some extensive interests in the FLP semantics for various kinds of logic programs \cite{DBLP:Faber:AIJ:2011,Lee:IJCAI:2011,DBLP:journals/ai/Truszczynski10}. Also, in formulating the well-founded semantics for dl-programs, Eiter {\em et al.} proposed a method to eliminate the constraint operator from dl-programs \cite{Eiter:TOCL2011}.
  Moreover, there exist a number of formalisms integrating ontology and (nonmonotonic) rules for the semantics web that can somehow be used to embed dl-programs.
In this section we will relate our work with these approaches.

\subsection{FLP-answer sets of dl-programs}

Dl-programs have been extended to HEX programs that combine
answer set programs with higher-order atoms and external atoms \cite{DBLP:Eiter:IJCAI05B}. In particular,
external atoms can refer, as dl-atoms in dl-programs, to concepts belonging to a classical
knowledge base or an ontology. In such a case one can compare the semantics of the HEX
program with that of the corresponding dl-program. The semantics of HEX programs is based on
the notion of FLP-reduct \cite{DBLP:Faber:JELIA:2004}. We also note that the semantics of dl-programs has been investigated
from the perspective of the quantified logic of here-and-there \cite{DBLP:Fink:JELIA:2010}.
For comparison purpose, we rephrase the FLP-answer set
semantics of dl-programs according to
\cite{DBLP:Eiter:IJCAI05B} in our setting.

Let ${\cal K}=(O,P)$ be a dl-program and $I\subseteq\HB_P$. The {\em FLP-reduct} of $\cal K$ relative to $I$,
written ${\cal K}^{f,I}$, is the dl-program $(O,f\!P_O^I)$ where $f\!P_O^I$ is the set of all rules
of $P$ whose bodies are satisfied by $I$ relative to $O$. An interpretation $I$ is an
FLP-answer set of a dl-program $\cal K$ if $I$ is a minimal model of $f\!P_O^I$  (relative to $O$).
It has been shown that, for a dl-program ${\cal K}=(O,P)$, if $P$ mentions no nonmonotonic dl-atoms, i.e.,
$\DL_P^?=\emptyset$, then the FLP-answer sets
of $\cal K$ coincide with the strong answer sets of $\cal K$
(cf. Theorem 5 of \cite{DBLP:Eiter:IJCAI05B}). Moreover, following the
approach on \cite{Yisong:ICLP:2010},
it can be shown that the FLP-answer
sets of a dl-program are exactly the minimal strong answer sets of the dl-program.
%\cite{Yisong:TCS:2010}.

Note that, given a dl-program ${\cal K}=(O,P)$, there are no nonmonotonic dl-atoms in $\pi({\cal K})$. Thus
the strong answer sets of $\pi({\cal K})$ are exactly the FLP-answer sets of $\pi({\cal K})$.
In general however, since FLP-answer sets are minimal strong answer sets and not vice versa, and
$\pi$ preserves strong answer sets, it is clear that
$\pi$ does not preserve the FLP-answer sets of dl-programs. This can be seen from Example \ref{exam:dl-program:3}.
This fact reinforces our argument that there is no transformation to
eliminate the constraint operator from nonmonotonic dl-atoms such that the transformation preserves both strong answer sets and
FLP-answer sets of dl-programs. It is still open to us whether there
is a translation to eliminate the constraint operator from
nonmonotonic dl-atoms while preserving the FLP-answer sets of
dl-programs.
%\tecomment{I think this is strongly worded; under UNA /SNA plus
%  congruence, the TOCL elimination should work for the FLP reduct as well? // It is not the case. See the  example at page 14. --Yisong.// TE: Right, I read this but forgot to delete the comment}

As illustrated by Example \ref{exam:6}, the translations $\tau$ and $\tau^*$ from dl-programs
into default theories do not preserve FLP-answer sets. In addition, the translation $\tau$ may induce some extensions that correspond neither to
strong answer sets nor to FLP-answer sets. Recall that, for dl-programs mentioning no nonmonotonic dl-atoms, the strong
answer sets coincide with the FLP-answer sets. By Theorem \ref{thm:6}, the following Corollary is obvious.
\begin{corollary}
  Let ${\cal K}=(O,P)$ be a dl-program such that $\DL_P^?=\emptyset$
  and $I\subseteq\HB_P$. Then
  %we have that
  $I$ is an FLP-answer set of $\cal K$ if and only if $\Th(I)$ is an extension of $\tau'(\cal K)$.
\end{corollary}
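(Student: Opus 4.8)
The plan is to obtain this as an immediate composition of two facts already in hand, with no new machinery required. First I would invoke the coincidence recalled just above, originally established (in a slightly different setting) as Theorem~5 of \cite{DBLP:Eiter:IJCAI05B}: for a dl-program ${\cal K}=(O,P)$ with $\DL_P^?=\emptyset$, the FLP-answer sets of $\cal K$ are exactly the strong answer sets of $\cal K$. Concretely, under the hypothesis that $\cal K$ mentions no nonmonotonic dl-atoms, $I$ is an FLP-answer set of $\cal K$ if and only if $I$ is a strong answer set of $\cal K$. Second, I would apply Theorem~\ref{thm:6}, which states that under the same hypothesis $\DL_P^?=\emptyset$ (and with no assumption on the consistency of $O$, since we use $\tau'$ rather than $\tau$), $I$ is a strong answer set of $\cal K$ if and only if $E=\Th(I)$ is an extension of $\tau'(\cal K)$. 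Chaining these two biconditionals yields that $I$ is an FLP-answer set of $\cal K$ if and only if $I$ is a strong answer set of $\cal K$ if and only if $\Th(I)$ is an extension of $\tau'(\cal K)$, which is the claim.

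The only point deserving a line of justification is that the cited coincidence must be read with respect to the conventions of this paper --- ground atoms rather than literals in rule bodies, the Herbrand base $\HB_P$ as defined here, and our strict notion of $\DL_P^?$ as precisely the set of nonmonotonic dl-atoms of $\cal K$. These conventions restrict the class of programs and the class of candidate interpretations uniformly on both the FLP side and the strong-answer-set side, so the equivalence carries over verbatim; alternatively, one may combine the observation that FLP-answer sets are exactly the minimal strong answer sets with the fact that, when $\DL_P^?=\emptyset$, all strong answer sets of $\cal K$ are already minimal under set inclusion (Theorem~4.13 of \cite{DBLP:journals/ai/EiterILST08}), which again gives FLP-answer sets $=$ strong answer sets. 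I expect no genuine obstacle here: the corollary is a bookkeeping consequence of Theorem~\ref{thm:6} together with a previously recorded identification of the two semantics, and the proof amounts to little more than this two-step chain; the write-up would state the two ingredients explicitly and conclude in one sentence.
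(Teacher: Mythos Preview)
Your proposal is correct and follows exactly the paper's own route: the paper simply remarks that for dl-programs with $\DL_P^?=\emptyset$ the FLP-answer sets coincide with the strong answer sets (cf.\ Theorem~5 of \cite{DBLP:Eiter:IJCAI05B}), and then declares the corollary ``obvious'' by Theorem~\ref{thm:6}. Your additional care about conventions and the alternative argument via minimality (Theorem~4.13 of \cite{DBLP:journals/ai/EiterILST08}) is welcome but not strictly needed.
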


Since the constraint operator is the only that causes a dl-atom to be nonmonotonic, it follows that for dl-programs without the constraint operator, the strong answer set semantics and the FLP-answer set semantics can both be captured by default logic via a polynomial time transformation.

\subsection{Eliminating the constraint operator for well-founded semantics}
To the best of our knowledge, there is only one proposal to remove the
constraint operator in dl-programs, for
the definition of a well-founded semantics for dl-programs \cite{Eiter:TOCL2011}.  In fact, our translation draws ideas from theirs in order to preserve strong answer sets of dl-programs. However, there are subtle differences which make them significantly different in behaviors.
Let us denote their
transformation by $\pi'$. Given a dl-program ${\cal K}=(O,P)$ and
a dl-rule $r\in P$, $\pi'(r)$ consists of
\begin{enumerate}[(1)]
  \item if $S\ominus p$ occurs in a dl-atom of $r$, then $\pi'(r)$ includes
  the instantiated rules obtained from
    \begin{align*}
        \overline p(\vec X)& \lto\Not \DL[S'\oplus p;S'](\vec X).%,\\
        %&\lto p(\vec X),\overline p(\vec X).
    \end{align*}\comment{
\tecomment{I think the second rule (constraint is redundant). // Yes it is redundant. -Yisong}
}where $S'$ is a fresh concept (resp., role) name if $S$ is a concept (resp.,
    role) name, $\vec X$ is a tuple of distinct variables matching the
    arity of $p$,
  \item $\pi'(r)$ includes the rule obtained from $r$ by replacing each   ``$S\ominus p$" with ``$\neg S\oplus\overline p$"\footnote{
  It is ``$S\odot \overline p$" according to \cite{Eiter:TOCL2011} which is equivalent to ``$\neg S\oplus \overline p$".}. Let us denote by $\pi'(A)$ the result obtained from $A$ by replacing every $S\ominus p$ with $\neg S\oplus\overline p$ where $A$ is an atom or dl-atom.
\end{enumerate}
Similarly, $\pi'(\mathcal K)=(O,\pi'(P))$ where
$\pi'(P)=\bigcup_{r\in P}\pi'(r)$.
Let us consider the dl-program ${\cal K}_2$ in
Example \ref{exam:dl:program:1}, $\pi'(P_2)$ consists of
\begin{align*}
   p(a)& \lto \DL[S\oplus p, \neg S'\oplus\overline q;S\sqcap \neg S'](a), \\
   \overline q(a) & \lto \Not\DL[S''\oplus q;S''](a).%,  \\
   %& \lto q(a),\overline q(a).
\end{align*}
It is not difficult to verify that $\pi'(\mathcal K_2)$ has a
unique strong answer set $\{\overline q(a)\}$.
Thus, $\pi'$ loses a strong answer set, as $\{p(a)\}$ is a
strong answer set of $\mathcal K_2$ but there is no corresponding
strong answer set for $\pi'(\mathcal K_2)$.
\comment{
\tecomment{As said, this example only works if one moves to a setting where the
set $\DL_P^?$ is changed in passing from $P$ to $\pi(P)$. To keep it
artificially the same, one can use ``double negation'' for rewritten
atoms as on page 22(?) // Now we do not assume $\DL_P^?$ is given as a parameter, instead it is uniquely determined by that dl-program. -Yisong}
}

The translation $\pi'$ may even remove FLP-answer sets, as illustrated by the next example.
Consider the dl-program ${\cal K}$ in Example \ref{exam:6}.
It is not difficult to verify that the unique FLP-answer set of $\cal K$ is $\{p(a),q(a)\}$. However we have
$\pi'({\cal K})= (\emptyset,\pi'(P))$ where $\pi'(P)$ consists of
\begin{align*}
  p(a) & \lto q(a),\\
  q(a) & \lto \DL[S_1\oplus p, \neg S_2\oplus \overline q; S_1\sqcup \neg S_2](a),\\
  \overline q(a) & \lto \Not\DL[S'\oplus q, S'](a).%,\\
%  & \lto q(a),\overline q(a).
\end{align*}
Interested readers can check that $\pi'(\cal K)$ has no FLP-answer sets. Note that since any FLP-answer set is a strong answer set, this is another example where a strong answer set is removed by the translation.

%\jiacomment{ The rest of this subsection is all rewritten.}

The discussion above leads to a related question \-- whether the translation $\pi'$ introduces extra strong answer sets, for a given dl-program ${\cal K}  = (O,P)$. Note that in our translation $\pi$, for a predicate $p$ we use predicate $\pi_p$ to denote the opposite of $p$, while in the translation $\pi'$, the symbol $\overline p$ is used. After reconciling this name difference, we see that the rule
$\overline p(\vec X) \lto\Not \DL[S'\oplus p;S'](\vec X)$ in the translation $\pi'$, where $S'$ is a fresh concept or role name, is equivalent to rule (\ref{trans:pi:3}) in the translation $\pi$. Then, the only difference is to apply ``double negation" in the case of $\pi$ to positive nonmonotonic dl-atoms.  Given a dl-program ${\cal K}$, suppose an interpretation $I$ is a strong answer set of $\pi'({\cal K})$. Then $I$ is the least model of $\pi'({\cal K})^{s,I}$. It is not difficult to show that, in the fixpoint construction,
for any atom $p \in \HB_{\pi' (P)}$,
$p$ is derivable using $\pi'({\cal K})^{s,I}$ if and only if $p$ is derivable using $\pi({\cal K})^{s,I}$. Therefore, $I$, possibly plus some atoms in the form of $\pi_A$, yields a strong answer set of $\pi({\cal K})^{s,I}$.

%As the translation $\pi'$ loses strong answer sets and $\pi$ preserves strong answer sets, $\pi'$ does not introduce any extra strong answer set.

%\jiacomment{This is only a sketch. Please check. // A sketched formal proof is given below. --Yisong }

\begin{proposition}
Let ${\cal K} = (O,P)$ be a dl-program and $I \subseteq \HB_{\pi'(P)}$ a strong answer set of  $\pi'({\cal K})$.
Then $I\cap \HB_P$ is a strong answer set of $\cal K$.
\label{strictlyStronger}
\end{proposition}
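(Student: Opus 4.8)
I would prove the statement directly, showing that $I_0:=I\cap\HB_P$ satisfies $I_0=\lfp(\gamma_{{\cal K}^{s,I_0}})$, i.e.\ that $I_0$ is a strong answer set of $\cal K$. Since $\pi'(P)$ contains no constraint operator it contains no nonmonotonic dl-atom, so $\DL_{\pi'(P)}^{?}=\emptyset$, $I=\lfp(\gamma_{\pi'({\cal K})^{s,I}})$, and $I$ is a model of $\pi'({\cal K})$ (a standard property of strong answer sets, also immediate from $I=\gamma_{\pi'({\cal K})^{s,I}}(I)$). Write $\pi'(B)$ for the dl-atom obtained from a dl-atom $B$ by replacing every ``$S\ominus p$'' by ``$\neg S\oplus\overline p$''. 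Two preliminary facts. (1)~As $S'$ does not occur in $O$, in the strong reduct the rule $\overline p(\vec X)\lto\Not\DL[S'\oplus p;S'](\vec X)$ becomes the fact $\overline p(\vec c)\lto$ precisely when $p(\vec c)\notin I_0$, so $\overline p(\vec c)\in I$ iff $p(\vec c)\notin I_0$. (2)~A \emph{bridging lemma}: for any dl-atom $B$ of $P$ and any $J$ with ``$\overline p(\vec c)\in J$ iff $p(\vec c)\notin J$'' for every relevant $p$, one has $J\models_O\pi'(B)$ iff $J\cap\HB_P\models_O B$, since the assertions that ``$S\ominus p$'' adds under $J\cap\HB_P$ are $\{\neg S(\vec c):p(\vec c)\notin J\cap\HB_P\}$, the assertions that ``$\neg S\oplus\overline p$'' adds under $J$ are $\{\neg S(\vec c):\overline p(\vec c)\in J\}$, and these coincide, while all other input operators only involve predicates of $\HB_P$. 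By~(1) the interpretation $I$ satisfies the hypothesis of~(2), so $I\models_O\pi'(B)$ iff $I_0\models_O B$ for every dl-atom $B$ of $P$.

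For the inclusion $\lfp(\gamma_{{\cal K}^{s,I_0}})\subseteq I_0$ it suffices to check that $I_0$ is a model of the positive dl-program $sP_O^{I_0}$, as the least model of a positive dl-program is contained in every model. A rule of $sP_O^{I_0}$ has the form $A\lto\Pos'$, coming from a rule $r=(A\lto B_1,\dots,B_m,\Not B_{m+1},\dots,\Not B_n)$ of $P$ that survives the reduct --- i.e.\ $I_0\models_O B_i$ for every nonmonotonic $B_i$ with $i\le m$, and $I_0\not\models_O B_j$ for every $j>m$ --- with $\Pos'$ the set of $B_i$ ($i\le m$) that are atoms or monotonic dl-atoms. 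Assuming $I_0\models_O B$ for all $B\in\Pos'$, together with the survival conditions this yields $I_0\models_O B_i$ for all $i\le m$ and $I_0\not\models_O B_j$ for all $j>m$; by the bridging lemma, $I\models_O\pi'(B_i)$ for $i\le m$ and $I\not\models_O\pi'(B_j)$ for $j>m$, so $I$ satisfies the body of the (unique) main rule of $\pi'(r)$. Since $I$ is a model of $\pi'({\cal K})$, we get $A\in I$, hence $A\in I_0$.

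For the reverse inclusion $I_0\subseteq\lfp(\gamma_{{\cal K}^{s,I_0}})$ I would compare the two immediate-consequence operators. Put $\phi(X)=X\cup\{\overline p(\vec c):p(\vec c)\notin I_0\}$ for $X\subseteq\HB_P$ (so $\phi(I_0)=I$ by~(1)), and set $T'(X)=\gamma_{\pi'({\cal K})^{s,I}}(\phi(X))\cap\HB_P$ and $T(X)=\gamma_{{\cal K}^{s,I_0}}(X)$; both are monotone. Since the $\overline p$-facts are derived already at the first iteration and $\phi$ commutes with $\gamma_{\pi'({\cal K})^{s,I}}$ thereafter, a short bookkeeping argument gives $\lfp(T')=\lfp(\gamma_{\pi'({\cal K})^{s,I}})\cap\HB_P=I_0$ (in particular $T'(I_0)=I_0$). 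The crux is that $T'(X)\subseteq T(X)$ whenever $X\subseteq I_0$: if $A\in T'(X)$ is witnessed by $r\in P$, the conditions on the default-negated $B_j$ ($j>m$) are the same for both operators by the bridging lemma, so it is enough to check, for each positive $B_i$ of $r$, that $\phi(X)\models_O\pi'(B_i)$ entails ``$X\models_O B_i$'' when $B_i$ is an atom or a monotonic dl-atom and ``$I_0\models_O B_i$'' when $B_i$ is nonmonotonic. For atoms this is immediate. For a dl-atom $B_i$, under $\phi(X)$ the rewritten operators ``$\neg S\oplus\overline p$'' add exactly $\{\neg S(\vec c):p(\vec c)\notin I_0\}$ while the $\oplus$/$\odot$ operators add the assertions coming from $X$; since $X\subseteq I_0$, replacing ``assertions from $X$'' by ``assertions from $I_0$'' (nonmonotonic case) or adjoining the further assertions $\{\neg S(\vec c):p(\vec c)\notin X\text{ but }p(\vec c)\in I_0\}$ (monotonic case with a constraint operator) only enlarges the premise set, and enlarging premises preserves the entailment of $Q(\vec t)$. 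Hence $T'(X)\subseteq T(X)$ on $2^{I_0}$, and by induction (each $(T')^k(\emptyset)\subseteq\lfp(T')=I_0$) we get $(T')^k(\emptyset)\subseteq T^k(\emptyset)$ for all $k$, so $I_0=\lfp(T')\subseteq\lfp(T)=\lfp(\gamma_{{\cal K}^{s,I_0}})$. Combined with the previous paragraph, $I_0=\lfp(\gamma_{{\cal K}^{s,I_0}})$, i.e.\ $I\cap\HB_P$ is a strong answer set of $\cal K$.

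The step I expect to be most delicate is the operator comparison in the last paragraph. The two reduct operators do \emph{not} coincide off their least fixpoints: $\pi'$ keeps each positive occurrence of a (now rewritten, monotonic) nonmonotonic dl-atom in the rule body, to be re-evaluated against the growing fixpoint, while $\gamma_{{\cal K}^{s,I_0}}$ tests such an atom once against $I_0$ and deletes it; likewise $\pi'$ rewrites constraint operators even inside monotonic dl-atoms, which $\cal K$ does not. So the argument must really go through the one-sided inequality $T'\le T$ on $2^{I_0}$ together with $\lfp(T')=I_0$, rather than any naive ``the operators agree''. An alternative organization is to first transport $I$ to a strong answer set $I^{*}$ of $\pi({\cal K})$ by adjoining the double-negation atoms $\pi_B$ for the positive nonmonotonic dl-atoms $B$ of $P$ with $I_0\not\models_O B$ and then appealing to Theorem~\ref{thm:delete:ominus:s}(ii); but verifying that $I^{*}$ is a strong answer set of $\pi({\cal K})$ requires essentially the same bookkeeping.
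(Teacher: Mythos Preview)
Your argument is correct and follows the same route as the paper's: both inclusions are obtained by relating the iterates of the two strong-reduct operators through the bridging correspondence between $\pi'(B)$ and $B$ (the paper invokes Lemma~\ref{lem:main:1} for this after identifying $\overline p$ with $\pi_p$), and your explicit operator comparison $T'\le T$ on $2^{I_0}$ is just a more carefully packaged version of the paper's direct induction on $k$. One cosmetic point: your fact~(1) and the identity $\phi(I_0)=I$ tacitly assume $O$ is consistent --- if $O$ is inconsistent then $\DL[S'\oplus p;S'](\vec c)$ is satisfied by every interpretation, no $\overline p$-rule survives the reduct, and~(1) fails as stated; but in that case every dl-atom is trivially satisfied on both sides and the whole argument collapses to the ordinary-atom case, so the gap is easily patched.
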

\begin{proof}
  Let $I^*=I\cap\HB_P$, and we prove $I^*$ is a strong answer set of $\cal K$. It is completed by showing $I^*=\lfp(\gamma_{{\cal K}^{s,I^*}})$.

  $(\subseteq)$ We prove the direction by showing $\HB_P\cap \gamma^k_{[\pi'({\cal K})]^{s,I}}\subseteq \lfp(\gamma_{{\cal K}^{s,I^*}})$ for any $k\ge 0$.

  Base: It is trivial for $k=0$.

  Step: Suppose it holds for the case $k$. Let us consider the case $k+1$. For any atom $p$ in $\HB_P$ such that $p\in\gamma^{k+1}_{[\pi'({\cal K})]^{s,I}}$, there exists a rule $(p\lto \Pos, \Not\Neg)$ in $P$ such that
  \begin{itemize}
    \item $\gamma^k_{[\pi'({\cal K})]^{s,I}}\models_O\pi'(A)$ for any $A\in\Pos$, and
    \item $I\not\models_O\pi'(B)$ for any $B\in\Neg$.
  \end{itemize}
  It follows that
  \begin{itemize}
    \item If $A$ is an atom or monotonic dl-atom then $\HB_P\cap \gamma^k_{[\pi'({\cal K})]^{s,I}}\models_OA$ by Lemma \ref{lem:main:1}. It follows   $\lfp(\gamma_{{\cal K}^{s,I^*}})\models_OA$ by the inductive assumption. By (ii) of Lemma \ref{lem:main:1}, if $A$ is nonmonotonic then we have $I^*\models_OA$  since $\pi'(A)$ is monotonic, and $\gamma^k_{[\pi'({\cal K})]^{s,I}}\models_O\pi'(A)$ implies $I\models_O\pi'(A)$.
    \item $I^*\not\models_OB$ for any $B\in\Neg$ by Lemma \ref{lem:main:1}.
  \end{itemize}
  Thus we have that $p\in \lfp(\gamma_{{\cal K}^{s,I^*}})$.

  $(\supseteq)$ We prove this direction by showing that  $\gamma^k_{{\cal K}^{s,I^*}}\subseteq   I$ for any $k\ge 0$.

  Base: It is trivial for $k=0$.

  Step: Suppose it holds for the case $k$. Let us consider the case $k+1$. For any atom $p\in \gamma^{k+1}_{{\cal K}^{s,I^*}}$, there exists a rule $(p\lto\Pos,\Not\Neg)$ in $P$ such that
  \begin{itemize}
    \item $\gamma^{k}_{{\cal K}^{s,I^*}}\models_OA$ for any atom and monotonic dl-atom $A\in\Pos$, and $I^*\models_OA$ for any nonmonotonic dl-atom in $\Pos$, and
    \item $I^*\not\models_OB$ for any $B\in\Neg$.
  \end{itemize}
  It follows that
  \begin{itemize}
    \item In the case $A$ is an atom or monotonic dl-atom, we have $I\models_OA$ by the inductive assumption, by which $I\models_O\pi'(A)$ in terms of
        Lemma \ref{lem:main:1}. If $A$ is nonmonotonic then $I\models_O\pi'(A)$ by $I^*\models_OA$.
    \item By Lemma \ref{lem:main:1}, we have $I\not\models_O\pi'(B)$.
  \end{itemize}
  Consequently we have $p\in I$.
\end{proof}

Another interesting observation is that, for the two removed strong answer sets in the examples above, neither is well-supported in the sense of \cite{Yi-Dong:IJCAI:2011}, as neither possesses a level mapping justification.
One would like to know whether $\pi'$ removes all answer sets that are not well-supported. The answer is no, as evidenced by the next example.
Consider the dl-program ${\cal K}_1$ of Example \ref{exam:dl-program:3}, i.e.,
$\mathcal K_1=(\emptyset,P_1)$ where $P_1$ consists of $p(a)\lto\Not \DL[S\ominus p;\neg S](a).$
It is not difficult to see that ${\cal K}_1$ has two strong answer sets,
$\emptyset$ and $\{p(a)\}$, and the latter is not well-supported.
Now $\pi'({\cal K}_1)=(\emptyset, \pi'(P_1))$
where $\pi'(P_1)$ consists of
\begin{align*}
   p(a) & \lto \Not \DL[\neg S\oplus \overline{p};\neg S](a),\\
   \overline p(a) & \lto \Not \DL[S'\oplus p,S'](a).%,\\
%  & \lto p(a),\overline p(a).
\end{align*}
It can be verified that both $\{\overline p(a)\}$ and $\{p(a)\}$ are strong answer sets of $\pi'({\cal K}_1)$.
That is, the strong answer set $\{p(a)\}$ that is not well-supported is retained by $\pi'$. Therefore, the translation $\pi'$ cannot be used as a means to interpret a dl-program under the strongly well-supported semantics.

Continuing the above example by considering the FLP-semantics, we note that $\emptyset$ is the unique
FLP-answer set of ${\cal K}_1$, and the reader can verify that both $\{\overline p(a)\}$ and $\{p(a)\}$ are FLP-answer sets of $\pi'({\cal K}_1)$.
While $\{\overline p(a)\}$ corresponds to the FLP-answer set $\emptyset$ of ${\cal K}_1$
when restricted to $\HB_{P_1}$, the FLP-answer set $\{p(a)\}$ of $\pi'({\cal K}_1)$ has no
corresponding FLP-answer set of ${\cal K}_1$. This shows that extra FLP-answer sets may be introduced by $\pi'$.

The next example shows that the translation $\pi'$ may remove weakly well-supported answer sets.
Recall the dl-program
${\cal K}=(\emptyset,P)$ where $P = \{p(a)\lto \DL[S\odot p, S\ominus p;\neg S](a)\}$. It can be verified
that $\{p(a)\}$ is a weakly well-supported answer set of ${\cal K}$ (it is also strongly well-supported simply because there is no negative dl-atom in the rule).
The $\pi'$ translation results in
$$
\begin{array}{ll}
p(a) \lto \DL[S\odot p, \neg S \oplus \overline p; \neg S] (a),\\
\overline p(a) \lto \Not \DL[S' \oplus p, S'] (a).
\end{array}
$$
It is clear that $\pi'({\cal K})$ has no strong answer sets. Thus, the translation $\pi'$ is too strong for the weakly well-supported semantics.

To summarize, the translation $\pi'$ defined for the well-founded semantics of dl-programs is too strong for the strong answer set semantics, and for the FLP semantics and well-supported semantics, it is sometimes too strong and sometimes too weak.

%Recall that when translating dl-programs into default theories, our approach in dealing with dl-atoms is similar to
%the one which is used to translating canonical dl-programs into MKNF knowledge bases \cite{Motik:JACM:2010}. Formally, they
%translate $S\oplus p$ and $S\odot p$ into first-order sentences $\forall\vec x . p(\vec x)\supset S(\vec x)$
%and $\forall \vec x.p(\vec x)\supset \neg S(\vec x)$ respectively. However, as demonstrated by the following example,
%if we adopt their approach then it may induce some extensions which correspond to no strong answer sets of original dl-programs.
%\begin{example}
%  Let ${\cal K}=(O,P)$ be a dl-program where $O=\{S(a')\}$, $a'$ be an individual not in $\cal C$
%  and $P=\{p(a)\lto \DL[\neg S\oplus p, S\ominus p; S'](a)\}$. Note that $\HB_P=\{p(a)\}$ and the unique strong answer
%  set of ${\cal K}$ is $\emptyset$.
%  If we adopt  Motik and Rosati's approach of dealing dl-atoms then
%  we have the corresponding default theory $\Delta=(D,W)$ where $W=O$ and $D$ consists of
%  \begin{align*}
%    \frac{(\forall x.p(x)\supset \neg S(x))\wedge(\forall x.\neg p(x)\supset\neg S(x))\supset S'(a):}{p(a)}.
%  \end{align*}
%  Since $(\forall x.p(x)\supset \neg S(x))\wedge(\forall x.\neg p(x)\supset \neg S(x))$ is equivalent to $\forall x.\neg S(x)$, we have
%  that the unique extension of $\Delta$ is $\Th(\{p(a),S(a')\})$, when restricted to $\HB_P$, it is $\{p(a)\}$.
%  Thus $\Delta$ has no extension which corresponds to the strong answer set $\emptyset$ of $\cal K$.
%\end{example}

\subsection{Other embedding approaches}
As to embedding dl-programs into other formalisms that integrate ontology and (nonmonotonic) rules for the semantic web, there are a number of proposals, such as first-order autoepistemic logic \cite{DBLP:conf/kr/BruijnET08},
MKNF knowledge base \cite{Motik:JACM:2010}, quantified equilibrium
logic \cite{DBLP:Fink:JELIA:2010}, and first-order  stable logic programs \cite{DBLP:AIJ:Ferraris:2011,DBLP:Lee:LPNMR:2011}.
In addition to the differences between default logic and those formalisms,\footnote{A discussion of these differences is out of the scope of this paper.} we also considered the weakly and strongly well-supported answer set semantics of dl-programs, recently proposed by \cite{Yi-Dong:IJCAI:2011}.

%Since the embedding in the paper does not work for dl-programs under the weak answer set semantics,
The discussion below will be based on the strong answer set semantics.
As we mentioned at the end of Section 3, the embedding presented by
Motik and Rosati works only for canonical dl-programs. By the result
of this paper, their embedding can be now extended to normal
dl-programs by applying first the translation $\pi$.
For dl-programs without nonmonotonic dl-atoms, our embedding does not
introduce new predicates.
%While embedding dl-programs into first-order logic programs, new predicates have to be introduced even for canonical dl-programs \cite{DBLP:Lee:LPNMR:2011}.
 The latter is done by the translation of
dl-programs into first-order stable logic programs
\cite{DBLP:AIJ:Ferraris:2011} by  \citeN{DBLP:Lee:LPNMR:2011}, even for canonical dl-programs.

As commented earlier, the current embedding into quantified
equilibrium logic \cite{DBLP:Fink:JELIA:2010}
works for normal dl-programs only, as the authors
adopt a convention that all dl-atoms
containing an occurrence of $\ominus$ are nonmonotonic. The embedding
of dl-programs into first-order autoepistemic logic in \cite{DBLP:conf/kr/BruijnET08} is under the weak
answer set semantics. For the strong answer set semantics, it is
obtained indirectly, by embedding MKNF into first-order autoepistemic
logic, together with the embedding of dl-programs into MKNF. Thus it works for canonical
dl-programs only.

We also notice that, to relate default theories with dl-programs,
\citeN{DBLP:journals/ai/EiterILST08} and
\citeN{DBLP:Dao-Tran:ecsqaru:2009} presented transformations of
a class of default theories, in which only conjunctions of literals are
permitted in defaults, to canonical dl-programs (with variables) and
to cq-programs respectively. Informally, cq-programs can be viewed
as a generalization of canonical dl-programs, where the heads of dl-rules
can be disjunctive and
queries in dl-atoms can be also (decidable) conjunctive queries over
the ontology.
%(no constraint operator is used)
Our
transformation from normal dl-programs to default theories provides
a connection from the other side.
%It is clear that
Clearly the class of normal logic programs is a subclass of the normal dl-programs.
Already \citeN{GelfondLifschitz91} have shown that normal logic programs
under answer set semantics correspond  to default logic.
% \cite{GelfondLifschitz91}.
This has now been generalized by our results for normal dl-programs. The work here can be similarly generalized to deal with strong negation as well.
\comment{\tecomment{Well, GL were dealing with strong negation explicitly; one
would need to say that the work here can be generalized / adapted in
this direction. // Yes and done. -Yisong}}

\section{Conclusion}
%\jiacomment{Need to revise, according to the final technical contents of the paper. }

In this paper, we have studied how dl-programs under various answer set semantics may be captured in default logic.
 Starting with the semantics  in the seminal paper \cite{DBLP:journals/ai/EiterILST08},
we showed that dl-programs under weak and strong answer set semantics can be embedded into default logic. This is
achieved by two key translations: the first is the translation $\pi$ that eliminates the constraint operator from nonmonotonic dl-atoms, and
the second is a translation $\tau$ that transforms a dl-program to a
default theory while preserving strong answer sets of normal dl-programs, provided that the given ontology knowledge base is consistent.
%The assumption is removed under
This proviso is not necessary under translation $\tau'$, which
preserves strong answer sets even if the given ontology knowledge base is inconsistent. It also preserves weak answer sets if in addition all dl-atoms occur under default negation. Both translations
$\tau$ and $\tau'$ are polynomial and modular, without resorting to extra symbols.

The translation $\pi$ depends on the knowledge of whether a dl-atom is
monotonic. We have given the precise complexity to determine this
property, for ontology knowledge bases in the description logics $\cal SHIF$ and $\cal SHOIN$.

The importance of these results is that, for all current approaches to representing strong answer sets, either such an approach directly depends on this knowledge
\cite{DBLP:Fink:JELIA:2010,DBLP:Lee:LPNMR:2011}, or the underlying assumption can be removed, with this knowledge and the translation $\pi$ above \cite{DBLP:conf/kr/BruijnET08,Motik:JACM:2010}.

%Moreover, the translations $\tau$ and $\tau'$ can be
% strengthened, also in a  polynomial, faithful, and modular manner
% such that the resulting \tee{ translation  $\tau^*$ captures
%
Furthermore, the translations $\tau$ and $\tau'$ can be refined
to polynomial, faithful, and modular translations  $\tau^*$ and
$\tau^{*'}$, respectively, which capture the recently proposed weakly well-supported semantics for arbitrary dl-programs \cite{Yi-Dong:IJCAI:2011}.
This is somewhat surprising as the resulting
%translation is
translations are like writing dl-rules by
defaults in a native language, enhanced only by normal defaults of the form $\frac{:\neg p(\vec c)}{\neg p(\vec c)}$. Apparently, the key is that the iterative definition of default extensions provides a free ride to the weak well-supportedness based on a notion of level-mapping, but not to the strong well-supportedness. This is an interesting insight.
One would expect bigger challenges in representing the same semantics in other nonmonotonic logics.

For the class of dl-programs that mention no constraint operator, i.e.\ the class of canonical dl-programs, all major semantics coincide, including strongly well-supported answer sets, weakly well-supported answer sets,
FLP-answer sets, and strong answer sets. Thus, the translation $\tau'$ can be viewed as a generic representation of dl-programs in default logic. In other words, there is a simple, intuitive way to understand the semantics of (canonical) dl-programs in terms of default logic.
Fortunately, many practical dl-programs are canonical as argued in \cite{Eiter:TOCL2011}. At the same time,
we understand the precise complexity of checking monotonicity of a dl-atom, for some major description logics.
These results strengthen the
prospect of default logic as a foundation for
query-based approaches to integrating ontologies and rules.
In this sense, default
logic can be seen as a promising framework for integrating ontology and rules.
We will look into this issue further in future work.

Though we have presented a faithful and modular embedding for
dl-programs under strong answer set semantics, the embedding is not
polynomial.
%It should be interesting to see
It remains as an interesting issue whether there exists
such a polynomial embedding.
% for the purpose.
In addition, we have shown that $\tau^*$ preserves strongly
well-supported answer sets of a highly relevant class of dl-programs,
viz.\ the one in which nonmonotonic dl-atoms do not occur negatively. It
%is still unknown
remains open whether there exists a
faithful, modular  embedding
%\teecomment{added faithful, modular; an adhoc encoding (just represent
%  the result) is possible} \yisong{}{Good. --Yisong}
for arbitrary
dl-programs under the strongly well-supported answer set semantics
into default logic.

%{\bf Acknowledgment}: We thank the reviewers of NonMon@30 for their detailed comments, which helped
%improve the presentation of this paper. Yisong Wang was partially supported by the
%Natural Science Foundation of China under grant
%60963009, and the Fund of Education Department of Guizhou Province:
%2008[011]. Jia-Huai You and Li Yan Yuan were supported in part by
%NSERC discovery grants 90718009 and 60603095, and by the 863 Project
%of China under grant 2009AA01Z150.

\bibliographystyle{theapa}
%\bibliographystyle{plain}
%\bibliographystyle{abbrv} 
%\bibliography{DL}

\appendix

\section{}%{Appendix A}
\label{app:monotonic}

\begin{proof*}[\textbf{Proof of Theorem~\ref{theo:mon-dl-complexity} (continued)}]
(i) To show $\EXP$-hardness for the case of $\mathcal{SHIF}$
  knowledge bases, we provide a reduction from deciding
  unsatisfiability of a given knowledge base $O$ in $\mathcal{SHIF}$, which is
  $\EXP$-complete given that deciding satisfiability  is
  $\EXP$-complete
\cite{Horrock:ISWC:2003} and $\EXP$ is closed under complementation, to checking monotonicity
  of a dl-atom $A$ relative to a dl-program $\cal K$ as follows.

Let $C$ be a fresh concept and define the following dl-atom:

\[A = \DL[C \ominus p; \top \sqsubseteq \bot]()\]

\noindent where $p$ is a fresh unary predicate. Furthermore, let
$$O' = O \cup
\{ C(o) \mid o\in {\cal C}\}$$
\noindent where without loss of generality ${\cal C}\neq \emptyset$ is the
set of individuals occurring in $O$.

It is clear that if $O$ is unsatisfiable, then $A$ is
monotonic relative to  ${\cal K} = (O',P)$, where $P = \{ p \leftarrow
A \}$ and $p$ is a fresh propositional atom.
Recall that $A$ is nonmonotonic w.r.t. $O'$ iff there exist two interpretations $I$ and $I'$ such that
$I\subset I'$, $I\models_{O'} A$, and $I'\not\models_{O'}A$.
Every interpretation $I$ such that $p(o)\notin I$ for some $o\in {\cal
  C}$ is a model of $A$ relative to $O'$, and the interpretation
$I\cup\{p(o) \mid o \in {\cal C}\}$ is not a model of $A$ relative to
$O'$ if $O$ is satisfiable. Hence, $A$ is nonmonotonic relative to
${\cal K}$ iff $O$ is satisfiable.
%
%Now consider an interpretation $I$, and an arbitrary individual $o$ that
%occurs in $O$. If $I \not\models p(o)$, then $\neg C(o)$ is added to $O'$ and the
%DL knowledge base is unsatisfiable, and in that case $I \models_{O'} A$. Only
%if all $I \models p(o)$ for all $o$ (call this $I_{max}$), the DL KB $O'$ is not
%unsatisfiable right away, and we will have that $I \models_{O'} A$ iff $O'$ is
%satisfiable iff $O$ is satisfiable. Now $I_{max}$ is the ``largest"
%interpretation $I$ possible. And we have that $A$ is monotonic w.r.t to $O'$ iff
%$O$ is unsatisfiable, and nonmonotonic otherwise.
%
%As $\EXP$ is closed under complement, testing unsatisfiability of a given
%$\mathcal{SHIF}$ knowledge base $O$ is EXP-complete. This
It follows that the $\EXP$-complete unsatisfiability test reduces to the
DL-monotonicity test, and settles the result for the $\mathcal{SHIF}$ case.

(ii) For the case of $\mathcal{SHOIN}$ knowledge bases, we show
hardness for $\Pol^{\NEXP} = \textmd{co-NP}^{\NEXP}$, building on
machinery used in \cite{DBLP:journals/ai/EiterILST08} for the
complexity analysis of strong and weak answer sets of dl-programs with $\mathcal{SHOIN}$
knowledge bases. In the course of this, an encoding of a torus-tiling
problem (that represents $\NEXP$ Turing machine computations on a
given input) into a DL knowledge base satisfiability problem was
used. We briefly recall this problem.

A {\em domino system} $\mathcal{D}= (D,H,V)$ consists of a finite
non\-empty set $D$ of {\em tiles} and two relations $H,V\subseteq
D\,{\times}\, D$ expressing horizontal and vertical compatibility
constraints between the tiles. For positive integers $s$ and~$t$, and
a word $w = w_0\ldots w_{n-1}$ over~$D$ of length $n\leq s$, we say
that $\mathcal{D}$ {\em tiles} the torus $U(s,t)=\{0,1,\ldots,s\,{-}\,1\}\times\{0,1,\ldots,t\,{-}\,1\}$ \emph{with initial
condition $w$} iff there exists a mapping $\tau\colon
U(s,t)\,{\rightarrow}\, D$ such that for all $(x,y)\in U(s,t)$:
(i)~if $\tau(x,y)=d$ and $\tau((x+1)\,\mathrm{mod}\, s,y)= d'$,
then $(d,d')\in H$,
% (called the {\em horizontal constraint}),
(ii)~if $\tau(x,y)= d$ and $\tau(x,(y+1)\,\mathrm{mod}\,t)= d'$, then $(d,d')\in V$,
% (called the {\em vertical constraint}),
and (iii)~$\tau(i,0) = w_i$ for all $i\in \{0,\ldots,n\}$.
% (called the {\em initial condition}).
Condition~(i) is the {\em horizontal constraint},
condition~(ii) is the {\em vertical constraint}, and
condition~(iii) is the {\em initial condition}.

Similar as \cite{DBLP:journals/ai/EiterILST08}, we use the following
lemmas.

\begin{lemma}[cf.\ Lemma 5.18 and Corollary 5.22 in \cite{tobies}]\label{LEM-D-0-}
For domino systems $\mathcal{D}= (D,$ $H,V)$
and initial conditions $w= w_0\ldots$ $w_{n-1}$, there exist
DL knowledge bases  $O_{n}$, $O_{\mathcal{D}}$, and $O_w$,
and concepts $C_{i,0}$, $i\in\{0,1,\ldots,n-1\}$, and $C_d$, $d\in D$, in $\mathcal{SHOIN}$  such that:
\begin{itemize}
\item $O_n\cup O_{\mathcal{D}}\cup O_w$ is satisfiable iff $\mathcal{D}$  tiles $U(2^{n+1},$ $2^{n+1})$ with initial condition $w$;
\item $O_{n}$, $O_{\mathcal{D}}$, and $O_w$ can be constructed in polynomial time in $n$ from $n$, $\mathcal{D}$, and $w$, respectively, and
$O_w= \{C_{i,0}\sqsubseteq C_{w_i}\mid i\in\{0,1,\ldots,n-1\}\}$;
\item  in every model of $O_n\cup O_{\mathcal{D}}$,
each $C_{i,0}$ contains exactly one object representing $(i,0)\in U(2^{n+1},2^{n+1})$,
and each~$C_d$ contains all objects associated with~$d$.
\end{itemize}
\end{lemma}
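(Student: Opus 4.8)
\begin{proof*}[\textbf{Proof of Lemma~\ref{LEM-D-0-} (sketch)}]
The plan is to adapt the encoding of bounded torus-tiling (domino) problems into description logic knowledge bases due to \citeN{tobies}, tailored to $\mathcal{SHOIN}$. A position $(x,y)\in U(2^{n+1},2^{n+1})$ is represented in binary: I introduce fresh atomic concepts $X_0,\ldots,X_n$ and $Y_0,\ldots,Y_n$, where membership of an object in $X_j$ (resp.\ $Y_j$) says that the $j$-th bit of its $x$- (resp.\ $y$-) coordinate is $1$, and two functional roles $\mathit{hor}$ and $\mathit{ver}$ for horizontal and vertical adjacency on the torus. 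The knowledge base $O_n$ then contains: the functionality axioms for $\mathit{hor}$ and $\mathit{ver}$; ``increment'' axioms (the standard carry-propagation pattern over $X_0,\ldots,X_n$) stating that the $\mathit{hor}$-successor of an object has $x$-coordinate one larger modulo $2^{n+1}$ and the same $y$-coordinate, and symmetrically for $\mathit{ver}$ and $Y_0,\ldots,Y_n$; an ``origin'' axiom forcing the existence of the object with both coordinates $0$; and axioms that, using a nominal spy point linked by a role to every grid object together with inverse roles and number restrictions, keep the shape of every model sufficiently under control that the coordinate concepts address grid objects uniquely and the generated structure is a genuine $2^{n+1}\times 2^{n+1}$ torus. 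I then define $C_{i,0}$ as the conjunction of the literals over $X_0,\ldots,X_n$ spelling out $i$ in binary together with $\neg Y_0\sqcap\cdots\sqcap\neg Y_n$; unique addressing makes each $C_{i,0}$ a singleton, as required.

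For the tiling part, $O_{\mathcal{D}}$ introduces fresh concepts $C_d$, $d\in D$, with pairwise disjointness axioms $C_d\sqcap C_{d'}\sqsubseteq\bot$ for $d\neq d'$, the covering axiom $\top\sqsubseteq\bigsqcup_{d\in D}C_d$, and the compatibility axioms $C_d\sqsubseteq\forall\mathit{hor}.\big(\bigsqcup_{(d,d')\in H}C_{d'}\big)$ and $C_d\sqsubseteq\forall\mathit{ver}.\big(\bigsqcup_{(d,d')\in V}C_{d'}\big)$; these make each $C_d$ collect exactly the grid objects carrying tile $d$ and enforce the horizontal and vertical constraints. Finally $O_w=\{C_{i,0}\sqsubseteq C_{w_i}\mid i\in\{0,\ldots,n-1\}\}$, exactly the set displayed in the statement, enforces the initial condition. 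Since all counters are binary and the only ``large'' index sets range over $D$ (part of the fixed input), each of $O_n$, $O_{\mathcal{D}}$, $O_w$ is of size polynomial in $n$, $\mathcal{D}$, and $w$ respectively, and all three are $\mathcal{SHOIN}$ knowledge bases.

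Correctness is then argued in both directions. If $\mathcal{D}$ tiles $U(2^{n+1},2^{n+1})$ with initial condition $w$ via some $\tau$, one builds a model whose domain is that torus (plus the spy point), interpreting $X_j,Y_j$ by the bit patterns of the coordinates, $\mathit{hor}$ and $\mathit{ver}$ by the torus adjacencies, and $C_d$ by $\tau^{-1}(d)$; this satisfies $O_n\cup O_{\mathcal{D}}\cup O_w$. Conversely, every model of $O_n\cup O_{\mathcal{D}}$ contains a copy of the torus on which, by disjointness and covering, the $C_d$ induce a total map to $D$ satisfying $H$ and $V$, and adding $O_w$ forces the bottom-row cells $0,\ldots,n-1$ to carry $w_0,\ldots,w_{n-1}$. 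The main obstacle is precisely this faithfulness argument: one must verify that the spy-point and number-restriction machinery available in $\mathcal{SHOIN}$ --- which, unlike $\mathcal{SHIQ}$, lacks qualified number restrictions --- nevertheless pins down a full exponential-size torus with no degenerate or ``folded'' models, and that the model built in the forward direction respects the transitivity and role-hierarchy features of $\mathcal{SHOIN}$. This is exactly the content of Lemma 5.18 and Corollary 5.22 of \cite{tobies}, from which the remaining details can be taken over with only cosmetic adaptation.
\end{proof*}
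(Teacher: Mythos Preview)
The paper does not supply its own proof of this lemma: it is quoted as a known result from \cite{tobies} (Lemma~5.18 and Corollary~5.22) and is used as a black box in the hardness argument for Theorem~\ref{theo:mon-dl-complexity}. Your sketch is a faithful reconstruction of the standard Tobies encoding (binary bit-concepts for coordinates, functional successor roles with carry-propagation axioms, a nominal anchor, and per-tile concepts with disjointness, cover, and compatibility axioms), and it lines up with how the paper subsequently exploits the lemma; note only the cosmetic mismatch that the paper later refers to the horizontal successor role as $\mathit{east}$ rather than your $\mathit{hor}$. Since the paper itself defers all details to \cite{tobies}, there is nothing further to compare against here.
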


\begin{lemma}[cf.\ Theorem~6.1.2 in \cite{BGG2001}]
\label{LEM-D-0+-}
Let $M$ be a nondeterministic Turing machine with time- (and thus space-) bound~$2^n$,
deciding a $\NEXP{}$-complete language $\mathcal{L}(M)$ over the alphabet $\Sigma=\{0,1,''\phantom{x}''\}$.
Then, there exists a domino system $\mathcal{D}=(D,H,V)$
and a linear-time reduction $\mathit{trans}$ that takes any input $b\in\Sigma^*$ to a word $w\in D^*$
with $|b|=n=|w|$ such that $M$~accepts~$b$ iff $\mathcal{D}$ tiles the
torus~$U(2^{n+1},$ $2^{n+1})$ with initial condition $w$.
\end{lemma}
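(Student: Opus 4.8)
The plan is to recognise Lemma~\ref{LEM-D-0+-} as the classical reduction from acceptance by a $2^n$-time- and space-bounded nondeterministic Turing machine to a domino/tiling problem (Theorem~6.1.2 in \cite{BGG2001}), transported from the rectangular ``corridor'' formulation to the torus $U(2^{n+1},2^{n+1})$. First I would fix a tile set $D$ depending only on $M$: tiles of the shape $(\sigma)$ and $(\sigma,q)$ for $\sigma\in\Sigma$ and $q$ a control state, plus a few framing tiles. A tile at position $(x,y)$ records the content of tape cell $x$ at time $y$, together with the state when the head scans that cell, so rows of the torus are configurations and successive rows are successive steps. The horizontal relation $H$ would enforce that each row carries at most one head-tile and that adjacent cells are locally coherent, and the vertical relation $V$ would encode the local transition rule of $M$ as a ``window'' constraint relating $\tau(x-1,y),\tau(x,y),\tau(x+1,y)$ to $\tau(x,y+1)$. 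Acceptance is made to coincide with tileability by padding every computation deterministically to length $2^{n+1}$ and forcing the last row to be an accepting configuration (equivalently, providing no successor tiles for rejecting windows). The initial condition $w=\mathit{trans}(b)\in D^{n}$ pins the first $n$ cells of row $0$ to the symbols of $b$; since $\mathit{trans}$ is a symbol-by-symbol recoding it is linear time and $|w|=n$, while the remaining $2^{n+1}-n$ cells of row $0$ are forced to blank/framing tiles.

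The only point that is not already in \cite{BGG2001} is making the \emph{torus} version go through, and here I would invoke \cite{tobies} rather than redo it. On $U(2^{n+1},2^{n+1})$ the last column is horizontally adjacent to the first and the last row vertically adjacent to the first, so the wrap-around adjacencies must be rendered innocuous; the remedy (and the reason the side length is doubled to $2^{n+1}$) is to reserve a band of dedicated boundary tiles at the seam that are mutually compatible under $H$ and $V$ in the wrap direction and incompatible with genuine configuration tiles except through the intended framing, so that a torus tiling exists iff the underlying corridor tiling, hence the computation, exists. This is exactly Lemma~5.18 / Corollary~5.22 of \cite{tobies}; together with the encoding above it gives Lemma~\ref{LEM-D-0+-}, and the only work is routine case-checking that the seam creates no spurious tiling.

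With Lemmas~\ref{LEM-D-0-} and~\ref{LEM-D-0+-} in hand, the plan for the hardness half of Theorem~\ref{theo:mon-dl-complexity}(ii) is as follows. Since $\Pol^{\NEXP}=\NP^{\NEXP}$ by \cite{DBLP:Hemachandra:jcss:1989} and this class is closed under complement, it suffices to show DL-\emph{non}monotonicity over $\mathcal{SHOIN}$ is $\NP^{\NEXP}$-hard. I would reduce from the appropriate $\NP^{\NEXP}$-complete tiling problem built from these lemmas --- polynomially many $\vec x$-conditioned torus-tiling tests combined under a Boolean formula, or, via Hemachandra's collapse, a single such test --- where Lemma~\ref{LEM-D-0-} turns the tiling test into satisfiability of a polynomial $\mathcal{SHOIN}$ knowledge base and the dependence on the guessed bits $\vec x$ is realised by conditionally switching the inclusions $C_{i,0}\sqsubseteq C_{w_i}$ on and off. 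Mirroring the $\mathcal{SHIF}$ construction of part (i), I would then build $\mathcal{K}=(O',P)$ with $P=\{p\lto A\}$ and a single dl-atom $A=\DL[C\ominus p, S_1\ominus p_1,\ldots; \top\sqsubseteq\bot]()$, where $O'$ bundles the knowledge base of Lemma~\ref{LEM-D-0-} (over disjoint vocabularies) with the axioms $\{C(o)\mid o\in\mathcal{C}\}$ and with gadget axioms under which the \emph{absence} of the switch atoms $p_j$ selects the guess $\vec x$; as in part (i) every sufficiently small interpretation trivialises $A$ by forcing $\top\sqsubseteq\bot$ through a clash between $C(e)$ and $\neg C(e)$, whereas the interpretations just below the point where $A$ flips encode the guessed bits, so $A$ is nonmonotonic exactly when some $\vec x$ yields a tiling. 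The main obstacle is this last gadget: the $\ominus$ operator only lets an interpretation \emph{withhold} atoms and monotonicity compares an interpretation against a \emph{larger} one, so the existential guess over $\vec x$ and the choice of which conditional axioms are live must be wired so that $\top\sqsubseteq\bot$ is forced whenever any trigger atom is withheld and the one-atom step that turns $A$ from true to false corresponds precisely to ``this guess tiles'', while the bundled sub-knowledge-bases and the $\{C(o)\}$ axioms never clash for the wrong reason --- which is where the disjoint-vocabulary guarantee of Lemma~\ref{LEM-D-0-} is essential. I expect to import the concrete gadget from the strong/weak answer-set hardness proof of \cite{DBLP:journals/ai/EiterILST08}, the genuinely new step being to recast it from ``$\mathcal{K}$ has a (strong/weak) answer set'' to ``$A$ has a nonmonotonicity witness'', and verifying that this recasting is faithful is the crux of the argument.
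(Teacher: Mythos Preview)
The paper does not prove Lemma~\ref{LEM-D-0+-}; it is quoted verbatim with the attribution ``cf.\ Theorem~6.1.2 in \cite{BGG2001}'' and used as a black box in the hardness argument for Theorem~\ref{theo:mon-dl-complexity}(ii). Your first two paragraphs sketch the standard TM-to-tiling reduction, which is reasonable as background but has nothing to compare against in the paper.

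The bulk of your proposal is actually a plan for the hardness half of Theorem~\ref{theo:mon-dl-complexity}(ii), not for the lemma under discussion, and there your route differs materially from the paper's. You propose a generic reduction from ``polynomially many $\vec x$-conditioned torus-tiling tests combined under a Boolean formula'', mirroring the $\mathcal{SHIF}$ gadget of part~(i) and encoding guess bits via $\ominus$-withheld switch atoms. The paper instead introduces a bespoke $\NP^{\NEXP}$-complete problem \textsc{NEXP-JC}: given partial inputs $b,b'$ to the same $\NEXP$ machine with $|b|=|b'|$, does some joint completion $c$ make $M$ accept $bc$ and reject $b'c$? This is engineered so that a monotonicity counterexample $I\subset I'$ corresponds directly to the two-sided structure of the source problem: $I$ encodes $b'c$ (dl-atom true, i.e.\ $M$ rejects) and $I'=I\cup\{p(o_0)\}$ encodes $bc$ (dl-atom false, i.e.\ $M$ accepts). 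The single atom $p(o_0)$ is the case switch, implemented by pairing $S\ominus p$ with $S\oplus p$ in $\lambda$ so that either $S(o_0)$ or $\neg S(o_0)$ is asserted; the completion $c$ is guessed by further predicates $p_d$, again via paired $S_d\ominus p_d,\ S_d\oplus p_d$, and these values coincide in $I$ and $I'$. Your sketch uses $\ominus$ alone for the guess and leaves open how ``the one-atom step that turns $A$ from true to false'' is isolated while the guess bits are held fixed across $I\subset I'$; without the $\oplus/\ominus$ pairing trick this is the genuine obstacle, and the paper's \textsc{NEXP-JC} formulation is precisely what makes it go away.
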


Based on this, \cite{DBLP:journals/ai/EiterILST08} showed how
computations of a deterministic polynomial time Turing machine with an
$\NEXP$ oracle can be encoded into evaluating a dl-program, where
intuitively dl-atoms correspond to oracle calls.
For the problem at hand, we would have to provide an encoding of
such a computation into one dl-atom and the check of its
monotonicity. To simplify matters, we provide a reduction from the
following problem:

\begin{description}
\item [{\sc NEXP-JC}:] Given two partial inputs $b$ and $b'$ of the
  same $\NEXP$ Turing machine $M$ such that $|b|=|b'|$, does there
  exist a joint completion $c$ of the partial inputs of length
  $|c|=|b|=|b'|$ such that (1) $M$ accepts $bc$ and (2) $M$ does not
  accept $b'c$.
\end{description}

\begin{lemma}
Problem {\sc NEXP-JC} is complete for $\NP^{\NEXP}$ (=$\Pol^{\NEXP}$).
\end{lemma}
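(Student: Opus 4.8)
The plan is to establish membership in $\Pol^{\NEXP}$ and $\Pol^{\NEXP}$-hardness separately, using throughout the identity $\Pol^{\NEXP}=\NP^{\NEXP}$ from \cite{DBLP:Hemachandra:jcss:1989}.

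\textbf{Membership.} I would show {\sc NEXP-JC} $\in\NP^{\NEXP}$: on input $(M,b,b')$, guess a completion $c$ with $|c|=|b|$, then make two queries to the ($\NEXP$-complete) bounded machine-acceptance oracle, asking whether $M$ accepts $bc$ and whether $M$ accepts $b'c$, and accept iff the first answer is ``yes'' and the second ``no'' (complementing the second answer is harmless for an $\NP^{\NEXP}$ computation). The witness $c$ and both queries are polynomially bounded, so this is an $\NP^{\NEXP}$ procedure, hence {\sc NEXP-JC} $\in\Pol^{\NEXP}$.

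\textbf{Hardness.} Let $L\in\Pol^{\NEXP}$ be decided by a deterministic polynomial-time oracle machine $T$ with a $\NEXP$ oracle $\mathcal Q$. On input $x$, the computation of $T^{\mathcal Q}(x)$ makes an adaptive sequence of queries $q_1,\dots,q_m$ ($m\le p(|x|)$), where $q_i$ is a polynomial-time function of $x$ and of the previous answers $a_1,\dots,a_{i-1}$, and acceptance is a polynomial-time predicate $\mathrm{acc}_T(x,\vec a)$ of the full answer vector $\vec a$. I would introduce two $\NEXP$ languages: $A=\{(x,\vec a)\mid \mathrm{acc}_T(x,\vec a)\text{ holds and }q_i(x,\vec a)\in\mathcal Q\text{ for every }i\text{ with }a_i=1\}$ (the $\NEXP$-witnesses for the ``yes'' queries being produced by the machine's own nondeterminism rather than written down), and $B=\{(x,\vec a)\mid q_i(x,\vec a)\in\mathcal Q\text{ for some }i\text{ with }a_i=0\}$. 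The key claim is that $x\in L$ iff there exists $\vec a$ with $(x,\vec a)\in A$ and $(x,\vec a)\notin B$: for the forward direction take $\vec a$ to be the true answer vector of $T^{\mathcal Q}(x)$; for the backward direction, $(x,\vec a)\notin B$ forces every ``0'' answer to be correct and $(x,\vec a)\in A$ forces every ``1'' answer to be correct and $T$ to accept, so $\vec a$ is the true answer vector and $x\in L$.

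\textbf{Folding into a single instance, and the main obstacle.} It remains to package $A$ and $B$ into one $\NEXP$ machine $M$ and recast the existential over $\vec a$ as the shared completion. Let $M$ take an input split into two equal halves: from the first half it reads a tag in $\{\mathrm{yes},\mathrm{no}\}$ together with $x$, from the second half it reads $\vec a$ (rejecting if malformed), and it then runs the $\NEXP$ procedure for $A$ or for $B$ according to the tag; since $|\vec a|$ is polynomial in $|x|$, $M$ runs in exponential time in its input length. Put $b=\langle\mathrm{yes},x\rangle$ and $b'=\langle\mathrm{no},x\rangle$, padded to a common length $\ell\ge p(|x|)$, so that completions $c$ of length $\ell$ correspond, after decoding, exactly to candidate answer vectors (malformed $c$ being rejected under both tags and hence never witnessing the instance). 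Then some $c$ makes $M$ accept $bc$ and not accept $b'c$ iff some $\vec a$ satisfies $(x,\vec a)\in A\setminus B$ iff $x\in L$, and $x\mapsto(M,b,b')$ is polynomial-time computable. The delicate step I expect to be the main obstacle is precisely this asymmetric treatment of erroneous answers together with the length bookkeeping: a wrong ``yes'' answer can be excluded only by the \emph{absence} of a (possibly exponentially long) $\NEXP$-witness, which is why it is absorbed into non-membership in $A$ rather than guessed inside the polynomially bounded $c$, whereas a wrong ``no'' answer must be caught \emph{positively} and therefore lands in $B$; making $A$, $B$, the machine $M$, and the padding mesh so that $|b|=|b'|=|c|$ is exactly what the reduction turns on.
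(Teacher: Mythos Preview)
Your proof is correct and follows essentially the same route as the paper, which only offers a brief informal sketch: guess the oracle answer vector (the paper phrases it as guessing ``nondeterministic moves and query answers,'' i.e.\ starting from $\NP^{\NEXP}$ rather than $\Pol^{\NEXP}$, but by Hemachandra's result this is immaterial), verify all ``yes'' answers in one $\NEXP$ computation and all ``no'' answers in one $\coNEXP$ computation, and use padding to enforce $|b|=|b'|=|c|$. Your languages $A$ and $B$ make these two halves explicit, and your tag-bit construction is the natural packaging into a single $\NEXP$ machine; the paper leaves all of this implicit.
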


Intuitively, this is seen as
follows: the computation path (nondeterministic moves and query
answers) of $M$
can be guessed ahead, and after that only a deterministic computation with
oracle accesses is made, in which the oracle answers are checked with the guesses.
Witnesses for all oracle queries that should answer ``yes'' can be
found in a single $\NEXP$
computation, and all queries that should answer ``no'' can be verified
in a single $\coNEXP$ computation (i.e., a $\NEXP$ computation for
refutation does not accept). The condition $|b|=|b'|=|c|$ can be ensured by simple padding techniques.

\medskip

%Note that for the case where the inputs $w$ and $v$ are already complete
%(i.e., $u$ has length zero) we have a problem  that is hard for the
%class $D^{exp}_2$ as mentioned in \cite{DBLP:journals/ai/EiterILST08}.

Now the reduction of this problem to deciding dl-atom monotonicity is
exploiting (and modifying) the torus-tiling problem encoding to
DL satisfiability testing quoted above.
%More precisely, the acceptance problems of a $\NEXP$ machine for
%an input $b$ is reduced to a tiling problem for the torus with an
%initial condition corresponding to $w$.
It has been shown in \cite{DBLP:journals/ai/EiterILST08}
how to adapt the torus knowledge base such that the initial condition
$w$ (encoded by $O_w$) can be flexibly established by the update string $\lambda$ of a
dl-atom. Intuitively, ``switches" were used to ``activate" concepts that
represent tiles, so that tiles are put in place by the call of the
dl-atom.

%Something similar can be done here also, but it is more complicated.
Using a similar idea, we change $O_w$. As in
\cite{DBLP:journals/ai/EiterILST08},  assertions
\[C_{i,0}(o_i),\quad i=0,\ldots,n-1 \] are used to introduce
individuals $o_i$ for the torus positions $(i,0)$ that hold the
initial condition $w$ encoding a complete input $bc$ resp.\ $b'c$, where $n=2m{-}1$ and $m\,{=}\,|b|\,{=}\,|b'|$;
we have ${\cal C} = \{o_0,\ldots,o_{n-1}\}$. We
implement a ``switch" that tells whether computation of either (1)
$bc$ or (2) of $b'c$ should be considered in a call. For this, we use
a concept $S$ and put $S\ominus p$, $S\oplus
p$ in the ``update" $\lambda$ of the dl-atom $A$ that we construct,
which will effect that given any interpretation $I$, for each individual $o_i$ either
$S(o_i)$ or $\neg S(o_i)$ will be asserted in $O(I;\lambda)$.
We pick $o_0$ (i.e., position $(0,0)$ of the torus, which  is
``identified" by the concept $C_{0,0}$) and install on it the switch
between case 1) and 2): if $S(o_0)$ is true, we evaluate case 1), else case
2). To ``prepare'' the part of the initial condition encoding $b$ resp.\ $b'$, we use axioms
\begin{align*}
   C_{0,0} \sqcap S \sqsubseteq B,\\
   C_{0,0} \sqcap \neg S \sqsubseteq \neg B,
\end{align*}
where $B$ is a fresh concept (intuitively, a flag indicating case 1),
i.e., $b$), and an axiom
\begin{align*}
B \sqsubseteq \forall\mathit{east}.B
\end{align*}
where $\mathit{east}$ is a role already defined in
$O_n\cup O_{{\cal D}}$ which links position $(i,j)$ to $(i+1,j)$, for
all $i$ and $j$;
in combination with the above axioms, it effects that when evaluating
a dl-atom w.r.t. an interpretation $I$, in every model of $O(I;\lambda)$  either all
elements $e_i$ at ``input'' positions are labeled with $B$ or all are labeled with $\neg B$.
Depending on the $B$-label, we then assign $e_i$ the right tile from the initial
condition for $b$ (label $B$) respectively for $b'$ (label $\neg B$):
\begin{align*}
\left.\begin{array}{r}
C_{i,0} \sqcap B \sqsubseteq C_{w_i}\\
C_{i,0} \sqcap \neg B \sqsubseteq C_{w'_i}
\end{array}
\right\}\quad i=0,\ldots, m-1,
\end{align*}
where $w_i$ (resp.\ $w'_i$) is the $i$-th tile of $w$ (resp.\ $w'$).
Intuitively, the case of label $B$ is for input $I'$ that is ``larger'' than
input $I$ for label $\neg B$; for the former, we must have $p(o_0)\in
I'$ and for the latter $p(o_0)\notin
I$; the value of $p(o_i)$, $i>0$, does not matter, so we can assume
it is the same in $I$ and $I'$. For $I'$ we do the $\NEXP$ test, and for
the ``smaller" $I$ we do the $\coNEXP$ test. If both succeed, we have a
counterexample to monotonicity.

It remains to incorporate the guess $c$ for the completion
of the input. This guess can be built in by using concepts $S_d$
%similar as in \cite{DBLP:journals/ai/EiterILST08}
such that $S_d(o_i)$ intuitively puts tile $d$
at the position $i$ in the initial condition (where $i=m,\ldots, n-1$
runs from the first position after $b$ (resp. $b'$)  until the last position
of the fully completed input $bc$ (resp. $b'c$), viz.\ $n-1$). In the  input list
$\lambda$ of the dl-atom $A$, we put
   \[S_{d} \ominus p_{d},\ S_{d} \oplus p_{d} \quad d \in D\]
where $p_d$ is a fresh unary predicate  ($D$ is the set of tiles). Similar as above, this will
assert for each individual then either $S_d$ or $\neg S_d$.

We then add axioms which put on tiles as follows:
\[
\left.\begin{array}{r}
C_{i,0} \sqcap S_{d} \;\sqsubseteq\; C_d\\
C_{i,0} \sqcap \bigsqcap_{d\in D} \neg S_d \;\sqsubseteq\; C_{d_0}
\end{array}
\right\}\quad i=m,\ldots,n-1, d\in D
\]
where $d_0$ is some fixed tile; the second axiom puts a default tile if
in $I$ no tile has been selected (as if $p_{d_0}(o_i)$ would be in $I$). If multiple tiles have been selected, then the
$O(I;\lambda)$ is  unsatisfiable, and similarly $O(I';\lambda)$ for
each $I' \supset I$. So the interesting case is if exactly one
tile has been put on in each ``completion'' position $i=m,\ldots,n-1$
of the initial condition.
% Caveat: There is one thing to check, whether the
%tile can be put up independently, or some constraint applies to
%them. If so, further axioms will be needed (but we do not expect
%problems; the check is feasible in polynomial time, so even other
%machine encodings could be added to do it if necessary).
The selection of tiles is subject to further constraints  on
tiles at adjacent positions $i{-}1$,$i$ from $m,\ldots,n-1$ and on the
last position, due to the encoding of the machine input into the initial
condition in \cite{BGG2001}. Without going into detail here, let $A
\subset D^2$ and $F\subset D$ be the sets of admissible adjacent tiles
$(d,d')$ and final tiles $d$, respectively (which are easily determined). We then add axioms
\begin{align*}
C_{i,0} \sqcap C_{d'} \; \sqsubseteq & \; \forall\mathit{east}^-.\bigsqcup_{(d,d')\in A} C_d, \quad i=m,\ldots,n-1, d'\in D,\\
C_{n-1,0} \; \sqsubseteq & \; \bigsqcup_{d\in F} C_{d}.
\end{align*}
This completes the construction of $O_w$. Now let $A = \DL[\lambda; \top
\,{\sqsubseteq}\, \bot]()$ and ${\cal K} = (O,P)$, where $O=O_n \cup
O_{{\cal D}} \cup O_w$ and $P = \{ p(o_0) \leftarrow A \}$.
It can be shown that a violation of the
monotonicity of $A$  relative to $\cal K$ is witnessed by two
interpretations $I \subset I'$ of form $I' = I \cup \{ p(o_0)\}$ such
that $I' \not\models_O A$ and $I \models_O A$ and the interpretations
encode a joint completion $c$ of the inputs
$b$ and $b'$,  meaning that the computation for $bc$ is
accepting while the one for $b'c$ is not. As $\cal K$ and $A$ are
constructible in polynomial time from $b$, $b'$ and $M$, this proves the result.
\end{proof*}

\section{}%{Appendix B}
\label{app:B}
\begin{lemma}\label{lem:w}
  Let $\mathcal K=(O,P)$ be a dl-program and $I\subseteq\HB_P$. Then we have that
  \begin{enumerate}[(i)]
    \item $\pi_1(I)=\{\pi_p(\vec c)\in\HB_{\pi(P)}\}\cap \lfp(\gamma_{[\pi(\mathcal K)]^{w,\pi(I)}})$,
    \item $\pi_2(I)=\{\pi_A\in\HB_{\pi(P)}\}\cap\lfp(\gamma_{[\pi(\mathcal K)]^{w,\pi(I)}})$, and
    \item $\gamma_{{\cal K}^{w,I}}^k=\HB_P\cap \gamma^k_{[\pi(\mathcal K)]^{w,\pi(I)}}$ for any $k\ge 0$.
  \end{enumerate}
\end{lemma}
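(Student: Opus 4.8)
The plan is to transcribe the proof of Lemma~\ref{lem:s}, replacing the strong dl-transform by the weak dl-transform throughout, and to pin down the one place where the weak case genuinely differs from the strong case. Recall that $[\pi(\mathcal K)]^{w,\pi(I)}$ deletes \emph{every} body dl-atom from a surviving rule and keeps a rule only if each of its positive body dl-atoms is satisfied by $\pi(I)$, whereas $[\pi(\mathcal K)]^{s,\pi(I)}$ deletes only the nonmonotonic body dl-atoms and leaves monotonic ones in the residual positive program; on the dl-program side, $\mathcal K^{w,I}$ treats monotonic and nonmonotonic body dl-atoms alike. Consequently the argument for parts~(i) and~(ii) goes through unchanged, while part~(iii) needs one extra observation, for monotonic body dl-atoms.

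For parts~(i) and~(ii), note that the auxiliary rules of $\pi(P)$, namely $\pi_p(\vec c)\leftarrow\Not p(\vec c)$ and $\pi_A\leftarrow\Not\DL[\pi(\lambda);Q](\vec t)$ for $A=\DL[\lambda;Q](\vec t)$ nonmonotonic, have a body consisting of a single negated atom, respectively a single negated dl-atom. The weak and strong dl-transforms act identically on such bodies (the rule survives exactly when the negated literal fails in the evaluating interpretation, and the literal is then deleted), so the chains of equivalences from the proof of Lemma~\ref{lem:s}(i),(ii) carry over verbatim, using Lemma~\ref{lem:main:1}(ii) and the fact that the atoms of $\pi_2(I)$ do not occur in $\DL[\pi(\lambda);Q](\vec t)$. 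This yields $\pi_1(I)=\{\pi_p(\vec c)\in\HB_{\pi(P)}\}\cap\lfp(\gamma_{[\pi(\mathcal K)]^{w,\pi(I)}})$ and $\pi_2(I)=\{\pi_A\in\HB_{\pi(P)}\}\cap\lfp(\gamma_{[\pi(\mathcal K)]^{w,\pi(I)}})$.

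For part~(iii), I would prove $\gamma^k_{\mathcal K^{w,I}}=\HB_P\cap\gamma^k_{[\pi(\mathcal K)]^{w,\pi(I)}}$ by induction on $k$, the base case $k=0$ being immediate. In the step, fix $\alpha\in\HB_P$ and a rule $r=(\alpha\leftarrow\Pos,\Not\Neg)$ of $P$. By the definition of the weak dl-transform, $r$ contributes $\alpha$ at stage $n{+}1$ iff $I\models_O B$ for every dl-atom $B\in\Pos$, $I\models_O\Not C$ for every $C\in\Neg$, and every atom of $\Pos$ lies in $\gamma^n_{\mathcal K^{w,I}}$. The image rule $\pi(r)$ has the atoms of $\Pos$ and the monotonic dl-atoms of $\Pos$ as positive body literals, and $\Not\pi_B$ for each nonmonotonic $B\in\Pos$ together with $\pi(\Not C)$ for each $C\in\Neg$ as negative body literals; so $\pi(r)$ contributes $\alpha$ at stage $n{+}1$ iff $\pi(I)\models_O B$ for every monotonic dl-atom $B\in\Pos$, $\pi(I)\models_O\Not\pi_B$ for every nonmonotonic $B\in\Pos$, $\pi(I)\models_O\pi(\Not C)$ for every $C\in\Neg$, and every atom of $\Pos$ lies in $\gamma^n_{[\pi(\mathcal K)]^{w,\pi(I)}}$. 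The atom conditions match by the induction hypothesis; ``$\pi(I)\models_O\Not\pi_B$'' is ``$I\models_O B$'' by the definition of $\pi_2(I)$; the $\Neg$-conditions match by Lemma~\ref{lem:main:1} (trivially for atoms, via ``$I\not\models_O C$ iff $\pi(I)\models_O\pi(\Not C)$'' for dl-atoms). The remaining point, and the one I expect to be the main obstacle, is to match ``$I\models_O B$'' with ``$\pi(I)\models_O B$'' for a monotonic body dl-atom $B=\DL[\lambda;Q](\vec t)$: here one uses that $\lambda$ mentions only input predicates of $\mathcal K$ and that $p(\vec e)\in\pi(I)$ iff $p(\vec e)\in I$ for every such $p$ and every $\vec e$ over $\mathcal C$, whence $O(\pi(I);\lambda)=O(I;\lambda)$ and the two satisfaction statements coincide (equivalently, first note $O(\pi(I);\lambda)=O(\pi(I);\pi(\lambda))$ and then invoke Lemma~\ref{lem:main:1}(ii)). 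With this in hand the induction step closes, and the lemma follows.
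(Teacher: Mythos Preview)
Your proof is correct and follows essentially the same approach as the paper's own proof: parts~(i) and~(ii) are carried over verbatim from Lemma~\ref{lem:s}, and part~(iii) proceeds by the same induction, splitting the positive body into atoms, monotonic dl-atoms, and nonmonotonic dl-atoms. The one place you flag as needing extra care---matching $I\models_O B$ with $\pi(I)\models_O B$ for a monotonic body dl-atom $B$---is exactly where the paper's proof differs from Lemma~\ref{lem:s}, and your justification (that $\lambda$ mentions only input predicates of $\mathcal K$, so $O(\pi(I);\lambda)=O(I;\lambda)$) is in fact more explicit than what the paper writes.
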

\begin{proof}
  %Let $I'=\pi(I)$ and $I^*=\lfp(\gamma_{[\pi(\mathcal K)]^{w,I'}})$.
  (i) It is evident that, for any atom $\pi_p(\vec c)\in\HB_{\pi(P)}$, the rule $(\pi_p(\vec c)\lto \Not p(\vec c))$ is in $\pi(P)$.
  We have that \\
  $\pi_p(\vec c)\in\pi_1(I)$ \\
  iff $p(\vec c)\notin I$ \\
  iff $p(\vec c)\notin \pi(I)$ \\
  iff the rule $(\pi_p(\vec c)\lto)$ belongs to $w[\pi(P)]^{w,\pi(I)}_O$\\
  iff $\pi_p(\vec c)\in \lfp(\gamma_{[\pi(\mathcal K)]^{w,\pi(I)}})$.

  (ii) It is clear that, for any $\pi_A\in\pi_2(I)$, the rule $(\pi_A\lto \pi(\Not A))$ is in $\pi(P)$ such that
  $A\in \DL_P^?$ and $I\not\models_OA$. Let $A=\DL[\lambda;Q](\vec t)$. We have that \\
  $\pi_A\in \pi_2(I)$\\
  iff $\pi_A\in\HB_{\pi(P)}$ and $I\not\models_OA$\\
  iff $\pi(I)\not\models_O \DL[\pi(\lambda);Q](\vec t)$ (by (ii) of Lemma \ref{lem:main:1})\\
  iff the rule $(\pi_A\lto)$ belongs to $w[\pi(P)]^{w,\pi(I)}_O$\\
  iff $\pi_A\in \lfp(\gamma_{[\pi(\mathcal K)]^{w,\pi(I)}})$.

  (iii) We show this by induction on $k$.

  Base: It is obvious for $k=0$.

  Step: Suppose it holds for $k=n$. Let us consider the case $k=n+1$. For any atom $\alpha\in \HB_P$,
  $\alpha\in\gamma^{n+1}_{{\cal K}^{w,I}}$
  if and only if there is a rule
  \[\alpha\lto\Pos, \textit{Mdl}, \NDL,\Not\Neg\]
  in $P$ where $\Pos$ is a set of atoms, \textit{Mdl} a set of monotonic dl-atoms and $\NDL$ a set of nonmonotonic dl-atoms such that
  \begin{itemize}
    \item $\gamma^n_{{\cal K}^{w,I}}\models_OA$ for any $A\in \Pos$,
    \item $I\models_OB$ for any $B\in\NDL$,
    \item $I\models_OB'$ for any $B'\in\textit{Mdl}$, and
    \item $I\not\models_OC$ for any $C\in\Neg$.
  \end{itemize}
  It follows that<
  \begin{itemize}
    \item $\gamma^n_{{\cal K}^{w,I}}\models_OA$ if and only if
    $\gamma^n_{[\pi({\cal K})]^{w,\pi(I)}}\models_OA$ by the  inductive assumption,
    \item $I\models_OB$ if and only if $\pi_B\not\in \pi(I)$ by the definition of $\pi_2(I)$,
    i.e., $\pi(I)\not\models_O\pi_B$,
    \item $I\models_OB'$ if and only if $\pi(I)\models_OB'$, and
    \item $I\not\models_OC$ if and only if $\pi(I)\models_O\pi(\Not C)$ for any $C\in\Neg$ by Lemma \ref{lem:main:1}.
  \end{itemize}
  Thus we have that $\alpha\in\gamma^{n+1}_{{\cal K}^{w,I}}$ if and only if $\alpha\in\gamma^{n+1}_{[\pi(\mathcal K)]^{w,\pi(I)}}\cap\HB_P$.
\end{proof}

\noindent
\begin{proof*}[\textbf{Proof of Theorem~\ref{thm:delete:ominus:w}}]\\
  (i) We have that
  \begin{flalign*}
     \lfp(\gamma_{[\pi({\cal K})]^{w,\pi(I)}})
   = &\lfp(\gamma_{[\pi({\cal K})]^{w,\pi(I)}})\cap (\HB_P\cup\{\pi_p(\vec c)\in\HB_{\pi(P)}\}\cup \{\pi_A\in\HB_{\pi(P)}\})\\
   = & [\HB_P\cap \lfp(\gamma_{[\pi({\cal K})]^{w,\pi(I)}})]\\
     &  \cup[\{\pi_p(\vec c)\in\HB_{\pi(P)}\}\cap\lfp(\gamma_{[\pi({\cal K})]^{w,\pi(I)}})]\\
     &  \cup[\{\pi_A\in\HB_{\pi(P)}\}\cap\lfp(\gamma_{[\pi({\cal K})]^{w,\pi(I)}})]\\
   = & [\HB_P\cap \bigcup_{i\ge 0}\gamma_{[\pi({\cal K})]^{w,\pi(I)}}^i]\cup
       \pi_1(I)\cup \pi_2(I)\mbox{ by (i) and (ii) of Lemma \ref{lem:w}} \\
   = & \bigcup_{i\ge 0}[\HB_P\cap \gamma_{[\pi({\cal K})]^{w,\pi(I)}}^i]\cup
       \pi_1(I)\cup \pi_2(I)\\
   = & \bigcup_{i\ge 0}\gamma_{{\cal K}^{w,I}}^i\cup
       \pi_1(I)\cup \pi_2(I)\mbox{ by (iii) of Lemma \ref{lem:s}} \\
   = & I\cup \pi_1(I)\cup \pi_2(I) \mbox{ since $I$ is a strong answer set of $\cal K$}\\
   = & \pi(I).
  \end{flalign*}
  It follows that $\pi(I)$ is a weak answer set of $\pi(\cal K)$.

  (ii) We prove $I^*=\pi(\HB_P\cap I^*)$ at first.
  \begin{flalign*}
   I^* = & I^*\cap (\HB_P\cup\{\pi_p(\vec c)\in\HB_{\pi(P)}\}\cup \{\pi_A\in\HB_{\pi(P)}\})\\
       = & (I^*\cap \HB_P)\cup (I^*\cap \{\pi_p(\vec c)\in\HB_{\pi(P)}\})\cup (I^*\cap \{\pi_A\in\HB_{\pi(P)}\})\\
       = & (I^*\cap\HB_P) \cup \pi_1(\HB_P\cap I^*)\cup \pi_2(\HB_P\cap I^*) \mbox{ by (i) and (ii) of Lemma \ref{lem:w}}\\
       = & \pi(I^*\cap\HB_P).
  \end{flalign*}
  Let $I=I^*\cap\HB_P$. We have that
  \begin{flalign*}
    \lfp(\gamma_{{\cal K}^{w,I}}) = & \bigcup_{i\ge 0}\gamma^i_{{\cal K}^{w,I}}\\
    = & \bigcup_{i\ge 0}(\HB_P\cap \gamma_{[\pi({\cal K})]^{w,\pi(I)}}^i)\mbox{ by (iii) of Lemma \ref{lem:w}}\\
    = & \HB_P\cap \bigcup_{i\ge 0}\gamma_{[\pi({\cal K})]^{w,\pi(I)}}^i\\
    = & \HB_P\cap\lfp(\gamma_{[\pi({\cal K})]^{w,\pi(I)}})\\
    = & \HB_P\cap \pi(I)\mbox{ since $\pi(I)=I^*$ is a weak answer set of $\pi(\cal K)$}\\
    = & I.
  \end{flalign*}
  It follows that $I$ is a weak answer set of $\cal K$.
\end{proof*}

\end{document}